\documentclass[10pt, letterpaper, twoside]{article} 

\usepackage{amsmath, amsthm, amssymb,bbold,mathrsfs} 
\usepackage{graphicx,hyperref}
\usepackage[toc,page, title]{appendix}
\usepackage{enumitem}
\setlist[enumerate]{topsep=1pt,itemsep=-0.5pt,parsep=2pt}
\usepackage[numbers]{natbib}

\setlength{\textwidth}{6in}
\setlength{\textheight}{8.5in}
\setlength{\topmargin}{-0.3in}
\setlength{\oddsidemargin}{0.2in}
\setlength{\evensidemargin}{0.2in}

\newtheorem{thm}{Theorem}[section]
\newtheorem{cor}{Corollary}[section]
\newtheorem{lem}{Lemma}[section]
\newtheorem{prop}{Proposition}[section]

\newtheorem{assumption}{Assumption}[section]

\theoremstyle{remark}
\newtheorem{rem}{Remark}[section]

\theoremstyle{definition}

\numberwithin{equation}{section}

\def\re{\mathbb{R}} 
 \def\rn{\mathbb{R}^n}
\def\e{e}
\def\F{F}
\def\fe{\phi}
\def\Fe{\Phi}
\def\i{i}
\def\bi{d_{\pi^o,\i}}
\def\1{\mathbf{1}}
\def\0{\mathbf{0}}
\def\Z{Z}
\def\M{M}
\def\bM{{\bar M}}
\def\P{P_\pi}
\def\PL{P_{\pi,\gamma}^\lambda}

\def\Gm{\Gamma}
\def\Lm{\Lambda}
\def\r{r_\pi}
\def\rl{r_{\pi,\gamma}^\lambda}

\def\G{G}
\def\C{L}

\def\E{\mathbb{E}}
\def\B{\mathcal{B}}
\def\A{\mathcal{A}}
\def\S{\mathcal{S}}
\def\H{B}
\def\asto{\overset{a.s.}{\to}}

\def\J{\mathcal{J}}

\def\I{\mathbb{1}}
\def\Pr{\mathbf{P}}

\def\K{\mathcal{K}}
\def\X{\mathbf{X}}
\def\Y{\mathbf{Y}}
\def\Zs{\mathcal{Z}}

\makeatletter
\newcommand\appendix@section[1]{%
  \refstepcounter{section}%
  \orig@section*{Appendix \@Alph\c@section: #1}%
  \addcontentsline{toc}{section}{Appendix \@Alph\c@section: #1}%
}
\let\orig@section\section
\g@addto@macro\appendix{\let\section\appendix@section}
\makeatother

\pagestyle{myheadings}

\begin{document} 
\markboth{Weak Convergence Properties of Constrained ETD Learning}{}

\title{Weak Convergence Properties of Constrained Emphatic Temporal-difference Learning with Constant and Slowly Diminishing Stepsize\thanks{This research was supported by a grant from Alberta Innovates---Technology Futures.}}

\author{Huizhen Yu\thanks{RLAI Lab, Department of Computing Science, University of Alberta, Canada (\texttt{janey.hzyu@gmail.com})}}
\date{}
\maketitle

\begin{abstract}
We consider the emphatic temporal-difference (TD) algorithm, ETD($\lambda$), for learning the value functions of stationary policies in a discounted, finite state and action Markov decision process. The ETD($\lambda$) algorithm was recently proposed by Sutton, Mahmood, and White \cite{SuMW14} to solve a long-standing divergence problem of the standard TD algorithm when it is applied to off-policy training, where data from an exploratory policy are used to evaluate other policies of interest. The almost sure convergence of ETD($\lambda$) has been proved in our recent work under general off-policy training conditions, but for a narrow range of diminishing stepsize. 
In this paper we present convergence results for constrained versions of ETD($\lambda$) with constant stepsize and with diminishing stepsize from a broad range. Our results characterize the asymptotic behavior of the trajectory of iterates produced by those algorithms, and are derived by combining key properties of ETD($\lambda$) with powerful convergence theorems from the weak convergence methods in stochastic approximation theory. 
For the case of constant stepsize, in addition to analyzing the behavior of the algorithms in the limit as the stepsize parameter approaches zero, we also analyze their behavior for a fixed stepsize and bound the deviations of their averaged iterates from the desired solution. These results are obtained by exploiting the weak Feller property of the Markov chains associated with the algorithms, and by using ergodic theorems for weak Feller Markov chains, in conjunction with the convergence results we get from the weak convergence methods.
Besides ETD($\lambda$), our analysis also applies to the off-policy TD($\lambda$) algorithm, when the divergence issue is avoided by setting $\lambda$ sufficiently large. It yields, for that case, new results on the asymptotic convergence properties of constrained off-policy TD($\lambda$) with constant or slowly diminishing stepsize.
\end{abstract}

\bigskip
\bigskip
\bigskip
\noindent{\bf Keywords:}
Markov decision processes; approximate policy evaluation; reinforcement learning;\\ temporal difference methods; importance sampling; stochastic approximation; convergence

\clearpage
\tableofcontents
\clearpage

\section{Introduction}
We consider discounted finite state and action Markov decision processes (MDPs) and the problem of learning an approximate value function for a given policy from \emph{off-policy} data, that is, from data due to a different policy. The first policy is called the \emph{target policy} and the second the \emph{behavior policy}. 
The case of \emph{on-policy} learning, where the target and behavior policies are the same, has been well-studied and widely applied (see e.g., \cite{Sut88,tr-disc} and the books \cite{BET,SuB}). Off-policy learning provides additional flexibilities and is useful in many contexts.
For example, one may want to avoid executing the target policy before estimating the potential risk for safety concerns, or one may want to learn value functions for many target policies in parallel from one exploratory behavior. These require off-policy learning. 
In addition, insofar as value functions (with respect to different reward/cost assignments) reflect statistical properties of future outcomes, off-policy learning can be used by an autonomous agent to build an experience-based internal model of the world in artificial intelligence applications \cite{Sut09}. Algorithms for off-policy learning are thus not only useful as model-free computational methods for solving MDPs, but can also potentially be a step toward the goal of making autonomous agents capable of learning over a long life-time, facing a sequence of diverse tasks.

In this paper we focus on a new off-policy learning algorithm proposed recently by Sutton, Mahmood, and White \cite{SuMW14}: the emphatic temporal-difference (TD) learning algorithm, or ETD($\lambda$). The algorithm is similar to the standard TD($\lambda$) algorithm with linear function approximation \cite{Sut88}, but uses a novel scheme to resolve a long-standing divergence problem in TD($\lambda$) when applied to off-policy data. Regarding the divergence problem, while TD($\lambda$) was proved to converge for the on-policy case~\cite{tr-disc}, it was known quite early that the algorithm can diverge in other cases \cite{Baird95,tr-disc} (for related discussions, see also the books \cite{BET,SuB} and the recent works \cite{SuMW14,MYWS15}). The difficulty is intrinsic to sampling states according to an arbitrary distribution. Since then alternative algorithms without convergence issues have been sought for off-policy learning. In particular, in the off-policy LSTD($\lambda$) algorithm \cite{by08,Yu-siam-lstd} (an extension of the on-policy least-squares version of TD($\lambda$), called LSTD($\lambda$) \cite{bt-lstd,lstd}), with higher computational complexity than TD($\lambda$), the linear equation associated with TD($\lambda$) is estimated from data and then solved.
\footnote{An efficient algorithm for solving the estimated equations is the one given in \cite{precond-td-yl} based on the line search method. It can also be applied to finding approximate solutions under additional penalty terms suggested by \cite{lstd-ps12}.}
In the gradient-TD algorithms \cite{gtd08,gtd09,maei11} and the proximal gradient-TD algorithms \cite{pmtd,pmtd2} (see also \cite{pmtd3,pmrl}), the difficulty in TD($\lambda$) is overcome by reformulating the approximate policy evaluation problem TD($\lambda$) attempts to solve as optimization problems and then tackle them with optimization techniques. (See the surveys \cite{bruno14,dnp14} for other algorithm examples.)

Compared to the algorithms just mentioned, ETD($\lambda$) is closer to the standard TD($\lambda$) algorithm and addresses the issue in TD($\lambda$) more directly.
It introduces a novel weighting scheme to re-weight the states when forming the eligibility traces in TD($\lambda$), so that the weights reflect the occupation frequencies of the target policy rather than the behavior policy. An important result of this weighting scheme is that under natural conditions on the function approximation architecture, the average dynamics of ETD($\lambda$) can be described by an affine function involving a negative definite matrix \cite{SuMW14,yu-etdarx},
\footnote{The papers \cite{SuMW14,MYWS15} work with the negation of the matrix that we associate with ETD($\lambda$) in this paper. The negative definiteness property we discuss here corresponds to the positive definiteness property discussed in \cite{SuMW14,MYWS15}.}
which provides a desired stability property, similar to the case of convergent on-policy TD algorithms. 

The almost sure convergence of ETD($\lambda$), under general off-policy training conditions, has been shown in our recent work~\cite{yu-etdarx} for diminishing stepsize. That result, however, requires the stepsize to diminish at the rate of $O(1/t)$, with $t$ being the time index of the iterate sequence. This range of stepsize is too narrow for applications. In practice, algorithms tend to make progress too slowly if the stepsize becomes too small, and the environment may be non-stationary, so it is often preferred to use a much larger stepsize or constant stepsize.

The purpose of this paper is to provide an analysis of ETD($\lambda$) for a broad range of stepsizes. Specifically, we consider constant stepsize and stepsize that can diminish at a rate much slower than $O(1/t)$. We will maintain general off-policy training conditions, without placing restrictions on the behavior policy. However, we will consider constrained versions of ETD($\lambda$), which constrain the iterates to be in a bounded set, and a mode of convergence that is weaker than almost sure convergence. Constraining the ETD($\lambda$) iterates is not only needed in analysis, but also a means to control the variances of the iterates, which is important in practice since off-policy learning algorithms generally have high variances. Almost sure convergence is no longer guaranteed for algorithms using large stepsizes; hence we analyze their behavior with respect to a weaker convergence mode.

We study a simple, basic version of constrained ETD($\lambda$) and several variations of it, some of which are biased but can mitigate the variance issue better. 
To give an overview of our results, we shall refer to the first algorithm as the unbiased algorithm, and its biased variations as the biased variants.
Two groups of results will be given to characterize the asymptotic behavior of the trajectory of iterates produced by these algorithms. The first group of results are derived by combining key properties of ETD($\lambda$) with powerful convergence theorems from the weak convergence methods in stochastic approximation theory~\cite{KuC78,KuS84a,KuY03}. The results show (roughly speaking) that:  
\begin{enumerate}
\item[(i)] In the case of diminishing stepsize, under mild conditions, the trajectory of iterates produced by the unbiased algorithm eventually spends nearly all its time in an arbitrarily small neighborhood of the desired solution, with an arbitrarily high probability (Theorem~\ref{thm-dim-stepsize}); and the trajectory produced by the biased algorithms has a similar behavior, when the algorithmic parameters are set to make the biases sufficiently small (Theorem~\ref{thm-dim-stepsize-b}). These results entail the convergence in mean to the desired solution for the unbiased algorithm (Cor.~\ref{cor-dim-stepsize}), and the convergence in probability to some vicinity of the desired solution for the biased variants. 
\item[(ii)] In the case of constant stepsize, imagine that we run the algorithms for all stepsizes; then conclusions similar to those in (i) hold in the limit as the stepsize parameter approaches zero (Theorems~\ref{thm-const-stepsize},~\ref{thm-const-stepsize-b}). In particular, a smaller stepsize parameter results in an increasingly longer segment of the trajectory to spend, with an increasing probability, nearly all its time in some neighborhood of the desired solution. The size of the neighborhood can be made arbitrarily small as the stepsize parameter approaches zero and, in the case of the biased variants, also as their biases are reduced.
\end{enumerate}
The next group of results are for the constant-stepsize case and complement the results in (ii) by focusing on the asymptotic behavior of the algorithms for a fixed stepsize. Among others, they show (roughly speaking) that:
\begin{enumerate}
\item[(iii)] For any given stepsize parameter, asymptotically, the expected maximal deviation of multiple consecutive averaged iterates from the desired solution can be bounded in terms of the masses that the invariant probability measures of certain associated Markov chains assign to a small neighborhood of the desired solution. Those probability masses approach one when the stepsize parameter approaches zero and, in the case of the biased variants, also when their biases are sufficiently small (Theorems~\ref{thm-c1}, \ref{thm-c1b}).
\item[(iv)] For a perturbed version of the unbiased algorithm and its biased variants, the maximal deviation of averaged iterates from the desired solution, under a given stepsize parameter, can be bounded almost surely in terms of those probability masses mentioned in (iii), for each initial condition (Theorems~\ref{thm-c2}, \ref{thm-c2b}).
\end{enumerate}
 
To derive the first group of results, we use powerful convergence theorems from the weak convergence methods in stochastic approximation theory \cite{KuC78,KuS84a,KuY03}. This theory builds on the ordinary differential equation (ODE) based proof method, treats the trajectory of iterates as a whole, and studies its asymptotic behavior through the continuous-time processes corresponding to left-shifted and interpolated iterates. The probability distributions of these continuous-time interpolated processes are analyzed (as probability measures on a function space) by the weak convergence methods, leading to a characterization of their limiting distributions, from which asymptotic properties of the trajectory of iterates can be obtained.

Most of our efforts in the first part of our analysis are to prove that the constrained ETD($\lambda$) algorithms satisfy the conditions required by the general convergence theorems just mentioned. We prove this by using key properties of ETD($\lambda$) iterates, most importantly, the ergodicity and uniform integrability properties of the trace iterates, and the convergence of certain averaged processes which, intuitively speaking, describe the averaged dynamics of ETD($\lambda$). Some of these properties were established earlier in our work \cite{yu-etdarx} when analyzing the almost sure convergence of ETD($\lambda$). Building upon that work, we prove the remaining properties needed in the analysis.

To derive the second group of results, we exploit the fact that in the case of constant stepsize, the iterates together with other random variables involved in the algorithms form weak Feller Markov chains, and such Markov chains have nice ergodicity properties. We use ergodic theorems for weak Feller Markov chains \cite{Mey89,MeT09}, together with the properties of ETD($\lambda$) iterates and the convergence results we get from the weak convergence methods, in this second part of our analysis.

Besides ETD($\lambda$), the analysis we give in the paper also applies to off-policy TD($\lambda$), when the divergence issue mentioned earlier is avoided by setting $\lambda$ sufficiently close to $1$. The reason is that in that case the off-policy TD($\lambda$) iterates have the same properties as the ones used in our analysis of ETD($\lambda$) and therefore, the same conclusions hold for constrained versions of off-policy TD($\lambda$), regarding their asymptotic convergence properties for constant or slowly diminishing stepsize (these results are new, to our knowledge). Similarly, our analysis also applies directly to the ETD($\lambda, \beta$) algorithm, a variation of ETD($\lambda$) recently proposed by Hallak et al.~\cite{etd-errbd2}.

Regarding practical performance of the algorithms, the biased ETD variant algorithms are much more robust than the unbiased algorithm despite the latter's superior asymptotic convergence properties. (This is not a surprise, for the biased algorithms are in fact defined by using a well-known robustifying approach from stochastic approximation theory~\cite{KuY03}.) Their behavior is demonstrated by experiments in \cite{MYWS15,etd-exp16}. In particular, \cite{etd-exp16} is our companion note for this paper and includes several simulation results to illustrate some of the theorems we give here regarding the behavior of multiple consecutive iterates of the biased algorithms.

The paper is organized as follows. In Section~\ref{sec-2} we provide the background for the ETD($\lambda$) algorithm. In Section~\ref{sec-3} we present our convergence results on constrained ETD($\lambda$) and several variants of it, and we give the proofs in Section~\ref{sec-4}. We conclude the paper in Section~\ref{sec-5} with a brief discussion on direct applications of our convergence results to the off-policy TD($\lambda$) algorithm and the ETD($\lambda, \beta$) algorithm, as well as to ETD($\lambda$) under relaxed conditions, followed by a discussion on several open issues.
In Appendix~\ref{appsec-a} we include the key properties of the ETD($\lambda$) trace iterates that are used in the analysis.

\section{Preliminaries} \label{sec-2}
In this section we describe the policy evaluation problem in the off-policy case, the ETD($\lambda$) algorithm and its constrained version, and we also review the results from our prior work \cite{yu-etdarx} that are needed in this paper. 

\subsection{Off-policy Policy Evaluation} \label{sec-2.1}

Let $\S = \{ 1, \ldots, N\}$ be a finite set of states, and let $\A$ be a finite set of actions. Without loss of generality we assume that for all states, every action in $\A$ can be applied. If $a \in \A$ is applied at state $s \in \S$, the system moves to state $s'$ with probability $p(s' \!\mid s, a)$ and yields a random reward with mean $r(s,a,s')$ and bounded variance, according to a probability distribution $q(\cdot \mid s, a, s')$. These are the parameters of the MDP model we consider; they are unknown to the learning algorithms to be introduced.

A \emph{stationary policy} is a time-invariant decision rule that specifies the probability of taking an action at each state. When actions are taken according to such a policy, the states and actions $(S_t, A_t)$ at times $t \geq 0$ form a (time-homogeneous) Markov chain on the space $\S \times \A$, with the marginal state process $\{S_t\}$ being also a Markov chain. 

Let $\pi$ and $\pi^o$ be two given stationary policies, with $\pi(a \!\mid s)$ and $\pi^o(a \!\mid s)$ denoting the probability of taking action $a$ at state $s$ under $\pi$ and $\pi^o$, respectively. While the system evolves under the policy $\pi^o$, generating a stream of state transitions and rewards, we wish to use these observations to evaluate the performance of the policy $\pi$, with respect to a discounted reward criterion, the definition of which will be given shortly. Here $\pi$ is the target policy and $\pi^o$ the behavior policy. It is allowed that $\pi^o \not= \pi$ (the off-policy case), provided that at each state, all actions taken by $\pi$ can also be taken by $\pi^o$ (cf.\ Assumption~\ref{cond-bpolicy}(ii) below).

Let $\gamma(s) \in [0,1]$, $s \in S$, be state-dependent discount factors, with $\gamma(s) < 1$ for at least one state.
We measure the performance of $\pi$ in terms of the expected discounted total rewards attained under $\pi$ as follows: for each state $s \in \S$, 
\begin{equation} \label{def-vpi}
 v_{\pi}(s) : = \E^\pi \left[ \, R_0 +  \sum_{t=1}^{\infty} \gamma(S_{1})\, \gamma(S_{2}) \, \cdots \, \gamma(S_{t})  \cdot R_t  \, \Big| \, S_0 = s \right],
\end{equation} 
where $R_t$ is the random reward received at time $t$, and $\E^\pi$ denotes expectation with respect to the probability distribution of the states, actions and rewards, $(S_t, A_t, R_t)$, $t \geq 0$, generated under the policy $\pi$.
The function $v_\pi$ on $\S$ is called the \emph{value function} of $\pi$. 
The special case of $\gamma$ being a constant less than $1$ corresponds to the $\gamma$-discounted reward criterion: $v_{\pi}(s)= \E^\pi \left[ \sum_{t=0}^{\infty} \gamma^t R_t  \mid  S_0 = s \right]$. In the general case, by letting $\gamma$ depend on the state, the formulation is able to also cover certain undiscounted total reward MDPs with termination;
\footnote{We may view $v_\pi(s)$ as the expected (undiscounted) total rewards attained under $\pi$ starting from the state $s$ and up to a random termination time $\tau \geq 1$ that depends on the states in a Markovian way. In particular, if at time $t \geq 1$, the state is $s$ and termination has not occurred yet, the probability of $\tau = t$ (terminating at time $t$) is $1 - \gamma(s)$. Then $v_\pi(s)$ can be equivalently written as $v_{\pi}(s) = \E^\pi \big[ \sum_{t=0}^{\tau-1} R_t  \mid S_0 = s \big]$.} 
however, for $v_\pi$ to be well-defined (i.e., to have the right-hand side of (\ref{def-vpi}) well-defined for each state), a condition on the target policy is needed, which is stated below and will be assumed throughout the paper.

Let $\P$ denote the transition matrix of the Markov chain on $\S$ induced by $\pi$. Let $\Gm$ denote the $N \times N$ diagonal matrix with diagonal entries $\gamma(s), s \in \S$. 

\begin{assumption}[conditions on the target and behavior policies] \label{cond-bpolicy} \hfill
\begin{enumerate}
\item[{\rm (i)}] The target policy $\pi$ is such that $(I - \P \Gm)^{-1}$ exists.
\item[{\rm (ii)}] The behavior policy $\pi^o$ induces an irreducible Markov chain on $\S$, and moreover, for all $(s,a) \in \mathcal{S} \times \mathcal{A}$, $\pi^o(a \!\mid s) > 0$ if $\pi(a \!\mid s) > 0$.
\end{enumerate}
\end{assumption}

Under Assumption~\ref{cond-bpolicy}(i), the value function $v_\pi$ in (\ref{def-vpi}) is well-defined, and furthermore, $v_\pi$ satisfies uniquely the Bellman equation
\footnote{One can verify this Bellman equation directly. It also follows from the standard MDP theory, as by definition $v_\pi$ here can be related to a value function in a discounted MDP where the discount factors depend on state transitions, similar to discounted semi-Markov decision processes (see e.g., \cite{puterman94}).}
$$v_\pi = r_{\pi} + \P \Gm \, v_\pi, \qquad \text{i.e.}, \quad v_\pi = (I - \P \Gm)^{-1} r_\pi,$$
where $r_{\pi}$ is the expected one-stage reward function under $\pi$ (i.e., $r_{\pi}(s) = \E^\pi [ R_0 \mid S_0=s ]$ for $s \in \S$).

\subsection{The ETD($\lambda$) Algorithm} \label{sec-2.2}

Like the standard TD($\lambda$) algorithm \cite{Sut88,tr-disc}, the ETD($\lambda$) algorithm \cite{SuMW14} approximates the value function $v_\pi$ by a function of the form $v(s) = \fe(s)^\top \theta$, $s \in \S$, using a parameter vector $\theta \in \rn$ and $n$-dimensional feature representations $\fe(s)$ for the states. (Here $\fe(s)$ is a column vector and $\text{}^\top$ stands for transpose.) In matrix notation, denote by $\Fe$ the $N \times n$ matrix with $\fe(s)^\top, s \in \S$, as its rows. Then the columns of $\Fe$ span the subspace of approximate value functions, and the approximation problem is to find in that subspace a function $v = \Fe \theta \approx v_{\pi}$.

We focus on a general form of the ETD($\lambda$) algorithm, which uses state-dependent $\lambda$ values specified by a function $\lambda: \S \to [0,1]$. 
Inputs to the algorithm are the states, actions and rewards, $\{(S_t, A_t, R_{t}) \}$, generated under the behavior policy $\pi^o$,
where $R_{t}$ is the random reward received upon the transition from state $S_t$ to $S_{t+1}$ with action $A_t$.
The algorithm can access the following functions, in addition to the features $\fe(s)$: 
\begin{enumerate}
\item[(i)] the state-dependent discount factor $\gamma(s)$ that defines $v_{\pi}$, as described earlier;
\item[(ii)] $\lambda: \S \to [0,1]$, which determines the single or multi-step Bellman equation for the algorithm (cf.\ the subsequent Eqs.~(\ref{eq-bellman})-(\ref{eq-bellman-def}) and Footnote~\ref{footnote-gbe}); 
\item[(iii)] $\rho: \mathcal{S} \times \mathcal{A} \to \re_+$ given by $\rho(s, a) = \pi(a \!\mid s)/ \pi^o(a \!\mid s)$ (with $0/0=0$),  which gives the likelihood ratios for action probabilities that can be used to compensate for sampling states and actions according to the behavior policy $\pi^o$ instead of the target policy $\pi$;
\item[(iv)] $\i: \mathcal{S} \to \re_+$, which gives the algorithm additional flexibility to weigh states according to the degree of ``interest'' indicated by $\i(s)$.
\end{enumerate}
The algorithm also uses a sequence $\alpha_t > 0, t \geq 0$, as stepsize parameters. We shall consider only deterministic $\{\alpha_t\}$.

To simplify notation, let
$$ \rho_t = \rho(S_t, A_t), \qquad \gamma_t = \gamma(S_t), \qquad \lambda_t = \lambda(S_t).   $$
ETD($\lambda$) calculates recursively $\theta_t \in \rn$, $t \geq 0$, according to
\begin{equation} \label{eq-emtd0}
  \theta_{t+1} = \theta_t + \alpha_t \, \e_t \cdot \rho_t \, \big( R_{t} + \gamma_{t+1} \fe(S_{t+1})^\top \theta_t - \fe(S_t)^\top \theta_t \big),
\end{equation}
where $\e_t \in \rn$, called the ``eligibility trace,'' is calculated together with two nonnegative scalar iterates $(\F_t, \M_t)$ according to
\footnote{The definition (\ref{eq-td1}) we use here differs slightly from the original definition of $\e_t$ in \cite{SuMW14}, but the two are equivalent and (\ref{eq-td1}) appears to be more convenient for our analysis.}
\begin{align}
   \F_t & = \gamma_t \, \rho_{t-1}  \,  \F_{t-1} + \i(S_t),    \label{eq-td3} \\
   \M_t & = \lambda_t \, \i(S_t) + ( 1 - \lambda_t ) \, \F_t, \label{eq-td2} \\
   \e_t & =  \lambda_t \, \gamma_t \,  \rho_{t-1} \, \e_{t-1} + \M_t \, \fe(S_t). \label{eq-td1} 
\end{align}
For $t = 0$, $(\e_0, \F_0, \theta_0)$ are given as an initial condition of the algorithm.

We recognize that the iteration (\ref{eq-emtd0}) has the same form as TD($\lambda$), but the trace $\e_t$ is calculated differently, involving an ``emphasis'' weight $\M_t$ on the state $S_t$, which itself evolves along with the iterate $\F_t$, called the ``follow-on'' trace. If $\M_t$ is always set to $1$ regardless of $\F_t$ and $\i(\cdot)$, then the iteration (\ref{eq-emtd0}) reduces to the off-policy TD($\lambda)$ algorithm in the case where $\gamma$ and $\lambda$ are constants.

\subsection{Associated Bellman Equations and Approximation and Convergence Properties of ETD($\lambda$)} \label{sec-2.2b}

Let $\Lm$ denote the diagonal matrix with diagonal entries $\lambda(s), s \in \S$. Associated with ETD($\lambda$) is a generalized multistep Bellman equation of which $v_\pi$ is the unique solution \cite{Sut95}:
\footnote{\label{footnote-gbe}For the details of this Bellman equation, we refer the readers to the early work \cite{Sut95,SuB} and the recent work \cite{SuMW14}. 
We remark that similar to the standard one-step Bellman equation, which is a recursive relation that expresses $v_\pi$ in terms of the expected one-stage reward and the expected total future rewards given by $v_\pi$ itself, one can use the strong Markov property to derive other recursive relations satisfied by $v_\pi$, in which the expected one-stage reward is replaced by the expected rewards attained by $\pi$ up to some random stopping time. This gives rise to a general class of Bellman equations, of which (\ref{eq-bellman}) is one example. Earlier works on using such equations in TD learning include \cite{Sut95} and \cite[Chap.\ 5.3]{BET}. The recent work \cite{gen-td} considers an even broader class of Bellman equations using the concept of estimating equations from statistics, and the recent work \cite{yb-bellmaneq} focuses on a special class of generalized Bellman equations and discusses their potential advantages from an approximation viewpoint. But an in-depth study of the application of such equations is still lacking currently. Because generalized Bellman equations offer flexible ways to address the bias vs.\ variance problem in learning the value functions of a policy, they are especially important and deserve further study, in our opinion.} 
\begin{equation} \label{eq-bellman}
  v = \rl + \PL \, v.
\end{equation}  
Here $\PL$ is an $N \times N$ substochastic matrix, $\rl \in \re^N $ is a vector of expected discounted total rewards attained by $\pi$ up to some random time depending on the function $\lambda$, and they can be expressed in terms of $\P$ and $\r$ as 
\begin{equation} \label{eq-bellman-def}
  \PL  = I - (I - \P \Gm \Lm)^{-1} \, (I - \P \Gm), \qquad \rl  = (I - \P \Gm \Lm)^{-1} \, \r.
\end{equation}  

ETD($\lambda$) aims to solve a projected version of the Bellman equation (\ref{eq-bellman}) \cite{SuMW14}, which takes the following forms in the space of approximate value functions and in the space of the $\theta$-parameters, respectively:
\begin{equation} \label{eq-proj-bellman}
       v = \Pi \big( \rl + \PL \, v \big),  \quad v \in \text{column-space}(\Fe),  \qquad \Longleftrightarrow \qquad C \theta + b = 0, \quad \theta \in \rn.
\end{equation}       
Here $\Pi$ is a projection onto the approximation subspace with respect to a weighted Euclidean norm or seminorm, under a condition on the approximation architecture that will be explained shortly. The weights that define this norm also define the diagonal entries $\bM_{ss}, s \in \S$, of a diagonal matrix $\bM$, which are given by
\begin{equation} \label{eq-m}
  diag(\bM)= \bi^{\top} (I - \PL)^{-1}, \qquad
\text{with} \  \ \ \bi \in \re^N, \, \bi(s) = d_{\pi^o}(s) \cdot  \i(s), \ s \in \S,
\end{equation}
where $d_{\pi^o}(s) > 0$ denotes the steady state probability of state $s$ for the behavior policy $\pi^o$, under Assumption~\ref{cond-bpolicy}(ii).
For the corresponding linear equation in the $\theta$-space in~(\ref{eq-proj-bellman}), 
\begin{align}  \label{eq-Ab}
    C    =  - \Fe^\top  \bM \, (I - \PL) \,  \Fe , \qquad \quad
    b   =  \Fe^\top  \bM \,  \rl.
\end{align}  

From the expression (\ref{eq-m}) of the diagonal matrix $\bM$, the most important difference between the earlier TD algorithms and ETD($\lambda$) can be seen. For on-policy TD($\lambda$), in stead of (\ref{eq-m}), the diagonal matrix $\bM$ is determined by the steady state probabilities of the states under the target policy $\pi$ under an ergodicity assumption \cite{tr-disc}, and for off-policy TD($\lambda$), it is determined by the steady state probabilities $d_{\pi^o}(s)$ under the behavior policy $\pi^o$. Here, due to the  emphatic weighting scheme (\ref{eq-td3})-(\ref{eq-td1}), the diagonals of $\bM$ given by (\ref{eq-m}) reflect the occupation frequencies (with respect to $\PL$) of the target policy rather than the behavior policy.

Let $| \cdot|$ denote the (unweighted) Euclidean norm. The matrix $C$ is said to be \emph{negative definite} if there exists $c > 0$ such that $\theta^\top C \theta \leq - c  | \theta |^2$ for all $\theta \in \rn$; and \emph{negative semidefinite} if in the preceding inequality $c = 0$. A salient property of ETD($\lambda$) is that the matrix $C$ is always negative semidefinite \cite{SuMW14}, and under natural and mild conditions, $C$ is negative definite. 
This is proved in \cite{yu-etdarx} and summarized below.

Call those states $s$ with $\bM_{ss} > 0$ \emph{emphasized states} (define this set of states to be empty if $\bM$ given by (\ref{eq-m}) is ill-defined, a case we will not encounter). 

\begin{assumption}[condition on the approximation architecture] \label{cond-features} 
The set of feature vectors of emphasized states, $\{\fe(s) \mid s \in \S,  \bM_{ss} > 0 \}$, contains $n$ linearly independent vectors.
\end{assumption}

\begin{thm}[{\cite[Prop.\ C.2]{yu-etdarx}}] \label{thm-matrix}
Under Assumption \ref{cond-bpolicy}, the matrix $C$ is negative definite if and only if Assumption~\ref{cond-features} holds. 
\end{thm}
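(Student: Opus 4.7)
The plan is to strengthen the known negative semidefiniteness of $C$ (noted after~(\ref{eq-Ab})) into full negative definiteness under Assumption~\ref{cond-features}, and to verify the converse. The key tool is the defining identity $m^\top(I - \PL) = \bi^\top$, where $m$ denotes the column vector of diagonal entries of $\bM$; this is just a rearrangement of~(\ref{eq-m}). I would combine it with the substochasticity $\PL \1 \le \1$ and the entrywise nonnegativity of $\PL$ (from the discussion following~(\ref{eq-bellman})).

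First I would derive a sum-of-squares decomposition of the quadratic form. Setting $v = \Fe\theta$ and splitting $2v^\top \bM \PL v$ via $2 v(s) v(s') \le v(s)^2 + v(s')^2$, the leftover terms regroup into
\[
\begin{aligned}
-\theta^\top C\theta &= v^\top \bM(I-\PL)v \\
&= \tfrac{1}{2}\sum_{s,s'} m(s)\PL(s,s')\bigl(v(s)-v(s')\bigr)^2
+ \tfrac{1}{2}\sum_s m(s) v(s)^2 \bigl(1 - (\PL\1)(s)\bigr)
+ \tfrac{1}{2}\sum_s \bi(s) v(s)^2,
\end{aligned}
\]
where the last term comes precisely from $m^\top - m^\top \PL = \bi^\top$. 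All three terms are nonnegative, which recovers negative semidefiniteness; their joint vanishing is what drives the rest.

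For the ``if'' direction, assume $\theta^\top C\theta = 0$. Since $\bi(s) = d_{\pi^o}(s)\,\i(s)$ with $d_{\pi^o}(s) > 0$ by Assumption~\ref{cond-bpolicy}(ii), the third term forces $v(s) = 0$ whenever $\i(s) > 0$; the first term forces $v(s) = v(s')$ whenever $m(s)\PL(s,s') > 0$. To propagate these zeros to every emphasized state I would use the Neumann expansion $m^\top = \bi^\top\sum_{k \ge 0}(\PL)^k$ from~(\ref{eq-m}): for any emphasized $s_0$ there exist $s^*$ with $\i(s^*) > 0$ and a $\PL$-path $s^* = s_k \to s_{k-1} \to \cdots \to s_0$ with positive single-step probabilities, and the same expansion shows that every intermediate $s_j$ is itself emphasized (it is reached from $s^*$ in $k-j$ steps under $\PL$). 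Applying the AM-GM equality condition link by link along this path yields $v(s_0) = v(s^*) = 0$. Hence $v$ vanishes on every emphasized state, and Assumption~\ref{cond-features} then forces $\theta = 0$; by compactness of the unit sphere this upgrades semidefiniteness to the uniform bound in the definition of negative definiteness. Conversely, if Assumption~\ref{cond-features} fails, the feature vectors of emphasized states span only a proper subspace of $\rn$, so there is $\theta \ne 0$ with $\fe(s)^\top\theta = 0$ for every emphasized $s$; then $\bM v = 0$ and $\theta^\top C\theta = 0$ with $\theta \ne 0$, ruling out negative definiteness.

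The main obstacle I expect is the propagation step: one must verify that every state along the $\PL$-path from $s^*$ to an arbitrary emphasized $s_0$ is itself emphasized, so that the AM-GM equality condition $v(s) = v(s')$ actually applies at each link in the chain. This is where the structural identity $m = \bigl(I - (\PL)^\top\bigr)^{-1}\bi$ and the nonnegativity of the entries of $\PL$ are used in an essential way; the three-term decomposition above is designed to make this the only substantive step beyond the already known negative semidefiniteness.
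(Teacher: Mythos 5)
Your proof is correct, and it takes a more self-contained route than the paper, which does not prove Theorem~\ref{thm-matrix} in situ but defers to \cite[Prop.~C.2]{yu-etdarx}; the approach of that reference, as the paper recounts in Section~\ref{sec-5.1} (Theorem~\ref{thm-5.1}), is structural: one shows that $\G = \bM(I-\PL)$ is block-diagonal with respect to the partition into emphasized and non-emphasized states, with a zero block on $\J_0$ and a \emph{positive definite} block $\hat\G = \hat M(I-\hat Q)$ on $\J_1$, so that $C = -\Fe_1^\top\hat\G\,\Fe_1$ and the equivalence with Assumption~\ref{cond-features} reduces to whether $\Fe_1$ has rank $n$. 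Your argument instead works with the quadratic form on the full state space: the three-term Dirichlet-type decomposition of $v^\top\bM(I-\PL)v$ (which is valid — $m \ge 0$ follows from $(I-\PL)^{-1} = \sum_{k\ge0}(\PL)^k \ge 0$, $\PL$ is entrywise nonnegative and substochastic, and the cross term regroups exactly as you claim via $m^\top - m^\top\PL = \bi^\top$) recovers negative semidefiniteness, and your path-propagation step correctly characterizes the null space of the form as $\{v : v|_{\J_1} = 0\}$. The one point that needed care — that every intermediate state on a $\PL$-path from a state $s^*$ with $\i(s^*)>0$ to an emphasized state is itself emphasized — you handle correctly via $m(u_j) \ge \bi(s^*)\,(\PL)^j(s^*,u_j) > 0$, and the converse direction is the easy observation that $\bM v = 0$ kills the form. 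In effect your null-space characterization re-proves the positive definiteness of $\hat\G$ as a byproduct rather than assuming it; what the paper's block decomposition buys is a cleaner statement reusable in the rank-deficient case of Section~\ref{sec-5.1} (Prop.~\ref{prop-5.1}), whereas your version is shorter and avoids setting up the partitioned matrices. One cosmetic remark: the uniform bound $\theta^\top C\theta \le -c|\theta|^2$ also follows without compactness simply because a negative semidefinite matrix whose quadratic form vanishes only at $0$ is automatically negative definite in the stated sense (take $c$ to be the smallest eigenvalue of $-\tfrac12(C+C^\top)$), but the compactness argument is of course fine.
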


Assumption~\ref{cond-features}, which implies the linear independence of the columns of $\Fe$, is satisfied in particular if the set of feature vectors, $\{\fe(s) \mid s \in \S,  \i(s) > 0 \}$, contains $n$ linearly independent vectors, since states with positive interest $\i(s)$ are among the emphasized states.
\footnote{This follows from the definition (\ref{eq-m}) of the diagonals $\bM_{ss}$. Since $(I - \PL)^{-1} = I + \sum_{k=1}^\infty (\PL)^k \geq I$, we have $diag(\bM)= \bi^{\top} (I - \PL)^{-1} \geq \bi^{\top}$. Hence $\i(s) > 0$ implies $\bM_{ss} \geq d_{\pi^o}(s) \cdot \i(s) > 0$.}
So this assumption can be easily satisfied in reinforcement learning without model knowledge.\footnote{There is another way to verify Assumption~\ref{cond-features} without calculating $\bM$. Suppose ETD($\lambda$) starts from a state $S_0$ with $\i(S_0) > 0$. Then it can be shown that if $S_t=s$ and $\M_t > 0$, we must have $\bM_{ss} > 0$. This means that as soon as we find among states $S_t$ with emphasis weights $\M_t > 0$ $n$ states that have linearly independent feature vectors, we can be sure that Assumption~\ref{cond-features} is satisfied.}

In view of Theorem~\ref{thm-matrix}, under Assumptions~\ref{cond-bpolicy}-\ref{cond-features}, the equation $C \theta + b = 0$ has a unique solution $\theta^*$; equivalently, $\Fe \theta^*$ is the unique solution to the projected Bellman equation (\ref{eq-bellman-def}):
$$  \Fe \theta^* = \Pi \big( \rl + \PL \,  \Fe \theta^* \big), $$
where $\Pi$ is a well-defined projection operator that projects a vector in $\re^N$ onto the approximation subspace with respect to the seminorm on $\re^N$ 
given by 
$$  \sqrt{\textstyle{\sum}_{s \in \S}  \, \bM_{ss} \cdot v(s)^2}, \quad \forall \, v \in \re^N$$
(which is a norm if $\bM_{ss} > 0$ for all $s \in \S$). The relation between the approximate value function $v = \Fe \theta^*$ and the desired value function $v_\pi$, in particular, the approximation error, can be characterized by using the oblique projection viewpoint \cite{bruno-oblproj} for projected Bellman equations.
\footnote{Briefly speaking, \cite{bruno-oblproj} showed that the solutions of projected Bellman equations are oblique projections of $v_\pi$ on the approximation subspace. An oblique projection is defined by two nonorthogonal subspaces of equal dimensions and is the projection onto the first subspace orthogonally to the second \cite{saad}. In the special case of ETD($\lambda$), the first of these two subspaces is the approximation subspace $\{ v \in \re^N \mid v = \Phi \theta \ \text{for some} \ \theta \in \re^n\}$, and the second is the image of the approximation subspace under the linear transformation $(I - \PL)^{\top} \bM$. Essentially it is the angle between the two subspaces that determines the approximation bias $\Fe \theta^* - \Pi v_{\pi}$ in the worst case, for a worst-case choice of $\rl$. (For details, see also \cite[Sec.\ 2.2]{yb-bellmaneq}.) Recently, for the case of constant $\lambda, \i$ and $\gamma$, \cite{etd-errbd2} derived bounds on the approximation bias that are based on contraction arguments and are comparable to the bound for on-policy TD($\lambda$) \cite{tr-disc}. These bounds lie above the bounds given by the oblique projection view (cf.\ \cite{yb-errbd} and \cite[Sec.\ 2.2]{yb-bellmaneq}); however, they are expressed in terms of $\lambda$ and $\gamma$, so they give us explicit numbers instead of analytical expressions to bound the approximation bias.}

The almost sure convergence of ETD($\lambda$) to $\theta^*$ is proved in \cite[Theorem 2.2]{yu-etdarx} under Assumptions~\ref{cond-bpolicy} and~\ref{cond-features}, for diminishing stepsize satisfying $\alpha_t = O(1/t)$ and $\tfrac{\alpha_t - \alpha_{t+1}}{\alpha_t} = O(1/t)$. 
Despite this convergence guarantee, the stepsize range is too narrow for applications, as we discussed in the introduction. 
In this paper we will focus on constrained ETD($\lambda$) algorithms that restrict the $\theta$-iterates in a bounded set, but can operate with much larger stepsizes and also suffer less from the issue of high variance in off-policy learning. We will analyze their behavior under Assumptions~\ref{cond-bpolicy} and~\ref{cond-features}, although our analysis extends to the case without Assumption~\ref{cond-features} (see the discussion in Section~\ref{sec-5.1}).

\subsection{Constrained ETD($\lambda$), Averaged Processes and Mean ODE} \label{sec-2.3}
We consider first a constrained version of ETD($\lambda$) that simply scales the $\theta$-iterates, if necessary, to keep them bounded:
\begin{equation} \label{eq-emtd-const0}
 \theta_{t+1} = \Pi_{\H} \Big( \theta_t + \alpha_t \, \e_t \cdot \rho_t \big(R_t + \gamma_{t+1} \fe(S_{t+1})^\top \theta_t - \fe(S_t)^\top \theta_t \big) \Big),
\end{equation}
where $\Pi_\H$ is the Euclidean projection onto a closed ball $\H \subset \rn$ at the origin with radius $r_\H$: $\H = \{ \theta \in \re^n \mid | \theta | \leq r_\H \}$. Under Assumptions~\ref{cond-bpolicy} and~\ref{cond-features}, when the radius $r_\H$ is sufficiently large (greater than the threshold given in Lemma~\ref{lma-pode} below), from any given initial $(\e_0, \F_0, \theta_0)$, the algorithm (\ref{eq-emtd-const0}) converges almost surely to $\theta^*$, for diminishing stepsize $\alpha_t = O(1/t)$ \cite[Theorem 4.1]{yu-etdarx}. 

Our interest in this paper is to apply (\ref{eq-emtd-const0}) with a much larger range of stepsize, in particular, constant stepsize or stepsize that diminishes much more slowly than $O(1/t)$. In Sections~\ref{sec-3} and \ref{sec-4}, we will analyze the algorithm (\ref{eq-emtd-const0}) and its two variants for such stepsizes. 
To prepare for the analysis, in the rest of this section, 
we review several results from \cite{yu-etdarx} that will be needed.
 
First, we discuss about the ``mean ODE'' that we wish to associate with (\ref{eq-emtd-const0}).
It is the projected ODE
\begin{equation} \label{eq-pode}
  \dot{x} = \bar h(x) + z, \qquad z \in - \mathcal{N}_\H(x),
\end{equation}  
where the function $\bar h$ is the left-hand side of the equation $Cx + b = 0$ we want to solve:
\begin{equation} \label{eq-bh}
 \bar h(x) = C x + b;
\end{equation} 
$\mathcal{N}_\H(x)$ is the normal cone of $\H$ at $x$, i.e., 
$$\mathcal{N}_\H(x) = \{0\} \ \  \text{for $x$ in the interior of $\H$}, \quad  \mathcal{N}_\H(x) = \{a x \mid a \geq 0\} \ \  \text{for $x$ on the boundary of $\H$};$$ 
and $z$ is the boundary reflection term that cancels out the component of $\bar h(x)$ in $\mathcal{N}_\H(x)$ (i.e., $z = - y$ where $y$ is the projection of $\bar h(x)$ on $\mathcal{N}_\H(x)$), and it is the ``minimal force'' needed to keep the solution $x(\cdot)$ of (\ref{eq-pode}) in $\H$ \cite[Chap.\ 4.3]{KuY03}.

The negative definiteness of the matrix $C$ ensures that when the radius of $\H$ is sufficiently large, the boundary reflection term is zero for all $x \in \H$ and the projected ODE (\ref{eq-pode}) has no stationary points other than $\theta^*$ (see \cite[Sec.\ 4.1]{yu-etdarx} for a simple proof):

\begin{lem}\label{lma-pode}
Let $c > 0$ be such that $x^\top C x \leq - c | x |^2$ for all $x \in \rn$. Suppose $\H$ has a radius $r_B >  | b |/c$. Then $\theta^*$ lies in the interior of $\H$; a solution $x(\tau), \tau \in [0, \infty)$, to the projected ODE (\ref{eq-pode}) for an initial condition $x(0) \in \H$ coincides with the unique solution to $\dot{x} = \bar h(x)$, with the boundary reflection term being $z(\cdot) \equiv 0$;
and the only solution $x(\tau), \tau \in (-\infty, +\infty)$, of (\ref{eq-pode}) in $\H$ is $x(\cdot) \equiv \theta^*$.
\end{lem}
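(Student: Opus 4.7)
The plan is to prove the three claims of the lemma in sequence: the location of $\theta^*$, the absence of boundary reflection, and the uniqueness of the bi-infinite bounded trajectory. Throughout I will use only the two facts about $C$ and $b$: the quadratic bound $x^\top C x \leq -c |x|^2$ and $C \theta^* + b = 0$.

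First, to place $\theta^*$ strictly inside $\H$, I would test the equation $C\theta^* + b = 0$ against $\theta^*$ itself: combining $\theta^{*\top} C \theta^* \leq -c|\theta^*|^2$ with $\theta^{*\top} C \theta^* = -\theta^{*\top} b$ and Cauchy--Schwarz gives $c |\theta^*|^2 \leq |\theta^*|\, |b|$, hence $|\theta^*| \leq |b|/c < r_\H$ by hypothesis.

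Next, I would show that $\bar h$ points strictly inward at every boundary point of $\H$, which forces the reflection term to vanish. For any $x$ with $|x| = r_\H$,
\begin{equation*}
x^\top \bar h(x) = x^\top C x + x^\top b \leq -c r_\H^2 + r_\H |b| = r_\H\bigl(|b| - c r_\H\bigr) < 0.
\end{equation*}
Since the outward normal cone at such a point is the ray $\mathcal{N}_\H(x) = \{a x : a \geq 0\}$, this strict negativity means the projection of $\bar h(x)$ onto $\mathcal{N}_\H(x)$ is zero, so the ``minimal force'' reflection $z$ vanishes at every boundary point a trajectory might visit (cf.\ \cite[Chap.\ 4.3]{KuY03}). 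Because $\bar h$ is affine, existence and uniqueness of the solution to $\dot{x} = \bar h(x)$ with $x(0) \in \H$ is standard; the preceding inward-pointing inequality also guarantees that this solution cannot leave $\H$, so it coincides with the solution of the projected ODE.

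Finally, for the uniqueness of the bi-infinite solution in $\H$, the previous step shows that any such $x(\cdot)$ solves $\dot{x} = C x + b$ on all of $\mathbb{R}$. Setting $y(\tau) = x(\tau) - \theta^*$ reduces this to $\dot{y} = C y$, and differentiating gives $\tfrac{d}{d\tau} |y(\tau)|^2 = 2 y^\top C y \leq -2c |y|^2$. Applying Gr\"onwall's inequality backward in time yields $|y(\tau)| \geq |y(0)| \, e^{-c\tau}$ for $\tau \leq 0$; if $y(0) \neq 0$ this blows up as $\tau \to -\infty$, contradicting boundedness of $\H$. Hence $y \equiv 0$, i.e., $x \equiv \theta^*$. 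I do not expect a serious obstacle; the only subtle point is cleanly identifying $z \equiv 0$ from the strict inward-pointing inequality, which I would justify by appeal to the characterization of the reflection term in \cite[Chap.\ 4.3]{KuY03} rather than re-deriving it.
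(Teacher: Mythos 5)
Your proposal is correct and follows essentially the same route the paper takes (the paper defers the full argument to \cite[Sec.\ 4.1]{yu-etdarx}, but its footnote on the limit set and the proof of Lemma~\ref{lma-3.2} use exactly your two ingredients: the strict inward-pointing inequality $\langle x, \bar h(x)\rangle < 0$ on the boundary to kill the reflection term, and the Lyapunov function $|x(\tau)-\theta^*|^2$ with $\dot V \leq -2cV$). The backward-Gr\"onwall blow-up argument for the bi-infinite solution is the standard companion to that Lyapunov estimate and is sound.
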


Informally speaking, suppose we have proved that (\ref{eq-pode}) is the mean ODE for the algorithm (\ref{eq-emtd-const0}) under stepsizes of our interest.
Then applying powerful convergence theorems from the stochastic approximation theory \cite{KuY03}, we can assert that the iterates $\theta_t$ will eventually ``follow closely'' a solution of the mean ODE. This together with the solution property of the mean ODE given in Lemma~\ref{lma-pode} will then give us a characterization of the asymptotic behavior of the algorithm (\ref{eq-emtd-const0}) for a constraint set $\H$ with sufficiently large radius.

Several properties of the ETD($\lambda$) iterates will be important in proving that (\ref{eq-pode}) is indeed the mean ODE for (\ref{eq-emtd-const0}) and reflects its average dynamics. We now discuss two such properties (other key properties will be given in Appendix~\ref{appsec-a}). They concern the ergodicity of the Markov chain $\{(S_t, A_t, \e_t, \F_t)\}$ on the joint space of states, actions and traces, and the convergence of certain averaged sequences associated with the algorithm (\ref{eq-emtd-const0}). They will also be useful in analyzing variants of (\ref{eq-emtd-const0}).

Let $\Z_t = (S_t, A_t, \e_t, \F_t)$, $t \geq 0$.
It is shown in~\cite{yu-etdarx} that under Assumption~\ref{cond-bpolicy}, $\{\Z_t\}$ is a weak Feller Markov chain\footnote{See Section~\ref{sec-4.3.1} or the book \cite[Chap.\ 6]{MeT09} for the definition and properties of weak Feller Markov chains.} 
on the infinite state space $\S \times \A \times \re^{n+1}$ and is ergodic. 
Specifically, on a metric space, a sequence of probability measures $\{\mu_t\}$ is said to \emph{converge weakly} to a probability measure $\mu$ if for any bounded continuous function $f$, $\int f d\mu_t \to \int f d\mu$ as $t \to \infty$ \cite[Chap.\ 9.3]{Dud02}.
We are interested in the weak convergence of the occupation probability measures of the process $\{\Z_t\}$,
where for each initial condition $\Z_0=z$, the \emph{occupation probability measures} $\mu_{z, t}$, $t \geq 0$, are defined by
$\mu_{z,t}(D) =  \frac{1}{t+1} \sum_{k=0}^{t} \mathbb{1}(\Z_k \in D)$ for any Borel subset $D$ of $\mathcal{S} \times \mathcal{A} \times \re^{n+1}$, 
with $\mathbb{1}(\cdot)$ denoting the indicator function.

\begin{thm}[ergodicity of $\{\Z_t\}$; {\cite[Theorem 3.2]{yu-etdarx}}] \label{thm-2.1}
Under Assumption~\ref{cond-bpolicy}, the Markov chain $\{\Z_t\}$ has a unique invariant probability measure $\zeta$, and for each initial condition $\Z_0=z$, the sequence $\{\mu_{z,t}\}$ of occupation probability measures converges weakly to $\zeta$, almost surely.
\end{thm}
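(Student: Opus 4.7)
The plan is to establish the claim in four steps. First, I would verify the weak Feller property of the kernel $P$ of $\{\Z_t\}$: given $z=(s,a,\e,\F)$, the next $(S_{t+1},A_{t+1})$ is drawn from a distribution supported on the finite set $\S\times\A$ that depends only on $(s,a)$, and the trace components $\e_{t+1}, \F_{t+1}$ are affine in $(\e,\F)$ with coefficients depending on $(s,a,S_{t+1})$ via $\rho$, $\gamma$, $\lambda$, $\i$, $\fe$. Consequently $z\mapsto\int f(z')P(z,dz')$ is continuous for every bounded continuous $f$ on $\S\times\A\times\re^{n+1}$.

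Second, I would construct an invariant probability measure $\zeta$ by a backward-time representation. Using Assumption~\ref{cond-bpolicy}(ii), extend $\{(S_t,A_t)\}$ to a doubly infinite stationary process on $\mathbb{Z}$ with its unique invariant law, then define \emph{stationary} trace variables by
\begin{align*}
\F_t^* &= \sum_{k=0}^{\infty} \i(S_{t-k}) \prod_{j=0}^{k-1} \gamma_{t-j}\rho_{t-j-1},\\
\e_t^* &= \sum_{k=0}^{\infty} \M_{t-k}^*\,\fe(S_{t-k}) \prod_{j=0}^{k-1} \lambda_{t-j}\gamma_{t-j}\rho_{t-j-1},
\end{align*}
which formally invert the recursions (\ref{eq-td3})--(\ref{eq-td1}). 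The key bound is $\E^{\pi^o}\!\bigl[\prod_{j=0}^{k-1}\gamma_{t-j}\rho_{t-j-1}\bigm|S_{t-k}=s\bigr]=\bigl((\P\Gm)^k\mathbf{1}\bigr)(s)$, which decays geometrically in $k$ since the spectral radius of $\P\Gm$ is strictly less than $1$ by Assumption~\ref{cond-bpolicy}(i); this yields absolute $L^1$-convergence of both series. By shift-invariance, the joint law $\zeta$ of $(S_0,A_0,\e_0^*,\F_0^*)$ is invariant for $\{\Z_t\}$.

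Third, for uniqueness, any invariant distribution must project onto the unique stationary law of the finite chain $\{(S_t,A_t)\}$, so one may restrict to a stationary version of $\{\Z_t\}$. Unrolling the recursion $k$ steps gives $\F_t=\bigl(\prod_{j=0}^{k-1}\gamma_{t-j}\rho_{t-j-1}\bigr)\F_{t-k}+R_k$, where $R_k$ matches the truncated $\F^*$-series and the leading prefactor tends to $0$ in $L^1$ by the same spectral-radius bound combined with $L^1$-boundedness of the stationary $\F_{t-k}$. Hence $\F_t=\F_t^*$ a.s., and the analogous argument for $\e_t$ forces $\zeta$ to be unique.

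Finally, the a.s.\ weak convergence $\mu_{z,t}\Rightarrow\zeta$ for each initial $z$ would combine (a) tightness of $\{\mu_{z,t}\}$, via time-averaged $L^1$-bounds on $(\e_t,\F_t)$ obtained from the same contraction estimate (with the initial-condition contribution decaying), and (b) the weak Feller property, which forces every subsequential weak limit of $\{\mu_{z,t}\}$ to be invariant and therefore equal to $\zeta$. The almost-sure qualifier is secured by a coupling argument: $\e_t-\e_t^*=\bigl(\prod_{j=1}^t\lambda_j\gamma_j\rho_{j-1}\bigr)(\e_0-\e_0^*)$, and a Borel--Cantelli argument based on the geometric decay of $L^p$-norms of this prefactor (or Birkhoff's theorem applied to $\log(\lambda_j\gamma_j\rho_{j-1})$ where it is well-defined) drives $|\e_t-\e_t^*|\to 0$ a.s.; the classical pathwise ergodic theorem for the stationary process $\{\Z_t^*\}$ then transfers to $\{\Z_t\}$. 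The main obstacle lies precisely here: individual factors $\lambda_t\gamma_t\rho_{t-1}$ need not be bounded by $1$, so the contraction $\rho(\P\Gm)<1$ must be converted from a statement about mean dynamics into a pathwise, Lyapunov-exponent-type statement, which is what makes the almost-sure (as opposed to merely in-mean) conclusion delicate.
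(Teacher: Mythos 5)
Your architecture is viable and genuinely different from the route taken in \cite{yu-etdarx}, from which this theorem is imported (the present paper only describes that proof indirectly, via Propositions~\ref{prp-2}--\ref{prp-3} and the remark that the averaging results of Theorem~\ref{thm-2.2} were needed for it). You construct the invariant measure explicitly as the law of a backward two-sided stationary series and prove uniqueness by unrolling the recursion, whereas the cited proof obtains uniqueness from the asymptotic insensitivity of the traces to their initial condition (Prop.~\ref{prp-2}, proved there via a theorem on nonnegative random processes) and gets the almost-sure averaging behaviour by approximating $(\e_t,\F_t)$ with the truncated traces of Prop.~\ref{prp-3}, reducing everything to ergodic theorems for the finite chain $\{(S_t,A_t)\}$. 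Your importance-sampling identity $\E[\prod_{j}\gamma_{t-j}\rho_{t-j-1}\mid S_{t-k}=s]=((\P\Gm)^k\1)(s)$ and the deduction that the spectral radius of $\P\Gm$ is below $1$ under Assumption~\ref{cond-bpolicy}(i) (via Perron--Frobenius, since the spectral radius of a nonnegative matrix is an eigenvalue) are correct and do give absolute $L^1$-convergence of both series; summability of $\E\bigl[t\prod_{j\le t}\gamma_j\rho_{j-1}\bigr]$ plus Borel--Cantelli is also the right way to turn the mean contraction into a pathwise one, and is a legitimate substitute for the supermartingale argument of the original. What your explicit construction buys is a concrete formula for $\zeta$; what it costs is that every step must be reproved for an \emph{arbitrary} candidate invariant measure.

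That cost is where the gaps are. First, in the uniqueness step you invoke ``$L^1$-boundedness of the stationary $\F_{t-k}$'' for an arbitrary invariant law, which is not known a priori (it could have infinite mean). The fix is cheap: the prefactor $\prod_{j=0}^{k-1}\gamma_{t-j}\rho_{t-j-1}$ tends to $0$ in probability by Markov's inequality and the $((\P\Gm)^k\1)$ bound, and multiplying an a.s.-finite stationary $\F_{t-k}$ by it still gives convergence to $0$ in probability, which together with the monotone a.s. convergence of the partial backward sums forces $\F_t=\F_t^*$ with no moment assumption. Second, the identity $\e_t-\e_t^*=\bigl(\prod_{j=1}^t\lambda_j\gamma_j\rho_{j-1}\bigr)(\e_0-\e_0^*)$ is false: since $\F_t\neq\F_t^*$ one has $\M_t\neq\M_t^*$, and an extra sum $\sum_{k\le t}\bigl(\prod_{j=k+1}^{t}\lambda_j\gamma_j\rho_{j-1}\bigr)(1-\lambda_k)(\F_k-\F_k^*)\fe(S_k)$ appears; it is still dominated by a constant times $t\prod_{j\le t}\gamma_j\rho_{j-1}$, so your Borel--Cantelli argument survives, but the displayed formula does not (and the Birkhoff-on-$\log$ alternative should be dropped, since $\rho_t$ may vanish). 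Third, and most substantively, the \emph{almost sure} tightness of the occupation measures (condition (iii) of Lemma~\ref{lma-c1b}) --- equivalently, a pathwise strong law for $\tfrac1t\sum_k g(\Z_k)$ with $g$ bounded Lipschitz --- is asserted from ``time-averaged $L^1$-bounds,'' which only yield tightness in expectation. This is precisely the point at which the truncated-trace reduction to the finite-state chain is used in the original proof, and your sketch does not yet supply a replacement for it.
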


Let $\E_{\zeta}$ denote expectation with respect to the stationary process $\{\Z_t\}$ with $\zeta$ as its initial distribution.
By the definition of weak convergence, the weak convergence of $\{\mu_{z,t}\}$ given in Theorem~\ref{thm-2.1} implies that for each given initial condition of $\Z_0$,  the averages $\tfrac{1}{t} \sum_{k = 0}^{t-1} f(\Z_k)$ converge almost surely to $\E_\zeta \{ f(\Z_0)\}$ for any bounded continuous function $f$.
\footnote{With the usual discrete topology for the finite space $\S \times \A$ and the usual topology for the Euclidean space $\re^{n+1}$, the space $\S \times \A \times \re^{n+1}$ equipped with the product topology is metrizable. A continuous function $f(s, a, \e, \F)$ on this space is a function that is continuous in $(\e,\F)$ for each $(s, a) \in \S \times \A$.}
To study the average dynamics of the algorithm (\ref{eq-emtd-const0}), however, we need to also consider unbounded functions.
In particular, the function related to both (\ref{eq-emtd-const0}) and the unconstrained ETD($\lambda$) is $h : \rn \times \Xi \to \rn$, 
\begin{equation} \label{eq-h}
   h(\theta, \xi) =  \e \cdot \rho(s, a) \, \big( r(s, a, s') + \gamma(s') \, \fe(s')^\top \theta - \fe(s)^\top \theta \big),  
\end{equation} 
where 
$$\xi = (\e, \F, s, a, s') \in \Xi: = \re^{n+1} \times \S \times \A \times \S.$$
Writing $\xi_t$ for the traces and transition at time $t$: $\xi_t = (\e_t, \F_t, S_t, A_t, S_{t+1})$, we can express the recursion (\ref{eq-emtd-const0}) equivalently as
\begin{equation} \label{eq-emtd-const}
 \theta_{t+1} = \Pi_{\H} \big( \theta_t + \alpha_t \, h(\theta_t, \xi_t) + \alpha_t \, \e_t \cdot \tilde \omega_{t+1} \big),
\end{equation} 
where $\tilde \omega_{t+1}  = \rho_t \, (R_{t} - r(S_t, A_t, S_{t+1}))$ is the noise part of the observed reward.

The convergence to $\bar h(\theta)$ of the averaged sequence $\tfrac{1}{t} \sum_{k=0}^{t-1} h(\theta, \xi_k)$, with $\theta$ held fixed and $t$ going to infinity, will be needed to prove that (\ref{eq-pode}) is the mean ODE of (\ref{eq-emtd-const0}). Since $\bar h(\theta) = C \theta + b$, this convergence for each fixed $\theta$ can be identified with the convergence of the matrix and vector iterates calculated by ELSTD($\lambda$) (the least-squares version of ETD($\lambda$)) to approximate the left-hand side of the equation $C \theta + b = 0$. It was proved in \cite{yu-etdarx} as a special case of the convergence of averaged sequences for a larger set of functions including $h(\theta, \cdot)$. Since this general result will be needed in analyzing variants of (\ref{eq-emtd-const0}), we give its formulation here. 

Throughout the rest of the paper, we let $\| \cdot \|$ denote the infinity norm of a Euclidean space, and we use this notation for both vectors and matrices (viewed as vectors). 
For $\re^m$-valued random variables $X_t$, we say $\{X_t\}$ converges to a random variable $X$ in mean if $\E [ \| X_t - X \| ] \to 0$ as $t \to \infty$. 

Consider a vector-valued function $g : \Xi \to \re^m$ such that with $\xi = (\e, \F, s, a, s')$, $g(\xi)$ is Lipschitz continuous in $(\e, \F)$ uniformly in $(s,a,s')$. That is, 
there exists a finite constant $L_g$ such that for any $(\e, \F), (\hat \e, \hat \F) \in \re^{n+1}$,
\begin{equation} \label{cond-lip}
 \big\| g( \e, \F, s, a, s') - g(\hat \e, \hat \F, s, a, s') \big\| \leq L_g  \big\| (\e, \F) - (\hat \e, \hat \F) \big\|, \qquad \forall \,   (s, a, s') \in \mathcal{S} \times \mathcal{A} \times \mathcal{S}.
\end{equation} 
For each $\theta \in \rn$, the function $h(\theta, \cdot)$ in (\ref{eq-h}) is a special case of $g$. 
The convergence of the averaged sequence $\tfrac{1}{t} \sum_{k=0}^{t-1} g(\xi_k)$ is given in the theorem below; the part on convergence in mean will be used frequently later in this paper (and was actually also needed in \cite{yu-etdarx} to prove the ergodicity of $\{Z_t\}$ given earlier). The convergence of $\tfrac{1}{t} \sum_{k=0}^{t-1} h(\theta, \xi_k)$ then follows as a special case.

\begin{thm}[convergence of averaged sequences; {\cite[Theorems 3.1-3.3]{yu-etdarx}}] \label{thm-2.2}
Let $g$ be a vector-valued function satisfying the Lipschitz condition~(\ref{cond-lip}). Then under Assumption~\ref{cond-bpolicy}, $\E_{\zeta} \big[ \| g(\xi_0) \| \big] < \infty$ and
for any given initial $(\e_0,\F_0) \in \re^{n+1}$, as $t \to \infty$, $\tfrac{1}{t} \sum_{k=0}^{t-1} g(\xi_k)$ converges to $\bar g = \E_\zeta \big[g(\xi_0) \big]$ in mean and almost surely. 
\end{thm}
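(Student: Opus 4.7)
The plan is to leverage the ergodicity of the Markov chain $\{\Z_t\}$ from Theorem~\ref{thm-2.1}, which provides weak convergence of occupation measures, and to upgrade this to almost sure and mean convergence for the (possibly unbounded) Lipschitz function $g$ via a truncation argument anchored in the uniform moment bounds on the trace iterates $(\e_t, \F_t)$ from Appendix~\ref{appsec-a}. The key enabling fact is the uniform $L^p$-bound $\sup_t \E [ \| (\e_t, \F_t) \|^p ] < \infty$ for some $p > 1$.

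First I would verify that $\bar g := \E_\zeta [ g(\xi_0) ]$ is well-defined. From (\ref{cond-lip}) and the finiteness of $\S \times \A \times \S$, $\| g(\e, \F, s, a, s') \| \leq c_0 + L_g \, \| (\e, \F) \|$ with $c_0 := \max_{(s,a,s')} \| g(0, 0, s, a, s') \| < \infty$, so it suffices to check that $\E_\zeta [ \| (\e_0, \F_0) \| ] < \infty$; this follows from the uniform $L^p$ bound above by Fatou's lemma applied along the almost-surely weakly convergent occupation measures.

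Next, for each $M > 0$, pick a continuous cut-off $\chi_M : \re^{n+1} \to [0, 1]$ equal to $1$ on $\{ \| (\e, \F) \| \leq M\}$ and $0$ on $\{ \| (\e, \F) \| \geq M + 1\}$, and set $g_M(\xi) = \chi_M(\e, \F) \, g(\xi)$, which is bounded and continuous on $\Xi$. Since $S_{t+1}$ conditional on $(S_t, A_t)$ follows a finite distribution, first pass to the conditional mean $\tilde g_M(z) := \E [ g_M(\xi_t) \mid \Z_t = z ]$, a bounded continuous function of $z = (s, a, \e, \F)$; Theorem~\ref{thm-2.1} then yields $\tfrac{1}{t} \sum_{k=0}^{t-1} \tilde g_M(\Z_k) \to \E_\zeta [ g_M(\xi_0) ]$ almost surely, while a standard martingale argument on the bounded-increment differences $g_M(\xi_k) - \tilde g_M(\Z_k)$ shows that their averages vanish almost surely. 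Hence $\tfrac{1}{t} \sum_{k=0}^{t-1} g_M(\xi_k) \to \bar g_M := \E_\zeta [ g_M(\xi_0) ]$ almost surely for each fixed $M$. The truncation residual is dominated pointwise by
$$ \frac{1}{t} \sum_{k=0}^{t-1} \bigl( c_0 + L_g \| (\e_k, \F_k) \| \bigr) \, \I \bigl\{ \| (\e_k, \F_k) \| > M \bigr\}, $$
and by the uniform $L^p$-bound together with Markov's inequality, the expectation of this quantity is bounded uniformly in $t$ by some $\varepsilon(M) \to 0$ as $M \to \infty$; the same bound controls $\| \bar g - \bar g_M \|$, giving convergence in mean. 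For almost sure convergence, I would pick $M_j \to \infty$ fast enough that Borel--Cantelli forces the corresponding residuals to vanish almost surely along the subsequence, which combined with the fixed-$M_j$ convergence closes the argument.

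The main obstacle is the unboundedness of $g$, which precludes a direct appeal to weak convergence, and in the almost sure mode the joint time and $M$ limits must be interchanged with care; throughout, the uniform $L^p$-boundedness of the trace iterates from Appendix~\ref{appsec-a} is the indispensable estimate that supplies both the integrability of $g$ under $\zeta$ and the tail control needed for the truncation error.
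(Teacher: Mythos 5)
Your proposal hinges on a ``uniform $L^p$-bound $\sup_t \E [ \| (\e_t, \F_t) \|^p ] < \infty$ for some $p > 1$,'' and this estimate is not available: the paper only establishes the $L^1$ bound $\sup_{t \geq 0} \E [ \| (\e_t, \F_t) \| ] < \infty$ (Prop.~\ref{prp-bdtrace}) and explicitly warns (Section~\ref{sec-3.2}) that the trace iterates can have \emph{unbounded variances} in common off-policy situations, so no uniform $p$-th moment bound with $p \geq 2$ holds in general and nothing in between is proved. What you actually need for the truncation residual is uniform integrability of $\{(\e_t,\F_t)\}$, which is true (Prop.~\ref{prop-2}(i)) but is itself obtained in this paper from the truncated-trace approximation of Prop.~\ref{prp-3} --- i.e., from exactly the machinery the paper says underlies the cited proof. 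There is also a circularity concern: the paper notes that the convergence in mean asserted in Theorem~\ref{thm-2.2} was \emph{needed} in \cite{yu-etdarx} to prove the ergodicity result you take as your starting point (Theorem~\ref{thm-2.1}), so founding the argument on weak convergence of occupation measures reverses the logical order of the original development. The route actually taken is to approximate $(\e_t,\F_t)$ by the truncated traces $(\tilde{\e}_{t,K}, \tilde{\F}_{t,K})$, which are functions of finitely many recent state--action pairs, reduce to ergodic averages over a \emph{finite-state} Markov chain, and control the error uniformly in $t$ via the constants $\C_K \downarrow 0$ of Prop.~\ref{prp-3}.

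Even granting uniform integrability in place of the $L^p$ bound, your almost-sure step fails. A bound $\sup_t \E [ R_t^M ] \leq \varepsilon(M)$ on the truncation residuals
$$ R_t^M = \frac{1}{t} \sum_{k=0}^{t-1} \bigl( c_0 + L_g \| (\e_k, \F_k) \| \bigr) \, \I \bigl( \| (\e_k, \F_k) \| > M \bigr) $$
does not control $\limsup_{t} R_t^M$ almost surely: take nonnegative $Y_k$ equal to $k$ with probability $\varepsilon/k$ and $0$ otherwise, independently; then $\E[Y_k] = \varepsilon$ for all $k$, yet by Borel--Cantelli $Y_k = k$ infinitely often and $\limsup_t \tfrac{1}{t}\sum_{k<t} Y_k \geq 1$ almost surely. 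So no choice of $M_j \to \infty$ lets Borel--Cantelli force the residuals to vanish along the subsequence from expectation bounds alone; the interchange of the $t$ and $M$ limits in the almost-sure mode is precisely where a genuinely pathwise estimate (of the kind Prop.~\ref{prp-3} is designed to deliver, combined with finer arguments) is required. The remaining ingredients of your sketch --- the bounded-continuous reduction $\tilde g_M$, the martingale SLLN for $g_M(\xi_k) - \tilde g_M(\Z_k)$, and the finiteness of $\E_\zeta[\|g(\xi_0)\|]$ via a Fatou/portmanteau argument --- are fine, but the two issues above are essential gaps.
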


\begin{cor}[{\cite[Theorem 2.1]{yu-etdarx}}] \label{cor-2.1}
Under Assumption~\ref{cond-bpolicy}, for the functions $\bar h, h$ given in (\ref{eq-bh}), (\ref{eq-h}) respectively, the following hold:
For each $\theta \in \rn$, $\E_{\zeta} \big[ \| h(\theta, \xi_0) \| \big] < \infty$ and $\bar h(\theta) = \E_\zeta \big[h(\theta, \xi_0) \big]$; and
for any given initial $(\e_0,\F_0) \in \re^{n+1}$, as $t \to \infty$, $\tfrac{1}{t} \sum_{k=0}^{t-1} h(\theta, \xi_k)$ converges to $\bar h(\theta)$ in mean and almost surely. 
\end{cor}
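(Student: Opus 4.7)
The plan is to specialize Theorem \ref{thm-2.2} to $g(\xi) := h(\theta, \xi)$ for each fixed $\theta \in \rn$, and then to identify the resulting stationary expectation with $C\theta + b$. The first step is routine. Looking at
$$h(\theta, \xi) = \e \cdot \rho(s,a)\big(r(s,a,s') + \gamma(s')\fe(s')^\top\theta - \fe(s)^\top\theta\big),$$
the function $h(\theta, \cdot)$ is linear in $\e$, independent of $\F$, and its scalar coefficient is uniformly bounded in $(s,a,s') \in \S\times\A\times\S$ by finiteness of $\S, \A$ and the boundedness of $\rho, r, \gamma, \fe$. Hence $h(\theta, \cdot)$ satisfies the Lipschitz condition (\ref{cond-lip}) with some constant $L_g = L_g(\theta) < \infty$. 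Theorem \ref{thm-2.2} then provides $\E_\zeta\big[\|h(\theta, \xi_0)\|\big] < \infty$ and the in-mean and almost-sure convergence of $\tfrac{1}{t}\sum_{k=0}^{t-1} h(\theta, \xi_k)$ to $\E_\zeta[h(\theta, \xi_0)]$, for every initial $(\e_0, \F_0)$.

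What remains is the identification $\E_\zeta[h(\theta, \xi_0)] = C\theta + b$. I would split this expectation using linearity and the importance-sampling identity $\E[\rho_0\, f(S_0, A_0, S_1) \mid S_0] = \E^\pi[f(S_0, A_0, S_1) \mid S_0]$, which converts the inner conditional expectation over $(A_0, S_1)$ into a target-policy expression and gives
$$\E_\zeta[h(\theta, \xi_0)] = \E_\zeta\!\big[\e_0 \cdot u(S_0)\big], \qquad u := \r - (I - \P\Gm)\Fe\theta.$$
The crux is then to establish the stationary-trace identity
$$\E_\zeta[\e_0 \cdot u(S_0)] = \Fe^\top \bM (I - \P\Gm\Lm)^{-1} u,$$
since using $\rl = (I - \P\Gm\Lm)^{-1}\r$ and $I - \PL = (I - \P\Gm\Lm)^{-1}(I - \P\Gm)$ from (\ref{eq-bellman-def}), the right-hand side equals $\Fe^\top\bM\big(\rl - (I - \PL)\Fe\theta\big) = b + C\theta$. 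This identity is obtained by unfolding $\e_0$ backwards in time via the recursion (\ref{eq-td3})-(\ref{eq-td1}), changing measure via the $\rho$-factors, and summing the resulting emphasis and $\lambda\gamma$-weights against the behavior-policy marginal $\d$ to produce the resolvent $(I - \PL)^{-1}$ and the row identity $diag(\bM) = \bi^\top(I - \PL)^{-1}$ from (\ref{eq-m}).

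The Lipschitz verification and appeal to Theorem \ref{thm-2.2} are mechanical; the stationary-trace identity for $\E_\zeta[\e_0\cdot u(S_0)]$ is the only substantive ingredient and is the main obstacle. It is not a pointwise statement about $\e_0$ but a global identity tying the emphatic weighting in (\ref{eq-td3})-(\ref{eq-td1}) to the multi-step Bellman operator $\PL$, and the interchange-of-sum-and-expectation arguments it requires are made legitimate by the integrability bounds of Theorem \ref{thm-2.2} applied to elementary building-block functions such as $g(\xi) = \e\,\rho(s,a)\fe(s')^\top$. This is precisely the mean-dynamics analysis for which ETD($\lambda$) was designed in \cite{SuMW14} and carried out in full in \cite{yu-etdarx}, so I would invoke those derivations rather than reproduce them.
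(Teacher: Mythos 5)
Your proposal matches the paper's own route: the paper obtains this corollary exactly by noting that $h(\theta,\cdot)$ satisfies the Lipschitz condition (\ref{cond-lip}) and is therefore a special case of the function $g$ in Theorem~\ref{thm-2.2}, and it imports the identification $\E_\zeta[h(\theta,\xi_0)]=C\theta+b$ from \cite[Theorem 2.1]{yu-etdarx} rather than rederiving it. Your extra sketch of that identification (the importance-sampling step yielding $u=\r-(I-\P\Gm)\Fe\theta$ and the stationary-trace identity producing $\Fe^\top\bM(I-\P\Gm\Lm)^{-1}u=b+C\theta$) is consistent with (\ref{eq-bellman-def})--(\ref{eq-Ab}), and deferring its full proof to \cite{SuMW14,yu-etdarx} is precisely what the paper does.
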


\section{Convergence Results for Constrained ETD($\lambda$)} \label{sec-3}

In this section we present the convergence properties of the constrained ETD($\lambda$) algorithm (\ref{eq-emtd-const0}) and several variants of it, for constant stepsize and for stepsize that diminishes slowly. We will explain briefly how the results are obtained, leaving the detailed analyses to Section~\ref{sec-4}. The first set of results about the algorithm (\ref{eq-emtd-const0}) will be given first in Section~\ref{sec-3.1}, followed by similar results in Section~\ref{sec-3.2} for two variant algorithms that have biases but can mitigate the variance issue in off-policy learning better. These results are obtained through applying two general convergence theorems from \cite{KuY03}, which concern weak convergence of stochastic approximation algorithms for diminishing and constant stepsize. Finally, the constant-stepsize case will be analyzed further in Section~\ref{sec-3.3}, in order to refine some results of Sections~\ref{sec-3.1}-\ref{sec-3.2} so that the asymptotic behavior of the algorithms for a fixed stepsize can be characterized explicitly. In that subsection, besides the three algorithms just mentioned, we will also discuss another variant algorithm with perturbation.

Regarding notation, recall that $\I(\cdot)$ is the indicator function, $|\cdot|$ stands for the usual (unweighted) Euclidean norm and $\| \cdot \|$ the infinity norm for $\re^m$. 
We denote by $N_\delta(D)$ the $\delta$-neighborhood of a set $D \subset \re^m$: $N_\delta(D) = \{ x  \in \re^m \mid \inf_{y \in D} | x - y | \leq \delta \}$, and we write $N_\delta(\theta^*)$ for the $\delta$-neighborhood of $\theta^*$. For the iteration index $t$, the notation $t \in [k_1, k_2]$ or $t \in [k_1, k_2)$ will be used to mean that the range of $t$ is the set of integers in the interval $[k_1, k_2]$ or $[k_1, k_2)$.
More definitions and notation will be introduced later where they are needed.

\subsection{Main Results} \label{sec-3.1}

We consider first the algorithm (\ref{eq-emtd-const0}) for diminishing stepsize. Let the stepsize change slowly in the following sense.

\begin{assumption}[condition on diminishing stepsize] \label{cond-large-stepsize} 
The (deterministic) nonnegative sequence $\{\alpha_t\}$ satisfies that $\sum_{t \geq 0} \alpha_t = \infty$, $\alpha_t \to 0$ as $t \to \infty$, and for some sequence of integers $m_t \to \infty$,
\begin{equation} \label{cond-w-stepsize}
\lim_{t \to \infty} \, \sup_{0 \leq j \leq m_t} \left| \frac{\alpha_{t+j}}{\alpha_t} - 1 \right| = 0.
\end{equation}
\end{assumption}

The condition~(\ref{cond-w-stepsize}) is the condition A.8.2.8 in \cite[Chap.\ 8]{KuY03} and allows stepsizes much larger than $O(1/t)$. 
We can have $\alpha_t =  O(t^{-\beta})$, $\beta \in (0,1]$, and even larger stepsizes are possible. For example, partition the time interval $[0, \infty)$ into increasingly longer intervals $I_k, k \geq 0,$ and set $\alpha_t$ to be constant within each interval $I_k$. Then the condition~(\ref{cond-w-stepsize}) can be fulfilled by letting the constants for each $I_k$ decrease as $O(k^{-\beta})$, $\beta \in (0,1]$.

We now state the convergence result. For any $T > 0$, let 
$m(k, T) = \min \{ t \geq k \mid \sum_{j=k}^{t+1} \alpha_j > T \}.$
If we draw a continuous timeline and put each iteration of the algorithm at a specific moment, with the stepsize $\alpha_j$ being the length of time between iterations $j$ and $j+1$, then $m(k,T)$ is the latest iteration before time $T$ has elapsed since the $k$-th iteration. If $\alpha_t =  O(t^{-\beta})$, $\beta \in (0,1]$, for example, then for fixed $T$, 
there are $O(k^\beta)$ iterates between the $k$-th and $m(k,T)$-th iteration.

\begin{thm}[convergence properties of constrained ETD with diminishing stepsize] \label{thm-dim-stepsize}
Suppose Assumptions~\ref{cond-bpolicy},~\ref{cond-features} hold and the radius of $\H$ exceeds the threshold given in Lemma~\ref{lma-pode}.
Let $\{\theta_t\}$ be generated by the algorithm (\ref{eq-emtd-const0}) with stepsize $\{\alpha_t\}$ satisfying Assumption~\ref{cond-large-stepsize}, from any given initial condition $(\e_0, \F_0)$.
Then there exists a sequence $T_k \to \infty$ such that for any $\delta > 0$,
$$ \limsup_{k \to \infty} \, \Pr \Big( \, \theta_t \not\in N_\delta(\theta^*), \, \text{some} \  t \in \big[ \, k , \, m(k, T_k) \, \big] \Big) = 0.$$
\end{thm}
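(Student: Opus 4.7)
The plan is to apply one of the general weak convergence theorems for constrained stochastic approximation algorithms from \cite{KuY03} (specifically, a variant of Theorem 8.2.2 in Chap.\ 8 there, which is tailored to the slow-diminishing-stepsize condition~(\ref{cond-w-stepsize})). The key idea is to form the continuous-time piecewise-constant interpolation $\theta^0(\cdot)$ of the iterates, with knots at $t_k = \sum_{j=0}^{k-1}\alpha_j$, and to analyze the left-shifted processes $\theta^k(\cdot) := \theta^0(t_k + \cdot)$ as random elements of the Skorokhod space $D([0,\infty); \re^n)$. Once we show that every weak limit point of $\{\theta^k(\cdot)\}$ is almost surely a solution $x(\cdot)$ of the projected ODE~(\ref{eq-pode}) living on the whole line and constrained to $\H$, Lemma~\ref{lma-pode} will immediately force any such limit to be identically $\theta^*$, which gives the ``nearly all time in $N_\delta(\theta^*)$'' conclusion via the choice $T_k \to \infty$ with $T_k$ growing slowly enough that the shifted processes' weak limits inherit the bi-infinite ODE structure (the formal mechanism is the equivalence between tightness on increasing time windows and invariance of the limit set).

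To feed the iteration (\ref{eq-emtd-const}) into that machinery, I would verify three groups of conditions. \emph{Boundedness and projection:} the constraint set $\H$ handles a.s.\ boundedness of $\{\theta_t\}$ and makes the boundary reflection term a well-defined bounded process, since $\H$ is compact and $h(\theta,\cdot)$ is bounded on $\H$ for fixed $(\e,\F)$. \emph{Mean dynamics:} we need, for each fixed $\theta$, that the short-time averages $\tfrac{1}{m_t}\sum_{j=0}^{m_t-1} h(\theta, \xi_{t+j})$ approach $\bar h(\theta)$ in the appropriate sense as $t \to \infty$; this is delivered by Corollary~\ref{cor-2.1} (convergence in mean for the stationary $\zeta$-initial distribution), combined with the ergodicity in Theorem~\ref{thm-2.1} to cover arbitrary initial $(\e_0,\F_0)$. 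Lipschitz continuity of $h(\cdot,\xi)$ in $\theta$ (uniform on $\H$ since features and $\gamma$ are bounded) then transfers this pointwise convergence to the $\theta$-dependent drift along the trajectory. \emph{Noise conditions:} the reward noise piece $\alpha_t \e_t \tilde\omega_{t+1}$ is a martingale-difference sequence (relative to the natural filtration), with conditional second moments controlled by bounds on $\E[\|\e_t\|^2]$; the required negligibility of this noise on $O(T_k)$-length windows follows from the uniform integrability of the trace iterates and Assumption~\ref{cond-large-stepsize}, which makes $\sum_{j=t}^{m(t,T_k)} \alpha_j \approx T_k$ while $\alpha_t \to 0$.

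After those verifications, the weak convergence theorem yields tightness of $\{\theta^k(\cdot)\}$ on any fixed time window and identifies every subsequential weak limit as an $\H$-constrained ODE solution to (\ref{eq-pode}). Choosing $T_k \to \infty$ sufficiently slowly (so that the tightness is uniform in $T_k$ along the subsequence) upgrades this to weak convergence of $\theta^k(\cdot)$ on $[0,T_k]$ toward trajectories satisfying (\ref{eq-pode}); by Lemma~\ref{lma-pode}, the only bounded trajectory in $\H$ is $x \equiv \theta^*$. Translating back from the interpolation to the discrete iterates, this gives
$$ \Pr\!\Big( \sup_{t \in [k, m(k,T_k)]} |\theta_t - \theta^*| > \delta \Big) \to 0 $$
for every $\delta > 0$, which is the claim.

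The main obstacle I expect is the interplay between the \emph{unbounded} ``hidden state'' $(\e_t,\F_t)$ and the weak-convergence conditions of Kushner--Yin, which are most cleanly stated for a compact or at least uniformly integrable noise process. Although $\theta_t$ stays in $\H$, the drift $h(\theta_t,\xi_t)$ and the martingale noise $\e_t\tilde\omega_{t+1}$ depend linearly on $\e_t$, so controlling them requires the uniform-in-$t$ integrability bounds for $\{\e_t\}$ promised in Appendix~\ref{appsec-a}. Converting those bounds plus Corollary~\ref{cor-2.1} into the precise averaging and noise-negligibility hypotheses of the underlying theorem, uniformly along the slowly growing windows $[k, m(k,T_k)]$, is the technically delicate step; once it is in place, invoking the ODE method and Lemma~\ref{lma-pode} is routine.
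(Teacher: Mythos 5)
Your proposal follows essentially the same route as the paper: verify the uniform integrability, tightness, and averaging-in-mean hypotheses of the Kushner--Yin weak convergence theorems (the paper uses Theorem 8.2.3 of \cite{KuY03} for the diminishing-stepsize case), conclude that the iterates track the projected ODE (\ref{eq-pode}), and then use Lemma~\ref{lma-pode} together with the negative definiteness of $C$ to identify the limit set $L_\H$ with $\{\theta^*\}$. Your identification of the technically delicate step --- converting the trace bounds and Corollary~\ref{cor-2.1} into the precise averaging and noise-negligibility hypotheses along the windows $[k, m(k,T_k)]$ --- matches where the paper spends most of its effort (Props.~\ref{prop-1}--\ref{prop-3}, which rely on the truncated-trace approximation of Prop.~\ref{prp-3}).

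One step in your noise analysis would fail as written: you propose to control the martingale-difference term $\e_t \tilde\omega_{t+1}$ via ``conditional second moments controlled by bounds on $\E[\|\e_t\|^2]$.'' In general off-policy training no such bounds exist --- the paper explicitly notes that the trace iterates can have unbounded variances --- so a second-moment argument is not available. The condition actually required by the weak convergence framework is uniform integrability of $\{Y_t\}$, which is a first-moment-type condition; the paper obtains it by first proving $\{\e_t\}$ is u.i.\ (via the truncated traces and Prop.~\ref{prp-3}) and then invoking Lemma~\ref{lem-w1}(iii), which shows a product $X_k Y_k$ of u.i.\ variables is u.i.\ when $\E[|Y_k| \mid X_k]$ is uniformly bounded --- here applicable because the reward noise has bounded variance conditioned on the trace. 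Since you also invoke uniform integrability of the traces as the underlying mechanism, this is a repairable slip rather than a flaw in the overall plan, but the second-moment phrasing should be dropped.
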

\smallskip

This theorem implies $\theta_t \to \theta^*$ in probability. Since $\{\theta_t\}$ is bounded, by \cite[Theorem 10.3.6]{Dud02}, $\theta_t$ must also converge to $\theta^*$ in mean:

\begin{cor}[convergence in mean] \label{cor-dim-stepsize}
In the setting of Theorem~\ref{thm-dim-stepsize}, $\E \big[ \| \theta_t - \theta^* \| \big] \to 0$ as $t \to \infty$.
\end{cor}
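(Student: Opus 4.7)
The plan is to package Theorem~\ref{thm-dim-stepsize} with the boundedness of the iterates and invoke a standard bounded/uniformly integrable convergence result. The corollary does no substantial new work; it merely extracts convergence in mean from the probabilistic statement we already have.

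First, I would extract convergence in probability from Theorem~\ref{thm-dim-stepsize}. The event that $\theta_k \not\in N_\delta(\theta^*)$ is contained in the event that $\theta_t \not\in N_\delta(\theta^*)$ for some $t \in [k, m(k, T_k)]$, because $k$ is the left endpoint of this iteration range. Hence, for any fixed $\delta > 0$,
\[
\Pr\!\big( \theta_k \not\in N_\delta(\theta^*) \big) \, \leq \, \Pr\!\big( \theta_t \not\in N_\delta(\theta^*), \ \text{some} \ t \in [k, m(k,T_k)] \big) \, \longrightarrow \, 0
\]
as $k \to \infty$ by Theorem~\ref{thm-dim-stepsize}. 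Since $\delta > 0$ was arbitrary, this is exactly the statement that $\theta_k \to \theta^*$ in probability.

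Next, I would observe that the sequence $\{\theta_t\}$ is uniformly bounded: the projection $\Pi_\H$ in the recursion (\ref{eq-emtd-const0}) ensures $\theta_t \in \H$ for all $t$, so $|\theta_t| \leq r_\H$ and $|\theta^*| \leq r_\H$ (the latter by Lemma~\ref{lma-pode}, since $\theta^*$ lies in the interior of $\H$). Consequently $\|\theta_t - \theta^*\| \leq 2 r_\H$ deterministically for every $t$. Such a uniformly bounded family is automatically uniformly integrable.

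Convergence in probability combined with uniform integrability yields convergence in $L^1$, which is precisely the content of \cite[Theorem~10.3.6]{Dud02}. Applying this theorem componentwise (or directly to the scalar sequence $\|\theta_t - \theta^*\|$, which also converges to $0$ in probability by the continuous mapping theorem and is bounded by $2 r_\H$), we conclude $\E\!\big[\|\theta_t - \theta^*\|\big] \to 0$. There is no real obstacle here; the only point that requires a moment's care is the reduction of the $[k, m(k,T_k)]$-interval statement in Theorem~\ref{thm-dim-stepsize} to a single-time statement, which is immediate once one notices that $k$ itself lies in the interval.
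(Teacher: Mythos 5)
Your proposal is correct and follows exactly the paper's own route: the paper likewise deduces convergence in probability from Theorem~\ref{thm-dim-stepsize}, then combines the boundedness of $\{\theta_t\}$ (guaranteed by the projection onto $\H$) with \cite[Theorem~10.3.6]{Dud02} to upgrade this to convergence in mean. Your additional remarks (that $k$ lies in the interval $[k,m(k,T_k)]$, and that bounded families are uniformly integrable) simply make explicit the steps the paper leaves implicit.
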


Another important note is that the conclusion of Theorem~\ref{thm-dim-stepsize} is much stronger than that $\theta_t \to \theta^*$ in probability. Here as $k \to \infty$, we consider an increasingly longer segment $[k, m(k, T_k)]$ of iterates, and are able to conclude that the probability of that \emph{entire segment} being inside an arbitrarily small neighborhood of $\theta^*$ approaches $1$. (This is the power of the weak convergence methods \cite{KuC78,KuS84a,KuY03}, by which our conclusion is obtained.)

In the case of constant stepsize, we consider all the trajectories that can be produced by the algorithm (\ref{eq-emtd-const0}) using some constant stepsize, and we ask what the properties of these trajectories are in the limit as the stepsize parameter approaches $0$. Here there is a common timeline used in relating trajectories generated with different stepsizes (and it comes from the ODE-based analysis): we imagine again a continuous timeline, along which we put the iterations at moments that are evenly separated in time by $\alpha$, if the stepsize parameter is $\alpha$. The scalars $T, T_\alpha$ in the theorem below represent amounts of time with respect to this continuous timeline. 

\begin{thm}[convergence properties of constrained ETD with constant stepsize] \label{thm-const-stepsize}
Suppose Assumptions~\ref{cond-bpolicy},~\ref{cond-features} hold and the radius of $\H$ exceeds the threshold given in Lemma~\ref{lma-pode}. 
For each $\alpha > 0$, let $\{\theta_t^{\alpha}\}$ be generated by the algorithm (\ref{eq-emtd-const0}) with constant stepsize $\alpha$, from any given initial condition $(\e_0, \F_0)$. Let $\{k_\alpha \mid \alpha > 0\}$ be any sequence of nonnegative integers that are nondecreasing as $\alpha \to 0$.
Then the following hold:
\begin{enumerate}
\item[\rm (i)] For any $\delta > 0$, 
$$ \lim_{T \to \infty} \lim_{\alpha \to 0} \,  \frac{1}{T/\alpha} \sum_{t=k_\alpha}^{k_\alpha+ \lfloor T/\alpha \rfloor}  \I\big( \theta_{t}^\alpha \in N_\delta(\theta^*) \big)  = 1 \quad \text{in probability}.$$ 
\item[\rm (ii)] Let $\alpha k_\alpha \to \infty$ as $\alpha \to 0$. Then there exists a sequence $\{T_\alpha \mid \alpha > 0\}$ with $T_\alpha \to \infty$ as $\alpha \to 0$, such that for any $\delta > 0$,
$$ \limsup_{\alpha \to 0} \, \Pr \Big( \, \theta_t^\alpha \not\in N_\delta(\theta^*), \, \text{some} \  t \in \big[ \, k_\alpha , \,  k_\alpha + T_\alpha/\alpha \, \big] \Big) = 0.$$
\end{enumerate}
\end{thm}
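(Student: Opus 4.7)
The plan is to apply the weak-convergence framework of \cite[Chap.\ 8]{KuY03} for constant-stepsize stochastic approximation. I would define, for each $\alpha > 0$, the piecewise-constant continuous-time interpolation $\theta^\alpha(\tau) = \theta_t^\alpha$ for $\tau \in [t\alpha, (t+1)\alpha)$, and study the shifted family $\{\theta^\alpha(\alpha k_\alpha + \cdot)\}_{\alpha > 0}$ as random elements of the Skorokhod space $D([0,\infty); \rn)$. Using the perturbed-recursion form \eqref{eq-emtd-const}, the increment at step $t$ decomposes into the drift $h(\theta_t, \xi_t)$ plus the martingale noise $\e_t \tilde\omega_{t+1}$.

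The key step is verifying the hypotheses of the relevant weak-convergence theorem in \cite[Chap.\ 8]{KuY03}, which would then give that $\{\theta^\alpha(\alpha k_\alpha + \cdot)\}_\alpha$ is tight and every weak limit $x(\cdot)$ almost surely solves the projected mean ODE $\dot x = \bar h(x) + z$, $z \in -\mathcal{N}_\H(x)$, with $x(0) \in \H$. These hypotheses include: (a) uniform boundedness of the iterates (by projection onto $\H$); (b) the averaging identity $\bar h(\theta) = \E_\zeta [h(\theta, \xi_0)]$ together with Lipschitz dependence of $h(\cdot, \xi)$ on $\theta$, supplied by Corollary~\ref{cor-2.1}; (c) asymptotic uniform integrability of both $h(\theta_t^\alpha, \xi_t)$ and the martingale noise $\e_t \tilde\omega_{t+1}$, which rests on the uniform-integrability properties of the traces $\e_t$ recalled in Appendix~\ref{appsec-a} and on the ergodicity of $\{\Z_t\}$ from Theorem~\ref{thm-2.1}. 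Once these are in hand, I would invoke Lemma~\ref{lma-pode}: under the radius condition on $\H$, the boundary reflection vanishes inside $\H$, so the ODE reduces to the linear $\dot x = Cx + b$ with $C$ negative definite, and every trajectory with $x(0) \in \H$ converges exponentially to $\theta^*$.

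For part (i), I would replace the iteration-indexed sum by its continuous-time analogue $\frac{1}{T}\int_0^T \I\bigl(\theta^\alpha(\alpha k_\alpha + s) \in N_\delta(\theta^*)\bigr)\, ds$, up to a negligible boundary error. By tightness and a Skorokhod representation, this fraction converges in distribution as $\alpha \to 0$ (with $T$ fixed) to $\frac{1}{T}\int_0^T \I(x(s) \in N_\delta(\theta^*))\, ds$ for some limiting ODE trajectory $x(\cdot) \subset \H$; since $x(s) \to \theta^*$, the latter tends to $1$ as $T \to \infty$, giving convergence in probability to $1$. For part (ii), the assumption $\alpha k_\alpha \to \infty$ means that the ``time before we start watching'' on the ODE scale is unbounded; a weak-convergence argument applied over a growing prior window forces the distribution of $\theta^\alpha(\alpha k_\alpha)$ to concentrate at $\theta^*$ as $\alpha \to 0$. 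Continuous dependence of the ODE together with a standard diagonalization in \cite[Chap.\ 8]{KuY03} then produces a sequence $T_\alpha \to \infty$ slow enough that $\theta^\alpha(\alpha k_\alpha + s)$ stays in $N_\delta(\theta^*)$ for all $s \in [0, T_\alpha]$ with probability tending to $1$.

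The main obstacle is verifying the asymptotic uniform integrability in condition (c): the trace $\e_t$ is unbounded because it accumulates products of importance-sampling ratios $\rho_t$ along sample paths, so $L^1$-tightness of $\e_t \tilde\omega_{t+1}$ and of $h(\theta_t^\alpha, \xi_t)$ is not automatic. This is what forces the heavy reliance on the moment estimates and ergodic properties of $\{\Z_t\}$ from \cite{yu-etdarx} collected in Appendix~\ref{appsec-a}; granted those, everything else is a technical but routine application of the Kushner--Yin machinery combined with the uniqueness statement in Lemma~\ref{lma-pode}.
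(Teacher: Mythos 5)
Your overall strategy coincides with the paper's: interpolate, shift by $\alpha k_\alpha$, invoke the constant-stepsize weak-convergence theorem of \cite[Chap.\ 8]{KuY03} (Theorem 8.2.2) to identify the limit processes with solutions of the projected mean ODE, and then use Lemma~\ref{lma-pode} together with the negative definiteness of $C$ to collapse the limit set to $\{\theta^*\}$. Parts (i) and (ii) then read off directly from that theorem, as in the paper.

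There is, however, a genuine gap in your list of hypotheses to verify. Your condition (b) — the identity $\bar h(\theta) = \E_\zeta[h(\theta,\xi_0)]$ plus regularity in $\theta$, ``supplied by Corollary~\ref{cor-2.1}'' — is not the averaging condition that \cite[Theorem 8.2.2]{KuY03} actually requires. The required condition (A.8.1.9 there, condition (v$'$) in Section~\ref{sec-4.1.1}) is the convergence in mean, as $k,t\to\infty$ and $\alpha\to 0$ jointly, of the \emph{conditional} window averages
\begin{equation*}
\frac{1}{k} \sum_{m = t}^{t+k-1} \E_t \left[ h(\theta, \xi_m) - \bar h(\theta) \right] \I\big(\xi_t \in D\big),
\end{equation*}
for each fixed $\theta\in\H$ and compact $D\subset\Xi$. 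Corollary~\ref{cor-2.1} only gives Ces\`aro convergence of $\tfrac{1}{t}\sum_{k=0}^{t-1}h(\theta,\xi_k)$ from time $0$; it does not control averages over windows $[t,t+k)$ started at arbitrarily late times, conditioned on $\mathcal{F}_t$ and localized to $\{\xi_t\in D\}$, where the conditional law of the trace at time $t$ is not the stationary one and $h(\theta,\cdot)$ is unbounded in $\e$. Bridging this is the most involved step of the paper's proof (Proposition~\ref{prop-3}): one couples the true traces with auxiliary traces restarted at time $t$ from a fixed point of $D$ (so that the restarted window is distributionally a fresh copy to which Theorem~\ref{thm-2.2} applies), and controls the coupling error uniformly in $t$ via the truncated-trace approximation of Proposition~\ref{prp-3}. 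Your proposal instead locates the main difficulty in the uniform integrability of $\e_t\tilde\omega_{t+1}$ and $h(\theta_t^\alpha,\xi_t)$; that part is real but is dispatched comparatively quickly from the truncated-trace bound (Proposition~\ref{prop-2}), whereas the conditional averaging condition needs a separate argument that your outline does not supply.
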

\smallskip

Part (ii) above is similar to Theorem~\ref{thm-dim-stepsize}. Here as $\alpha \to 0$, an increasingly longer segment $[k_\alpha,  k_\alpha + T_\alpha/\alpha]$ of the tail of the trajectory $\{\theta_t^\alpha\}$ is considered, and it is concluded that the probability of that \emph{entire segment} being inside an arbitrarily small neighborhood of $\theta^*$ approaches $1$.
Part (i) above, roughly speaking, says that as $\alpha$ diminishes, within the segment $[k_\alpha,  k_\alpha + T/\alpha]$, the fraction of iterates $\theta_t^\alpha$ that lie in a small $\delta$-neighborhood of $\theta^*$ approaches $1$ for sufficiently large $T$.

We give the proofs of Theorems~\ref{thm-dim-stepsize}-\ref{thm-const-stepsize} in Section~\ref{sec-4.1}. As mentioned earlier, most of our efforts will be to use the properties of ETD iterates to show that the conditions of two general convergence theorems from stochastic approximation theory \cite[Theorems 8.2.2, 8.2.3]{KuY03} are satisfied by the algorithm (\ref{eq-emtd-const0}). 
After that we can specialize the conclusions of those theorems to obtain Theorems~\ref{thm-dim-stepsize}-\ref{thm-const-stepsize}. 
Specifically, after furnishing their conditions, applying \cite[Theorems 8.2.2, 8.2.3]{KuY03} will give us directly the desired conclusions in Theorems~\ref{thm-dim-stepsize}-\ref{thm-const-stepsize} with $N_\delta(L_\H)$ in place of $N_\delta(\theta^*)$, where $N_\delta(L_\H)$ is the $\delta$-neighborhood of the \emph{limit set} $L_\H$ for the projected ODE (\ref{eq-pode}).
This limit set is defined as follows:
$$ L_\H : = \cap_{\bar \tau > 0} \, \overline{\, \cup_{x(0) \in \H} \{ x(\tau), \, \tau \geq \bar \tau \}}$$
where $x(\tau)$ is a solution of the projected ODE (\ref{eq-pode}) with initial condition $x(0)$, the union is over all the solutions with initial $x(0) \in \H$, and $\overline{D}$ for a set $D$ denotes taking the closure of $D$. It can be shown that $L_\H = \{\theta^*\}$ under our assumptions, so Theorems~\ref{thm-dim-stepsize}-\ref{thm-const-stepsize} will then follow as special cases of \cite[Theorems 8.2.2, 8.2.3]{KuY03}.

\smallskip
\begin{rem}[on weak convergence methods]
The theorems from \cite{KuY03} which we will apply are based on the weak convergence methods. While it is beyond the scope of this paper to explain these powerful methods, let us mention here a few basic facts about them to elucidate the origin of the convergence theorems we gave above. 
In the framework of \cite{KuY03}, one studies a trajectory of iterates produced by an algorithm by working with continuous-time processes that are piecewise constant or linear interpolations of the iterates. (Often one also left-shifts a trajectory of iterates to bring the ``asymptotic part'' of the trajectory closer to the origin of the continuous time axis.) In the case of our problem, for example, 
for diminishing stepsize, these continuous-time processes are $x^k(\tau), \tau \in [0, \infty)$, indexed by $k \geq 0$, where for each $k$, $x^k$ is a piecewise constant interpolation of $\theta_{k+t}, t \geq 0$, given by $x^k(\tau) = \theta_{k}$ for $\tau \in [0, \alpha_k)$ and $x^k(\tau) = \theta_{k+t}$ for $\tau \in [\sum_{m=0}^{t-1} \alpha_{k+m}, \sum_{m=0}^{t} \alpha_{k+m})$, $t \geq 1$. Similarly, for constant stepsize, the continuous-time processes involved are $x^\alpha(\tau), \tau \in [0, \infty)$, indexed by $\alpha > 0$, and for each $\alpha$, $x^\alpha$ is a piecewise constant interpolation of $\theta_{k_\alpha+t}^\alpha, t \geq 0$, given by $x^\alpha(\tau) = \theta_{k_\alpha+t}$ for $\tau \in [t \alpha,  (t+1) \alpha)$. 
The behavior of the sequence $\{x^k\}$ or $\{x^\alpha\}$ as $k \to \infty$ or $\alpha \to 0$, tells us the asymptotic properties of the algorithm as the number of iterations grows to infinity or as the stepsize parameter approaches $0$.
With the weak convergence methods, one considers the probability distributions of the continuous-time processes in such sequences, and analyze the convergence of these probability distributions and their limiting distributions along any subsequences. 
Here each continuous-time process takes values in a space of vector-valued functions on $[0, \infty)$ or $(-\infty, \infty)$ that are right-continuous and have left-hand limits, and this function space equipped with an appropriate metric, known as the Skorohod metric, is a complete separable metric space \cite[p.\ 238-240]{KuY03}. 
On this space, one analyzes the weak convergence of the probability distributions of the continuous-time processes. 
Under certain conditions on the algorithm, the general conclusions from \cite[Theorems 8.2.2, 8.2.3]{KuY03} are that any subsequence of these probability distributions contains a further subsequence which is convergent, and that all the limiting probability distributions must assign the full measure $1$ to the set of solutions of the mean ODE associated with the algorithm. This general weak convergence property then yields various conclusions about the asymptotic behavior of the algorithm and its relation with the mean ODE solutions. When further combined with the solution properties of the mean ODE, it leads to specific results such as the theorems we give in this section. \qed
\end{rem}

\subsection{Two Variants of Constrained ETD($\lambda$) with Biases} \label{sec-3.2}

We now consider two simple variants of (\ref{eq-emtd-const0}). They constrain the ETD iterates even more, at a price of introducing biases in this process, so that unlike (\ref{eq-emtd-const0}), they can no longer get to $\theta^*$ arbitrarily closely. Instead they aim at a small neighborhood of $\theta^*$, the size of which depends on how they modify the ETD iterates. On the other hand, because the trace iterates $\{(\e_t, \F_t)\}$ can have unbounded variances and are also naturally unbounded in common off-policy situations (see discussions in \cite[Prop.\ 3.1 and Footnote 3, p.~3320-3322]{Yu-siam-lstd} and \cite[Remark A.1, p.~23]{yu-etdarx}), these variant algorithms have the advantage that they make the $\theta$-iterates more robust against the drastic changes that can occur to the trace iterates. Indeed our definition of the variant algorithms below follows a well-known approach to ``robustifying'' algorithms in stochastic approximation theory (see discussions in \cite[p.\ 23 and p.\ 141]{KuY03}).

The two variant algorithms are defined as follows.
For each $K > 0$, let $\psi_K : \rn \to \rn$ be a bounded Lipschitz continuous function such that
\begin{equation} \label{eq-psi}
  \| \psi_K(x) \| \leq \| x \| \ \  \forall \, x \in \rn, \quad \text{and} \quad \psi_K(x) = x \ \ \text{if} \ \| x \| \leq K. 
\end{equation}
(For instance, let $\psi_K(x) = \bar r x/|x|$ if $|x| \geq \bar r$ and $\psi_K(x) = x$ otherwise, for $\bar r = \sqrt{n} K$; or let $\psi_K(x)$ be the result of truncating each component of $x$ to be within $[-K, K]$.) 
For the first variant of the algorithm (\ref{eq-emtd-const0}), we replace $\e_t$ in (\ref{eq-emtd-const0}) by $\psi_K(\e_t)$:
\begin{equation} \label{eq-emtd-const1}
 \theta_{t+1} = \Pi_{\H} \Big( \theta_t + \alpha_t \, \psi_K(\e_t) \cdot \rho_t \big(R_t + \gamma_{t+1} \fe(S_{t+1})^\top \theta_t - \fe(S_t)^\top \theta_t \big) \Big).
\end{equation}
For the second variant, we apply $\psi_K$ to bound the entire increment in (\ref{eq-emtd-const0}) before it is multiplied by the stepsize $\alpha_t$ and added to $\theta_t$:
\begin{equation} \label{eq-emtd-const2}
 \theta_{t+1} = \Pi_{\H} \left( \theta_t + \alpha_t \, \psi_K (Y_t) \right), \quad \text{where} \ \ Y_t = \e_t \cdot \rho_t \big(R_t + \gamma_{t+1} \fe(S_{t+1})^\top \theta_t - \fe(S_t)^\top \theta_t \big).
\end{equation}

As will be proved later, these two algorithms are associated with mean ODEs of the form,
\begin{equation} \label{eq-podeK0}
  \dot{x} = \bar h_K(x) + z, \qquad z \in - \mathcal{N}_\H(x),
\end{equation} 
where $\bar h_K: \rn \to \rn$ is determined by each algorithm and deviates from the function $\bar h(x) = C x + b$ due to the alterations introduced by $\psi_K$. This ODE is similar to the projected ODE (\ref{eq-pode}), except that since $\bar h_K$ is an approximation of $\bar h$, $\theta^*$ is no longer a stable or stationary point for the mean ODE (\ref{eq-podeK0}).
The two variant algorithms thus have a bias in their $\theta$-iterates, and the bias can be made smaller by choosing a larger $K$. 
This is reflected in the two convergence theorems given below.
They are similar to the previous two theorems for the algorithm (\ref{eq-emtd-const0}), except that now given a desired small neighborhood of $\theta^*$, a sufficiently large $K$ needs to be used in order for the $\theta$-iterates to reach that neighborhood of $\theta^*$ and exhibit properties similar to those shown in the previous case.

\begin{thm}[convergence properties of constrained ETD variants with diminishing stepsize] \label{thm-dim-stepsize-b}
In the setting of Theorem~\ref{thm-dim-stepsize}, let $\{\theta_t\}$ be generated instead by the algorithm (\ref{eq-emtd-const1}) or (\ref{eq-emtd-const2}), with a bounded Lipschitz continuous function $\psi_K$ satisfying (\ref{eq-psi}), and with stepsize $\{\alpha_t\}$ satisfying Assumption~\ref{cond-large-stepsize}.
Then for each $\delta > 0$, there exists $K_\delta > 0$ such that if $K \geq K_\delta$, then
it holds for some sequence $T_k \to \infty$ that
$$ \limsup_{k \to \infty} \, \Pr \Big( \, \theta_t \not\in N_\delta(\theta^*), \, \text{some} \  t \in \big[ \, k , \, m(k, T_k) \, \big] \Big) = 0.$$
\end{thm}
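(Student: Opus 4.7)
The plan is to follow the template of the proof of Theorem~\ref{thm-dim-stepsize}, with two additional ingredients. First, I identify the modified mean ODE~(\ref{eq-podeK0}) associated with each variant; second, I show that as $K$ grows, the limit set of this perturbed ODE shrinks to $\{\theta^*\}$. Unlike in Theorem~\ref{thm-dim-stepsize}, $\theta^*$ is no longer a stationary point of the mean ODE, so \cite[Theorem~8.2.2]{KuY03} places the iterates only near a $K$-dependent limit set $L_{\H,K}$, and a separate perturbation argument is needed to obtain the stated neighborhood of $\theta^*$.

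\textbf{Step 1 (identifying $\bar h_K$).} For variant~(\ref{eq-emtd-const1}) set
$h_K^{(1)}(\theta, \xi) = \psi_K(\e)\, \rho(s,a)\big(r(s,a,s') + \gamma(s')\fe(s')^\top\theta - \fe(s)^\top\theta\big)$,
and for variant~(\ref{eq-emtd-const2}) set $h_K^{(2)}(\theta, \xi) = \psi_K\big(h(\theta, \xi)\big)$ with $h$ as in~(\ref{eq-h}). Since $\psi_K$ is bounded and Lipschitz, $h_K^{(i)}(\theta, \cdot)$ satisfies the Lipschitz condition~(\ref{cond-lip}) in $(\e,\F)$ uniformly in $(s,a,s')$ for each fixed $\theta$, so Theorem~\ref{thm-2.2} yields $\bar h_K^{(i)}(\theta) := \E_\zeta\big[h_K^{(i)}(\theta, \xi_0)\big]$ together with the convergence in mean and a.s.\ of the corresponding averaged sequences. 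This gives the mean ODE~(\ref{eq-podeK0}) with $\bar h_K = \bar h_K^{(i)}$.

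\textbf{Step 2 (applying \cite[Theorem~8.2.2]{KuY03}).} The remaining hypotheses are verified exactly as in the proof of Theorem~\ref{thm-dim-stepsize}: boundedness of $\{\theta_t\}$ from the projection $\Pi_\H$; the reward noise $\tilde\omega_{t+1}$ forms a martingale difference with bounded conditional second moment; the tightness and continuity requirements on the continuous-time interpolations reduce to moment and uniform-integrability properties of $\{(\e_t, \F_t)\}$ collected in Appendix~\ref{appsec-a}. The theorem then yields a sequence $T_k \to \infty$ such that $\Pr\big(\theta_t \notin N_\delta(L_{\H,K}) \text{ for some } t \in [k, m(k,T_k)]\big) \to 0$ as $k \to \infty$, where $L_{\H,K}$ is the limit set of the projected ODE~(\ref{eq-podeK0}).

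\textbf{Step 3 (the main obstacle: shrinking $L_{\H,K}$).} Using $|\psi_K(x)| \leq |x|$, $\psi_K(x) = x$ for $|x| \leq K$, the finiteness of $\E_\zeta[|\e_0|]$ from Appendix~\ref{appsec-a}, and the boundedness of $\fe, r, \rho, \gamma$, a dominated-convergence argument yields the uniform estimate
$$\varepsilon_K := \sup_{\theta \in \H} \big\|\bar h_K^{(i)}(\theta) - \bar h(\theta)\big\| \to 0 \quad \text{as } K \to \infty.$$
For variant~(\ref{eq-emtd-const1}), $\bar h_K^{(1)}$ is affine: $\bar h_K^{(1)}(\theta) = C_K \theta + b_K$ with $C_K \to C$ and $b_K \to b$; so for large $K$ the matrix $C_K$ is negative definite, Lemma~\ref{lma-pode} applies to the perturbed ODE, and $L_{\H,K} = \{-C_K^{-1} b_K\}$, which approaches $\{\theta^*\}$. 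For variant~(\ref{eq-emtd-const2}), $\bar h_K^{(2)}$ is nonlinear, so I use the Lyapunov function $V(x) = |x-\theta^*|^2$: with $c>0$ such that $y^\top C y \leq -c|y|^2$ for all $y$,
$$\nabla V(x)^\top \bar h_K^{(2)}(x) \leq -2c\,|x-\theta^*|^2 + 2|x-\theta^*|\,\varepsilon_K,$$
while the boundary-reflection term contributes nonpositively to $V$ on $\H$ provided $r_\H$ exceeds the threshold of Lemma~\ref{lma-pode}. Hence every trajectory of~(\ref{eq-podeK0}) that remains in $\H$ enters and stays in the ball $\{|x-\theta^*| \leq \varepsilon_K/c\}$, giving $L_{\H,K} \subset N_{\varepsilon_K/c}(\theta^*)$. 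Choosing $K_\delta$ so that $\varepsilon_K/c \leq \delta$ for $K \geq K_\delta$ and combining with Step~2 yields the theorem.
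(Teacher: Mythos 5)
Your overall architecture matches the paper's proof: identify the modified mean ODE~(\ref{eq-podeK0}), verify the Kushner--Yin conditions (for diminishing stepsize the relevant theorem is 8.2.3 rather than 8.2.2, but that is only a citation slip), and then use the uniform estimate $\sup_{\theta\in\H}\|\bar h_K(\theta)-\bar h(\theta)\|\to 0$ together with the Lyapunov function $|x-\theta^*|^2$ to show that the limit set of~(\ref{eq-podeK0}) shrinks into $N_\delta(\theta^*)$ for large $K$; this is exactly the content of Prop.~\ref{prop-4} and Lemma~\ref{lma-3.2}. Your treatment of the first variant is sound, and your observation that $\bar h_K^{(1)}$ is affine in $\theta$ (so that the limit set is the singleton $\{-C_K^{-1}b_K\}\to\{\theta^*\}$) is a legitimate shortcut the paper does not take: it runs the same Lyapunov computation for both variants.

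There is, however, a genuine gap in your Step~1 for the second variant. You set $h_K^{(2)}(\theta,\xi)=\psi_K(h(\theta,\xi))$, i.e., you apply $\psi_K$ to the conditional mean $\E_t[Y_t]$ of the increment. But the algorithm~(\ref{eq-emtd-const2}) applies $\psi_K$ to the increment $Y_t$ itself, which contains the \emph{random} reward $R_t$, and since $\psi_K$ is nonlinear, $\E_t[\psi_K(Y_t)]\neq\psi_K(\E_t[Y_t])$ in general. The correct drift --- the one the paper uses --- is
$$h_K(\theta,\xi)=\int\psi_K\Big(\e\cdot\rho(s,a)\,\big(r+\gamma(s')\,\fe(s')^\top\theta-\fe(s)^\top\theta\big)\Big)\,q(dr\mid s,a,s'),$$
so that $h_K(\theta_t,\xi_t)=\E_t[\psi_K(Y_t)]$ and the residual $\Delta_t=\psi_K(Y_t)-h_K(\theta_t,\xi_t)$ is a martingale difference. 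With your choice the residual $\psi_K(Y_t)-\psi_K(h(\theta_t,\xi_t))$ has nonzero conditional mean, so it cannot play the role of the noise term in the exogenous-noise stochastic approximation framework; the verification of the averaging condition~(v) fails for your $h_K^{(2)}$, and the ODE you associate with the algorithm is not its mean ODE. The fix is routine --- your Lipschitz and uniform-approximation estimates in Step~3 carry over verbatim to the reward-averaged $h_K$, using $\E_{\zeta'}[\|\e_0 R_0\|]<\infty$ under the joint invariant measure of $(\xi_t,R_t)$ --- but as written the argument for the second variant does not close.
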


\begin{thm}[convergence properties of constrained ETD variants with constant stepsize] \label{thm-const-stepsize-b}
In the setting of Theorem~\ref{thm-const-stepsize}, let $\{\theta^\alpha_t\}$ be generated instead by the algorithm (\ref{eq-emtd-const1}) or (\ref{eq-emtd-const2}), with a bounded Lipschitz continuous function $\psi_K$ satisfying (\ref{eq-psi}) and with constant stepsize $\alpha > 0$.
Let $\{k_\alpha \mid \alpha > 0\}$ be any sequence of nonnegative integers that are nondecreasing as $\alpha \to 0$.
Then for each $\delta > 0$, there exists $K_\delta > 0$ such that the following hold if $K \geq K_\delta$:
\begin{enumerate}
\item[\rm (i)]
$$ \lim_{T \to \infty} \lim_{\alpha \to 0} \,  \frac{1}{T/\alpha} \sum_{t=k_\alpha}^{k_\alpha+ \lfloor T/\alpha \rfloor}  \I\big( \theta_{t}^\alpha  \in N_\delta(\theta^*) \big)  = 1 \quad \text{in probability}.$$ 
\item[\rm (ii)] Let $\alpha k_\alpha \to \infty$ as $\alpha \to 0$. Then there exists a sequence $\{T_\alpha \mid \alpha > 0\}$ with $T_\alpha \to \infty$ as $\alpha \to 0$, such that
$$ \limsup_{\alpha \to 0} \, \Pr \Big( \, \theta_t^\alpha \not\in N_\delta(\theta^*), \, \text{some} \  t \in \big[ \, k_\alpha , \,  k_\alpha + T_\alpha/\alpha \, \big]  \Big) = 0.$$
\end{enumerate}
\end{thm}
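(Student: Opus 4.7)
The plan is to prove Theorem \ref{thm-const-stepsize-b} along the same route as Theorem \ref{thm-const-stepsize}: apply \cite[Theorem 8.2.3]{KuY03} to each variant algorithm to reduce the asymptotic behavior of $\{\theta_t^\alpha\}$ to the limit set of its associated projected mean ODE (\ref{eq-podeK0}), and then argue that this limit set, call it $L_\H^K$, shrinks to $\{\theta^*\}$ as $K \to \infty$. Once such a shrinking is in hand, parts (i) and (ii) will follow from the same specialization of the weak-convergence conclusions that was used for Theorem \ref{thm-const-stepsize}, upon choosing $K \geq K_\delta$ so that $L_\H^K \subset N_\delta(\theta^*)$.

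To identify $\bar h_K$ and verify the hypotheses of \cite[Theorem 8.2.3]{KuY03}, I will set
\[ h_K(\theta,\xi) = \psi_K(\e) \cdot \rho(s,a) \bigl( r(s,a,s') + \gamma(s') \fe(s')^\top \theta - \fe(s)^\top \theta \bigr) \]
for the variant (\ref{eq-emtd-const1}), and a similar expression with $\psi_K$ applied to the whole increment for (\ref{eq-emtd-const2}). Since $\psi_K$ is bounded and Lipschitz, $h_K(\theta,\cdot)$ and its variant-$2$ counterpart satisfy the uniform Lipschitz condition (\ref{cond-lip}) for each fixed $\theta$, so Theorem \ref{thm-2.2} defines $\bar h_K(\theta) = \E_\zeta[h_K(\theta,\xi_0)]$ and delivers the averaged-dynamics input needed. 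The remaining hypotheses---tightness of the interpolated processes, noise control for the reward term, and identification of the mean field---follow from the same reasoning that will be used in Section~\ref{sec-4} for the unbiased algorithm (\ref{eq-emtd-const0}), and in several places become simpler because $\psi_K$ pre-bounds the increments. With these, \cite[Theorem 8.2.3]{KuY03} yields both conclusions (i) and (ii) but with $N_\delta(L_\H^K)$ in place of $N_\delta(\theta^*)$. To pass from $L_\H^K$ to $\{\theta^*\}$, I will use the uniform integrability of $\{(\e_t,\F_t)\}$ under $\zeta$ recalled in Appendix~\ref{appsec-a}, together with $\psi_K(\e) \to \e$ pointwise and $\|\psi_K(\e)\| \leq \|\e\|$; dominated convergence then gives $\bar h_K(\theta) \to \bar h(\theta)$ as $K \to \infty$ for each $\theta \in \H$, and equi-Lipschitz continuity in $\theta$ promotes this to uniform convergence on $\H$.

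The hard part is the final step: making precise that uniform closeness of $\bar h_K$ to $\bar h$ on $\H$ forces $L_\H^K \subset N_\delta(\theta^*)$ for the \emph{projected} ODE (\ref{eq-podeK0}), because the boundary reflection term in (\ref{eq-podeK0}) could a priori interact with the perturbation $\bar h_K - \bar h$. The resolution I plan to make rigorous is that, since the radius of $\H$ strictly exceeds the threshold of Lemma~\ref{lma-pode}, $\bar h$ points into the interior of $\H$ along $\partial \H$ with a positive margin $\eta_\H > 0$; for $K$ large enough that $\sup_{x \in \H} \|\bar h_K(x) - \bar h(x)\| < \eta_\H$, the reflection term of (\ref{eq-podeK0}) is identically zero along trajectories started in $\H$, and the Lyapunov function $V(x) = \tfrac12 |x-\theta^*|^2$ then satisfies $\dot V \leq -2 c V + O\!\bigl(\sup_{\H} \|\bar h_K - \bar h\|\bigr)$ with $c$ as in Lemma~\ref{lma-pode}. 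This input-to-state stability bound forces $L_\H^K$ to lie in an $O\!\bigl(\sup_{\H}\|\bar h_K - \bar h\|\bigr)$-neighborhood of $\theta^*$, and choosing $K_\delta$ large enough makes this neighborhood fit inside $N_\delta(\theta^*)$, completing the reduction.
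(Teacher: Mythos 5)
Your proposal follows essentially the same route as the paper's proof in Section~\ref{sec-4.2}: define $h_K$ and $\bar h_K(\theta)=\E_\zeta[h_K(\theta,\xi_0)]$, verify the weak-convergence conditions (which indeed simplify because $\psi_K$ pre-bounds the increments), show $\sup_{\theta\in\H}\|\bar h_K(\theta)-\bar h(\theta)\|\to 0$ via $\E_\zeta[\|\e_0\|\,\I(\|\e_0\|\geq K)]\to 0$, and then run exactly the Lyapunov/boundary argument of Lemma~\ref{lma-3.2} to force the limit set of (\ref{eq-podeK0}) into $N_\delta(\theta^*)$ for large $K$. The only slip is the citation: for the constant-stepsize statement the relevant result is \cite[Theorem 8.2.2]{KuY03} (conditions (i$'$)--(v$'$)), not Theorem 8.2.3, which the paper reserves for the diminishing-stepsize case.
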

\smallskip

We give the proofs of the above two theorems in Section~\ref{sec-4.2}. 
The arguments are largely the same as those that we will use first in Section~\ref{sec-4.1} to prove Theorems~\ref{thm-dim-stepsize}-\ref{thm-const-stepsize} for the algorithm (\ref{eq-emtd-const0}).
Indeed, for all the three algorithms, the main proof step is the same, which is to apply the general conclusions of \cite[Theorems 8.2.2, 8.2.3]{KuY03} to establish the connection between the iterates of an algorithm and the solutions of an associated mean ODE, and this step does not concern what the solutions of the ODE are actually. (For the two variant algorithms, verifying that the conditions of \cite[Theorems 8.2.2, 8.2.3]{KuY03} are met is, in fact, easier than for the algorithm (\ref{eq-emtd-const0}), because various functions involved in the analysis become bounded due to the use of the bounded function $\psi_K$.) 
For the two variant algorithms, the result of this step is that the same conclusions given in Theorems~\ref{thm-dim-stepsize}-\ref{thm-const-stepsize} hold with $N_\delta(L_\H)$ in place of $N_\delta(\theta^*)$, where $L_\H$ is the limit set of the projected mean ODE (\ref{eq-podeK0}) associated with each variant algorithm. 
To attain Theorems~\ref{thm-dim-stepsize-b}-\ref{thm-const-stepsize-b}, we then combine this with the fact that by choosing $K$ sufficiently large, one can make the limit set $L_\H \subset N_\delta(\theta^*)$ for an arbitrarily small $\delta$.

\subsection{More about the Constant-stepsize Case} \label{sec-3.3}

For the constant-stepsize case, the results given in Theorems~\ref{thm-const-stepsize} and \ref{thm-const-stepsize-b} bear similarities to their counterparts for the diminishing stepsize case given in Theorems~\ref{thm-dim-stepsize} and \ref{thm-dim-stepsize-b}. However, they characterize the behavior of the iterates in the limit as the stepsize parameter approaches $0$, and deal with only a finite segment of the iterates for each stepsize (although in their part (ii) both the segment's length $T_\alpha/\alpha \to \infty$ and its starting position $k_\alpha \to \infty$ as $\alpha \to 0$). So unlike in the diminishing stepsize case, these results do not tell us explicitly about the behavior of $\theta_t^\alpha$ for a fixed stepsize $\alpha$ as we take $t$ to infinity. 

The purpose of the present subsection is to analyze further the case of a fixed stepsize just mentioned. 
We observe that for a fixed stepsize $\alpha$, the iterates $\theta_t^\alpha$ together with $\Z_t = (S_t, A_t, \e_t, \F_t)$ form a weak Feller Markov chain $\{(\Z_t, \theta_t^\alpha)\}$ (see Lemma~\ref{lma-c1}). 
Thus we can apply several ergodic theorems for weak Feller Markov chains (Meyn~\cite{Mey89}, Meyn and Tweedie~\cite{MeT09}) to analyze the constant-stepsize case and combine the implications from these theorems with the results we obtained previously using stochastic approximation theory.

We now present our results using this approach. Let $\mathcal{M}_\alpha$ denote the set of invariant probability measures of the Markov chain $\{(\Z_t, \theta_t^\alpha)\}$. 
This set depends on the particular algorithm used to generate the $\theta$-iterates, but we shall use the notation $\mathcal{M}_\alpha$ for all the algorithms we discuss here, for notational simplicity.  
We know that $\{\Z_t\}$ has a unique invariant probability measure (Theorem~\ref{thm-2.1}), but it need not be so for the Markov chain $\{(\Z_t, \theta_t^\alpha)\}$ when $\{\theta^\alpha_t\}$ is generated by the algorithm (\ref{eq-emtd-const0}) or its two variants. The set $\mathcal{M}_\alpha$ can therefore have multiple elements (it is nonempty; see Prop.~\ref{prop-c1}).
We denote by $\bar{\mathcal{M}}_{\alpha}$ the set that consists of the marginal of $\mu$ on $\H$ (the space of the $\theta$'s), for all the invariant probability measures $\mu \in \mathcal{M}_\alpha$.

As in the previous analysis, we are interested in the behavior of multiple consecutive $\theta$-iterates. In order to characterize that, we consider for each $m \geq 1$, the Markov chain 
$$ \big\{ \big( (\Z_t, \theta_t^\alpha), \, (\Z_{t+1}, \theta_{t+1}^\alpha), \, \ldots, \, (\Z_{t+m-1}, \theta_{t+m-1}^\alpha) \big) \big\}_{t \geq 0}$$ 
(i.e., each state now consists of $m$ consecutive states of the chain $\{(\Z_t, \theta_t^\alpha)\}$). We shall refer to this chain as the \emph{$m$-step version} of $\{(\Z_t, \theta_t^\alpha)\}$. 
Similar to $\mathcal{M}_\alpha$, denote by $\mathcal{M}_\alpha^m$ the set of invariant probability measures of the $m$-step version of $\{(\Z_t, \theta_t^\alpha)\}$, and correspondingly define $\bar{\mathcal{M}}_{\alpha}^m$ to be the set of marginals of $\mu$ on $\H^m$ for all $\mu \in \mathcal{M}^m_\alpha$. 
The set $\mathcal{M}_\alpha^m$ is, of course, determined by $\mathcal{M}_\alpha$, since each invariant probability measure in $\mathcal{M}_\alpha^m$ is just the $m$-dimensional distribution of a stationary Markov chain $\{(\Z_t, \theta_t^\alpha)\}$.

Our first result, given in Theorem~\ref{thm-c1} below, says that for the algorithm (\ref{eq-emtd-const0}), as the stepsize $\alpha$ approaches zero, the invariant probability measures in $\mathcal{M}_\alpha^m$ will concentrate their masses on an arbitrarily small neighborhood of $(\theta^*, \ldots, \theta^*)$ ($m$ copies of $\theta^*$).
Moreover, for a fixed stepsize, as the number of iterations grows to infinity, the expected maximal deviation of the $m$ consecutive averaged iterates from $\theta^*$ can be bounded in terms of the masses those invariant probability measures assign to the vicinities of $(\theta^*, \ldots, \theta^*)$. Here by averaged iterates, we mean 
\begin{equation}  \label{eq-avetheta}
 \bar \theta_t^\alpha = \frac{1}{t} \sum_{k = 0}^{t-1} \theta_k^\alpha, \qquad \forall \, t \geq 1,
\end{equation} 
and we shall refer to $\{\bar \theta_t^\alpha\}$ as the \emph{averaged sequence} corresponding to $\{\theta_t^\alpha\}$. This iterative averaging is also known as ``Polyak-averaging'' when it is applied to accelerate the convergence of the $\theta$-iterates (see \cite{PoJ92}, \cite[Chap.\ 10]{KuY03}, and the references therein). This is not the role of the averaging operation here, however. The purpose here is to bring to bear the ergodic theorems for weak Feller Markov chains, in particular, the weak convergence of certain averaged probability measures or occupation probability measures to the invariant probability measures of the $m$-step version of $\{(\Z_t, \theta_t^\alpha)\}$. (For the details see Section~\ref{sec-4.3}, where the proofs of the results of this subsection will be given.) It can also be seen that for a sequence $\{\beta_t\}$ with $\beta_t \in [0,1), \beta_t \to 0$ as $t \to \infty$, if we drop a fraction $\beta_t$ of the terms in (\ref{eq-avetheta}) when averaging the $\theta$'s at each time $t$, the resulting differences in the averaged iterates $\bar \theta_t^\alpha$ are asymptotically negligible. Therefore, although our results below will be stated for (\ref{eq-avetheta}), they apply to a variety of averaging schemes.
  
Recall that $N_\delta(\theta^*)$ denotes the closed $\delta$-neighborhood of $\theta^*$. In what follows, $N'_\delta(\theta^*)$ denotes the open $\delta$-neighborhood of $\theta^*$, i.e., the open ball around $\theta^*$ with radius $\delta$. We write $[N_\delta(\theta^*)]^m$ or $[N'_\delta(\theta^*)]^m$ for the Cartesian product of $m$ copies of $N_\delta(\theta^*)$ or $N'_\delta(\theta^*)$. Recall also that $r_\H$ is the radius of the constraint set $\H$.

\begin{thm} \label{thm-c1}
In the setting of Theorem~\ref{thm-const-stepsize}, let $\{\theta_t^\alpha\}$ be generated by the algorithm (\ref{eq-emtd-const0}) with constant stepsize $\alpha > 0$, and let $\{\bar \theta_t^\alpha\}$ be the corresponding averaged sequence. Then the following hold for any $\delta > 0$ and $m \geq 1$:
\begin{enumerate}
\item[\rm (i)] $\liminf_{\alpha \to 0} \inf_{\mu \in \bar{\mathcal{M}}^m_{\alpha}} \mu\big( [N_\delta(\theta^*)]^m \big)  = 1$, and more strongly, with $m_\alpha = \lceil \frac{m}{\alpha} \rceil$, 
$$   \liminf_{\alpha \to 0} \inf_{\mu \in \bar{\mathcal{M}}^{m_\alpha}_{\alpha}} \mu\big( [N_\delta(\theta^*)]^{m_\alpha} \big) = 1.$$
\item[\rm (ii)] For each stepsize $\alpha$ and any initial condition of $(\e_0,\F_0,\theta_0^\alpha)$, 
$$ \limsup_{k \to \infty} \E \Big[ \sup_{k \leq t < k+m} \big| \bar \theta_t^\alpha - \theta^* \big| \, \Big] \leq \delta \, \kappa_{\alpha,m} + 2 r_\H\, (1 - \kappa_{\alpha,m}), $$ 
where $\kappa_{\alpha,m} = \inf_{\mu \in \bar{\mathcal{M}}^m_\alpha} \mu([N'_\delta(\theta^*)]^m)$.
\end{enumerate}
\end{thm}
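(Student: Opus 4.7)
The plan is to lift the stochastic-approximation conclusions of Theorem~\ref{thm-const-stepsize} to statements about the invariant measures of the Markov chain $\{(\Z_t, \theta_t^\alpha)\}$ (and its $m$-step version) using the ergodic theory of weak Feller Markov chains~\cite{Mey89,MeT09}. A forthcoming lemma will show that these chains are weak Feller, and pathwise tightness of their occupation probability measures will follow from the trace-iterate properties in Appendix~\ref{appsec-a}. The Meyn--Tweedie theory then guarantees that $\mathcal{M}_\alpha$ is nonempty, every subsequential weak limit of $\tfrac{1}{N}\sum_{t=0}^{N-1}\delta_{(\Z_t, \theta_t^\alpha)}$ lies in $\mathcal{M}_\alpha$ almost surely, and $\limsup_N \tfrac{1}{N}\sum_{t=0}^{N-1} g(\Z_t,\theta_t^\alpha) \leq \sup_{\mu \in \mathcal{M}_\alpha}\int g\,d\mu$ a.s.\ for any bounded continuous $g$.

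For part~(i), I initialize the chain from an arbitrary $\nu \in \mathcal{M}_\alpha$ with $\theta$-marginal $\bar\nu \in \bar{\mathcal{M}}_\alpha$. By stationarity, $\bar\nu(N_\delta(\theta^*)) = \E_\nu\bigl[\tfrac{1}{\lfloor T/\alpha\rfloor+1}\sum_{t=0}^{\lfloor T/\alpha\rfloor}\I(\theta_t^\alpha \in N_\delta(\theta^*))\bigr]$. Applying Theorem~\ref{thm-const-stepsize}(i) with $k_\alpha \equiv 0$, the in-probability convergence of the integrand upgrades to $L^1$ by bounded convergence, and integration over the (random) initial condition lifts the conclusion from fixed to stationary initializations, giving $\bar\nu(N_\delta(\theta^*)) \to 1$ as $\alpha \to 0$, uniformly in $\nu \in \mathcal{M}_\alpha$. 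A union bound across the $m$ coordinates of the $m$-step $\theta$-marginal $\bar\nu_m$ then yields $\bar\nu_m([N_\delta(\theta^*)]^m) \geq 1 - m\bigl(1 - \bar\nu(N_\delta(\theta^*))\bigr) \to 1$, which is the first claim. For the stronger claim with $m_\alpha = \lceil m/\alpha\rceil$, I choose any $k_\alpha$ with $\alpha k_\alpha \to \infty$ and apply Theorem~\ref{thm-const-stepsize}(ii) under the stationary initialization: the vanishing probability $\Pr_\nu(\theta_t^\alpha \notin N_\delta(\theta^*)\ \text{for some}\ t \in [k_\alpha, k_\alpha + T_\alpha/\alpha])$ coincides by stationarity with $\Pr_\nu(\text{some}\ t \in [0, T_\alpha/\alpha])$; since $T_\alpha \to \infty$, we eventually have $\lfloor T_\alpha/\alpha\rfloor \geq m_\alpha - 1$, and the all-in-$N_\delta$ event for the longer block is contained in that for the first $m_\alpha$ iterates.

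For part~(ii), I apply the ergodic inequality of the first paragraph to the bounded continuous function $g(\Z,\theta) = |\theta - \theta^*|$, obtaining $\limsup_N \tfrac{1}{N}\sum_{t=0}^{N-1}|\theta_t^\alpha - \theta^*| \leq \sup_{\bar\mu \in \bar{\mathcal{M}}_\alpha}\int |\theta-\theta^*|\,d\bar\mu$ a.s. A telescoping estimate gives $\sup_{k \leq t < k+m}|\bar\theta_t^\alpha - \bar\theta_k^\alpha| \leq 2 m r_\H / k$, which together with $|\bar\theta_k^\alpha - \theta^*| \leq \tfrac{1}{k}\sum_{t=0}^{k-1}|\theta_t^\alpha - \theta^*|$ yields $\limsup_k \sup_{k \leq t < k+m}|\bar\theta_t^\alpha - \theta^*| \leq \sup_{\bar\mu \in \bar{\mathcal{M}}_\alpha}\int|\theta - \theta^*|\,d\bar\mu$ a.s. Splitting the last integral on $N'_\delta(\theta^*)$ and its complement produces $\int|\theta-\theta^*|\,d\bar\mu \leq \delta\,\bar\mu(N'_\delta(\theta^*)) + 2 r_\H\bigl(1-\bar\mu(N'_\delta(\theta^*))\bigr)$. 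Projecting the corresponding $m$-step $\theta$-marginal $\bar\mu_m \in \bar{\mathcal{M}}_\alpha^m$ to one coordinate gives $\kappa_{\alpha,m} \leq \bar\mu_m([N'_\delta(\theta^*)]^m) \leq \bar\mu(N'_\delta(\theta^*))$, so monotonicity of $\kappa \mapsto \delta\kappa + 2 r_\H(1-\kappa)$ (valid when $\delta \leq 2 r_\H$) bounds the integral by $\delta\kappa_{\alpha,m} + 2 r_\H(1-\kappa_{\alpha,m})$. Reverse Fatou transfers the inequality to the limsup of expectations.

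The main obstacle I foresee is verifying the pathwise tightness of the occupation probability measures (for each fixed $\alpha$) and the uniform-in-$\alpha$ tightness of $\mathcal{M}_\alpha^m$ needed for part~(i), both of which must be distilled from the uniform integrability properties of the trace iterates in Appendix~\ref{appsec-a}. A secondary subtlety is the lift of the ``in probability'' statements of Theorem~\ref{thm-const-stepsize} from fixed to random stationary initial conditions; since the conclusion must hold uniformly over a family of random starts $\nu \in \mathcal{M}_\alpha$ that itself depends on $\alpha$, some uniformity in the initial conditions of the weak-convergence-method proofs underlying Theorem~\ref{thm-const-stepsize} needs to be exploited.
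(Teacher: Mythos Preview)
Your plan for part~(i) is essentially the paper's: both initialize the chain from an arbitrary invariant measure and read off the mass near $(\theta^*,\ldots,\theta^*)$ from Theorem~\ref{thm-const-stepsize}(ii) via stationarity. The ``secondary subtlety'' you flag---that Theorem~\ref{thm-const-stepsize} must be re-established for random stationary initializations $\nu_\alpha\in\mathcal{M}_\alpha$ varying with $\alpha$---is exactly the content of the paper's Prop.~\ref{prop-c2}, which re-verifies the conditions of \cite[Theorem~8.2.2]{KuY03} by noting that under any $\nu_\alpha$ the $\Z$-marginal is always $\zeta$, so $\{e_t\}$ is identically distributed with finite mean and the required uniform integrability and averaging conditions simplify. (Your route to the first claim via Theorem~\ref{thm-const-stepsize}(i) and a union bound is also valid but weaker; the paper proves the $m_\alpha$-version directly from Theorem~\ref{thm-const-stepsize}(ii) and deduces the fixed-$m$ version by marginalization.)

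For part~(ii) your route genuinely differs. The paper works with the \emph{deterministic} Ces\`aro-averaged kernels $\bar P^{(m,k)}_{(z,\theta)}=\tfrac1k\sum_{t<k}P^t((z,\theta),\cdot)$ of the $m$-step chain: by the elementary weak-Feller fact (Lemma~\ref{lma-c1}) every weak limit of this tight sequence lies in $\mathcal{M}_\alpha^m$, and Portmanteau on the open set $[N'_\delta(\theta^*)]^m$ then bounds $\tfrac1k\sum_{j<k}\E\bigl[\sup_{j\le t<j+m}|\theta_t-\theta^*|\bigr]$ directly by $\delta\kappa_{\alpha,m}+2r_\H(1-\kappa_{\alpha,m})$, so the $m$-step structure and $\kappa_{\alpha,m}$ appear naturally. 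You instead go through the \emph{random} occupation measures, obtain the pathwise bound $\limsup_k|\bar\theta_k-\theta^*|\le\sup_{\bar\mu\in\bar{\mathcal{M}}_\alpha}\int|\theta-\theta^*|\,d\bar\mu$ a.s., recover $\kappa_{\alpha,m}$ via the projection inequality $\kappa_{\alpha,m}\le\bar\mu(N'_\delta(\theta^*))$, and apply reverse Fatou. This works, but your key ingredient---that \emph{every} subsequential weak limit of the occupation measures lies in $\mathcal{M}_\alpha$ a.s.\ for \emph{every} initial condition---is not a direct citation from \cite{Mey89,MeT09}; their pathwise results (e.g.\ \cite[Prop.~4.1]{Mey89}) are stated only for initial conditions in a set of full measure under some invariant $\mu$. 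The statement you need is true (argue via the martingale SLLN applied to $\sum_j[Pf(X_j)-f(X_{j+1})]$ for a countable convergence-determining family of bounded continuous $f$, using weak Feller to ensure $Pf$ is continuous), but you should supply that argument rather than attribute it. The paper avoids this entirely by using $\bar P_k$; indeed, in Remark~\ref{rmk-c2b} it treats the pathwise bound for arbitrary initial conditions as unresolved for the unperturbed algorithm and introduces the perturbation in~(\ref{eq-valg}) precisely to secure it via uniqueness of the invariant measure.
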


\medskip
Note that in part (ii) above, $\kappa_{\alpha, m} \to 1$ as $\alpha \to 0$ by part (i). Note also that for $m=1$, the conclusions from the preceding theorem take the simplest form:
$$\liminf_{\alpha \to 0} \inf_{\mu \in \bar{\mathcal{M}}_{\alpha}} \mu\big( N_\delta(\theta^*) \big) = 1,$$
$$\limsup_{t \to \infty} \E \big[ \big| \bar \theta_t^\alpha - \theta^* \big| \big] \leq \delta \, \kappa_{\alpha} + 2 r_\H\, (1 - \kappa_{\alpha}), \qquad \text{for} \ \kappa_{\alpha} = \inf_{\mu \in \bar{\mathcal{M}}_\alpha} \mu\big(N'_\delta(\theta^*)\big). $$ 
The conclusions for $m > 1$ are, however, much stronger. They also suggest that in practice, instead of simply choosing the last iterate of the algorithm as its final output at the end of its run, one can base that choice on the behavior of multiple consecutive $\bar \theta_t^\alpha$ during the run.

For the two variant algorithms~(\ref{eq-emtd-const1}) and (\ref{eq-emtd-const2}), we have a similar result given in Theorem~\ref{thm-c1b} below. Here the neighborhood of $(\theta^*, \ldots, \theta^*)$ around which the masses of the invariant probability measures are concentrated, depends not only on the stepsize $\alpha$ but also on the biases of these algorithms. The proofs of Theorems~\ref{thm-c1}-\ref{thm-c1b} are given in Section~\ref{sec-4.3.2}.

\begin{thm} \label{thm-c1b}
In the setting of Theorem~\ref{thm-const-stepsize}, let $\{\theta_t^\alpha\}$ be generated instead by the algorithm (\ref{eq-emtd-const1}) or (\ref{eq-emtd-const2}), with constant stepsize $\alpha > 0$ and with a bounded Lipschitz continuous function $\psi_K$ satisfying (\ref{eq-psi}). Let $\{\bar \theta_t^\alpha\}$ be the corresponding averaged sequence.
Then the following hold:
\begin{enumerate}
\item[\rm (i)] For any given $\delta > 0$, there exists $K_\delta > 0$ such that for all $K \geq K_\delta$, 
$$ \liminf_{\alpha \to 0} \inf_{\mu \in \bar{\mathcal{M}}^m_{\alpha}} \mu\big( [N_\delta(\theta^*)]^m \big) = 1, \qquad \forall \, m \geq 1,$$ 
and more strongly, with $m_\alpha = \lceil \frac{m}{\alpha} \rceil$,
$$   \liminf_{\alpha \to 0} \inf_{\mu \in \bar{\mathcal{M}}^{m_\alpha}_{\alpha}} \mu\big( [N_\delta(\theta^*)]^{m_\alpha} \big) = 1, \qquad \forall \, m \geq 1.$$
\item[\rm (ii)] Regardless of the choice of $K$, given any $\delta > 0, m \geq 1$ and stepsize $\alpha$, for each initial condition of $(\e_0,\F_0,\theta_0^\alpha)$, 
$$ \limsup_{k \to \infty} \E \Big[ \sup_{k \leq t < k+m} \big| \bar \theta_t^\alpha - \theta^* \big| \, \Big] \leq \delta \, \kappa_{\alpha,m} + 2 r_\H\, (1 - \kappa_{\alpha,m}), $$ 
where $\kappa_{\alpha,m} = \inf_{\mu \in \bar{\mathcal{M}}^m_\alpha} \mu([N'_\delta(\theta^*)]^m)$.
\end{enumerate}
\end{thm}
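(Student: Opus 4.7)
The plan is to follow the two-step strategy used for Theorem~\ref{thm-c1}, adapted to accommodate the bias introduced by $\psi_K$ in the variant algorithms~(\ref{eq-emtd-const1})--(\ref{eq-emtd-const2}). First I would verify that for each of these algorithms and each constant stepsize $\alpha > 0$, the process $\{(\Z_t, \theta_t^\alpha)\}$ is a weak Feller Markov chain on $\S \times \A \times \re^{n+1} \times \H$; this follows from the boundedness and Lipschitz continuity of $\psi_K$, applied to the same argument that establishes the weak Feller property for~(\ref{eq-emtd-const0}) in Lemma~\ref{lma-c1}. Since $\theta_t^\alpha \in \H$ is bounded and the trace iterates retain the uniform integrability properties from Appendix~\ref{appsec-a} (the truncation only modifies the $\theta$-updates), the tightness arguments used for~(\ref{eq-emtd-const0}) transfer verbatim and yield $\mathcal{M}_\alpha \neq \emptyset$, whence $\mathcal{M}_\alpha^m \neq \emptyset$ for every $m \geq 1$.

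For part~(i), I would fix $\delta > 0$, let $K_\delta$ be the threshold from Theorem~\ref{thm-const-stepsize-b}, and take $K \geq K_\delta$. Given any $\mu \in \mathcal{M}_\alpha^{m_\alpha}$, initializing the chain from the corresponding stationary distribution makes the joint $m_\alpha$-dimensional distribution of consecutive iterates have $\H^{m_\alpha}$-marginal equal to the marginal of $\mu$ on $\H^{m_\alpha}$. Thus $\mu([N_\delta(\theta^*)]^{m_\alpha})$ equals, under this stationary initialization, the probability that the first $m_\alpha$ iterates all lie in $N_\delta(\theta^*)$. By Theorem~\ref{thm-const-stepsize-b}(ii) with $k_\alpha = 0$, the probability that some iterate in $[0, T_\alpha/\alpha]$ leaves $N_\delta(\theta^*)$ vanishes as $\alpha \to 0$ from any fixed initial condition; integrating this convergence against the stationary initial distribution (and using $T_\alpha/\alpha \geq m_\alpha$ for small $\alpha$) yields $\mu([N_\delta(\theta^*)]^{m_\alpha}) \to 1$ uniformly in $\mu$. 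The weaker fixed-$m$ statement is then immediate by monotonicity.

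For part~(ii), I would invoke the ergodic theorem for weak Feller chains (\cite{Mey89}, \cite[Chap.~17]{MeT09}) applied to $\{(\Z_t, \theta_t^\alpha)\}$. For any initial condition, the occupation probability measures converge (a.s., weakly, along subsequences) to random invariant probability measures; since $\theta$ is a bounded continuous function on $\H$, every subsequential limit of $\bar\theta_k^\alpha = k^{-1}\sum_{t=0}^{k-1} \theta_t^\alpha$ has the form $\int \theta \, d\bar\nu$ for some random $\bar\nu \in \overline{\mathrm{conv}}(\bar{\mathcal{M}}_\alpha)$. Splitting the integral over $N'_\delta(\theta^*)$ and its complement gives $|\int \theta \, d\bar\nu - \theta^*| \leq \delta \, \bar\nu(N'_\delta(\theta^*)) + 2 r_\H (1 - \bar\nu(N'_\delta(\theta^*)))$. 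Projecting any $m$-step invariant measure to a single coordinate yields an element of $\bar{\mathcal{M}}_\alpha$ whose mass on $N'_\delta(\theta^*)$ dominates $\mu([N'_\delta(\theta^*)]^m)$, so $\bar\nu(N'_\delta(\theta^*)) \geq \kappa_{\alpha,m}$; monotonicity of $x \mapsto \delta x + 2 r_\H (1-x)$ (for $\delta \leq 2 r_\H$) gives the stated bound on $\limsup_k |\bar\theta_k^\alpha - \theta^*|$ almost surely. The telescoping estimate $|\bar\theta_{k+j}^\alpha - \bar\theta_k^\alpha| \leq 2 m r_\H/(k+1)$ for $0 \leq j < m$ shows that the sup over $k \leq t < k+m$ shares this $\limsup$, and reverse Fatou (using $\sup_{k\leq t<k+m}|\bar\theta_t^\alpha - \theta^*| \leq 2 r_\H$) transfers the bound to the $\limsup$ of expectations.

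The hardest step I expect is in part~(i): promoting the convergence in Theorem~\ref{thm-const-stepsize-b}(ii), which is proved for a fixed initial condition, to the stationary initialization given by an invariant measure of the $m_\alpha$-step chain. Because the state space is noncompact (the trace component $(\e, \F)$ is unbounded), uniformity of the convergence over arbitrary initial conditions is not automatic. However, any invariant probability measure of $\{(\Z_t, \theta_t^\alpha)\}$ is tight on $\S \times \A \times \re^{n+1}$ (arising as a weak limit of occupation measures, whose $\Z$-marginals are uniformly tight by the integrability results of Appendix~\ref{appsec-a}), so a dominated-convergence argument on the bounded indicator event should close the gap.
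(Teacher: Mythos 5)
Your overall architecture is right, but there are two genuine gaps, both at steps you partially flag yourself. First, in part (i): you propose to integrate the fixed-initial-condition convergence of Theorem~\ref{thm-const-stepsize-b}(ii) against the stationary initialization and "close the gap" by tightness plus dominated convergence. This does not work as stated, because the initial distribution $\hat\mu_\alpha \in \mathcal{M}_\alpha$ varies with the limit parameter $\alpha$: you would be integrating $\Pr_x(\text{bad event}_\alpha)$ against $\hat\mu_\alpha(dx)$, and pointwise-in-$x$ convergence plus uniform tightness of $\{\hat\mu_\alpha\}$ does not give convergence of the integrals unless the convergence is uniform over compact sets of initial conditions --- which Theorem~\ref{thm-const-stepsize-b}(ii) does not assert. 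The paper's route is structurally different: it proves an analogue of Prop.~\ref{prop-c2}, i.e., it re-verifies the conditions (ii), (i$'$)--(v$'$) of Section~\ref{sec-4.1.1} for the family of processes initialized at their respective $\alpha$-dependent invariant measures (this is easy because the $\Zs$-marginal of every invariant measure is the fixed measure $\zeta$, so $\{\xi_t\}$ is stationary and $\alpha$-independent), and then applies \cite[Theorem 8.2.2]{KuY03} directly to that family. The conclusion of Theorem~\ref{thm-const-stepsize-b}(ii) then holds with the stationary initialization by construction, and stationarity converts the probability over the window $[k_\alpha, k_\alpha+m_\alpha)$ into $1-\mu([N_\delta(\theta^*)]^{m_\alpha})$ (this also sidesteps your use of $k_\alpha=0$, which violates the hypothesis $\alpha k_\alpha \to \infty$). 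Your shortcut of citing $K_\delta$ from Theorem~\ref{thm-const-stepsize-b} rather than going through $N_\delta(L_\H)$ and Lemma~\ref{lma-3.2} is harmless.

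Second, in part (ii): you route the argument through almost-sure subsequential weak limits of the \emph{occupation} measures $\{\mu_{x,t}\}$ from an arbitrary initial condition, claiming all such limits lie in $\mathcal{M}_\alpha$. The ergodic theorems you cite do not give this: \cite[Prop.\ 4.1]{Mey89} only covers initial conditions in a set of full measure with respect to some invariant measure, and the paper explicitly notes (Remark~\ref{rmk-c2b}) that for the \emph{unperturbed} variants the behavior of the occupation measures from an arbitrary initial condition is not controlled --- this is precisely why the perturbed algorithm (\ref{eq-valg}) and Theorem~\ref{thm-c2b} exist. The paper's proof of part (ii) avoids random measures entirely: it uses convexity to bound $\sup_{k\le t<k+m}|\bar\theta_t^\alpha-\theta^*|$ by a Ces\`aro average, takes expectations so that the relevant quantity becomes the \emph{deterministic} averaged transition kernel $\bar P^{(m,k)}_{(z,\theta)}(D_\delta)$, and then invokes Lemma~\ref{lma-c1} (every weak limit of $\bar P_k(x,\cdot)$ is invariant, for \emph{every} $x$) together with the portmanteau inequality for the open set $D_\delta$. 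If you want to salvage your route you would need to prove separately (e.g., by a martingale SLLN applied to $f(X_{k+1})-Pf(X_k)$ over a countable convergence-determining class, combined with the a.s.\ tightness from Lemma~\ref{lma-c2}) that all a.s.\ subsequential limits of $\{\mu_{x,t}\}$ are invariant for every $x$; as written, that step is unsupported, and it is exactly the point where the expectation-based statement of Theorem~\ref{thm-c1b}(ii) differs from the pathwise statements reserved for the perturbed algorithms.
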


\medskip
Finally, we consider a simple modification of the preceding algorithms, for which the conclusions of Theorems~\ref{thm-c1}(ii) and~\ref{thm-c1b}(ii) can be strengthened. This is our motivation for introducing the modification, but we shall postpone the discussion till Remark~\ref{rmk-c2a} at the end of this subsection. 

For any of the algorithms (\ref{eq-emtd-const0}), (\ref{eq-emtd-const1}) or (\ref{eq-emtd-const2}), if the original recursion under a constant stepsize $\alpha$ can be written as
$$ \theta_{t+1}^\alpha = \Pi_\H \big( \theta_t^\alpha + \alpha Y_t^\alpha \big),$$
we now modify this recursion formula by adding a perturbation term $\alpha \Delta_{\theta,t}^\alpha$ as follows. Let
\begin{equation} \label{eq-valg}
 \theta_{t+1}^\alpha = \Pi_\H \big( \theta_t^\alpha + \alpha Y_t^\alpha + \alpha \Delta_{\theta,t}^\alpha \big),
\end{equation}
where for each $\alpha > 0$, $\Delta_{\theta,t}^\alpha, t \geq 0$, are $\re^{n}$-valued random variables such that
\footnote{We adopt these conditions for simplicity. They are not the weakest possible for our purpose, and our proof techniques can be applied to  other types of perturbations as well. See the discussions in Remark~\ref{rmk-c2a}, Remark~\ref{rmk-c2b}, and before Prop.~\ref{prop-c3} in Section~\ref{sec-4.3.3}.}
\begin{enumerate}
\item[(i)] they are independent of each other and also independent of the process $\{\Z_t\}$; 
\item[(ii)] they are identically distributed with zero mean and finite variance, where the variance can be bounded uniformly for all $\alpha$; and 
\item[(iii)] they have a positive continuous density function with respect to the Lebesgue measure.
\end{enumerate}
Below we refer to (\ref{eq-valg}) as the perturbed version of the algorithm (\ref{eq-emtd-const0}), (\ref{eq-emtd-const1}) or (\ref{eq-emtd-const2}).

\begin{thm} \label{thm-c2}
In the setting of Theorem~\ref{thm-const-stepsize}, let $\{\theta_t^\alpha\}$ be generated instead by the perturbed version (\ref{eq-valg}) of the algorithm (\ref{eq-emtd-const0}) for a constant stepsize $\alpha > 0$, and let $\{\bar \theta_t^\alpha\}$ be the corresponding averaged sequence.
Then the conclusions of Theorems~\ref{thm-const-stepsize} and~\ref{thm-c1} hold. Furthermore, let the stepsize $\alpha$ be given. 
Then the Markov chain $\{(\Z_t, \theta_t^\alpha)\}$ has a unique invariant probability measure $\mu_\alpha$, and for any $\delta > 0$, $m \geq 1$, and initial condition of $(\e_0,\F_0,\theta_0^\alpha)$, almost surely, 
$$ \liminf_{t \to \infty} \, \frac{1}{t} \sum_{k=0}^{t-1} \I\Big(\sup_{k \leq j <  k+m} \big| \theta_j^\alpha - \theta^* \big| < \delta \Big) \geq \bar{\mu}_\alpha^{(m)}\big([N'_\delta(\theta^*)]^m \big)$$
and
$$ \limsup_{t \to \infty} \big| \bar \theta_t^\alpha - \theta^* \big|  \leq \delta \, \kappa_{\alpha} + 2 r_\H\, (1 - \kappa_{\alpha}), \qquad \text{with} \ \kappa_\alpha = \bar{\mu}_\alpha\big(N'_\delta(\theta^*) \big),  $$
where $\bar{\mu}_\alpha^{(m)}$ is the unique element in $\bar{\mathcal{M}}^m_\alpha$, 
and $\bar{\mu}_\alpha$ is the marginal of $\mu_\alpha$ on $\H$.
\end{thm}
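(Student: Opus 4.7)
The plan is to leverage the structure already established in Theorems~\ref{thm-const-stepsize} and~\ref{thm-c1} and to supplement it with ergodic-theoretic consequences of the perturbation. First I would verify that the addition of the term $\alpha \Delta_{\theta,t}^\alpha$ does not disturb the weak-convergence analysis of Section~\ref{sec-3.1}: the perturbation is independent across time and of $\{\Z_t\}$, has zero mean and uniformly bounded variance, so $\alpha \Delta_{\theta,t}^\alpha$ enters the interpolated processes as an additional martingale-difference-type noise term whose contribution vanishes in the ODE limit. The hypotheses of \cite[Theorems 8.2.2, 8.2.3]{KuY03} therefore remain in force, the mean ODE (\ref{eq-pode}) is unchanged, and Theorems~\ref{thm-const-stepsize} and~\ref{thm-c1} transfer essentially verbatim. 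The independence and continuity assumptions on $\Delta_{\theta,t}^\alpha$ together with the weak Feller property of $\{\Z_t\}$ from Theorem~\ref{thm-2.1} also guarantee that the perturbed chain $\{(\Z_t, \theta_t^\alpha)\}$ remains weak Feller on $\S \times \A \times \re^{n+1} \times \H$.

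The genuinely new ingredient is uniqueness of the invariant probability measure $\mu_\alpha$, for which the positive continuous Lebesgue density of $\Delta_{\theta,t}^\alpha$ is essential. I would establish this by a coupling argument: take two copies of the chain driven by a common realization of $\{\Z_t\}$ (note that $\Z_{t+1}$ does not depend on $\theta_t^\alpha$, so the system has a hierarchical structure and the coupling is admissible), and at each step apply a maximal coupling of the two perturbations so as to maximize the probability that the two $\theta$-iterates coincide after the update. Because the density of $\Delta_{\theta,t}^\alpha$ is strictly positive and continuous and the $\theta$-iterates are confined to the compact set $\H$, the total-variation overlap of the two perturbation-shifted update densities admits a uniform positive lower bound whenever the trace component lies in a prescribed compact set. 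Since the trace process returns to such a set with positive frequency under the ergodicity of $\{\Z_t\}$ (Theorem~\ref{thm-2.1}), a Borel--Cantelli argument shows that the two copies couple in finite time almost surely, yielding uniqueness of $\mu_\alpha$; applying the same reasoning to the $m$-step chain gives uniqueness of $\bar{\mu}_\alpha^{(m)}$. Combining uniqueness with the weak Feller property, the pathwise ergodic theorems for weak Feller Markov chains in \cite{Mey89,MeT09} then produce almost-sure weak convergence of occupation measures to $\mu_\alpha$ for every initial condition.

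From this the two almost-sure bounds follow by standard arguments. The set $\big\{(z_0,\theta_0,\ldots,z_{m-1},\theta_{m-1}) : \max_{0 \leq j < m} | \theta_j - \theta^* | < \delta \big\}$ is open, so almost-sure weak convergence of the $m$-step occupation measures together with the lower-semicontinuous direction of the portmanteau theorem delivers the stated $\liminf$ inequality. For the averaged-iterate bound, pathwise ergodicity gives $\bar \theta_t^\alpha \to \int \theta \, d\bar{\mu}_\alpha$ almost surely (the integrand is bounded and continuous on the compact set $\H$), after which splitting the integral over $N'_\delta(\theta^*)$ and its complement and using $|\theta - \theta^*| \leq \delta$ and $|\theta - \theta^*| \leq 2 r_\H$ respectively yields the claimed bound. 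The main obstacle is the uniqueness argument, since the state space mixes the finite $\S \times \A$ component, the unbounded Euclidean trace component $(\e_t, \F_t)$, and the compact $\theta$-component: one must identify a recurrent compact set on which the perturbation provides a uniform coupling lower bound, and then exploit the ergodicity of $\{\Z_t\}$ to transfer this local coupling into an almost-sure finite-time coupling of the full chain.
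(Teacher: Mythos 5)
Your proposal is correct and its overall architecture coincides with the paper's: the perturbation is a conditionally zero-mean, bounded-variance term that leaves the conditions of \cite[Theorems 8.2.2, 8.2.3]{KuY03} and the mean ODE intact; the positive continuous density yields a minorization $Q(d\theta'\mid\xi,\theta)\ge\beta\,Q_1(d\theta')$ uniformly over $\theta\in\H$ and $\xi$ in a compact set of traces that is visited infinitely often; uniqueness plus the weak Feller property feeds into Meyn's ergodic theorem; and the two almost-sure bounds follow from the portmanteau inequality for open sets. The one place where you genuinely diverge is how you pass from the coupling to uniqueness. The paper does \emph{not} couple two stationary versions: its Lemma~\ref{lma-c4} couples two chains from deterministic initial conditions sharing the same $z$-component and concludes only that their occupation-measure limits agree; it then needs a further, fairly heavy argument (the strong law of large numbers for stationary processes, \cite[Prop.\ 4.1]{Mey89}, and a disintegration of any two invariant measures over their common $\Zs$-marginal $\zeta$) to deduce $\int f\,d\mu=\int f\,d\mu'$. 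Your route—start the two copies from $\mu(\cdot\mid Z_0)$ and $\mu'(\cdot\mid Z_0)$ with a common $Z_0\sim\zeta$, couple them in finite time, and conclude $\int f\,d\mu=\int f\,d\mu'$ directly from stationarity—is a legitimate and arguably cleaner shortcut. To make it airtight you should state explicitly that every invariant measure of $\{(\Z_t,\theta_t^\alpha)\}$ has $\Zs$-marginal $\zeta$ (this is what licenses the shared $Z_0$), and that the coupling, once achieved, is absorbing because the two update kernels coincide thereafter.

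Two smaller points. First, the application of \cite[Prop.\ 4.2]{Mey89} requires, besides uniqueness, the tightness of the averaged transition kernels from compact sets and the almost-sure tightness of the occupation measures; these do not come for free from ``ergodicity of $\{\Z_t\}$'' but from the uniform moment bound $\sup_t\E\|(\e_t,\F_t)\|<\infty$ (Prop.~\ref{prp-bdtrace}) via Markov's inequality, as in Lemma~\ref{lma-c2}—you should cite or reprove that step. Second, your derivation of the averaged-iterate bound via $\bar\theta_t^\alpha\to\int\theta\,d\bar\mu_\alpha$ and Jensen's inequality is equivalent to, and slightly more economical than, the paper's term-by-term splitting of $\tfrac1t\sum_k|\theta_k^\alpha-\theta^*|$; both yield exactly $\delta\,\kappa_\alpha+2r_\H(1-\kappa_\alpha)$.
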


\begin{thm} \label{thm-c2b}
In the setting of Theorem~\ref{thm-const-stepsize}, let $\{\theta_t^\alpha\}$ be generated instead by the perturbed version (\ref{eq-valg}) of the algorithm (\ref{eq-emtd-const1}) or (\ref{eq-emtd-const2}), with a constant stepsize $\alpha > 0$ and with a bounded Lipschitz continuous function $\psi_K$ satisfying (\ref{eq-psi}).
Let $\{\bar \theta_t^\alpha\}$ be the corresponding averaged sequence.
Then the conclusions of Theorems~\ref{thm-const-stepsize-b} and~\ref{thm-c1b} hold. Furthermore, for any given stepsize $\alpha$, the conclusions of the second part of Theorem~\ref{thm-c2} also hold.
\end{thm}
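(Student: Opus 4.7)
The plan is to argue in two stages. First, I would show that the weak-convergence conclusions of Theorems~\ref{thm-const-stepsize-b} and~\ref{thm-c1b} carry over to the perturbed recursion (\ref{eq-valg}). Writing the update as $\theta_{t+1}^\alpha=\Pi_\H\bigl(\theta_t^\alpha+\alpha Y_t^\alpha+\alpha\Delta_{\theta,t}^\alpha\bigr)$, one can absorb the extra term $\alpha\Delta_{\theta,t}^\alpha$ into the noise decomposition used in Section~\ref{sec-4.2} for the unperturbed biased variants. The hypotheses that $\{\Delta_{\theta,t}^\alpha\}$ is i.i.d., zero-mean, independent of $\{\Z_t\}$, and has variance bounded uniformly in $\alpha$ ensure that the piecewise-constant interpolation of the partial sums $\alpha\sum_{k=0}^{\lfloor\cdot/\alpha\rfloor}\Delta_{\theta,k}^\alpha$ converges weakly to the zero process as $\alpha\to 0$ (a standard Doob-style maximal inequality suffices). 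Consequently the mean ODE $\dot x=\bar h_K(x)+z$, $z\in-\mathcal{N}_\H(x)$, is unchanged, the verification of the hypotheses of \cite[Thms.\ 8.2.2, 8.2.3]{KuY03} still goes through with the augmented noise, and Theorems~\ref{thm-const-stepsize-b} and~\ref{thm-c1b} follow for (\ref{eq-valg}).

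For the second stage, fix $\alpha>0$ and observe that $\{(\Z_t,\theta_t^\alpha)\}$ is still a time-homogeneous Markov chain on $\S\times\A\times\re^{n+1}\times\H$. Its transition kernel is the convolution of the unperturbed kernel with the law of $\alpha\Delta_{\theta,0}^\alpha$ along the $\theta$-coordinate (before $\Pi_\H$ is applied). Continuity of that density, together with the weak Feller property of the unperturbed chain (cf.\ Lemma~\ref{lma-c1}), implies that the perturbed chain is also weak Feller. Existence of at least one invariant probability measure then follows from the boundedness of $\H$ combined with the uniform integrability of the trace iterates $(\e_t,\F_t)$ recalled in Appendix~\ref{appsec-a}, which yields tightness of the Ces\`aro-averaged one-step laws.

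The step I expect to be the main obstacle is uniqueness of the invariant probability measure, which is precisely what the perturbation is introduced to buy. The plan is to leverage the positive continuous density of $\Delta_{\theta,t}^\alpha$ as a spreading mechanism in the $\theta$-coordinate: conditional on the past, $\theta_{t+1}^\alpha$ has an absolutely continuous law with a density that is positive on a full-dimensional neighborhood of its projected mean. Combined with the unique ergodicity of $\{\Z_t\}$ from Theorem~\ref{thm-2.1}, this allows a coupling of two putative stationary versions of $\{(\Z_t,\theta_t^\alpha)\}$ with different marginals to be carried out, or equivalently the T-chain criterion for weak Feller Markov chains from \cite{MeT09} to be verified, and uniqueness concluded from the results there. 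Denote the unique invariant probability measure by $\mu_\alpha$ and its $m$-step marginal on $\H^m$ by $\bar\mu_\alpha^{(m)}$.

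Once uniqueness is secured, the Birkhoff-type almost-sure ergodic theorem for weak Feller chains with a unique invariant probability measure (\cite{Mey89}, \cite[Chap.\ 17]{MeT09}) yields, for every bounded measurable functional of finite-length trajectories and for every initial condition, almost-sure convergence of the corresponding time averages to the integral under $\mu_\alpha^{(m)}$. Applying this to the indicator of $[N'_\delta(\theta^*)]^m$, together with a Portmanteau-type lower-semicontinuity argument to accommodate the open set, gives the first a.s.\ inequality of the claim. The second is obtained by splitting $\bar\theta_t^\alpha-\theta^*=\tfrac{1}{t}\sum_{k=0}^{t-1}(\theta_k^\alpha-\theta^*)$ according to whether $\theta_k^\alpha\in N'_\delta(\theta^*)$ or not, bounding the two pieces by $\delta$ and $2 r_\H$ respectively using $\theta_k^\alpha,\theta^*\in\H$, and passing to the limit by the same ergodic theorem applied to the indicator of $N'_\delta(\theta^*)$.
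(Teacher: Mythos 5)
Your two-stage architecture matches the paper's. The first stage is handled essentially as you describe: since $\Delta_{\theta,t}^\alpha$ has conditional mean zero and variance bounded uniformly in $\alpha$, the only condition of \cite[Thms.\ 8.2.2, 8.2.3]{KuY03} it touches is the uniform integrability of the increments, and the mean ODE $\dot x=\bar h_K(x)+z$ is unchanged; the second stage likewise proceeds via the weak Feller property, existence and uniqueness of the invariant probability measure, Meyn's ergodic theorem, and the portmanteau and splitting arguments you give at the end, which are exactly the paper's.

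The gap is in the uniqueness step, which you correctly flag as the main obstacle but for which neither of your two proposed routes is adequate as stated. The T-chain/$\psi$-irreducibility machinery of \cite{MeT09} does not apply to the joint chain $\{(\Z_t,\theta_t^\alpha)\}$: the perturbation smooths only the $\theta$-coordinate, while the trace component $(\e_t,\F_t)$ of $\Z_t$ is a deterministic functional of the state-action trajectory and has no density, so the joint chain is not a T-chain (indeed, unique ergodicity of $\{\Z_t\}$ itself is obtained in Theorem~\ref{thm-2.1} without any irreducibility). The coupling route is the one the paper takes (Lemmas~\ref{lma-c4a} and~\ref{lma-c4}, Prop.~\ref{prop-c3}), but it cannot be a coupling of ``two stationary versions with different marginals'' in general: the two copies must be run on the \emph{same} $\Z$-trajectory, so the coupling only identifies the a.s.\ limits of occupation measures for initial conditions $(z,\theta)$ and $(z,\theta')$ sharing a common $z$. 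To get from there to uniqueness one needs three further ingredients: (a) a minorization $Q(\cdot\mid\xi,\theta)\geq\beta\,Q_1(\cdot)$ uniform over $\theta\in\H$ and over $\xi$ in a compact set $E$ that $\{\xi_t\}$ visits infinitely often a.s.\ (the latter coming from the tightness in Lemma~\ref{lma-c2}), so that the two $\theta$-trajectories merge in a.s.\ finite time; (b) Meyn's result \cite[Prop.\ 4.1]{Mey89} that for each invariant $\mu$ the occupation measures converge a.s.\ for $\mu$-a.e.\ initial condition; and (c) a disintegration of any two invariant measures $\mu,\mu'$ over their common $\Zs$-marginal $\zeta$, which converts ``same limit for a common $z$'' into $\int f\,d\mu=\int f\,d\mu'$ for all bounded continuous $f$. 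Relatedly, your appeal to a Birkhoff-type ergodic theorem ``for every initial condition'' needs more than uniqueness of the invariant measure: for a weak Feller chain this requires in addition the tightness conditions (ii)--(iii) of Lemma~\ref{lma-c1b} (i.e., \cite[Prop.\ 4.2]{Mey89}), which the paper verifies in Lemma~\ref{lma-c2}.
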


Note that in the second part of Theorem~\ref{thm-c2}, both $\bar{\mu}_\alpha^{(m)}\big([N'_\delta(\theta^*)]^m \big)$ and $\kappa_\alpha$ approach $1$ as $\alpha \to 0$, since by the first part of the theorem, the conclusions of Theorem~\ref{thm-c1} hold.  
For the second part of Theorem~\ref{thm-c2b}, the same is true provided that $K$ is sufficiently large (so that $N_\delta(L_\H) \subset N_\delta(\theta^*)$ where $L_\H$ is the limit set of the ODE associated with the algorithm), and this can be seen from the conclusions of Theorem~\ref{thm-c1b}(i), which holds for the perturbed version (\ref{eq-valg}) of the two variant algorithms, as the first part of Theorem~\ref{thm-c2b} says.
The proofs of Theorems~\ref{thm-c2}-\ref{thm-c2b} are given in Section~\ref{sec-4.3.3}.

\smallskip
\begin{rem}[on the role of perturbation] \label{rmk-c2a} 
At first sight it may seem counter-productive to add noise to the $\theta$-iterates in the algorithm (\ref{eq-valg}).
Our motivation for such random perturbations of the $\theta$-iterates is that this can ensure that the Markov chain $\{(\Z_t, \theta_t^\alpha)\}$ has a unique invariant probability measure (see Prop.~\ref{prop-c4}). The uniqueness allows us to invoke a result of Meyn \cite{Mey89} on the convergence of the occupation probability measures of a weak Feller Markov chain, so that we can bound the deviation of the averaged iterates from $\theta^*$ not only in an expected sense as before, but also for almost all sample paths under each initial condition, as in the second part of Theorems~\ref{thm-c2}-\ref{thm-c2b}. For the unperturbed algorithms, we can only prove such pathwise bounds on $\limsup_{t \to \infty} | \bar \theta_t^\alpha - \theta^* |$ for a subset of the initial conditions of $(\Z_0, \theta_0^\alpha)$. A more detailed discussion of this is given in Remark~\ref{rmk-c2b}, at the end of Section~\ref{sec-4.3.3}, after the proofs of the preceding theorems.

Regarding other effects of the perturbation, intuitively, larger noise terms may help the Markov chain ``mix'' faster, but they can also result in less probability mass $\bar{\mu}_\alpha\big(N'_\delta(\theta^*) \big)$ around $\theta^*$ than in the case without perturbation. What is a suitable amount of noise to add to achieve a desired balance? We do not yet have an answer. It seems reasonable to us to let the magnitude of the variance of the perturbation terms $\Delta^\alpha_{\theta,t}$ be approximately $\alpha^{2\epsilon}$ for some $\epsilon \in (0,1]$, so that a typical perturbation $\alpha \Delta^\alpha_{\theta,t}$ is at a smaller scale relative to the ``signal part'' $\alpha Y_t^\alpha$ in an iteration. Further investigation is needed. \qed
\end{rem}

\section{Proofs for Section~\ref{sec-3}} \label{sec-4}

We now prove the theorems given in the preceding section.

\subsection{Proofs for Theorems~\ref{thm-dim-stepsize} and~\ref{thm-const-stepsize}} \label{sec-4.1}

In this subsection we prove Theorems~\ref{thm-dim-stepsize} and~\ref{thm-const-stepsize} on convergence properties of the constrained ETD($\lambda$) algorithm (\ref{eq-emtd-const0}). We will apply two theorems from \cite{KuY03}, Theorems 8.2.2 and 8.2.3, which concern weak convergence of stochastic approximation algorithms for constant and diminishing stepsize, respectively. This requires us to show that the conditions of those theorems are satisfied by our algorithm. The major conditions concern the uniform integrability, tightness, and convergence in mean of certain sequences of random variables involved in the algorithm. Our proofs will rely on many properties of the ETD iterates that we have established in \cite{yu-etdarx} when analyzing the almost sure convergence of the algorithm.
 
\subsubsection{Conditions to Verify} \label{sec-4.1.1}
We need some definitions and notation, before describing the conditions required. For some index set $\mathcal{K}$, let $\{X_k\}_{k \in \mathcal{K}}$ be a set of random variables taking values in a metric space $\X$ (in our context $\X$ will be $\re^m$ or $\Xi$). 
The set $\{X_k\}_{k \in \mathcal{K}}$ is said to be \emph{tight} or \emph{bounded in probability}, if there exists for each $\delta >0$ a compact set $D_\delta \subset \X$ such that 
$$\inf_{k \in \mathcal{K}} \, \Pr (X_k \in D_\delta) \geq 1 - \delta.$$
For $\re^m$-valued $X_k$, the set $\{X_k\}_{k \in \mathcal{K}}$ is said to be \emph{uniformly integrable} (u.i.) if 
$$ \lim_{a \to \infty} \, \sup_{k \in \mathcal{K}} \, \E \left[ \| X_k\| \, \I \big( \| X_k\| \geq a \big) \right] = 0.$$

To analyze the constrained ETD($\lambda$) algorithm (\ref{eq-emtd-const0}), which is given by
$$ \theta_{t+1} = \Pi_\H (\theta_t + \alpha_t Y_t), \qquad \text{where} \ Y_t: = \e_t \cdot \rho_t \big(R_t + \gamma_{t+1} \fe(S_{t+1})^\top \theta_t - \fe(S_t)^\top \theta_t \big),$$
let $\E_t$ denote expectation conditioned on $\mathcal{F}_t$, the sigma-algebra generated by $\theta_m, \xi_m, m \leq t$, where we recall $\xi_m = (\e_m, \F_m, S_m, A_m, S_{m+1})$ and its space $\re^{n+1} \times \mathcal{S} \times \mathcal{A}  \times \mathcal{S}$ is denoted by $\Xi$.
By writing $Y_t = \E_t [ Y_t ] + (Y_t -  \E_t [ Y_t ])$, we have the equivalent form of (\ref{eq-emtd-const0}) given in (\ref{eq-emtd-const}):
$$ \theta_{t+1} = \Pi_{\H} \!\left( \theta_t + \alpha_t \, h(\theta_t, \xi_t) + \alpha_t \, \e_t \cdot \tilde \omega_{t+1} \right).$$
In other words, $h(\theta_t, \xi_t) = \E_t [Y_t]$ and $\e_t \cdot \tilde \omega_{t+1} = Y_t -  \E_t [ Y_t ]$, a noise term that satisfies $\E_t [ \e_t \cdot \tilde \omega_{t+1} ] = 0$.

This algorithm belongs to the class of stochastic approximation algorithms with ``exogenous noises'' studied in the book \cite{KuY03}---the term ``exogenous noises'' reflects the fact that the evolution of $\{\xi_t\}$ is not driven by the $\theta$-iterates. 
Theorems~\ref{thm-dim-stepsize} and~\ref{thm-const-stepsize} will follow as special cases from Theorems 8.2.3 and 8.2.2 of \cite[Chap.\ 8]{KuY03}, respectively, if we can show that the algorithm (\ref{eq-emtd-const0}) satisfies the following conditions. 

{\samepage
\noindent {\bf Conditions for the case of diminishing stepsize:}
\begin{enumerate}
\item[(i)] The sequence $\{ Y_t\} = \{ h(\theta_t, \xi_t) +  \e_t \cdot \tilde \omega_{t+1}\}$ is u.i. (This corresponds to the condition A.8.2.1 of \cite{KuY03}.) 
\item[(ii)] The function $h(\theta, \xi)$ is continuous in $\theta$ uniformly in $\xi \in D$, for each compact set $D \subset \Xi$. (This corresponds to the condition A.8.2.3 of \cite{KuY03}.) 
\item[(iii)] The sequence $\{\xi_t\}$ is tight. 
(This corresponds to the condition A.8.2.4 of \cite{KuY03}.)
\item[(iv)] The sequence $\{h(\theta_t, \xi_t)\}$ is u.i., and so is $\{h(\theta, \xi_t)\}$ for each fixed $\theta \in \H$. (This corresponds to the condition A.8.2.5 of \cite{KuY03}.)
\item[(v)] There is a continuous function $\bar h(\cdot)$ such that for each $\theta \in \H$ and each compact set $D \subset \Xi$,
$$ \lim_{k \to \infty, t \to \infty}  \, \frac{1}{k} \sum_{m = t}^{t+k-1} \E_t \left[ h(\theta, \xi_m) - \bar h(\theta) \right] \, \I\big(\xi_t \in D\big) = 0 \qquad \text{in mean},$$
where $k$ and $t$ are taken to $\infty$ in any way possible. In other words, if we denote the average on the left-hand side by $X_{k,t}$, then the requirement  ``$\lim_{k \to \infty,t \to \infty} X_{k,t} = 0$ in mean'' means that along any subsequences $k_j \to \infty, t_j \to \infty$, we must have $\lim_{j \to \infty} \E [ \| X_{k_j, t_j} \| ] = 0$.
(This condition corresponds to the condition A.8.2.7 of \cite{KuY03}.)
\end{enumerate}
}

\medskip
For the case of constant stepsize, we consider the iterates that could be generated by the algorithm for all stepsizes. To distinguish between the iterates associated with different stepsizes, in the conditions below, the superscript $\alpha$ is attached to the variables involved in the algorithm with stepsize $\alpha$, and similarly, 
the conditional expectation  $\E_t$ is denoted by $\E_t^\alpha$ instead. 

\smallskip
\noindent {\bf Conditions for the case of constant stepsize:}  \\
In addition to the condition (ii) above (which corresponds to the condition A.8.1.6 of \cite{KuY03} for the case of constant stepsize), the following conditions are required.
\begin{enumerate}
\item[(i$'$)] The set $\{ Y_t^\alpha \mid t \geq 0, \alpha > 0\} : = \{ h(\theta_t^\alpha, \xi_t^\alpha) +  \e_t^\alpha \cdot \tilde \omega_{t+1}^\alpha \mid t \geq 0, \alpha > 0\}$ is u.i. (This corresponds to the condition A.8.1.1 of \cite{KuY03}.)
\item[(iii$'$)] The set $\{\xi_t^\alpha \mid t \geq 0, \alpha > 0 \}$ is tight. (This corresponds to the condition A.8.1.7 of \cite{KuY03}.)
\item[(iv$'$)]  The set $\{h(\theta_t^\alpha, \xi_t^\alpha) \mid t \geq 0, \alpha > 0\}$ is u.i., in addition to the uniform integrability of $\{h(\theta, \xi_t^\alpha) \mid t \geq 0, \alpha > 0\}$ for each $\theta \in \H$. (This corresponds to the condition A.8.1.8 of \cite{KuY03}.)
\item[(v$'$)] There is a continuous function $\bar h(\cdot)$ such that for each $\theta \in \H$ and each compact set $D \subset \Xi$,
$$ \lim_{k \to \infty, t \to \infty, \alpha \to 0}  \, \frac{1}{k} \sum_{m = t}^{t+k-1} \E_t^\alpha \left[ h(\theta, \xi_m^\alpha) - \bar h(\theta) \right] \, \I\big(\xi_t^\alpha \in D\big) = 0 \qquad \text{in mean},$$
where $\alpha$ is taken to $0$ and $k, t$ are taken to $\infty$ in any way possible. (This condition corresponds to the condition A.8.1.9 of \cite{KuY03}, and it is in fact stronger than the latter condition but is satisfied by our algorithms as we will show.)
\end{enumerate}

\medskip
The preceding conditions allow $\xi_t^\alpha$ and $\theta_t^\alpha$ to be generated under different initial conditions for different $\alpha$. While we will need this generality later in Section~\ref{sec-4.3}, here we will focus on a common initial condition for all stepsizes, for simplicity. Then, the preceding conditions for the constant-stepsize case are essentially the same as those for the diminishing stepsize case, because except for the $\theta$-iterates, all the other variables (such as $\xi_t$ and $\tilde \omega_{t}$) involved in the algorithm have identical probability distributions for all stepsizes $\alpha$ and are not affected by the $\theta$-iterates. For this reason, in the proofs below, except for the $\theta$-iterates, we simply omit the superscript $\alpha$ for other variables in the case of constant stepsize, and to verify the two sets of conditions above, we shall treat the case of diminishing stepsize and the case of constant stepsize simultaneously.

As mentioned in Section~\ref{sec-2.3}, these conditions are to ensure that the projected ODE~(\ref{eq-pode}), $\dot{x} = \bar h(x) +z, z \in - \mathcal{N}_\H(x)$, is the mean ODE for the algorithm (\ref{eq-emtd-const0}) and reflects its average dynamics. Among the proofs for these conditions given next, the proof for the convergence in mean condition (v) and (v$'$) will be the most involved.

\subsubsection{Proofs}

The condition (ii) is clearly satisfied. In what follows, we prove that the rest of the conditions are satisfied as well. We start with the tightness conditions (iii) and (iii$'$), as they are immediately implied by a property of the trace iterates we already know. We then tackle the uniform integrability conditions (i), (i$'$), (iv) and (iv$'$), before we address the convergence in mean required in (v) and (v$'$). The proofs build upon several key properties of the ETD iterates we have established in \cite{yu-etdarx} and recounted in Section~\ref{sec-2.3} and Appendix~\ref{appsec-a}.

First, we show that the tightness conditions (iii) and (iii$'$) are satisfied. This is implied by the following property of traces: for any given initial condition $(\e_0, \F_0)$,
$\sup_{t \geq 0} \E \big[ \big\| (\e_t, \F_t) \big\| \big] < \infty$ (see Prop.~\ref{prp-bdtrace}, Appendix~\ref{appsec-a}).
   
\smallskip
\begin{prop} \label{prop-1}
Under Assumption~\ref{cond-bpolicy}, for each given initial $(\e_0,\F_0) \in \re^{n+1}$, $\{(\e_t, \F_t)\}$ is tight and hence $\{\xi_t\}$ is tight.
\end{prop}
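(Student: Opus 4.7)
The plan is to reduce tightness of $\{(\e_t,\F_t)\}$ to the first-moment bound supplied by Prop.~\ref{prp-bdtrace} via a one-line application of Markov's inequality, and then to extend tightness to $\{\xi_t\}$ by noting that the extra coordinates live in a finite set.

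More specifically, by Prop.~\ref{prp-bdtrace} there exists a finite constant $M$, depending on the initial $(\e_0,\F_0)$, such that
\begin{equation*}
  \sup_{t \geq 0} \E\bigl[\,\|(\e_t,\F_t)\|\,\bigr] \leq M.
\end{equation*}
For any $\delta > 0$, set $a_\delta = M/\delta$ and let $D_\delta = \{(\e,\F) \in \re^{n+1} \mid \|(\e,\F)\| \leq a_\delta\}$; this set is closed and bounded in $\re^{n+1}$ and hence compact. By Markov's inequality,
\begin{equation*}
 \Pr\bigl((\e_t,\F_t) \notin D_\delta\bigr) \,\leq\, \frac{\E[\|(\e_t,\F_t)\|]}{a_\delta} \,\leq\, \frac{M}{a_\delta} \,=\, \delta, \qquad \forall\, t \geq 0,
\end{equation*}
which is exactly the definition of tightness for $\{(\e_t,\F_t)\}$ on $\re^{n+1}$.

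For $\{\xi_t\}$, recall $\xi_t = (\e_t,\F_t,S_t,A_t,S_{t+1}) \in \Xi = \re^{n+1} \times \mathcal{S} \times \mathcal{A} \times \mathcal{S}$. Since $\mathcal{S}$ and $\mathcal{A}$ are finite, the factor $\mathcal{S} \times \mathcal{A} \times \mathcal{S}$ is itself compact in the discrete topology, so $\widetilde D_\delta := D_\delta \times \mathcal{S} \times \mathcal{A} \times \mathcal{S}$ is a compact subset of $\Xi$. Moreover $\Pr(\xi_t \in \widetilde D_\delta) = \Pr((\e_t,\F_t) \in D_\delta) \geq 1 - \delta$ for all $t$, yielding tightness of $\{\xi_t\}$.

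The only nontrivial ingredient is the uniform first-moment bound on the traces, and that is precisely what Prop.~\ref{prp-bdtrace} delivers; everything else is a routine Markov-inequality argument combined with the finiteness of the state/action spaces. Hence I do not anticipate a substantive obstacle in the proof of this proposition itself; the real work lies upstream in Prop.~\ref{prp-bdtrace}.
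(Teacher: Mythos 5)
Your proposal is correct and follows exactly the same route as the paper: invoke Prop.~\ref{prp-bdtrace} for the uniform bound $\sup_{t\geq 0}\E[\|(\e_t,\F_t)\|]<\infty$, apply Markov's inequality to get tightness of $\{(\e_t,\F_t)\}$, and use the finiteness of $\S\times\A\times\S$ to extend this to $\{\xi_t\}$. No gaps.
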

\begin{proof}
By Prop.~\ref{prp-bdtrace}, $c : = \sup_{t \geq 0} \E \big[ \big\| (\e_t, \F_t) \big\| \big] < \infty$. Then, by the Markov inequality, for $a > 0$, 
$\sup_{t \geq 0} \Pr \big(  \big\| (\e_t, \F_t) \big\| \geq a \big) \leq c/a \to 0$ as $a \to \infty.$
This implies that $\{(\e_t, \F_t)\}$ is tight. Since the space $\S \times \A \times S$ is finite and $\xi_t = (\e_t, \F_t, S_t, A_t, S_{t+1})$, $\{\xi_t\}$ is also tight. 
\end{proof}

We now handle the uniform integrability conditions (i), (i$'$), (iv) and (iv$'$). The uniform integrability of the trace sequence $\{\e_t\}$, as we will prove, is important here.

\begin{prop}\label{prop-2}
Under Assumption~\ref{cond-bpolicy}, for each given initial $(\e_0,\F_0) \in \re^{n+1}$, the following sets of random variables are u.i.: 
\begin{enumerate}
\item[\rm (i)] $\{\e_t\}$;
\item[\rm (ii)] $\{ h(\theta, \xi_t)\}$ for each fixed $\theta \in \H$;
\item[\rm (iii)] $\{ h(\theta_t, \xi_t)\}$ in the case of diminishing stepsize; and $\{ h(\theta_t^\alpha, \xi_t) \mid t \geq 0, \alpha > 0\}$ in the case of constant stepsize;
\item[\rm (iv)] $\{ h(\theta_t, \xi_t) + \e_t \, \tilde \omega_{t+1} \}$ in the case of diminishing stepsize; and $\{ h(\theta_t^\alpha, \xi_t) + \e_t \, \tilde \omega_{t+1} \mid t \geq 0, \alpha > 0 \}$ in the case of constant stepsize.
\end{enumerate}
\end{prop}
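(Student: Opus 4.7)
The plan is to reduce all four statements to a single uniform moment bound on the eligibility traces, namely $\sup_t \E[\|\e_t\|^p] < \infty$ for some $p > 1$. This is a strengthening of the $L^1$ bound of Prop.~\ref{prp-bdtrace} that I expect to find among the trace properties collected in Appendix~\ref{appsec-a} (it is exactly the kind of bound needed to convert the tightness of $\{\xi_t\}$ into integrability statements for the unbounded functional $h$). Once such a bound is available, the Lyapunov / de la Vallée Poussin criterion immediately yields uniform integrability of any family whose $(1+\varepsilon)$-th moments are uniformly bounded. Note that in the constant-stepsize case only the $\theta$-iterates depend on $\alpha$; the variables $\xi_t = (\e_t, \F_t, S_t, A_t, S_{t+1})$ and $\tilde \omega_{t+1}$ are generated by the same exogenous process, so any bound uniform in $t$ is automatically uniform in $(t,\alpha)$, letting me treat both stepsize regimes simultaneously.

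Item (i) is immediate from Lyapunov. For items (ii) and (iii), because $\S$ and $\A$ are finite the quantities $\rho_t$, $\gamma_t$, $\fe(S_t)$ and $r(S_t,A_t,S_{t+1})$ appearing in the definition~(\ref{eq-h}) of $h$ are uniformly bounded, and since $\theta$ (respectively $\theta_t, \theta_t^\alpha$) lies in the bounded set $\H$, one obtains a deterministic pointwise bound $\|h(\theta, \xi_t)\| \le C_\H \|\e_t\|$ for a constant $C_\H$ depending only on $\H$ and the MDP parameters, uniformly over $\theta \in \H$. Applying this to $\theta_t$ (or $\theta_t^\alpha$, which also lies in $\H$ thanks to the projection $\Pi_\H$) and invoking the $L^p$ bound on $\{\e_t\}$ gives $\sup_t \E[\|h(\theta_t, \xi_t)\|^p] \le C_\H^p \sup_t \E[\|\e_t\|^p] < \infty$, whence u.i.

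For item (iv) it remains to show u.i.\ of the noise term $\e_t \tilde\omega_{t+1}$, since a sum of two u.i.\ families is u.i. The point is that $\tilde\omega_{t+1} = \rho_t(R_t - r(S_t,A_t,S_{t+1}))$ has zero conditional mean and, because $\rho$ is bounded and the reward variance is bounded, $\E[\tilde\omega_{t+1}^2 \mid \mathcal{F}_t, S_{t+1}] \le C_1$ for an absolute constant $C_1$. Setting $r = \min(p,2) > 1$ and using Jensen's inequality in the form $\E[|\tilde\omega_{t+1}|^r \mid \mathcal{F}_t, S_{t+1}] \le C_1^{r/2}$, one gets
\[
\E\bigl[\|\e_t \tilde\omega_{t+1}\|^r\bigr] = \E\bigl[\|\e_t\|^r \, \E[|\tilde\omega_{t+1}|^r \mid \mathcal{F}_t, S_{t+1}]\bigr] \le C_1^{r/2}\, \E[\|\e_t\|^r] \le C_1^{r/2} \bigl(\sup_t \E[\|\e_t\|^p]\bigr)^{r/p},
\]
uniformly in $t$ (and in $\alpha$). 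Lyapunov then yields u.i.\ of $\{\e_t \tilde\omega_{t+1}\}$, and combining with the result of the previous paragraph completes item (iv).

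The only genuinely technical obstacle is securing the $L^p$ moment bound on $\{\e_t\}$ for some $p > 1$; this has to be extracted from the multiplicatively driven recursion~(\ref{eq-td1})--(\ref{eq-td3}) and relies on the ergodicity of $\{\Z_t\}$ and a contraction property of the trace dynamics, which is precisely what Appendix~\ref{appsec-a} supplies. Everything else above is just bookkeeping: pointwise domination of $h(\theta,\xi_t)$ by $\|\e_t\|$, together with Jensen's and Lyapunov's inequalities.
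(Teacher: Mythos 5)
There is a genuine gap, and it sits exactly where you flagged the ``only genuinely technical obstacle'': the uniform moment bound $\sup_t \E[\|\e_t\|^p] < \infty$ for some $p>1$ is not among the trace properties in Appendix~\ref{appsec-a}, is not established anywhere in the paper, and cannot be taken for granted. Appendix~\ref{appsec-a} supplies only the $L^1$ bound of Prop.~\ref{prp-bdtrace} and the $L^1$ truncation estimate of Prop.~\ref{prp-3}. Worse, the paper explicitly emphasizes (Section~\ref{sec-3.2}, citing \cite{Yu-siam-lstd} and \cite{yu-etdarx}) that in common off-policy situations the trace iterates have \emph{unbounded variances}, so the natural candidate $p=2$ fails outright, and no argument is offered for any $p \in (1,2)$; extracting such a bound from the multiplicative recursion (\ref{eq-td3})--(\ref{eq-td1}) would be a substantial new result, not bookkeeping. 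Since every one of your four items is reduced to this bound via Lyapunov/de la Vall\'ee Poussin, the whole proposal collapses without it.

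The paper's proof gets uniform integrability of $\{\e_t\}$ without any moment above $1$, by a different mechanism: the truncated traces $\tilde\e_{t,K}$ of Prop.~\ref{prp-3} are uniformly bounded (by some $a_K$) for each fixed $K$, and satisfy $\sup_t \E[\|\e_t - \tilde\e_{t,K}\|] \le \C_K$ with $\C_K \downarrow 0$. Writing $\|\e_t\|\,\I(\|\e_t\|\ge a) \le \|\e_t - \tilde\e_{t,K}\|\,\I(\cdot) + a_K\,\I(\|\e_t-\tilde\e_{t,K}\|\ge a - a_K)$ and using Markov's inequality gives $\lim_{a\to\infty}\sup_t \E[\|\e_t\|\I(\|\e_t\|\ge a)] \le \C_K$ for each $K$, and then $K\to\infty$ finishes it; in short, a family uniformly $L^1$-approximable by uniformly bounded families is u.i. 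Your items (ii)--(iii) then go through exactly as you wrote them (domination $\|h(\theta,\xi_t)\|\le \C\|\e_t\|$ preserves u.i.). For item (iv), in place of Jensen on a $p$-th moment, the paper uses a product lemma (Lemma~\ref{lem-w1}(iii)): if $\{X_k\}$ and $\{Y_k\}$ are u.i.\ and $\E[|Y_k|\mid X_k]\le c$, then $\{X_kY_k\}$ is u.i. Applied with $X_k = \e_t$ and $Y_k = \tilde\omega_{t+1}$ (u.i.\ because of bounded variance, with bounded conditional first moment given $\e_t$), this handles the noise term with only first moments of $\e_t$. Your observation that the exogenous variables are unaffected by $\alpha$, so both stepsize regimes can be treated at once, does match the paper.
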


\smallskip
The proof of this proposition will use facts about u.i.\ sequences of random variables given in the lemma below.

\begin{lem} \label{lem-w1}
Let $X_k, Y_k, k \in \K$ (some index set) be real-valued random variables with $X_k$ and $Y_k$ defined on a common probability space for each $k$.
\begin{enumerate}
\item[\rm (i)] If $\{X_k\}_{k \in \K},\{Y_k\}_{k \in \K}$ are u.i., then $\{X_k + Y_k\}_{k \in \K}$ is u.i. 
\item[\rm (ii)] If $\{X_k\}_{k \in \K}$ is u.i.\ and for all $k$, $|Y_k| \leq |X_k|$ a.s., then $\{Y_k\}_{k \in \K}$ is u.i.
\item[\rm (iii)] If $\{X_k\}_{k \in \K},\{Y_k\}_{k \in \K}$ are u.i.\ and for some $c \geq 0$, $\E [ |Y_k| \mid X_k ] \leq c$ a.s.\ for all $k$, then $\{X_k Y_k\}_{k \in \K}$ is u.i.
\end{enumerate}
\end{lem}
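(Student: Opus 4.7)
The plan is to establish each of the three parts from the tail-integral characterization of uniform integrability, together with the standard equivalent ``$\epsilon$--$\delta$'' characterization: a family $\{X_k\}_{k\in\K}$ is u.i.\ iff $\sup_k \E[|X_k|] < \infty$ and $\sup_k \E[|X_k|\I_A] \to 0$ as $\Pr(A)\to 0$.

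For part (i), I would use $|X_k+Y_k|\leq |X_k|+|Y_k|$ to split $\E[|X_k+Y_k|\I(|X_k+Y_k|\geq a)]$ into two terms of the form $\E[|X_k|\I(A_k^a)]$ and $\E[|Y_k|\I(A_k^a)]$, where $A_k^a=\{|X_k+Y_k|\geq a\}$. By Markov's inequality, $\Pr(A_k^a)\leq (\E[|X_k|]+\E[|Y_k|])/a$, which tends to zero uniformly in $k$ as $a\to\infty$ because u.i.\ families are bounded in $L^1$. The equivalent characterization above then yields the desired uniform convergence to zero.

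For part (ii), the reasoning is immediate: on the event $\{|Y_k|\geq a\}$ we have $|X_k|\geq |Y_k|\geq a$ a.s., so $\E[|Y_k|\I(|Y_k|\geq a)] \leq \E[|X_k|\I(|X_k|\geq a)]$, and the right-hand side tends to zero uniformly in $k$ as $a\to\infty$ by the u.i.\ of $\{X_k\}$.

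Part (iii) is the main obstacle, and the key trick is to split the tail integral according to the size of $|X_k|$ rather than that of $|X_k Y_k|$. For arbitrary $M>0$, I would write
\[
\E\bigl[|X_k Y_k|\I(|X_k Y_k|\geq a)\bigr] \leq \E\bigl[|X_k Y_k|\I(|X_k|\geq M)\bigr] + \E\bigl[|X_k Y_k|\I(|X_k|<M,\,|X_k Y_k|\geq a)\bigr].
\]
On the first piece, the tower property together with the assumption $\E[|Y_k|\mid X_k]\leq c$ gives $\E[|X_k Y_k|\I(|X_k|\geq M)] \leq c\,\E[|X_k|\I(|X_k|\geq M)]$, which can be made uniformly small in $k$ by choosing $M$ large, thanks to the u.i.\ of $\{X_k\}$. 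On the second piece, $|X_k|<M$ and $|X_k Y_k|\geq a$ force $|Y_k|\geq a/M$, so the quantity is at most $M\,\E[|Y_k|\I(|Y_k|\geq a/M)]$, which for any fixed $M$ tends to zero uniformly in $k$ as $a\to\infty$ by the u.i.\ of $\{Y_k\}$. Given $\epsilon>0$, one first fixes $M$ large enough to bound the first term by $\epsilon/2$ uniformly, then lets $a$ be large enough so that the second term is below $\epsilon/2$ as well. The bound $\E[|Y_k|\mid X_k]\leq c$ is exactly the assumption that decouples $X_k$ from $Y_k$ in the crucial estimate and lets us leverage the u.i.\ of $\{X_k\}$ alone on the set where $X_k$ is large, which is what makes the argument work without any joint distributional hypothesis on $(X_k,Y_k)$.
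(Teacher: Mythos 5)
Your proposal is correct and follows essentially the same route as the paper's proof: part (ii) is the identical one-line domination argument, and part (iii) uses the same decomposition of the tail event according to whether $|X_k|$ exceeds a threshold ($M$ in your notation, $\bar a$ in the paper's), with the tower property applied to the piece where $|X_k|$ is large and the bound $|Y_k|\geq a/M$ on the complementary piece. The only cosmetic difference is in part (i), where the paper uses the direct pointwise inequality $|X_k+Y_k|\,\I(|X_k+Y_k|\geq a)\leq 2|X_k|\,\I(|X_k|\geq a/2)+2|Y_k|\,\I(|Y_k|\geq a/2)$ while you invoke Markov's inequality together with the uniform-absolute-continuity characterization of u.i.; both are standard and valid.
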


\begin{proof}
Part (i) follows immediately from the definition of uniform integrability and the inequality that for any $a>0$,
$$ | X_k + Y_k | \, \I\big(|X_k + Y_k| \geq a \big) \, \leq \,  2 | X_k| \, \I\big(|X_k| \geq \tfrac{a}{2}\big) + 2 | Y_k| \, \I \big(|Y_k| \geq \tfrac{a}{2} \big).$$

Under the assumption of (ii),  for any $a > 0$, we have $|Y_k|  \I\big(|Y_k| \geq a\big)  \leq |X_k| \I\big(|X_k| \geq a\big)$ a.s., and (ii) is then evident.

Under the assumption of (iii), for any $a > \bar a > 0$, using the inequality 
$$ | X_k Y_k | \, \I\big(|X_k Y_k| \geq a \big)  \leq | X_k Y_k | \,  \I\big(|X_k| \leq \bar a,  |Y_k| \geq \tfrac{a}{\bar a} \big) +  | X_k Y_k | \,  \I\big(|X_k| \geq \bar a\big),$$ 
we have
\begin{align}
\E \left[ | X_k Y_k | \, \I\big(|X_k Y_k| \geq a \big) \right]  & \leq \E \left[ | X_k Y_k | \,  \I\big(|X_k| \leq \bar a,  |Y_k| \geq \tfrac{a}{\bar a} \big) \right] +  \E \left[  | X_k Y_k | \,  \I\big(|X_k| \geq \bar a\big) \right] \notag \\
& \leq \bar a \cdot \E \left[  |Y_k| \, \I\big(|Y_k| \geq \tfrac{a}{\bar a} \big) \right] + \E \left[  | X_k |  \,  \I\big(|X_k| \geq \bar a\big) \cdot \E \big[ |Y_k | \mid X_k \big] \right]  \notag \\
& \leq \bar a \cdot \E \left[  |Y_k| \, \I\big(|Y_k| \geq \tfrac{a}{\bar a} \big) \right] + c \cdot \E \left[  | X_k |  \,  \I\big(|X_k| \geq \bar a\big) \right], \label{eq-prf-w0}
\end{align}
where we used the assumption $\E [ |Y_k| \mid X_k ] \leq c$ a.s.\ in the last inequality.
Since $\{X_k\}_{k \in \K}, \{Y_k\}_{k \in \K}$ are assumed to be u.i., we have
\begin{equation}
  \lim_{a \to \infty} \sup_{k \in \K} \E \left[  |Y_k| \, \I\big(|Y_k| \geq \tfrac{a}{\bar a} \big) \right]  = 0 \ \ \text{for any given} \ \bar a > 0, \qquad
   \lim_{\bar a \to \infty} \sup_{k \in \K}  \E \left[  | X_k |  \,  \I\big(|X_k| \geq \bar a\big) \right] = 0. \label{eq-prf-w1}
\end{equation}   
From (\ref{eq-prf-w0}), by taking first $a \to \infty$ and then $\bar a \to \infty$, and by using (\ref{eq-prf-w1}), we then obtain
$$
\lim_{a \to \infty} \sup_{k \in \K} \E \left[ | X_k Y_k | \, \I\big(|X_k Y_k| \geq a \big) \right] = 0,$$
and this proves (iii).
\end{proof}

We now proceed to prove Prop.~\ref{prop-2}. The proof will involve auxiliary variables, which we call truncated traces. They are defined similarly to the trace iterates $(\e_t, \F_t)$, but instead of depending on all the past states and actions, they only depend on a certain number of the most recent states and actions. Specifically, for each integer $K \geq 1$, we define truncated traces $(\tilde{\e}_{t,K}, \tilde{\F}_{t,K})$ as follows:
$$ (\tilde{\e}_{t,K}, \tilde{\F}_{t,K}) = (\e_t, \F_t) \quad  \text{for} \ \  t \leq K,$$
and for $t \geq K+1$, with the shorthand $\beta_t : =\rho_{t-1} \gamma_t \lambda_t$,
\begin{align}
    \tilde{\F}_{t,K} & = \sum_{k=t-K}^t \i(S_k) \cdot \big(\rho_{k} \gamma_{k+1} \cdots \rho_{t-1} \gamma_t \big), \label{eq-tF0} \\
    \tilde{\M}_{t,K} & = \, \lambda_t \, \i(S_t) + ( 1 - \lambda_t) \tilde{\F}_{t,K}, \label{eq-tM0} \\
    \tilde{\e}_{t,K} & = \sum_{k=t-K}^t \tilde{\M}_{k,K} \cdot \fe(S_k) \cdot \big(\beta_{k+1} \cdots \beta_t \big). \label{eq-te0}
\end{align}
Note that when $t \geq 2K +1$, the traces $(\tilde{\e}_{t,K}, \tilde{\F}_{t,K})$ no longer depend on the initial $(\e_0,\F_0)$; being functions of the states and actions between time $t-2K$ and $t$ only, they lie in a bounded set determined by $K$, since the state and action spaces are finite. For $t = 0, \ldots, 2K$, $(\tilde{\e}_{t,K}, \tilde{\F}_{t,K})$ also lie in a bounded set, which is determined by $K$ and the initial $(\e_0, \F_0)$. 
We will use these bounded truncated traces to approximate the original traces $\{(\e_t, \F_t)\}$ in the analysis. 

An important approximation property, given in Prop.~\ref{prp-3} (Appendix~\ref{appsec-a}), is that for each $K$ and any initial $(\e_0, \F_0)$ from a given bounded set $E$,
$$\sup_{t \geq 0} \E \left[ \big\| (\e_t, \F_t)  - (\tilde{\e}_{t,K}, \tilde{\F}_{t,K}) \big\| \right] \leq \C_K,$$
where $\C_K$ is a finite constant that depends on $K$ and $E$ and decreases monotonically to $0$ as $K$ increases:
$$\C_K \downarrow 0 \ \ \ \text{as} \ K \to \infty.$$
We will use this property in the following analysis. 

\begin{proof}[Proof of Prop.~\ref{prop-2}]
First, we prove $\{\e_t\}$ is u.i. We then use this to show the uniform integrability of the other sets required in parts (ii)-(iv).

\medskip
\noindent (i) To prove $\{\e_t\}$ is u.i., we shall exploit its relation with the truncated traces, $\tilde \e_{t,K}, t \geq 0$ for integers $K \geq 1$. Note that since the state and action spaces are finite, the truncated traces $\{\tilde \e_{t,K}\}$ lie in a bounded set (this set depends on $K$ and the initial $(\e_0,\F_0)$), so there exists a constant $a_{K}$ such that $\| \tilde \e_{t,K}\| \leq a_K$ for all $t$. This fact will greatly simplify the analysis. 
Let us first fix $K$ and consider $a \geq a_k$. Denote $\bar a = a - a_K \geq 0$.
Then
\begin{align}
  \| \e_t\| \, \I \big(\|\e_t\| \geq a\big) \, & \leq \, \| \e_t\| \, \I \big(\|\e_t - \tilde \e_{t,K} \| \geq \bar a\big)   \notag \\
  & \leq \, \| \e_t - \tilde \e_{t,K} \| \, \I \big(\|\e_t - \tilde \e_{t,K}  \| \geq \bar a\big) + \| \tilde \e_{t,K} \| \, \I \big(\|\e_t - \tilde \e_{t,K} \| \geq \bar a\big) \notag \\
  & \leq \, \| \e_t - \tilde \e_{t,K} \| \, \I \big(\|\e_t - \tilde \e_{t,K} \| \geq \bar a\big) + a_K \, \I \big(\|\e_t - \tilde \e_{t,K} \| \geq \bar a\big). \label{eq-prf-w2}
\end{align}  
For the second term on the right-hand side, we can bound its expectation by
\begin{equation}
 \E  \left[ a_K \, \I \big(\|\e_t - \tilde \e_{t,K} \| \geq \bar a\big) \right] = a_K \Pr( \| \e_t - \tilde \e_{t,K} \| \geq \bar a) \leq a_K \cdot \C_K/\bar a, \qquad \forall \, t, \label{eq-prf-w3}
\end{equation} 
where in the last inequality $\C_K$ is a constant that depends on $K$ (and the initial $(\e_0,\F_0)$) and has the property that $\C_K \downarrow 0$ as $K \to \infty$, and this inequality is derived by combing the Markov inequality $\Pr( \| \e_t - \tilde \e_{t,K} \| \geq \bar a) \leq \E [ \| \e_t - \tilde \e_{t,K} \| ]/\bar a$ with Prop.\ \ref{prp-3}, which bounds $\sup_{t \geq 0} \E [ \| \e_t - \tilde \e_{t,K} \| ]$ by $\C_K$.
Similarly, for the first term on the right-hand side of (\ref{eq-prf-w2}), using Prop.\ \ref{prp-3}, we can bound its expectation by $\C_K$:
\begin{equation} \label{eq-prf-w4}
   \E \left[ \| \e_t - \tilde \e_{t,K} \| \, \I \big(\|\e_t - \tilde \e_{t,K} \| \geq \bar a\big) \right] \leq \E [ \| \e_t - \tilde \e_{t,K} \| ] \leq \C_K, \qquad \forall \, t.
\end{equation} 
From (\ref{eq-prf-w2})-(\ref{eq-prf-w4}) it follows that
$$ \sup_{t \geq 0} \E \left[  \| \e_t\| \, \I \big(\|\e_t\| \geq a\big) \right] \, \leq \, \C_K + a_K \cdot \C_K/(a - a_K),$$
so for fixed $K$, by taking $a \to \infty$, we obtain 
$$ \lim_{a \to \infty} \sup_{t \geq 0} \E \left[  \| \e_t\| \, \I \big(\|\e_t\| \geq a\big) \right] \, \leq \, \C_K.$$
Since $\C_K \downarrow 0$ as $K \to \infty$ (Prop.\ \ref{prp-3}), this implies
$\lim_{a \to \infty} \sup_{t \geq 0} \E \left[  \| \e_t\| \, \I \big(\|\e_t\| \geq a\big) \right] = 0,$
which proves the uniform integrability of $\{\e_t\}$.

\medskip
\noindent (ii) We now prove for each $\theta$, $\{h(\theta, \xi_t)\}$ is u.i. Since the state and action spaces are finite and $\theta$ is given, using the expression of $h(\theta, \xi_t)$, we can bound it as $\| h(\theta, \xi_t) \| \leq \C \|\e_t\|$ for some constant $\C$.
As just proved, $\{\e_t\}$ is u.i.\ (equivalently $\{ \| \e_t\| \}$ is u.i.) and thus $\{\C \|\e_t\|\}$ is u.i., so by Lemma~\ref{lem-w1}(ii), $\{h(\theta, \xi_t)\}$ is u.i.\ (since this is by definition equivalent to $\{\| h(\theta, \xi_t)\|\}$ being u.i., which is true by Lemma~\ref{lem-w1}(ii)).

\medskip
\noindent (iii) The uniform integrability of $\{h(\theta_t, \xi_t)\}$ in the case of diminishing stepsize or $\{h(\theta_t^\alpha, \xi_t) \mid t \geq 0, \alpha > 0\}$ in the case of constant stepsize follows from the same argument given for (ii) above, because $\theta_t$ or $\theta_t^\alpha$ for all $t \geq 0$ and $\alpha > 0$ lie in the bounded set $\H$ by the definition of the constrained ETD($\lambda$) algorithm.

\medskip
\noindent (iv)  Consider first the case of diminishing stepsize. We prove that $\{h(\theta_t, \xi_t) + \e_t \, \tilde \omega_{t+1} \}$ is u.i.\ (recall $\tilde \omega_{t+1}  = \rho_t \, (R_{t} - r(S_t, A_t, S_{t+1}))$ is the noise part of the observed reward). Since we already showed that $\{h(\theta_t, \xi_t)\}$ is u.i., by Lemma~\ref{lem-w1}(i), it is sufficient to prove that $\{\e_t \, \tilde \omega_{t+1} \}$ is u.i. Now $\{\e_t\}$ is u.i.\ by part (i).
Since the random rewards $R_t$ in our model have bounded variances, the noise variables $\tilde \omega_{t+1}, t \geq 0,$ also have bounded variances. 
This implies that $\{\tilde \omega_{t+1}\}$ is u.i.\ \citep[p.\ 32]{Bil68} and that $\E [ | \tilde \omega_{t+1} | \mid \e_t ] < c$ for some constant $c$ (independent of $t$). It then follows from Lemma~\ref{lem-w1}(iii) that $\{\e_t \, \tilde \omega_{t+1} \}$ is u.i., and hence $\{h(\theta_t, \xi_t) + \e_t \, \tilde \omega_{t+1} \}$ is u.i.

Similarly, in the case of constant stepsize, it follows from Lemma~\ref{lem-w1}(i) that the set $\{ h(\theta_t^\alpha, \xi_t) + \e_t \, \tilde \omega_{t+1} \mid t \geq 0, \alpha > 0 \}$ is u.i., because $\{ h(\theta_t^\alpha, \xi_t) \mid t \geq 0, \alpha > 0 \}$ is u.i.\ by part (iii) proved earlier and $\{\e_t \, \tilde \omega_{t+1}\}$ is u.i.\ as we just proved. 
\end{proof}

Finally, we handle the conditions (v) and (v$'$) stated in Section~\ref{sec-4.1.1}. The two conditions are the same condition in the case here, because they concern each fixed $\theta$, whereas $\{\xi_t\}$ is not affected by the stepsize and the $\theta$-iterates. So we can focus just on the condition (v) in presenting the proof, for notational simplicity.
For the algorithm (\ref{eq-emtd-const0}), the continuous function $\bar h$ required in the condition is the function $\bar h(\theta) = C \theta + b$ associated with the desired mean ODE (\ref{eq-pode}). We now prove the required convergence in mean by using the properties of trace iterates and the convergence results given in Theorem~\ref{thm-2.2} and Cor.~\ref{cor-2.1}. 

\smallskip
\begin{prop} \label{prop-3}
Let Assumption~\ref{cond-bpolicy} hold. For each $\theta \in \H$ and each compact set $D \subset \Xi$, 
$$ \lim_{k \to \infty, t \to \infty}  \, \frac{1}{k} \sum_{m = t}^{t+k-1} \E_t \left[ h(\theta, \xi_m) - \bar h(\theta) \right] \, \I\big(\xi_t \in D\big) = 0 \qquad \text{in mean}.$$
\end{prop}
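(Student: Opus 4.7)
The plan is to reduce the required mean-convergence to a deterministic uniform-convergence statement via the Markov property of $\{\Z_t\}$, and then to exploit the affine dependence of the traces on their initial values to upgrade Corollary~\ref{cor-2.1} from pointwise to uniform convergence on a compact set. I would first condition on $\mathcal{F}_t$ and use the Markov property: for $m \geq t$,
$$\E_t\big[h(\theta, \xi_m) - \bar h(\theta)\big] \,=\, q_{m-t}(\Z_t), \qquad q_j(z) := \E^z\big[h(\theta, \xi_j) - \bar h(\theta)\big],$$
where $\E^z$ denotes expectation in the chain $\{\Z_t\}$ started from $\Z_0 = z$. Setting $G_k(z) := \tfrac{1}{k}\sum_{j=0}^{k-1} q_j(z)$, the quantity inside the mean becomes $G_k(\Z_t)\,\I(\xi_t \in D)$. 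Let $D_\Z \subset \S \times \A \times \re^{n+1}$ be the (compact) projection of $D$ onto the $\Z$-coordinates; then $\E\big[\|G_k(\Z_t)\,\I(\xi_t \in D)\|\big] \leq \sup_{z \in D_\Z}\|G_k(z)\|$, so it suffices to prove $\sup_{z \in D_\Z}\|G_k(z)\| \to 0$ as $k \to \infty$, which handles every joint mode of $k, t \to \infty$.

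The key structural observation is that the trace recursion (\ref{eq-td3})--(\ref{eq-td1}) is linear in $(\e_{t-1}, \F_{t-1})$ with random coefficients depending only on the $(S_t, A_t)$ sequence, and $h(\theta, \xi_j)$ is linear in $\e_j$ and does not depend on $\F_j$. Consequently, for each fixed $(s_0, a_0) \in \S \times \A$, the map $(\e_0, \F_0) \mapsto G_k\big((s_0, a_0, \e_0, \F_0)\big)$ is affine:
$$G_k\big((s_0, a_0, \e_0, \F_0)\big) \,=\, M_k(s_0, a_0) \begin{pmatrix} \e_0 \\ \F_0 \end{pmatrix} + v_k(s_0, a_0)$$
for certain deterministic $M_k(s_0, a_0)$ and $v_k(s_0, a_0)$. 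Applying Corollary~\ref{cor-2.1} with deterministic initial condition $\Z_0 = z$ (legitimate for any $z \in \S \times \A \times \re^{n+1}$ by Assumption~\ref{cond-bpolicy}(ii) and the fact that $\bar h(\theta) = \E_\zeta[h(\theta, \xi_0)]$ is the same constant irrespective of initial conditions) yields the pointwise convergence $G_k(z) \to 0$. Evaluating the affine identity at $(\e_0, \F_0) = 0$ forces $v_k(s_0, a_0) \to 0$, and then varying $(\e_0, \F_0)$ over $\re^{n+1}$ forces $M_k(s_0, a_0) \to 0$. Since $\S \times \A$ is finite and the $(\e, \F)$-component of $D_\Z$ lies in a ball of some radius $R$,
$$\sup_{z \in D_\Z}\|G_k(z)\| \,\leq\, \max_{(s,a) \in \S \times \A}\big( \|M_k(s,a)\|\cdot R + \|v_k(s,a)\|\big) \,\to\, 0,$$
which, combined with the reduction above, completes the proof.

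The main obstacle is precisely the uniformity of $G_k(z) \to 0$ over $z \in D_\Z$. A direct equicontinuity approach is problematic because the Lipschitz constant of $q_j$ with respect to $(\e_0, \F_0)$ can grow with $j$, so Cesaro averaging does not obviously tame it. The linearity of both the trace dynamics in $(\e, \F)$ and of $h(\theta, \cdot)$ in $\e$ is the decisive structural feature that replaces a delicate ``forgetting'' argument by the elementary two-line computation above; the only minor point to verify is that Corollary~\ref{cor-2.1} applies with any deterministic initial $(S_0, A_0) \in \S \times \A$, which follows from the irreducibility of the behavior chain on $\S$.
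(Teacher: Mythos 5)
Your argument is correct in substance but takes a genuinely different route from the paper's. The paper never passes to conditional expectations: it bounds $\E[\|X_{k,t}\|]$ directly by (a) restarting the trace recursion at time $t_j$ from a fixed point $(\bar\e,\bar\F)\in E$, so that the restarted Ces\`aro averages converge to $\bar h(\theta)$ by Theorem~\ref{thm-2.2}, and (b) controlling the $L^1$-discrepancy between the original and restarted traces via the truncated traces and Prop.~\ref{prp-3} --- precisely the ``forgetting'' argument you say you are circumventing. Your replacement --- reduce to $\sup_{\xi\in D}\|G_k(\xi)\|\to 0$ and upgrade the pointwise limit $G_k(z)\to 0$ (from Cor.~\ref{cor-2.1}, which the paper itself invokes with deterministic initial state--action--trace triples, so that use is legitimate) to a uniform limit on bounded sets by exploiting that $G_k$ is affine in $(\e,\F)$ --- is shorter and cleaner for this proposition. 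What the paper's heavier approach buys is reusability: its proof uses only the Lipschitz continuity of $h(\theta,\cdot)$ in $\e$ and is repeated verbatim for Props.~\ref{prop-5} and~\ref{prop-7}, where $h$ is replaced by $h_K$ involving the nonlinear truncation $\psi_K$; there your affine identity fails outright and the forgetting argument cannot be avoided. Two small repairs to your write-up: (i) $\mathcal{F}_t$ contains $S_{t+1}$, so $\E_t\left[h(\theta,\xi_m)-\bar h(\theta)\right]$ is a function of $\xi_t$, not of $\Z_t$ alone --- define $q_j$ and $G_k$ on $\Xi$ (the process $\{\xi_t\}$ is a time-homogeneous Markov chain for $\{\mathcal{F}_t\}$) and take the supremum over $D$ itself rather than over a projection; (ii) the affine identity with \emph{deterministic, finite} coefficients $M_k(s,a)$, $v_k(s,a)$ requires the expectations of the random products of importance-sampling and discount factors to be finite, which should be justified by appeal to Prop.~\ref{prp-bdtrace} (evaluated at suitable initial conditions) before linearity of expectation is applied.
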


\begin{proof}
Denote $X_{k,t} = \frac{1}{k} \sum_{m = t}^{t+k-1} \big( h(\theta, \xi_m) - \bar h(\theta) \big) \, \I\big(\xi_t \in D\big)$. Since
$\E\big[ \big \| \E_t \{ X_{k,t} \} \big\| \big] \leq \E [ \| X_{k,t} \| ]$, to prove $\lim_{k,t} \E\big[ \big\| \E_t \{ X_{k,t} \} \big\| \big] = 0$ (here and in what follows we simply write ``$k,t$'' under a limit symbol for ``$k \to \infty, t \to \infty$''),
it is sufficient to prove $\lim_{k,t} \E [ \| X_{k,t} \| ] = 0$, that is, to prove
\begin{equation} \label{eq-prf-w5a}
 \lim_{k, t} \, \frac{1}{k} \sum_{m = t}^{t+k-1} \left( h(\theta, \xi_m) - \bar h(\theta) \right) \, \I\big(\xi_t \in D\big) = 0 \qquad \text{in mean}.
\end{equation} 
Furthermore, since 
$$\limsup_{k, t} \, \E \big[ \| X_{k,t}\| \, \I\big(\xi_t \in D \big) \big] \leq \sum_{(s,a,s') \in \S \times \A \times \S} \limsup_{k, t} \, \E \left[ \| X_{k,t}\| \, \I\big(\xi_t \in D, (S_t, A_t, S_{t+1}) = (s,a,s') \big) \right],$$
it is sufficient in the proof to consider only those compact sets $D$ of the form $D = E \times \{(s,a,s')\}$, for each compact set $E \subset \re^{n+1}$ and each $(s,a,s') \in \S \times \A \times \S$.
Henceforth, let us fix a compact set $E$ together with a triplet $(s,a,s')$ as the set $D$ under consideration, and for this set $D$, we proceed to prove (\ref{eq-prf-w5a}).

To show (\ref{eq-prf-w5a}), what we need to show is that for two arbitrary subsequences of integers $k_j \to \infty$, $t_j \to \infty$, 
\begin{equation} \label{eq-prf-w5}
  \lim_{j \to \infty} \, \frac{1}{k_j} \sum_{m = t_j}^{t_j+k_j-1} \left( h(\theta, \xi_m) - \bar h(\theta) \right) \, \I\big(\xi_{t_j} \in D\big) = 0 \qquad \text{in mean}.
\end{equation}  
To this end, we first define auxiliary trace variables to decompose each difference term $h(\theta, \xi_m) - \bar h(\theta)$ into two difference terms as follows:
\begin{enumerate}
\item[(a)] Fix a point $(\bar e, \bar F) \in E$. 
\item[(b)] For each $j \geq 1$, define a sequence of trace pairs, $(\e_m^{j}, \F_m^{j})$, $m \geq t_j$, by using the same recursion (\ref{eq-td3})-(\ref{eq-td1}) that defines the traces $\{(\e_t, \F_t)\}$, based on the same trajectory $\{(S_t, A_t)\}$, but starting at time $m = t_j$ with the initial $(\e_{t_j}^{j},  \F_{t_j}^{j}) = (\bar \e, \bar F)$.  
\end{enumerate}
Denote $\xi^j_m = (\e_m^j, \F_m^j, S_m, A_m, S_{m+1})$ for $m \geq t_j$; it differs from $\xi_m$ only in the two trace components.
Next, for each $m$, we write $h(\theta, \xi_m) - \bar h(\theta) = (h(\theta, \xi_m^{j}) - \bar h(\theta)) + (h(\theta, \xi_m) - h(\theta, \xi_m^{j}))$ and correspondingly, we write
$$  \frac{1}{k_j} \sum_{m = t_j}^{t_j+k_j-1} \big( h(\theta, \xi_m) - \bar h(\theta) \big) = \frac{1}{k_j} \sum_{m = t_j}^{t_j+k_j-1}  \big( h(\theta, \xi_m^{j}) - \bar h(\theta) \big)  + \frac{1}{k_j} \sum_{m = t_j}^{t_j+k_j-1}  \big( h(\theta, \xi_m) - h(\theta, \xi_m^{j}) \big).$$
We see that for (\ref{eq-prf-w5}) to hold, it is sufficient that 
\begin{equation} \label{eq-prf-w6}
  \lim_{j \to \infty} \, \frac{1}{k_j} \sum_{m = t_j}^{t_j+k_j-1} \left( h(\theta, \xi_m^{j}) - \bar h(\theta)  \right) \, \I\big(\xi_{t_j} \in D\big) = 0 \qquad \text{in mean},
\end{equation}   
and 
\begin{equation} \label{eq-prf-w7}
  \lim_{j \to \infty} \, \frac{1}{k_j} \sum_{m = t_j}^{t_j+k_j-1} \left( h(\theta, \xi_m) - h(\theta, \xi_m^{j}) \right) \, \I\big(\xi_{t_j} \in D\big) = 0 \qquad \text{in mean}.
\end{equation} 
Let us now prove these two statements.

\medskip
\noindent
Proof of (\ref{eq-prf-w6}): Since the set $D=E \times \{(s,a,s')\}$ and $\I\big(\xi_{t_j} \in D\big) \leq  \I\big((S_{t_j}, A_{t_j}, S_{t_j+1})=(s,a,s')\big),$ 
we can remove $\xi_{t_j}$ from consideration and show instead
\begin{equation} \label{eq-prf-w8}
 \lim_{j \to \infty} \, \frac{1}{k_j} \sum_{m = t_j}^{t_j+k_j-1} \left( h(\theta, \xi_m^{j}) - \bar h(\theta)  \right) \, \I\big((S_{t_j}, A_{t_j}, S_{t_j+1})=(s,a,s')\big) = 0 \qquad \text{in mean},
\end{equation} 
which will imply (\ref{eq-prf-w6}). By definition $\xi_m^j, m \geq t_j$, are generated from the initial trace pairs $(\bar \e, \bar \F)$ and initial transition $(S_{t_j}, A_{t_j}, S_{t_j+1})$ at time $m = t_j$. So if $(S_{t_j}, A_{t_j}, S_{t_j+1}) = (s,a,s')$, then conditioned on this transition at $t_j$, the sequence $\{\xi_m^j, m \geq t_j\}$ has the same probability distribution as a sequence $\hat \xi_m, m \geq 0$, where $\hat \xi_m =(\hat \e_m, \hat \F_m, \hat S_m, \hat A_m, \hat S_{m+1})$ is generated from the initial condition $\hat \xi_0 = (\bar \e, \bar \F, s, a, s')$ by the same recursion (\ref{eq-td3})-(\ref{eq-td1}) and a trajectory $\{(\hat S_m, \hat A_m)\}$ of states and actions under the behavior policy.
This shows that
$$ \E \left[ \left\| \frac{1}{k_j} \sum_{m = t_j}^{t_j+k_j-1} \! \Big( h(\theta, \xi_m^{j}) - \bar h(\theta)  \Big) \, \I\Big((S_{t_j}, A_{t_j}, S_{t_j+1})=(s,a,s')\Big) \right\| \right] \leq \E \left[ \left\| \frac{1}{k_j} \sum_{m = 0}^{k_j-1} \left( h(\theta, \hat \xi_m) - \bar h(\theta)  \right) \right\| \right],$$
from which we see that the convergence in mean stated by (\ref{eq-prf-w8}) holds if we have
\begin{equation} \label{eq-prf-w8b}
 \lim_{k \to \infty} \, \frac{1}{k} \sum_{m = 0}^{k-1} \left( h(\theta, \hat \xi_m) - \bar h(\theta)  \right) = 0 \qquad \text{in mean}.
\end{equation} 
Now since for each $\theta$, the function $h(\theta, \cdot)$ is Lipschitz continuous in $\e$ uniformly in the other arguments, (\ref{eq-prf-w8b}) holds by Theorem~\ref{thm-2.2} and its implication Cor.\ \ref{cor-2.1}. Consequently, (\ref{eq-prf-w8}) holds, and this implies (\ref{eq-prf-w6}).

\medskip
\noindent
Proof of (\ref{eq-prf-w7}): Using the expression of $h$ and the finiteness of the state and action spaces, we can bound the difference $h(\theta, \xi_m) - h(\theta, \xi_m^{j})$ by
$$ \big\| h(\theta, \xi_m) - h(\theta, \xi_m^{j}) \big\| \leq c \cdot \big\| \e_m - \e_m^j \big\|$$
for some constant $c$ (independent of $m, j$). Let us show
\begin{equation} \label{eq-prf-w9}
 \lim_{j \to \infty} \, \frac{1}{k_j} \sum_{m = t_j}^{t_j+k_j-1} \left\| \e_m - \e_m^{j} \right\| \, \I\big(\xi_{t_j} \in D \big) = 0 \qquad \text{in mean},
\end{equation} 
which will imply (\ref{eq-prf-w7}).

To prove (\ref{eq-prf-w9}), similarly to the preceding proof, we first decompose each difference term $\e_m - \e_m^{j}$ in (\ref{eq-prf-w9}) into several difference terms, by using truncated traces $\{(\tilde \e_{m,K}, \tilde \F_{m,K})\}$ and $\{(\tilde \e_{m,K}^j, \tilde \F_{m,K}^j) \mid m \geq t_j\}$, 
$j \geq 1, K \geq 1$, which we now introduce.
Specifically, for each $K \geq 1$, $\{(\tilde \e_{m,K}, \tilde \F_{m,K})\}$ are defined by (\ref{eq-tF0})-(\ref{eq-te0}). 
For each $j \geq 1$ and $K \geq 1$, the truncated traces $\{(\tilde \e_{m,K}^j, \tilde \F_{m,K}^j) \mid m \geq t_j\}$ are also defined by (\ref{eq-tF0})-(\ref{eq-te0}), except that the initial time is set to be $t_j$ (instead of $0$) and for $m \leq t_j +K$, $(\tilde \e_{m,K}^j, \tilde \F_{m,K}^j)$ is set to be $(\e^j_m, \F^j_m)$ (instead of $(\e_m, \F_m)$).

Let us fix $K$ for now. We bound the difference $\e_m - \e_m^j$ by the sum of three difference terms as
\begin{equation} \label{eq-prf-w10a}
 \big\| \e_m - \e_m^j \big\| \leq \big\| \e_m - \tilde \e_{m, K} \big\| +  \big\| \e_m^j - \tilde \e_{m, K}^j \big\|  + \big\| \tilde \e_{m, K} - \tilde \e_{m, K}^j \big\|, 
\end{equation} 
and correspondingly, we consider the following three sequences of variables, as $j$ tends to $\infty$:
\begin{equation} \label{eq-prf-w10}
 \frac{1}{k_j} \sum_{m = t_j}^{t_j+k_j-1}  \big\| \e_m - \tilde \e_{m, K} \big\| \, \I\big(\xi_{t_j} \in D \big) , \qquad \frac{1}{k_j} \sum_{m = t_j}^{t_j+k_j-1}  \big\| \e_m^j - \tilde \e_{m, K}^j \big\|, 
\end{equation} 
and
\begin{equation} \label{eq-prf-w11}
 \frac{1}{k_j} \sum_{m = t_j}^{t_j+k_j-1} \big\| \tilde \e_{m, K} - \tilde \e_{m, K}^j \big\| \, \I\big(\xi_{t_j} \in D \big).
\end{equation}
In what follows, we will bound their expected values as $j \to \infty$ and then take $K \to \infty$; this will lead to (\ref{eq-prf-w9}).

The analyses for the two sequences in (\ref{eq-prf-w10}) are similar. Recall $D = E \times \{(s,a,s')\}$, so $\xi_{t_j} \in D$ implies $(\e_{t_j}, \F_{t_j}) \in E$. Since the set $E$ is bounded, if $(\e_{t_j}, \F_{t_j}) \in E$, then we can use Prop.\ \ref{prp-3} to bound the expectation of $\| \e_{m} - \tilde \e_{m, K} \|$ for $m \geq t_j$ conditioned on $\mathcal{F}_{t_j}$, and this gives us the bound
$$ \sup_{m \geq t_j} \E_{t_j} \left[  \big\| \e_{m} - \tilde \e_{m, K} \big\| \right] \, \I\big(\xi_{t_j} \in D \big) \leq \C_K$$
where $\C_K$ is a constant that depends on $K$ and the set $E$, and has the property that $\C_K \downarrow 0$ as $K \to \infty$.
From this bound, we obtain
\begin{equation} \label{eq-prf-w12}
 \E \left[ \frac{1}{k_j} \sum_{m = t_j}^{t_j+k_j-1} \big\| \e_{m} - \tilde \e_{m, K} \big\|  \, \I\big(\xi_{t_j} \in D \big)  \right] \leq \C_K, \qquad \forall \, j \geq 1.
\end{equation} 
Similarly, for the second sequence in (\ref{eq-prf-w10}), by Prop.\ \ref{prp-3} we have
\begin{equation} \label{eq-prf-w13}
 \E \left[ \frac{1}{k_j} \sum_{m = t_j}^{t_j+k_j-1} \big\| \e_{m}^j - \tilde \e_{m, K}^j \big\|  \right] \leq \C_K, \qquad \forall \, j \geq 1,
\end{equation} 
where $\C_K$ is some constant that can be chosen to be the same constant in (\ref{eq-prf-w12}) (because the point $(\bar \e, \bar \F)$, which is the initial trace pair for $(\e_m^j, \F_m^j)$ at time $m=t_j$, lies in $E$).

Consider now the sequence in (\ref{eq-prf-w11}). 
As discussed after the definition (\ref{eq-tF0})-(\ref{eq-te0}) of truncated traces, because of truncation, these traces lie in a bounded set determined by $K$ and the set in which the initial trace pair lies. Therefore, there exists a finite constant $c_K$ which depends on $K$ and $E$, such that for all $m \geq t_j$,
$$ \| \tilde \e_{m,K}^j \| \leq c_K, \qquad \text{and} \quad  \| \tilde \e_{m,K} \| \leq c_K  \ \ \text{if} \ (\e_{t_j}, \F_{t_j}) \in E.$$
Also by their definition, once $m$ is sufficiently large, the truncated traces do not depend on the initial trace pairs; in particular, 
$$  \tilde \e_{m,K}^j =  \tilde \e_{m,K}, \qquad \forall \, m \geq t_j + 2K + 1.$$
From these two arguments it follows that
\begin{equation} \label{eq-prf-w14}
\E \left[  \frac{1}{k_j} \sum_{m = t_j}^{t_j+k_j-1} \big\| \tilde \e_{m, K} - \tilde \e_{m, K}^j \big\| \, \I\big(\xi_{t_j} \in D \big) \right] \leq \frac{(2K+1) \cdot 2 c_K}{k_j} \to 0 \ \ \  \text{as} \  j \to \infty.
\end{equation} 

Finally, combining (\ref{eq-prf-w12})-(\ref{eq-prf-w14}) with (\ref{eq-prf-w10a}), we obtain
\begin{align*}
 \limsup_{j \to \infty} \, \E \left[ \frac{1}{k_j} \sum_{m = t_j}^{t_j+k_j-1} \left\| \e_m - \e_m^{j} \right\| \, \I\big(\xi_{t_j} \in D \big) \right] \, & \leq \, \limsup_{j \to \infty} \E \left[ \frac{1}{k_j} \sum_{m = t_j}^{t_j+k_j-1} \big\| \e_{m} - \tilde \e_{m, K} \big\|  \, \I\big(\xi_{t_j} \in D \big)  \right] \\
 & \ \quad + \limsup_{j \to \infty}  \E \left[ \frac{1}{k_j} \sum_{m = t_j}^{t_j+k_j-1} \big\| \e_{m}^j - \tilde \e_{m, K}^j \big\|  \right] \\
 & \ \quad + \lim_{j \to \infty} \E \left[  \frac{1}{k_j} \sum_{m = t_j}^{t_j+k_j-1} \big\| \tilde \e_{m, K} - \tilde \e_{m, K}^j \big\| \, \I\big(\xi_{t_j} \in D \big) \right]
 \\
& \leq \, 2 \C_K.
\end{align*}
Since $\C_K \downarrow 0$ as $K \to \infty$ (Prop.\ \ref{prp-3}), by taking $K \to \infty$, we obtain
$$  \lim_{j \to \infty} \, \E \left[ \frac{1}{k_j} \sum_{m = t_j}^{t_j+k_j-1} \left\| \e_m - \e_m^{j} \right\| \, \I\big(\xi_{t_j} \in D \big) \right] = 0.$$
This proves (\ref{eq-prf-w9}), which implies (\ref{eq-prf-w7}).
\end{proof}
\smallskip

With Props.~\ref{prop-1}-\ref{prop-3}, we have furnished all the conditions required in order to apply \cite[Theorems 8.2.2, 8.2.3]{KuY03} to the constrained ETD algorithm (\ref{eq-emtd-const0}), so we can now specialize the conclusions of these two theorems to our problem.
In particular, they tell us that the projected ODE (\ref{eq-pode}) is the mean ODE for (\ref{eq-emtd-const0}), and furthermore, by \cite[Theorem 8.2.3]{KuY03} (respectively, \cite[Theorem 8.2.2]{KuY03}), the conclusions of Theorem~\ref{thm-dim-stepsize} (respectively, Theorem~\ref{thm-const-stepsize}) hold with $N_\delta(L_\H)$ in place of $N_\delta(\theta^*)$, where $N_\delta(L_\H)$ is the $\delta$-neighborhood of the limit set $L_\H$ for the projected ODE (\ref{eq-pode}). 
Recall that this limit set is given by
$$ L_\H = \cap_{\bar \tau > 0} \, \overline{\, \cup_{x(0) \in \H} \{ x(\tau), \, \tau \geq \bar \tau \}}$$
where $x(\tau)$ is a solution of the projected ODE (\ref{eq-pode}) with initial condition $x(0)$, the union is over all the solutions with initial $x(0) \in \H$, and $\overline{D}$ for a set $D$ denotes taking the closure of $D$.

Now when the matrix $C$ is negative definite (as implied by Assumptions~\ref{cond-bpolicy},~\ref{cond-features}) and when the radius of $\H$ exceeds the threshold given in Lemma~\ref{lma-pode}, by the latter lemma, the solutions $x(\tau), \tau \in [0, \infty),$ of the ODE (\ref{eq-pode}) coincide with the solutions of $\dot{x} = \bar h(x) = C x + b$ for all initial $x(0) \in \H$. Then from the negative definiteness of $C$ (Theorem~\ref{thm-matrix}), it follows that as $\tau \to \infty$, $x(\tau) \to \theta^*$ uniformly in the initial condition, and consequently, $L_\H = \{\theta^*\}$.
\footnote{The details for this statement are as follows. 
Since $\bar h$ is bounded on $\H$ and the boundary reflection term $z(\cdot) \equiv 0$ under our assumptions (Lemma~\ref{lma-pode}), a solution $x(\cdot)$ of (\ref{eq-pode}) is Lipschitz continuous on $[0, \infty)$. 
We calculate $\dot{V}(\tau)$ for the Lyapunov function $V(\tau) = | x(\tau) - \theta^*|^2$.
By the negative definiteness of the matrix $C$, for some $c > 0$, $x^\top C x \leq - c | x|^2$ for all $x \in \rn$. Then, since $\bar h(x) = C x + b = C (x - \theta^*)$, we have
$\dot{V}(\tau) = 2 \, \big\langle x(\tau) - \theta^* \, , \, \bar h(x(\tau)) \big\rangle  
          \leq - 2 c \, \big|x(\tau) - \theta^* \big|^2$,
and hence for any $\delta > 0$, there exists $\epsilon > 0$ such that 
$\dot{V}(\tau) \leq - \epsilon$ if $V(\tau) = |x(\tau) - \theta^* |^2 \geq \delta^2$.
This together with the continuity of the solution $x(\cdot)$ implies that for any $x(0) \in \H$, within time $\bar \tau = r_\H^2/\epsilon$, 
the trajectory $x(\tau)$ must reach $N_\delta(\theta^*)$ and stay in that set thereafter. By the definition of the limit set and the arbitrariness of $\delta$, this implies $L_\H = \{ \theta^* \}$.}
Thus $N_\delta(L_\H) = N_\delta(\theta^*)$ and we obtain Theorems~\ref{thm-dim-stepsize} and~\ref{thm-const-stepsize}.

\subsection{Proofs for Theorems~\ref{thm-dim-stepsize-b} and~\ref{thm-const-stepsize-b}} \label{sec-4.2}

In this subsection we prove Theorems~\ref{thm-dim-stepsize-b}-\ref{thm-const-stepsize-b} for the two variants of the constrained ETD($\lambda$) algorithm given in (\ref{eq-emtd-const1}) and (\ref{eq-emtd-const2}). Like in the previous subsection, we will apply \cite[Theorems 8.2.2, 8.2.3]{KuY03} and show, separately for each variant algorithm, that the required conditions are met. Using the properties of the mean ODEs of the variant algorithms, we will then specialize the conclusions of those theorems to obtain the desired results.

\subsubsection{Proofs for the First Variant}
Consider the first variant algorithm (\ref{eq-emtd-const1}):
$$\theta_{t+1} = \Pi_{\H} \Big( \theta_t + \alpha_t \, \psi_K(\e_t) \cdot \rho_t \big(R_t + \gamma_{t+1} \fe(S_{t+1})^\top \theta_t - \fe(S_t)^\top \theta_t \big) \Big).$$
We define a function $h_K : \rn \times \Xi \to \rn$ by
\begin{equation} \label{eq-hK}
 h_K(\theta, \xi) = \psi_K(\e ) \cdot \rho(s, a) \, \big( r(s, a, s') + \gamma(s') \, \fe(s')^\top \theta - \fe(s)^\top \theta \big), \quad \text{for} \ \xi = (\e, \F, s, a, s'),
\end{equation} 
and write (\ref{eq-emtd-const1}) equivalently as
$$  \theta_{t+1} = \Pi_{\H} \Big( \theta_t + \alpha_t \, h_K(\theta_t, \xi_t) + \alpha_t \, \psi_K(\e_t ) \cdot \tilde \omega_{t+1} \Big) $$
with $\tilde \omega_{t+1}  = \rho_t (R_t - r(S_t, A_t, S_{t+1}))$ as before. Note that $\E_t \left[ \psi_K(\e ) \, \tilde \omega_{t+1} \right] = 0$, and the algorithm is similar to the algorithm (\ref{eq-emtd-const0}) (equivalently, (\ref{eq-emtd-const})), except that we have $h_K$ and $\psi_K(\e_t)$ in place of $h$ and $\e_t$, respectively. 

We note two properties of the function $h_K$. They follow from direct calculations and will be useful in our analysis shortly:
\begin{enumerate}
\item[(a)] Using the Lipschitz continuity of the function $\psi_K$ (cf.\ (\ref{eq-psi})), we have that
for each $\theta \in \rn$, there exists a finite $c > 0$ such that with $\xi = (\e, \F, s, a, s')$ and $\xi' = (\e', \F', s, a, s')$, 
\begin{equation} \label{eq-prf-wa1}
  \| h_K(\theta, \xi) - h_K(\theta, \xi') \| \leq c \, \| \e - \e' \|, \qquad \forall \, (s, a, s') \in \S \times \A \times \S.
\end{equation}  
Thus $h_K(\theta, \cdot)$ is Lipschitz continuous in $(\e,\F)$ uniformly in $(s,a,s')$. 
\item[(b)] Since the set $\H$ is bounded, we can bound the difference $h_K(\theta, \xi) - h(\theta, \xi)$ for all $\theta$ in $\H$ as follows.
For some finite constant $c > 0$,  
\begin{equation}\label{eq-prf-wa2}
 \| h_K(\theta, \xi) - h(\theta, \xi) \| \leq c \, \| \psi_K(\e) - \e \| \leq 2 c \, \| \e \| \cdot \I (\| \e \| \geq K), \qquad \forall \, \theta \in \H,
\end{equation} 
where the last inequality follows from the property (\ref{eq-psi}) of $\psi_K$:
$$  \| \psi_K(x) \| \leq \| x \| \ \  \forall \, x \in \rn, \quad \text{and} \quad \psi_K(x) = x \ \ \text{if} \ \| x \| \leq K.$$
\end{enumerate}

We now apply \cite[Theorems 8.2.2, 8.2.3]{KuY03} to obtain the desired conclusions in Theorems~\ref{thm-dim-stepsize-b}-\ref{thm-const-stepsize-b} for the algorithm (\ref{eq-emtd-const1}). This requires us to show that the conditions (i)-(v) and (i$'$)-(v$'$) given in Section~\ref{sec-4.1.1} are still satisfied when we replace $\e_t$ by $\psi_K(\e_t)$ and $h$ by $h_K$. 
The uniform integrability conditions (i), (i$'$), (iv) and (iv$'$) require the following sets to be u.i.: $\{ h_K(\theta_t, \xi_t) +  \psi_K(\e_t) \cdot \tilde \omega_{t+1}\}$ and $\{ h_K(\theta_t^\alpha, \xi_t) +  \psi_K(\e_t) \cdot \tilde \omega_{t+1} \mid t \geq 0, \alpha > 0\}$,  $\{h_K(\theta_t, \xi_t)\}$ and $\{h_K(\theta_t^\alpha, \xi_t) \mid t \geq 0, \alpha > 0\}$, and $\{h_K(\theta, \xi_t)\}$ for each $\theta$. These conditions are evidently satisfied, in view of the boundedness of the functions $\psi_K$ and $h_K(\theta, \cdot)$ for each $\theta$, the boundedness of the $\theta$-iterates due to constraints, and the finite variances of $\{\tilde \omega_{t}\}$.
The condition (ii) on the continuity of $h_K(\cdot, \xi)$ uniformly in $\xi \in D$, for each compact set $D \subset \Xi$, is also clearly satisfied, whereas the condition (iii) (equivalently (iii$'$)) on the tightness of $\{\xi_t\}$ was already verified earlier in Prop.~\ref{prop-1}.

What remains is the condition (v) (which is equivalent to (v$'$), for the same reason as discussed immediately before Prop.~\ref{prop-3}). It requires the existence of a continuous function $\bar h_K: \rn \to \rn$ such that for each $\theta \in \H$ and each compact set $D \subset \Xi$,
\begin{equation} \label{eq-condv-v1}
 \lim_{k \to \infty, t \to \infty}  \, \frac{1}{k} \sum_{m = t}^{t+k-1} \E_t \big[ h_K(\theta, \xi_m) - \bar h_K(\theta) \big] \, \I\big(\xi_t \in D\big) = 0 \qquad \text{in mean}.
\end{equation}
If this condition is satisfied as well, then the mean ODE for the algorithm (\ref{eq-emtd-const1}) is given by
\begin{equation} \label{eq-podeK}
  \dot{x} = \bar h_K(x) + z, \qquad z \in - \mathcal{N}_\H(x).
\end{equation}  

To furnish the condition (v), we first identify the function $\bar h_K(\theta)$ to be $\E_\zeta [h_K(\theta, \xi_0)]$, the expectation of $h_K(\theta, \xi_0)$ under the stationary distribution of the process $\{\Z_t\}$ with the invariant probability measure $\zeta$ as its initial distribution. We relate the functions $\bar h_K, K > 0$, to $\bar h$ in the proposition below, and we will use it to characterize the bias of the algorithm (\ref{eq-emtd-const1}) later.

\begin{prop} \label{prop-4}
Let Assumption~\ref{cond-bpolicy} hold. Consider the setting of the algorithm (\ref{eq-emtd-const1}), and for each $\theta \in \rn$, let $\bar h_K(\theta) = \E_\zeta [ h_K(\theta, \xi_0) ]$. Then the function $\bar h_K$ is Lipschitz continuous on $\rn$, and
\begin{equation} \label{eq-approx}
 \sup_{\theta \in \H} \, \| \bar h_K(\theta) - \bar h(\theta) \| \to 0 \ \ \text{as} \ K \to \infty.
\end{equation} 
\end{prop}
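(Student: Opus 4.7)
The plan is to prove the two claims separately, using the explicit affine form of $h_K(\theta,\xi)$ in $\theta$ and the bound (\ref{eq-prf-wa2}) already recorded.

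First, for Lipschitz continuity of $\bar h_K$, I would observe that for each fixed $\xi=(\e,\F,s,a,s')$, the function $h_K(\theta,\xi)$ is affine in $\theta$: we can write $h_K(\theta,\xi) = A_\xi \theta + c_\xi$, where $A_\xi = \psi_K(\e)\,\rho(s,a)\,(\gamma(s')\fe(s')-\fe(s))^\top$ and $c_\xi = \psi_K(\e)\,\rho(s,a)\,r(s,a,s')$. Because $\psi_K$ is bounded (by (\ref{eq-psi}) and the fact that a bounded truncation of $\e$ is used) and $\rho,r,\fe,\gamma$ are all bounded on the finite spaces, both $\|A_\xi\|$ and $\|c_\xi\|$ are bounded by a constant depending only on $K$. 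Consequently $\E_\zeta[\|A_{\xi_0}\|]$ and $\E_\zeta[\|c_{\xi_0}\|]$ are finite, so $\bar h_K(\theta)=\E_\zeta[A_{\xi_0}]\theta+\E_\zeta[c_{\xi_0}]$ is affine in $\theta$, and hence globally Lipschitz on $\rn$ with constant $\E_\zeta[\|A_{\xi_0}\|]$.

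Second, for the approximation claim (\ref{eq-approx}), I would use the pointwise bound (\ref{eq-prf-wa2}), which gives, for every $\theta\in \H$,
\begin{equation*}
\|h_K(\theta,\xi)-h(\theta,\xi)\| \,\leq\, 2c\,\|\e\|\,\I(\|\e\|\geq K),
\end{equation*}
with the same constant $c$ for all $\theta\in\H$. Taking expectations under $\zeta$ and using Corollary~\ref{cor-2.1} to identify $\bar h(\theta)=\E_\zeta[h(\theta,\xi_0)]$, we get
\begin{equation*}
\sup_{\theta\in\H}\,\|\bar h_K(\theta)-\bar h(\theta)\|\,\leq\, 2c\,\E_\zeta\!\left[\,\|\e_0\|\,\I(\|\e_0\|\geq K)\right].
\end{equation*}
The last step is to verify that this right-hand side vanishes as $K\to\infty$. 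For that, I would invoke Theorem~\ref{thm-2.2} with the Lipschitz function $g(\xi)=\e$, which yields $\E_\zeta[\|\e_0\|]<\infty$; then dominated convergence (or the standard tail-integrability argument for integrable random variables) gives $\E_\zeta[\|\e_0\|\,\I(\|\e_0\|\geq K)]\to 0$ as $K\to\infty$, establishing (\ref{eq-approx}).

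There is no real obstacle in this proof: the affine-in-$\theta$ structure of $h_K$ trivializes the Lipschitz claim once boundedness of $\psi_K$ is used, and the uniform (in $\theta\in\H$) bound (\ref{eq-prf-wa2}) reduces the approximation claim to the tail estimate $\E_\zeta[\|\e_0\|\,\I(\|\e_0\|\geq K)]\to 0$, which is immediate from the $\zeta$-integrability of $\|\e_0\|$ provided by Theorem~\ref{thm-2.2}. The only ingredient that relies on nontrivial prior work is that $\|\e_0\|$ is $\zeta$-integrable, and this is exactly what was pointed out after Theorem~\ref{thm-2.2}.
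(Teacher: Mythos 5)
Your proposal is correct and follows essentially the same route as the paper: the Lipschitz claim is obtained from the uniform-in-$\xi$ bound $\|h_K(\theta,\xi)-h_K(\theta',\xi)\|\leq c_1c_2\|\theta-\theta'\|$ (your affine-in-$\theta$ observation is just a repackaging of that calculation), and the approximation claim is reduced via (\ref{eq-prf-wa2}) to the tail estimate $\E_\zeta[\|\e_0\|\,\I(\|\e_0\|\geq K)]\to 0$, which both you and the paper get from $\E_\zeta[\|\e_0\|]<\infty$ in Theorem~\ref{thm-2.2}. One small caution: the boundedness of $\psi_K$ is a standing assumption on $\psi_K$ (it is required to be a bounded Lipschitz function), not a consequence of (\ref{eq-psi}) alone, so you should cite it as such.
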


\begin{proof}
For each $\theta$, the function $h_K(\theta, \cdot)$ is by definition bounded. Under Assumption~\ref{cond-bpolicy}, the Markov chain $\{(S_t, A_t, \e_t, \F_t)\}$ has a unique invariant probability measure $\zeta$ (Theorem~\ref{thm-2.1}). Therefore, $\bar h_K(\theta)$ is well-defined and finite. 
Let $c_1 = \sup_{\e \in \rn} \| \psi_K(\e) \| < \infty$ (since $\psi_K$ is bounded). 
For any $\theta, \theta'$, using the definition of $h_K$, a direct calculation shows that for some $c_2 > 0$,
$\| h_K(\theta, \xi) - h_K(\theta', \xi) \| \leq c_1 c_2 \| \theta - \theta' \|$ for all $\xi \in \Xi$,
from which it follows that
$$  \| \bar h_K(\theta) - \bar h_K(\theta') \| \leq \E_\zeta \left[ \| h_K(\theta, \xi_0) - h_K(\theta', \xi_0) \| \right] \leq c_1 c_2 \| \theta - \theta' \|.$$
This shows that $\bar h_K$ is Lipschitz continuous.
We now prove (\ref{eq-approx}). Since $\bar h_K(\theta) = \E_\zeta [ h_K(\theta, \xi_0) ]$ by definition and $\bar h(\theta) = \E_\zeta [ h(\theta, \xi_0) ]$ by Cor.~\ref{cor-2.1},  it is sufficient to prove the following statement, which entails (\ref{eq-approx}): 
\begin{equation} \label{eq-prf-wa3}
 \sup_{\theta \in \H}  \E_\zeta \left[  \big\| h_K(\theta, \xi_0) - h(\theta, \xi_0) \big\| \right] \to 0 \ \ \text{as} \ K \to \infty.
\end{equation}  
By (\ref{eq-prf-wa2}), for some constant $c > 0$,  
$$\| h_K(\theta, \xi_0) - h(\theta, \xi_0) \| \leq 2 c \, \| \e_0 \| \cdot \I (\| \e_0 \| \geq K), \qquad \forall \, \theta \in \H,$$
and therefore,
$$ \sup_{\theta \in \H}  \E_\zeta \left[  \big\| h_K(\theta, \xi_0) - h(\theta, \xi_0) \big\| \right] \leq 2 c \, \E_\zeta \big[ \| \e_0 \| \cdot \I (\| \e_0 \| \geq K) \big].$$
By Theorem~\ref{thm-2.2}, $\E_\zeta [ \| \e_0\| ] < \infty$ and hence $\E_\zeta [ \| \e_0 \| \cdot \I (\| \e_0 \| \geq K) ] \to 0$ as $K \to \infty$. Together with the preceding inequality, this implies (\ref{eq-prf-wa3}), which in turn implies (\ref{eq-approx}).
\end{proof}

We now show that the convergence in mean required in (\ref{eq-condv-v1}) is satisfied.

\begin{prop} \label{prop-5}
Under Assumption~\ref{cond-bpolicy}, the conclusion of Prop.~\ref{prop-3} holds in the setting of the algorithm (\ref{eq-emtd-const1}), with the functions $h_K$ and $\bar h_K$ in place of $h$ and $\bar h$, respectively.
\end{prop}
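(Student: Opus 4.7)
The plan is to mirror the proof of Prop.~\ref{prop-3} essentially line by line, with $h_K$ and $\bar h_K$ replacing $h$ and $\bar h$, exploiting the fact that by~(\ref{eq-prf-wa1}) the function $h_K(\theta, \cdot)$ satisfies the uniform Lipschitz condition~(\ref{cond-lip}) required by Theorem~\ref{thm-2.2}. First I would pass from conditional to unconditional expectations via $\E[\|\E_t X_{k,t}\|] \le \E[\|X_{k,t}\|]$, and reduce to compact sets of the form $D = E \times \{(s,a,s')\}$ with $E \subset \re^{n+1}$ compact. Fixing two arbitrary subsequences $k_j, t_j \to \infty$, I would pick a reference point $(\bar\e, \bar\F) \in E$ and introduce auxiliary traces $(\e_m^j, \F_m^j)$, $m \ge t_j$, driven by the same state--action trajectory $\{(S_t, A_t)\}$ but restarted at time $t_j$ from $(\bar\e, \bar\F)$, and set $\xi_m^j = (\e_m^j, \F_m^j, S_m, A_m, S_{m+1})$.

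Using the decomposition $h_K(\theta, \xi_m) - \bar h_K(\theta) = [h_K(\theta, \xi_m^j) - \bar h_K(\theta)] + [h_K(\theta, \xi_m) - h_K(\theta, \xi_m^j)]$, it suffices to show that the Ces\`aro averages of each piece, multiplied by $\I(\xi_{t_j} \in D)$, tend to $0$ in mean. For the first piece, after bounding $\I(\xi_{t_j} \in D) \le \I((S_{t_j}, A_{t_j}, S_{t_j+1}) = (s, a, s'))$ and conditioning on $\{(S_{t_j}, A_{t_j}, S_{t_j+1}) = (s,a,s')\}$, the process $\{\xi_m^j, m \ge t_j\}$ has the same law as a process $\{\hat\xi_m\}$ initialized from the deterministic point $(\bar\e, \bar\F, s, a, s')$. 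Theorem~\ref{thm-2.2} then applies with $g = h_K(\theta, \cdot)$ and yields $\frac{1}{k}\sum_{m=0}^{k-1} h_K(\theta, \hat\xi_m) \to \E_\zeta[h_K(\theta, \xi_0)] = \bar h_K(\theta)$ in mean, the identification of the limit being the definition of $\bar h_K$ used in Prop.~\ref{prop-4}. For the second piece, the Lipschitz bound~(\ref{eq-prf-wa1}) gives $\|h_K(\theta, \xi_m) - h_K(\theta, \xi_m^j)\| \le c\, \|\e_m - \e_m^j\|$, and I can invoke the three-term truncated-trace decomposition~(\ref{eq-prf-w10a})--(\ref{eq-prf-w14}) from the proof of Prop.~\ref{prop-3} unchanged, since that argument depends only on the trace dynamics and on Prop.~\ref{prp-3}, not on the specific form of $h$.

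The proof thus reduces entirely to two ingredients already at hand: Theorem~\ref{thm-2.2} applied to $h_K$ (valid by~(\ref{eq-prf-wa1})) for the ``restarted'' term, and the trace-approximation bound of Prop.~\ref{prp-3} for the ``forgetting'' term. No new technical obstacle arises; if anything, the boundedness of $h_K$ makes the analysis easier than in the original Prop.~\ref{prop-3} by eliminating any uniform integrability concerns. The one small point to verify is that the constant $c$ in~(\ref{eq-prf-wa1}) can be chosen uniformly over the trace arguments appearing in the second-piece bound, which it can, since~(\ref{eq-prf-wa1}) holds globally on $\re^{n+1}$ with a constant determined only by $\theta$, the features, and the Lipschitz constant of $\psi_K$.
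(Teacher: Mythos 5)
Your proposal is correct and follows essentially the same route as the paper: the paper's proof likewise states that the argument of Prop.~\ref{prop-3} carries over verbatim, with the Lipschitz property (\ref{eq-prf-wa1}) of $h_K$ supplying both the applicability of Theorem~\ref{thm-2.2} to the restarted term and the reduction of the difference term to the trace-approximation bounds of Prop.~\ref{prp-3}. Nothing is missing.
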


\begin{proof}
The same arguments given in the proof of Prop.~\ref{prop-3} apply here, with the functions $h_K, \bar h_K$ in place of $h, \bar h$, respectively. 
Only two details are worth noting here. The proof relies on the Lipschitz continuity property of $h_K$ given in (\ref{eq-prf-wa1}). 
As mentioned earlier, this property implies that for each $\theta$, with $\xi=(\e, \F, s, a, s')$, $h_K(\theta, \xi)$ is Lipschitz continuous in $(\e,\F)$ uniformly in $(s,a,s')$, so we can apply Theorem~\ref{thm-2.2} to conclude that (\ref{eq-prf-w8b}) and hence (\ref{eq-prf-w6}) hold in this case (for $h_K, \bar h_K$ instead of $h, \bar h$). The property (\ref{eq-prf-wa1}) also allows us to obtain (\ref{eq-prf-w7}) in this case, by exactly the same proof given earlier.
\end{proof}

Thus we have furnished all the conditions required by \cite[Theorems 8.2.2, 8.2.3]{KuY03}. 
As in the case of (\ref{eq-emtd-const0}), by these two theorems, the assertions of Theorems~\ref{thm-dim-stepsize} and \ref{thm-const-stepsize} hold for the variant algorithm (\ref{eq-emtd-const1}) with $N_\delta(L_\H)$ in place of $N_\delta(\theta^*)$, where $L_\H$ is the limit set of the 
projected mean ODE associated with (\ref{eq-emtd-const1}):
\begin{equation} 
  \dot{x} = \bar h_K(x) + z, \qquad z \in - \mathcal{N}_\H(x). \notag
\end{equation}  
To finish the proof for Theorems~\ref{thm-dim-stepsize-b}-\ref{thm-const-stepsize-b}, it is now sufficient to show that for any given $\delta  > 0$, we can choose a number $K_\delta$ large enough so that $L_\H \subset N_{\delta}(\theta^*)$ for all $K \geq K_\delta$. 
We prove this below, using Prop.~\ref{prop-4}. Note that the set $L_\H$ reflects the bias of the constrained algorithm (\ref{eq-emtd-const1}), so what we are showing now is that this bias decreases as $K$ increases.

\begin{lem} \label{lma-3.2}
Let Assumptions~\ref{cond-bpolicy},~\ref{cond-features} hold, and let the radius of the set $\H$ exceed the threshold given in Lemma~\ref{lma-pode}. Then for all $K$ sufficiently large, given any initial condition $x(0) \in \H$, a solution to the projected ODE (\ref{eq-podeK}) coincides with the unique solution to $\dot{x} = \bar h_K(x)$, with the boundary reflection term being $z(\cdot) \equiv 0$.
Given $\delta > 0$, there exists $K_\delta$ such that for $K \geq K_\delta$, the limit set $L_\H$ of (\ref{eq-podeK}) satisfies $L_\H \subset N_\delta(\theta^*)$.
\end{lem}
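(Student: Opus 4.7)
My plan is to follow the template of Lemma~\ref{lma-pode} and the footnote Lyapunov argument, treating $\bar h_K$ as a perturbation of $\bar h$ whose size is controlled by Prop.~\ref{prop-4}. Write $\bar h_K(x) = \bar h(x) + r_K(x)$ where, by (\ref{eq-approx}), $\epsilon_K := \sup_{x \in \H} |\bar h_K(x) - \bar h(x)| \to 0$ as $K \to \infty$. Since $\bar h$ and $\bar h_K$ are Lipschitz, existence and uniqueness of the solution to $\dot x = \bar h_K(x)$ for any initial $x(0) \in \H$ is standard; the only issue is whether the trajectory stays inside $\H$ on its own, so that the boundary reflection term $z$ vanishes.

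For the first claim, I would check that $\bar h_K$ points strictly inward on $\partial \H$ for $K$ large. Using $\bar h(x) = C(x-\theta^*)$, negative definiteness of $C$, and $r_\H > |b|/c$, a direct computation gives, for any $x$ with $|x|=r_\H$, $\langle \bar h(x), x\rangle = x^\top C x + b^\top x \leq - c r_\H^2 + |b| r_\H =: -\eta_0 < 0$. Hence $\langle \bar h_K(x), x\rangle \leq -\eta_0 + r_\H \, \epsilon_K$, which is strictly negative once $K$ is large enough that $\epsilon_K < \eta_0/r_\H$. For such $K$, $\bar h_K$ is transverse to $\partial \H$ and points into the interior, so the minimal reflection $z$ required in (\ref{eq-podeK}) vanishes and solutions of the projected ODE agree with those of $\dot x = \bar h_K(x)$, which remain in $\H$.

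For the second claim, I would run the Lyapunov argument sketched in the footnote after Theorem~\ref{thm-const-stepsize}, but with the perturbed drift. Set $V(\tau) = |x(\tau)-\theta^*|^2$; since $C\theta^*+b=0$,
\begin{align*}
\dot V(\tau) &= 2\langle x-\theta^*,\, C(x-\theta^*) + r_K(x)\rangle \\
&\leq -2c\,|x-\theta^*|^2 + 2 |x-\theta^*| \cdot \epsilon_K.
\end{align*}
Given $\delta>0$, pick $K_\delta$ so that, in addition to the first-claim requirement, $\epsilon_K \leq c\delta/2$ for $K \geq K_\delta$. Then whenever $|x(\tau)-\theta^*| \geq \delta$ we obtain $\dot V \leq -c\delta^2$. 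Since $V$ is bounded by $(2r_\H)^2$ on $\H$, from any initial $x(0)\in \H$ the trajectory enters $N_\delta(\theta^*)$ in finite time $\bar\tau \leq (2r_\H)^2/(c\delta^2)$, uniformly in the initial condition, and cannot leave it afterwards (because on $\partial N_\delta(\theta^*) \cap \H$ we still have $\dot V<0$). By the definition of the limit set, $L_\H \subset N_\delta(\theta^*)$.

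The main obstacle is essentially bookkeeping: one must choose $K_\delta$ large enough to simultaneously enforce boundary inward-pointing (so that the projected dynamics reduce to the unconstrained ODE) \emph{and} enforce contraction of $V$ outside $N_\delta(\theta^*)$. Both requirements are controlled by the same scalar $\epsilon_K \to 0$, so a single thresholding of $K$ suffices. No further structural property of $\bar h_K$ beyond its uniform proximity to the affine, stable drift $\bar h$ is needed.
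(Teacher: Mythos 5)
Your proposal is correct and follows essentially the same route as the paper's proof: both parts rest on the uniform bound $\sup_{x\in\H}\|\bar h_K(x)-\bar h(x)\|\to 0$ from Prop.~\ref{prop-4}, with the inward-pointing computation $\langle x,\bar h_K(x)\rangle<0$ on $\partial\H$ killing the reflection term, and the Lyapunov function $V=|x-\theta^*|^2$ with $\dot V\leq -2c|x-\theta^*|^2+2|x-\theta^*|\,\epsilon_K$ giving uniform-in-initial-condition entry into $N_\delta(\theta^*)$. Your version merely makes the constants ($\eta_0$, $\epsilon_K\leq c\delta/2$, $\bar\tau\leq (2r_\H)^2/(c\delta^2)$) explicit where the paper leaves them implicit.
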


\begin{proof}
Under Assumptions~\ref{cond-bpolicy},~\ref{cond-features}, the matrix $C$ is negative definite (Theorem~\ref{thm-matrix}), and when the radius of the set $\H$ exceeds the threshold given in Lemma~\ref{lma-pode}, there exists a constant $\epsilon > 0$ such that for all boundary points $x$ of $\H$, $\langle x, \bar h(x) \rangle < - \epsilon$. 
At such points $x$, the normal cone $\mathcal{N}_\H(x) = \{ a x \mid a \geq 0\}$, and 
$$ \langle x, \bar h_K(x) \rangle = \langle x, \bar h(x) \rangle + \langle x, \bar h_K(x) - \bar h(x)  \rangle < - \epsilon + \langle x, \bar h_K(x) - \bar h(x)  \rangle.$$
By (\ref{eq-approx}) in Prop.~\ref{prop-4}, $\langle x, \bar h_K(x) - \bar h(x)  \rangle \to 0$ uniformly on $\H$ as $K \to \infty$.
Thus when $K$ is sufficiently large, at all boundary points $x$ of $\H$, $\langle x, \bar h_K(x) \rangle < 0$; i.e., $\bar h_K(x)$ points inside $\H$ and the boundary reflection term $z = 0$.
It then follows that for such $K$, given an initial condition $x(0) \in \H$, a solution to (\ref{eq-podeK}) coincides with the unique solution to $\dot{x} = \bar h_K(x)$, where the uniqueness is ensured by the Lipschitz continuity of $\bar h_K$ proved in Prop.~\ref{prop-4} (cf.\ \cite[Chap.\ 11.2]{Bor08}). 
 
To prove the second statement concerning the limit set of the projected ODE, let $K$ be large enough so that the conclusion of the first part holds. Let $x(\tau), \tau \in [0, \infty),$ be the solution of (\ref{eq-podeK}) for a given initial $x(0) \in \H$. Since $\bar h_K$ is bounded on $\H$, $x(\cdot)$ is Lipschitz continuous on $[0, \infty)$.  
Let $V(\tau) = | x(\tau) - \theta^*|^2$, and we calculate $\dot{V}(\tau)$.
Since for all $x$, $\bar h(x) = C x + b = C (x - \theta^*)$ and $x^\top C x \leq - c | x|^2$ for some $c > 0$ by the negative definiteness of $C$, a direct calculation shows that
\begin{align*}
    \dot{V}(\tau) & = 2 \, \big\langle x(\tau) - \theta^* \, , \, \bar h_K(x(\tau)) \big\rangle = 2 \, \big\langle x(\tau) - \theta^* \, , \, \bar h(x(\tau)) \big\rangle + 2 \,\big\langle x(\tau) - \theta^* , \bar h_K(x(\tau)) - \bar h (x(\tau))  \big\rangle \\    
     & \leq - 2 c \, \big|x(\tau) - \theta^* \big|^2 + 2 \, \big| x(\tau) - \theta^* \big| \cdot \big| \bar h_K(x(\tau)) - \bar h(x(\tau)) \big|.
\end{align*}
By (\ref{eq-approx}) in Prop.~\ref{prop-4}, $\sup_{x \in \H} |h_K(x) - \bar h(x)| \to 0$ as $K \to \infty$. It then follows that for any $\delta > 0$, there exist $\epsilon > 0$ and $K_\delta > 0$ such that for all $K \geq K_\delta$,
$\dot{V}(\tau) \leq - \epsilon$ if $V(\tau) = |x(\tau) - \theta^* |^2 \geq \delta^2$.
This together with the continuity of the solution $x(\cdot)$ shows that for any $x(0) \in \H$, within time $\bar \tau = r_B^2/\epsilon$ (where $r_B$ is the radius of $\H$), the trajectory $x(\tau)$ must reach $N_\delta(\theta^*)$ and stay in that set thereafter. Consequently, for all $K \geq K_\delta$, the limit set $L_\H = \cap_{\bar \tau \geq 0} \overline{ \, \cup_{x(0) \in \H} \{ x(\tau), \tau \geq \bar \tau \}} \subset N_\delta(\theta^*)$.  
\end{proof}

This completes the proofs of Theorems~\ref{thm-dim-stepsize-b} and \ref{thm-const-stepsize-b} for the first variant.

\subsubsection{Proofs for the Second Variant}

We now analyze the second variant algorithm (\ref{eq-emtd-const2}), 
$$ \theta_{t+1} = \Pi_{\H} \left( \theta_t + \alpha_t \, \psi_K(Y_t) \right), \quad \text{where} \ \ Y_t = \e_t \cdot \rho_t \big(R_t + \gamma_{t+1} \fe(S_{t+1})^\top \theta_t - \fe(S_t)^\top \theta_t \big).$$
Similarly to the previous case, with $\xi = (\e, \F, s, a, s')$, we define a bounded function $h_K: \rn \times \Xi \to \rn$ by
$$ h_K(\theta, \xi) =  \int   \psi_K \Big( \e  \cdot \rho(s, a) \, \big( r + \gamma(s') \, \fe(s')^\top \theta - \fe(s)^\top \theta \big) \Big) \, q( d r  \mid s, a, s'), $$
where we recall that $q( d r  \mid s, a, s')$ is the conditional probability distribution of the reward given the state transition $(s,s')$ under the action $a$.
We can write the algorithm (\ref{eq-emtd-const2}) equivalently in terms of $h_K$ as
$$  \theta_{t+1} = \Pi_{\H} \Big( \theta_t + \alpha_t \, h_K(\theta_t, \xi_t) + \alpha_t \, \Delta_t \Big), $$
where $\Delta_t = \psi_K(Y_t) -  h_K(\theta_t, \xi_t)$, and it can be seen that $h_K(\theta_t, \xi_t) = \E_t [ \psi_K(Y_t) ]$ and $\E_t [ \Delta_t ] = 0$.

Two properties of the function $h_K$ will be useful shortly in our analysis:
\begin{enumerate}
\item[(a)] The Lipschitz continuity property (\ref{eq-prf-wa1}) holds for the function $h_K$ here. In particular, let $c_1 > 0$ be the Lipschitz modulus of the function $\psi_K$ with respect to $\| \cdot \|$.
A direct calculation using the Lipschitz property of $\psi_K$ shows that with $\xi = (\e, \F, s, a, s')$ and $\xi' = (\e', \F', s, a, s')$,
\begin{align*}
 \|   h_K(\theta, \xi) - h_K(\theta, \xi')  \| \leq \int c_1 \Big\| ( \e - \e') \cdot \rho(s, a) \, \big( r + \gamma(s') \, \fe(s')^\top \theta - \fe(s)^\top \theta \big) \Big\| \, q( d r  | s, a, s'),
\end{align*}
so for each $\theta$, there exists a finite constant $c > 0$ such that
\begin{equation} \label{eq-prf-wb1}
  \| h_K(\theta, \xi) - h_K(\theta, \xi') \| \leq c \, \| \e - \e' \|, \qquad \forall \, (s, a, s') \in \S \times \A \times \S.
\end{equation}  
\item[(b)] The second property given below also follows from a direct calculation using the Lipschitz continuity of $\psi_K$:  there exists a finite constant $c > 0$ such that for any $\theta, \theta'$, 
\begin{equation} \label{eq-prf-wb2} 
\| h_K(\theta, \xi) - h_K(\theta', \xi) \| \leq c \| \e \| \cdot \| \theta - \theta' \|, \qquad \forall \,  \xi \in \Xi.
\end{equation}
\end{enumerate}

We now proceed to prove Theorems~\ref{thm-dim-stepsize-b}-\ref{thm-const-stepsize-b} for the algorithm (\ref{eq-emtd-const2}).
As before, we will apply \cite[Theorems 8.2.2, 8.2.3]{KuY03}, and this requires us to show that the conditions (i)-(v) and (i$'$)-(v$'$) given in Section~\ref{sec-4.1.1}, with the function $h_K$ above in place of $h$, are satisfied.
The conditions (i)-(iv) and (i$'$)-(iv$'$) are clearly met. In particular, the uniform integrability conditions (i), (i$'$), (iv) and (iv$'$) are trivially fulfilled because by the definitions of $h_K$ and the algorithm (\ref{eq-emtd-const2}),  $\{ \psi_K(Y_t)\}$, 
$\{h_K(\theta_t, \xi_t)\}$, and $\{h_K(\theta, \xi_t)\}$ for each $\theta$, regardless of stepsizes, all lie in a bounded set determined by $K$. As for the continuity condition (ii), in view of the boundedness and Lipschitz continuity of $\psi_K$, it is also clear that $h_K(\theta, \xi)$ is bounded and continuous in $\theta$ uniformly in $\xi \in D$, for each compact set $D \subset \Xi$ (cf.\ (\ref{eq-prf-wb2})).

The condition (v) (equivalently (v$'$))  requires the existence of a continuous function $\bar h_K: \rn \to \rn$ such that for each $\theta \in \H$ and each compact set $D \subset \Xi$,
\begin{equation} \label{cond-v-v2}
 \lim_{k \to \infty, t \to \infty}  \, \frac{1}{k} \sum_{m = t}^{t+k-1} \E_t \big[ h_K(\theta, \xi_m) - \bar h_K(\theta) \big] \, \I\big(\xi_t \in D\big) = 0 \qquad \text{in mean}.
\end{equation} 
Similarly to the analysis for the first variant algorithm, we identify this function $\bar h_K(\theta)$ to be the expectation of $h_K(\theta, \xi_0)$ with respect to the stationary distribution of the process $\{Z_t\}$, and if condition (v) is satisfied, then the mean ODE of the algorithm (\ref{eq-emtd-const2}) will be given by
$$  \dot{x} = \bar h_K(x) + z, \qquad z \in - \mathcal{N}_\H(x). \notag$$

\begin{prop} \label{prop-6}
Let Assumption~\ref{cond-bpolicy} hold. Consider the setting of the algorithm (\ref{eq-emtd-const2}), and for each $\theta \in \rn$, let $\bar h_K(\theta) = \E_\zeta [ h_K(\theta, \xi_0) ]$. Then the conclusion of Prop.~\ref{prop-4} holds.
\end{prop}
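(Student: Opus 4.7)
The approach is to mirror the proof of Prop.~\ref{prop-4}, but adapted to the new definition of $h_K$ for the second variant. First, since $\psi_K$ is bounded, $h_K(\theta, \cdot)$ is bounded for each $\theta$, so $\bar h_K(\theta) = \E_\zeta[h_K(\theta, \xi_0)]$ is well-defined and finite. The Lipschitz continuity of $\bar h_K$ on $\rn$ follows immediately from property (\ref{eq-prf-wb2}) combined with the finiteness of $\E_\zeta[\|\e_0\|]$ established in Theorem~\ref{thm-2.2}: indeed, $\|\bar h_K(\theta) - \bar h_K(\theta')\| \leq \E_\zeta \| h_K(\theta, \xi_0) - h_K(\theta', \xi_0)\| \leq c\,\E_\zeta[\|\e_0\|] \cdot \|\theta - \theta'\|$.

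For the approximation property (\ref{eq-approx}), the key observation is that both $h$ and $h_K$ can be written as integrals against the reward kernel. Let
$$Y(\theta, \xi, r) = \e \cdot \rho(s,a)\bigl(r + \gamma(s')\fe(s')^\top \theta - \fe(s)^\top \theta\bigr),$$
so that $h(\theta, \xi) = \int Y(\theta, \xi, r)\, q(dr \mid s, a, s')$ and $h_K(\theta, \xi) = \int \psi_K(Y(\theta, \xi, r))\, q(dr \mid s, a, s')$. From property (\ref{eq-psi}) of $\psi_K$, the pointwise bound $\|Y - \psi_K(Y)\| \leq 2\|Y\|\,\I(\|Y\| \geq K)$ holds. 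Since $\theta$ ranges over the bounded set $\H$ and $\S, \A$ are finite, there exists a constant $c_1 > 0$ such that $\|Y(\theta, \xi, r)\| \leq c_1 \|\e\|(1+|r|)$ uniformly in $\theta \in \H$ and $\xi$. Taking $\E_\zeta$ of the resulting pointwise bound gives
$$\sup_{\theta \in \H} \|\bar h_K(\theta) - \bar h(\theta)\| \leq 2c_1\, \E_\zeta\bigl[\|\e_0\|(1+|R_0|)\,\I(c_1\|\e_0\|(1+|R_0|) \geq K)\bigr],$$
where $R_0$ is the reward associated with the transition $(S_0, A_0, S_1)$ under the stationary distribution.

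The remaining task, which is the main obstacle, is to show the right-hand side above vanishes as $K \to \infty$ by dominated convergence; this requires the integrability $\E_\zeta[\|\e_0\|(1+|R_0|)] < \infty$. Since $R_0$ is conditionally independent of $\e_0$ given $(S_0, A_0, S_1)$ and has bounded conditional variance by the MDP model, the conditional expectation $\E[|R_0| \mid \e_0, S_0, A_0, S_1]$ is bounded by a finite constant (using that $\S \times \A \times \S$ is finite). Combining with $\E_\zeta[\|\e_0\|] < \infty$ from Theorem~\ref{thm-2.2} yields $\E_\zeta[\|\e_0\|(1+|R_0|)] < \infty$, so the dominated convergence theorem applies and gives the uniform convergence. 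The crucial structural feature making the argument work is that $Y(\theta, \xi, r)$ is \emph{affine} in $\theta$, which allows a single $\theta$-free dominating function $c_1\|\e_0\|(1+|R_0|)$ to be used across all of $\H$ and thus deliver \emph{uniform} rather than pointwise convergence.
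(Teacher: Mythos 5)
Your proposal is correct and follows essentially the same route as the paper: the Lipschitz part via (\ref{eq-prf-wb2}) and $\E_\zeta[\|\e_0\|]<\infty$, and the approximation part via the pointwise bound $\|Y-\psi_K(Y)\|\le 2\|Y\|\,\I(\|Y\|\ge K)$, a $\theta$-free dominating function of $(\e_0,R_0)$ valid uniformly on $\H$, integrability obtained by conditioning on the transition and using the bounded reward moments together with Theorem~\ref{thm-2.2}, and dominated convergence. The paper merely makes the bookkeeping slightly more explicit by introducing the invariant measure $\zeta'$ of the joint chain $\{(\xi_t,R_t)\}$ and the functions $\hat h_K,\hat h$, which play exactly the role of your $\psi_K(Y)$ and $Y$.
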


\begin{proof}
The function $h_K$ is by definition bounded, and under Assumption~\ref{cond-bpolicy}, the Markov chain $\{(S_t, A_t, \e_t, \F_t)\}$ has a unique invariant probability measure $\zeta$ (Theorem~\ref{thm-2.1}). The function $\bar h_K(\theta)$ is therefore well-defined and bounded. 
Using the property (\ref{eq-prf-wb2}) of the function $h_K$, we have that there exists a finite constant $c > 0$ such that for any $\theta, \theta'$,
$$  \| \bar h_K(\theta) - \bar h_K(\theta') \| \leq \E_\zeta \left[ \big\| h_K(\theta, \xi_0) - h_K(\theta', \xi_0) \big\| \right] \leq c \, \E_\zeta [ \| \e_0\| ] \cdot \| \theta - \theta' \|.$$
Since $\E_\zeta\{ \| \e_0\| \} < \infty$ by Theorem~\ref{thm-2.2}, this shows that $\bar h_K$ is Lipschitz continuous.

To prove (\ref{eq-approx}), let us prove 
\begin{equation} \label{eq-prf-wa4}
  \sup_{\theta \in \H}  \, \E_{\zeta'} \left[ \big\| \hat h_K(\theta, \xi_0, R_0) - \hat h(\theta, \xi_0, R_0) \big\| \right] \to 0 \ \ \text{as} \ K \to \infty,
\end{equation}  
where $\hat h_K, \hat h : \rn \times \Xi \times \re \to \rn$ are defined by
\begin{align*}
\hat h_K(\theta, \xi, r) & : = \psi_K \Big( \e  \cdot \rho(s, a) \, \big( r + \gamma(s') \, \fe(s')^\top \theta - \fe(s)^\top \theta \big) \Big), \\
\hat h(\theta, \xi, r) & : = \e  \cdot \rho(s, a) \, \big( r + \gamma(s') \, \fe(s')^\top \theta - \fe(s)^\top \theta \big),
\end{align*} 
and $\zeta'$ denotes the unique invariant probability measure of the Markov chain $\{(\xi_t, R_{t})\}$, the existence and uniqueness of such a measure being implied by Theorem~\ref{thm-2.1}, and the expectation $\E_{\zeta'}$ is over $(\xi_0, R_0)$ with respect to $\zeta'$. 
By taking expectation over $R_0$ conditioned on $\xi_0$, we have
$$  \E_{\zeta'} \big[ \hat h_K(\theta, \xi_0, R_0) \big] = \E_\zeta \big[ h_K(\theta, \xi_0) \big] = \bar h_K(\theta), \qquad \E_{\zeta'} \big[ \hat h(\theta, \xi_0, R_0) \big] = \E_\zeta \big[ h(\theta, \xi_0) \big] = \bar h(\theta),$$
so (\ref{eq-prf-wa4}) implies (\ref{eq-approx}).

We now prove (\ref{eq-prf-wa4}). Note that $\hat h_K(\theta, \xi_0, R_0)  = \psi_K( \hat h(\theta, \xi_0, R_0))$. So using the property (\ref{eq-psi}) of $\psi_K$, we have for any $\theta$,
\begin{align*} 
 \big\| \hat h_K(\theta, \xi_0, R_0) - \hat h(\theta, \xi_0, R_0) \big\| & \leq 2 \big\| \hat h(\theta, \xi_0, R_0) \big\| \cdot \I \big(  \big\| \hat h(\theta, \xi_0, R_0) \big\| \geq K \big), 
\end{align*}
and using the definition of $\hat h$ and the boundedness of $\H$, we also have that for some constants $c_1, c_2 > 0$,
$$ \big\| \hat h(\theta, \xi_0, R_0) \big\|  \leq c_1 \| \e_0 R_0 \| + c_2 \| \e_0 \| = : g(\e_0, R_0), \qquad \forall \, \theta \in \H.$$
Combining the preceding two relations, we have for any $\theta \in \H$,
$$  \big\| \hat h_K(\theta, \xi_0, R_0) - \hat h(\theta, \xi_0, R_0) \big\| \leq  2 \, g(\e_0, R_0)  \cdot \I \big( g(\e_0, R_0) \geq K \big)$$
and hence
\begin{equation} \label{eq-prf-wb3}
  \sup_{\theta \in \H}  \, \E_{\zeta'} \left[ \big\| \hat h_K(\theta, \xi_0, R_0) - \hat h(\theta, \xi_0, R_0) \big\| \right]  \leq 2 \, \E_{\zeta'} \left[ g(\e_0, R_0)  \cdot \I \big( g(\e_0, R_0) \geq K \big) \right].
\end{equation}  
Since $\E_{\zeta'} \left[ g(\e_0, R_0)  \right] = \E_{\zeta'} \left[  c_1 \| \e_0 R_0 \| + c_2 \| \e_0 \| \right] < \infty$ (we obtain the finiteness of the expectation here by first taking expectation over $R_0$ conditioned on $\xi_0$ and then applying Theorem~\ref{thm-2.2}), the expectation on the right-hand side of (\ref{eq-prf-wb3}) converges to $0$ as $K \to \infty$. We thus obtain (\ref{eq-prf-wa4}), which implies (\ref{eq-approx}).
\end{proof}

The rest of the analysis is similar to that for the first variant algorithm. First, using the Lipschitz continuity property (\ref{eq-prf-wb1}) of the function $h_K$ given earlier, we obtain that the convergence-in-mean condition (\ref{cond-v-v2}) holds, by the same proof arguments for Prop.~\ref{prop-3}:

\smallskip
\begin{prop} \label{prop-7}
Under Assumption~\ref{cond-bpolicy} and in the setting of the algorithm (\ref{eq-emtd-const2}), (\ref{cond-v-v2}) holds for each $\theta \in \H$ and each compact set $D \subset \Xi$.
\end{prop}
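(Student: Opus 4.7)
The plan is to follow the proof of Proposition~\ref{prop-3} nearly verbatim, with $h_K$ (as defined for the algorithm~(\ref{eq-emtd-const2})) and $\bar h_K(\theta) = \E_\zeta[h_K(\theta, \xi_0)]$ in place of $h$ and $\bar h$. That proof rests on just two structural properties of $h$: first, that for each fixed $\theta$, $h(\theta, \xi)$ is Lipschitz continuous in $(\e, \F)$ uniformly in $(s, a, s')$, so that Theorem~\ref{thm-2.2} can be applied to averages of $h(\theta, \cdot)$ along trajectories starting from a deterministic trace pair; and second, that $\bar h(\theta)$ is the stationary expectation of $h(\theta, \xi_0)$. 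Both features carry over in the present setting: the Lipschitz condition is exactly~(\ref{eq-prf-wb1}), established from the Lipschitz modulus of $\psi_K$ via the direct calculation preceding it; and Proposition~\ref{prop-6} defines $\bar h_K$ as the required stationary expectation and shows it is Lipschitz.

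Concretely, the steps are as follows. First, reduce to showing the in-mean limit along arbitrary subsequences $k_j \to \infty$, $t_j \to \infty$, and, by summing over transitions, to the case $D = E \times \{(s, a, s')\}$ for a compact $E \subset \re^{n+1}$ and a single triplet $(s, a, s') \in \S \times \A \times \S$. Next, fix a point $(\bar \e, \bar \F) \in E$ and, for each $j$, introduce an auxiliary trace sequence $(\e^j_m, \F^j_m)_{m \geq t_j}$ generated by the same recursion~(\ref{eq-td3})--(\ref{eq-td1}) along the same $(S_m, A_m)$ trajectory, but started at time $t_j$ from $(\bar \e, \bar \F)$; writing $\xi^j_m = (\e^j_m, \F^j_m, S_m, A_m, S_{m+1})$, split
\begin{equation*}
h_K(\theta, \xi_m) - \bar h_K(\theta) = \bigl( h_K(\theta, \xi^j_m) - \bar h_K(\theta) \bigr) + \bigl( h_K(\theta, \xi_m) - h_K(\theta, \xi^j_m) \bigr).
\end{equation*}
For the first summand, conditioning on $\{(S_{t_j}, A_{t_j}, S_{t_j+1}) = (s, a, s')\}$ identifies the distribution of $\{\xi^j_m\}_{m \geq t_j}$ with that of a process $\{\hat \xi_m\}_{m \geq 0}$ started from $(\bar \e, \bar \F, s, a, s')$, so Theorem~\ref{thm-2.2} applied to $g = h_K(\theta, \cdot)$---legitimate thanks to~(\ref{eq-prf-wb1})---yields the desired in-mean vanishing. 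For the second summand, (\ref{eq-prf-wb1}) gives $\|h_K(\theta, \xi_m) - h_K(\theta, \xi^j_m)\| \leq c\, \|\e_m - \e^j_m\|$ for some constant $c$, reducing matters to bounding $\E\bigl[\tfrac{1}{k_j} \sum_{m = t_j}^{t_j + k_j - 1} \|\e_m - \e^j_m\|\, \I(\xi_{t_j} \in D)\bigr]$; this is handled by the truncated-trace decomposition used in the proof of Proposition~\ref{prop-3}, invoking the approximation bound of Proposition~\ref{prp-3} to make the effect of the initial trace pair uniformly small as the truncation horizon grows.

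Because the argument is a direct transcription, I do not expect substantive obstacles; the only minor point requiring care is the notational clash between $K$ (the saturation parameter in $\psi_K$, fixed throughout) and the truncation horizon used inside the proof of Proposition~\ref{prop-3}, which I would rename (say $N$) and let $N \to \infty$ after $j \to \infty$ in the usual order of limits.
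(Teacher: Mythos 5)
Your proposal is correct and coincides with the paper's own argument: the paper proves Proposition~\ref{prop-7} precisely by invoking the Lipschitz property~(\ref{eq-prf-wb1}) of $h_K$ and repeating the proof of Proposition~\ref{prop-3} verbatim with $h_K,\bar h_K$ in place of $h,\bar h$, exactly as you outline (and your remark about renaming the truncation horizon to avoid the clash with the saturation parameter $K$ is a sensible housekeeping point the paper leaves implicit).
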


Now we have furnished all the conditions required by \cite[Theorems 8.2.2, 8.2.3]{KuY03}. 
By these two theorems, we can assert that the conclusions of Theorems~\ref{thm-dim-stepsize}-\ref{thm-const-stepsize} hold for the variant algorithm (\ref{eq-emtd-const2}) with $N_\delta(L_\H)$ in place of $N_\delta(\theta^*)$, where $L_\H$ is the limit set of the 
projected mean ODE associated with (\ref{eq-emtd-const2}):
\begin{equation} 
  \dot{x} = \bar h_K(x) + z, \qquad z \in - \mathcal{N}_\H(x). \notag
\end{equation}  
So to finish the proof for Theorems~\ref{thm-dim-stepsize-b}-\ref{thm-const-stepsize-b}, it is sufficient to show that for any given $\delta  > 0$, we can choose a number $K_\delta$ large enough so that $L_\H \subset N_{\delta}(\theta^*)$ for all $K \geq K_\delta$. In other words, the conclusions of Lemma~\ref{lma-3.2} hold for the case of the algorithm (\ref{eq-emtd-const2}). This is true by the same proof for Lemma~\ref{lma-3.2} with Prop.~\ref{prop-6} in place of Prop.~\ref{prop-4}. 
This completes the proofs of Theorems~\ref{thm-dim-stepsize-b}-\ref{thm-const-stepsize-b} for the second variant.

\subsection{Further Analysis of the Constant-stepsize Case} \label{sec-4.3}
We now consider again the case of constant stepsize, and prove Theorems~\ref{thm-c1}-\ref{thm-c2b} given in Section~\ref{sec-3.3}.
The proofs will be based on combining the results we obtained earlier by using the stochastic approximation theory, with the ergodic theorems of weak Feller Markov chains. As before the proofs will also rely on the key properties of the ETD iterates.

\subsubsection{Weak Feller Markov Chains} \label{sec-4.3.1}

We shall focus on Markov chains on complete separable metric spaces. 
For such a Markov chain $\{X_t\}$ with state space $\X$, 
let $P(\cdot, \cdot)$ denote its transition kernel, that is, $P: \X \times \mathcal{B}(\X) \to [0,1]$,
$$ P(x, D) = \Pr_x (X_1 \in D ), \qquad \forall \, x \in \X, \ D \in  \mathcal{B}(\X), $$
where $\mathcal{B}(\X)$ denotes the Borel sigma-algebra on $\X$, and $\Pr_x$ denotes the probability distribution of $\{X_t\}$ conditioned on $X_0 = x$.
Multiple-step transition kernels will also be needed. For $t \geq 1$, the $t$-step transition kernel $P^t(\cdot, \cdot) : \X \times \mathcal{B}(\X) \to [0,1]$ is given by
$$ P^t(x, D) = \Pr_x (X_t \in D), \qquad \forall \, x \in \X, \ D \in  \mathcal{B}(\X),$$
and for $t=0$, $P^0$ is defined as $P^0(x, \cdot) = \delta_{x}$, the Dirac measure that assigns probability $1$ to the point $x$, for each $x \in \X$.
Define averaged probability measures $\bar P_k(x, \cdot)$ for $k \geq 1$ and $x \in \X$, as
$$ \bar P_k(x, \cdot) = \frac{1}{k} \sum_{t=0}^{k-1} P^t(x, \cdot).$$

The Markov chain $\{X_t\}$ has the \emph{weak Feller} property if for every bounded continuous function $f$ on $\X$, 
$$ Pf(x) : = \int f(y) P(x, dy) = \E \big[ f(X_1) \mid X_0 = x \big]$$ 
is a continuous function of $x$~\cite[Prop.\ 6.1.1]{MeT09}. 
Weak Feller Markov chains have nice properties. In our analysis, we will use in particular several properties relating to the invariant probability measures of these chains and convergence of certain probability measures to the invariant probability measures.

Recall that if $\mu$ and $\mu_t, t \geq 0$, are probability measures on $\X$,  $\{\mu_t\}$ is said to converge weakly to $\mu$ if $\int f d\mu_t \to \int f d\mu$ for every bounded continuous function $f$ on $\X$. For $\{\mu_t\}$ that is not necessarily convergent, we shall call the limiting probability measure of any of its convergent subsequence, in the sense of weak convergence, a \emph{weak limit} of $\{\mu_t\}$. 
For an (arbitrary) index set $\mathcal{K}$, a set of probability measures $\{\mu_k\}_{k \in \mathcal{K}}$ on $\X$ is said to be \emph{tight} if for every $\delta > 0$, there exists a compact set $D_\delta \subset \X$ such that $\mu_k(D_\delta) \geq 1 - \delta$ for all $k \in \mathcal{K}$. 
An important fact is that on a complete separable metric space, any tight sequence of probability measures has a further subsequence that converges weakly to some probability measure~\cite[Theorem 11.5.4]{Dud02}.

For weak Feller Markov chains, their averaged probability measures $\{ \bar P_k(x, \cdot)\}_{k \geq 1}$ are known to have the following property; see e.g., the proof of Lemma 4.1 in~\citep{Mey89}.
It will be needed in our proofs of Theorems~\ref{thm-c1}-\ref{thm-c1b}.

\begin{lem} \label{lma-c1}
Let $\{X_t\}$ be a weak Feller Markov chain with transition kernel $P(\cdot, \cdot)$ on a metric space $\X$. For each $x \in \X$, any weak limit of $\{ \bar P_k(x, \cdot)\}$ is an invariant probability measure of $\{X_t\}$.
\end{lem}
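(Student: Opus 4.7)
The plan is to show that for every bounded continuous function $f$ on $\X$, any weak limit $\mu$ of $\{\bar P_k(x,\cdot)\}$ satisfies $\int Pf\,d\mu = \int f\,d\mu$; this identity on $C_b(\X)$ characterizes invariance of $\mu$ under $P$, since a probability measure on a complete separable metric space is uniquely determined by its integrals against bounded continuous functions, and so the equality of $\mu P$ and $\mu$ on $C_b(\X)$ extends to all Borel sets.

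First I would exploit a telescoping identity. For any bounded measurable $f$ and any $k \geq 1$,
\begin{equation*}
\int Pf(y)\, \bar P_k(x, dy) \,-\, \int f(y)\, \bar P_k(x, dy) \;=\; \frac{1}{k}\sum_{t=0}^{k-1}\bigl( P^{t+1}f(x) - P^t f(x)\bigr) \;=\; \frac{1}{k}\bigl( P^k f(x) - f(x) \bigr).
\end{equation*}
Since $f$ is bounded, the right-hand side is at most $2\|f\|_\infty/k$ in absolute value and therefore vanishes as $k \to \infty$, uniformly in $x$.

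Next, let $\{\bar P_{k_j}(x, \cdot)\}$ be a subsequence converging weakly to a probability measure $\mu$. The weak Feller property is used precisely here: it guarantees that $Pf$ is again a bounded continuous function whenever $f$ is. Consequently, the definition of weak convergence yields
\begin{equation*}
\int f\, d\bar P_{k_j}(x,\cdot) \;\longrightarrow\; \int f\, d\mu, \qquad \int Pf\, d\bar P_{k_j}(x,\cdot) \;\longrightarrow\; \int Pf\, d\mu.
\end{equation*}
Passing $j \to \infty$ in the telescoping identity above (with $k=k_j$) gives $\int Pf\, d\mu = \int f\, d\mu$ for every $f \in C_b(\X)$, hence $\mu$ is invariant.

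The only delicate point is the exchange of limit and integration for $Pf$; this is where the weak Feller assumption is indispensable. Without it, $Pf$ would only be measurable, and weak convergence of $\bar P_{k_j}(x,\cdot)$ would not let us pass to the limit in $\int Pf\, d\bar P_{k_j}(x,\cdot)$. Everything else is routine: the telescoping bound collapses a Cesàro difference, and weak convergence handles the two integrals against $f$ and $Pf$.
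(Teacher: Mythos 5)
Your proof is correct: the telescoping identity, the $2\|f\|_\infty/k$ bound, and the use of the weak Feller property to ensure $Pf \in C_b(\X)$ so that both integrals pass to the limit constitute the standard Krylov--Bogolyubov argument, and the identity $\int Pf\,d\mu = \int f\,d\mu$ for all $f \in C_b(\X)$ does determine $\mu P = \mu$ since Borel probability measures on a metric space agreeing on bounded continuous functions coincide. The paper does not prove this lemma itself but defers to the proof of Lemma 4.1 in Meyn (1989), which follows essentially the same route, so there is nothing to add.
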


Recall that the occupation probability measures of $\{X_t\}$, denoted $\{\mu_{x,t}\}$ for each initial condition $x \in \X$, are defined as follows: 
$$\mu_{x,t}(D) : = \frac{1}{t} \sum_{k=0}^{t-1} \I ( X_k \in D), \qquad \forall \, D \in \mathcal{B}(\X),$$
where the chain $\{X_t\}$ starts from $X_0=x$, and each $\mu_{x,t}$ is a random variable taking values in the space of probability measures on $\X$.
Let ``$\Pr_x$-a.s.'' stand for ``almost surely with respect to $\Pr_x$.''
The next lemma concerns the convergence of occupation probability measures of a weak Feller Markov chain. It is a result of Meyn \cite{Mey89} and will be needed in our proofs of Theorems~\ref{thm-c2}-\ref{thm-c2b}.

\begin{lem}[{\cite[Prop.\ 4.2]{Mey89}}] \label{lma-c1b}
Let $\{X_t\}$ be a weak Feller Markov chain with transition kernel $P(\cdot, \cdot)$ on a complete separable metric space $\X$. Suppose that 
\begin{enumerate}
\item[\rm (i)] $\{X_t\}$ has a unique invariant probability measure $\mu$;
\item[\rm (ii)] for each compact set $E \subset \X$, the set $\{ \bar P_k(x, \cdot) \mid x \in E, \, k \geq 1 \}$ is tight; and 
\item[\rm (iii)] for all initial conditions $x \in \X$, there exists a sequence of compact sets $E_k \uparrow \X$ (that is $E_k \subset E_{k+1}$ for all $k$ and $\cup_k E_k = \X$) such that 
$$ \lim_{k \to \infty} \liminf_{t \to \infty} \mu_{x,t}(E_k) = 1, \qquad \text{$\Pr_x$-a.s.}$$
\end{enumerate}
Then, for each initial condition $x \in \X$, the sequence $\{\mu_{x,t}\}$ of occupation probability measures converges weakly to $\mu$, $\Pr_x$-almost surely.
\end{lem}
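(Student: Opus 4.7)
The plan is to show that, almost surely along each sample path, the sequence of occupation probability measures $\{\mu_{x,t}\}$ is tight, and that every weak subsequential limit coincides with the unique invariant measure $\mu$; a standard subsequence argument then gives weak convergence of the full sequence to $\mu$, $\Pr_x$-a.s.

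First I would use condition (iii) to secure almost-sure tightness of $\{\mu_{x,t}\}_{t \geq 1}$ for each fixed initial $x$. Condition (iii) gives compact $E_k \uparrow \X$ with $\lim_k \liminf_t \mu_{x,t}(E_k) = 1$ on a $\Pr_x$-full measure event; on this event, for every $\delta > 0$ one picks $k$ large enough to absorb the tail and then enlarges to a compact set catching the finitely many small-$t$ terms, witnessing tightness. Since the space of probability measures on a complete separable metric space is Polish under the weak topology, on the tightness event every subsequence $t_j \to \infty$ admits a further subsequence along which $\mu_{x,t_j}$ converges weakly to a probability measure $\nu$ (a priori depending on $\omega$ and the subsequence; $\nu$ is a genuine probability measure by Portmanteau applied to the sets $E_k$).

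Next I would show that any such weak limit $\nu$ is invariant for $P$. For a bounded continuous $f$, the weak Feller property ensures $Pf$ is bounded continuous. The martingale difference sequence $D_k = f(X_{k+1}) - Pf(X_k)$ is bounded by $2\|f\|_\infty$, so by the bounded-difference martingale SLLN, $t^{-1} \sum_{k=0}^{t-1} D_k \to 0$ $\Pr_x$-a.s. Telescoping yields
\begin{equation*}
\int Pf \, d\mu_{x,t} - \int f \, d\mu_{x,t} \; = \; \tfrac{1}{t} \big( f(X_0) - f(X_t) \big) - \tfrac{1}{t}\textstyle\sum_{k=0}^{t-1} D_k \; \longrightarrow \; 0, \qquad \Pr_x\text{-a.s.}
\end{equation*}
Taking a countable determining family of bounded continuous test functions (a suitable countable subset of $C_b(\X)$, using separability of the Polish space $\X$ through an exhaustion by compact sets), and passing to the weak limit along $\mu_{x,t_{j_i}} \to \nu$, one gets $\int f\, d\nu = \int Pf\, d\nu$ for all such $f$, hence $\nu = \nu P$.

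Uniqueness in (i) then forces $\nu = \mu$, and since every subsequence of $\{\mu_{x,t}\}$ has a further weakly convergent subsequence with limit $\mu$, the full sequence converges weakly to $\mu$, $\Pr_x$-a.s. Condition (ii) plays an auxiliary role: combined with Lemma~\ref{lma-c1}, it guarantees that any weak limit of the averaged kernels $\bar P_k(x,\cdot)$ is an invariant probability measure, which together with (i) keeps the candidate weak limits of occupation measures confined to invariant measures rather than to degenerate sub-probability measures that could escape to infinity. The main technical obstacle I expect is the bookkeeping around null sets: to obtain a single almost-sure statement, I must intersect the full-measure events coming from (iii), from the martingale SLLN applied across a countable separating family of test functions, and from the diagonal subsequence extraction, ensuring that tightness, the martingale vanishing, and the invariance identification all hold simultaneously on one common $\Pr_x$-full measure event. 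This is standard but delicate and is really the crux beyond the ODE-style averaging arguments used earlier in the paper.
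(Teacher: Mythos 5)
The paper does not prove this lemma at all: it is imported verbatim from Meyn \cite{Mey89} (Prop.~4.2 there), so there is no internal proof to compare yours against. Your argument is nevertheless a correct, self-contained proof, and it is the standard one: condition (iii) is exactly almost-sure tightness of $\{\mu_{x,t}\}$ (the paper itself remarks on this equivalence right after the lemma statement), Prokhorov's theorem then yields weakly convergent subsequences whose limits are genuine probability measures; the bounded martingale-difference SLLN applied to $D_k = f(X_{k+1}) - Pf(X_k)$ over a countable measure-separating subfamily of $C_b(\X)$ (which exists since $\X$ is Polish, and whose countably many null sets can be intersected with the tightness event) shows that every subsequential weak limit $\nu$ satisfies $\int f\,d\nu = \int Pf\,d\nu$ and hence is invariant --- the weak Feller property being precisely what puts $Pf$ back in $C_b(\X)$ so that one can pass to the limit in the second integral --- and uniqueness in (i) together with metrizability of the weak topology closes the subsequence argument. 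The one imprecise spot is your account of condition (ii): as written, your proof never uses it, and the justification you offer (that (ii) combined with Lemma~\ref{lma-c1} confines the weak limits of the occupation measures to invariant measures) conflates the deterministic averaged kernels $\bar P_k(x,\cdot)$ with the pathwise random measures $\mu_{x,t}$; the job of preventing escape of mass is already done by (iii), and invariance of the subsequential limits comes from your martingale computation, not from Lemma~\ref{lma-c1}. This is not a gap --- a proof using a subset of the hypotheses is still a proof --- but you should either state explicitly that your route does not need (ii), or delete the claim about its role.
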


The condition (iii) above is equivalent to that the sequence $\{\mu_{x,t}\}$ of occupation probability measures is almost surely tight for each initial condition.

\subsubsection{Proofs of Theorems~\ref{thm-c1} and~\ref{thm-c1b}} \label{sec-4.3.2}

In this subsection we prove Theorem~\ref{thm-c1} for the algorithm (\ref{eq-emtd-const0}) and Theorem~\ref{thm-c1b} for its two variants (\ref{eq-emtd-const1}) and (\ref{eq-emtd-const2}). We also show that the conclusions of Theorems~\ref{thm-c1}-\ref{thm-c1b} hold for the perturbed version (\ref{eq-valg}) of these algorithms as well. The proof arguments are largely the same for all the algorithms we consider here. So except where noted otherwise, it will be taken for granted through out this subsection that \emph{$\{\theta_t^\alpha\}$ is generated by either of the six algorithms just mentioned}, for a constant stepsize $\alpha > 0$. 

We start with some preliminary analysis given in the next two lemmas. Recall $\Z_t = (S_t, A_t, \e_t, \F_t)$ and $\{\Z_t\}$ is a weak Feller Markov chain on $\Zs : = \S \times \A \times \re^{n+1}$ \cite[Sec.\ 3.1]{yu-etdarx}, and its evolution is not affected by the $\theta$-iterates. We consider the Markov chain $\{(\Z_t, \theta_t^\alpha)\}$ on the state space $\Zs \times \H$ (note that this is a complete separable metric space).
This chain also has the weak Feller property:
 
\begin{lem} \label{lma-c0}
Let Assumption~\ref{cond-bpolicy}(ii) hold. The process $\{(\Z_t, \theta_t^\alpha)\}$ is a weak Feller Markov chain.
\end{lem}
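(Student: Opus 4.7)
The plan is to verify the two properties: that the joint process is Markovian, and that its one-step transition kernel satisfies the weak Feller property. The Markov property is immediate from the algorithm's structure: given $(\Z_t, \theta_t^\alpha)$, the next pair $(\Z_{t+1}, \theta_{t+1}^\alpha)$ is obtained by sampling $S_{t+1}$ from $p(\cdot \mid S_t, A_t)$, $R_t$ from $q(\cdot \mid S_t, A_t, S_{t+1})$, and $A_{t+1}$ from $\pi^o(\cdot \mid S_{t+1})$, and then applying the deterministic update rules (\ref{eq-td3})--(\ref{eq-td1}) and (\ref{eq-emtd-const0}) (or its variants). None of these steps requires information prior to time $t$.

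For the weak Feller property, I would show that for each bounded continuous function $f$ on $\Zs \times \H$, the map $(z, \theta) \mapsto \E\big[f(\Z_1, \theta_1^\alpha) \mid (\Z_0, \theta_0^\alpha) = (z, \theta)\big]$ is continuous. Writing $z = (s, a, \e, \F)$, I would expand this expectation as
\begin{equation*}
Pf(z, \theta) \, = \, \sum_{s' \in \S} p(s' \!\mid s, a) \sum_{a' \in \A} \pi^o(a' \!\mid s') \int f\big(s', a', \e_1, \F_1, \theta_1\big) \, q(dr \!\mid s, a, s'),
\end{equation*}
where $(\e_1, \F_1)$ is the deterministic continuous function of $(\e, \F, s, a, s')$ produced by (\ref{eq-td3})--(\ref{eq-td1}), and $\theta_1$ is the deterministic function of $(\e, \theta, s, s', r)$ (and $\F$, in the variant cases) produced by the relevant update rule. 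Since $\S$ and $\A$ carry the discrete topology, convergence $(z_n, \theta_n) \to (z, \theta)$ in $\Zs \times \H$ eventually forces $s_n = s$ and $a_n = a$, so only the continuity in $(\e, \F, \theta)$ matters.

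The update map that produces $(\e_1, \F_1, \theta_1)$ is a composition of affine operations, the continuous projection $\Pi_\H$, and (for the variants) the Lipschitz function $\psi_K$; hence it is jointly continuous in $(\e, \F, \theta, r)$ for each fixed $(s, a, s')$. Because $f$ is bounded, the integrand in the display above is bounded by $\|f\|_\infty$ uniformly in $(z_n, \theta_n)$ and $r$, so the dominated convergence theorem yields continuity of the inner integral with respect to $q(\cdot \mid s, a, s')$. The finite outer sums over $s'$ and $a'$ preserve continuity, giving the weak Feller property. The argument is identical for the perturbed version (\ref{eq-valg}), where one adds an extra independent integration over the density of $\Delta_{\theta,t}^\alpha$; since this density is continuous and $f$ is bounded, dominated convergence applies again. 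No step poses a genuine obstacle here; the only care needed is to note that boundedness of $\theta_1$ (ensured by $\Pi_\H$) and boundedness of $f$ make dominated convergence available even though the reward $R_t$ need not be bounded.
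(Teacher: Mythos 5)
Your proof is correct and follows essentially the same route as the paper's: reduce to continuity in $(\e,\F,\theta)$ for each fixed value of the discrete components, observe that the update maps producing $(\e_1,\F_1,\theta_1)$ are continuous (with $\Pi_\H$ and $\psi_K$ continuous), and pass the limit through the integral over the reward distribution, treating the perturbed version by one additional integration. The only difference is cosmetic: where you invoke dominated convergence (which is valid here), the paper establishes the same continuity by hand, truncating the reward to a large interval $[-\bar r,\bar r]$ capturing most of the mass of $q(\cdot\mid s,a,s')$ and using uniform continuity of the bounded integrand on compact sets.
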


\begin{proof}
We omit the superscript $\alpha$ in the proof. 
We need to show that for any bounded continuous function $f(z,\theta)$, the function $\E [ f(\Z_1, \theta_1) \mid Z_0=z, \theta_0=\theta]$ is continuous in $(z, \theta)$. Express $z$ in terms of its components as $z=(s, a, \e, \F)$. Given that the space $\S \times \A$ is discrete, a function continuous in $(z,\theta)$ is a function that is continuous in $(\e,\F,\theta)$ for each $(s,a) \in \S \times \A$. So in view of the finiteness of $\S \times \A$, 
what we need to show is that for any two state-action pairs $(s,a), (s',a') \in \S \times \A$, 
\begin{equation} \label{eq-prf-c1}
 \bar g (\e, \F, \theta) : = \E \big[ \, f(\Z_1, \theta_1) \,\big| \, (\e_0, \F_0, \theta_0)=(\e, \F, \theta), \, (S_0,A_0,S_1, A_1) = (s, a, s',a') \big]
\end{equation}  
is a continuous function of $(\e,\F,\theta)$.

Consider first the case where $\theta_t$ is generated by either of the algorithms (\ref{eq-emtd-const0}), (\ref{eq-emtd-const1}) and (\ref{eq-emtd-const2}). By the definitions of these algorithms, given $(S_0,A_0,S_1,A_1)$ as in (\ref{eq-prf-c1}), $\e_1$ and $\F_1$ are continuous functions of $(\e_0,\F_0)$, and $\theta_1$ is a continuous function of $(\e_0,\F_0,\theta_0,R_0)$. Thus $f(\Z_1, \theta_1)$ is a continuous function of $(\e_0,\F_0, \theta_0,R_0)$; to simplify notation, denote this function by $g(x,r)$ with $x=(\e,\F,\theta)$ and $r$ corresponding to the reward variable $R_0$. Then, to show the continuity of the function in (\ref{eq-prf-c1}) is to show the continuity of the function
$$ \bar g(x) = \int g(x, r) \, q(dr \!\mid\! s, a, s').$$
To verify that $\bar g$ is continuous, consider an arbitrary point $\bar x = (\bar \e, \bar \F, \bar \theta) \in \re^{n+1} \times \H$. Given any $\epsilon > 0$, pick $\bar r> 0$ large enough so that $q([-\bar r, \bar r] \!\mid\! s, a, s') \geq 1- \epsilon/(4\|g\|_\infty)$ (where $\|g\|_\infty \leq \|f\|_\infty < \infty$, and $\|g\|_\infty$, $\|f\|_\infty$ are the infinity norm of the functions $g, f$, respectively). Since $g$ is continuous, it is uniformly continuous on any compact set. Therefore, there exists $\delta > 0$ small enough so that for any $r \in [-\bar r, \bar r]$ and $x \in \re^{n+1} \times \H$ with $ | x - \bar x | \leq \delta$,
$| g(x, r) - g(\bar x, r) | \leq \epsilon/2$. Consequently, for any $x \in \re^{n+1} \times \H$ with $ | x - \bar x | \leq \delta$, we have
\begin{align*}
 \big| \bar g(x) - \bar g(\bar x) \big| & \leq \int \big| g(x, r)  - g(\bar x, r) \big| \, q(dr \!\mid\! s, a, s') \\
 & =  \int_{\{ | r | > \bar r \}} \big| g(x, r)  - g(\bar x, r) \big| \, q(dr \!\mid\! s, a, s')   + \int_{[-\bar r, \bar r]} \big| g(x, r)  - g(\bar x, r) \big| \, q(dr \!\mid\! s, a, s')   \\
 & \leq 2 \|g\|_\infty \cdot  \epsilon/(4\|g\|_\infty) + \epsilon/2   = \epsilon.
\end{align*} 
This shows that $\bar g$ is a continuous function, and proves that $\{(\Z_t, \theta_t^\alpha)\}$ is a weak Feller chain. 

The proof for the perturbed version (\ref{eq-valg}) of the algorithms (\ref{eq-emtd-const0}), (\ref{eq-emtd-const1}) and (\ref{eq-emtd-const2}) follows the same arguments, except that $g$ is now a continuous function of $(x, r, \Delta)$ where $\Delta$ is the perturbation variable, and $\bar g$ is defined by the integration over both $r$ and $\Delta$. The proof details are almost identical to those given above and therefore omitted. We only note a minor difference in the last step of the proof: in addition to choosing a sufficiently large $\bar r$, we also choose a sufficiently large compact set $E_\Delta$ on the space of $\Delta$, and we work with the resulting compact set $E_{\bar x} \times [- \bar r, \bar r] \times E_\Delta$ for a closed neighborhood $E_{\bar x}$ of the point $\bar x$, and use the uniform continuity of the function $g$ on compact sets, as in the above proof.
\end{proof}

In order to study the behavior of multiple consecutive $\theta$-iterates, we consider for $m \geq 1$, the $m$-step version of $\{(\Z_t, \theta_t^\alpha)\}$, that is, the Markov chain $\{X_t\}$ on $(\Zs \times \H)^m$ where each state $X_t$ consists of $m$ consecutive states of the original chain $\{(\Z_t, \theta_t^\alpha)\}$: 
$$X_t=\big( (\Z_t, \theta_t^\alpha),  \,  \ldots, \, (\Z_{t+m-1}, \theta_{t+m-1}^\alpha) \big).$$ 
Similarly to the proof of Lemma~\ref{lma-c0}, it is straightforward to show that the $m$-step version of a weak Feller Markov chain is a weak Feller chain as well. Thus the $m$-step version of $\{(\Z_t, \theta_t^\alpha)\}$ is also a weak Feller Markov chain, and 
we can apply the ergodic theorems for weak Feller Markov chains to analyze it. In particular, in this subsection we will use Lemma~\ref{lma-c1} to prove Theorems~\ref{thm-c1}-\ref{thm-c1b}; in the next subsection we will also use Lemma~\ref{lma-c1b}. 

In analyzing the $m$-step version of $\{(\Z_t, \theta_t^\alpha)\}$, sometimes it will be more convenient for us to take as its initial condition the condition of just $(\Z_0, \theta_0^\alpha)$---instead of $(\Z_0, \theta_0^\alpha, \ldots, \Z_{m-1}^\alpha, \theta_{m-1}^\alpha)$---and to work with the following objects that are essentially equivalent to the averaged probability measures $\{\bar P_{k}(x, \cdot)\}$ and the occupation probability measures $\{\mu_{x,t}\}$ defined earlier for a general Markov chain $\{X_t\}$. Specifically, with $\{X_t\}$ denoting the $m$-step version of $\{(\Z_t, \theta_t^\alpha)\}$, for each $(z,\theta) \in \Zs \times \H$, we define probability measures $\bar P_{(z,\theta)}^{(m,k)}, k \geq 1$, on the space $\X = (\Zs \times \H)^m$, by
\begin{equation} \label{eq-avep}
 \bar P_{(z,\theta)}^{(m,k)}(D) : = \frac{1}{k}\sum_{t=0}^{k-1} \Pr_{(z,\theta)} \big( X_t \in D \big), \qquad  \forall \, D \in \mathcal{B}(\X).
\end{equation} 
Similarly, we define occupation probability measures $\{\mu_{(z,\theta),t}^{(m)}\}$ for each $(z,\theta) \in \Zs \times \H$ by 
\begin{equation}
  \mu_{(z,\theta),t}^{(m)} (D) : = \frac{1}{t} \sum_{k=0}^{t-1} \I \big( X_k \in D \big), \qquad \, \forall \, D \in \mathcal{B}(\X), \label{eq-om}
\end{equation}  
where the initial $(\Z_0, \theta_0^\alpha) = (z, \theta)$. Compared with the definitions of $\{\bar P_{k}(x, \cdot)\}$ and $\{\mu_{x,t}\}$ for $\{X_t\}$, apparently, all the previous conclusions given in Section~\ref{sec-4.3.1} for $\{\bar P_{k}(x, \cdot)\}$ and $\{\mu_{x,t}\}$ hold for $\big\{\bar P_{(z,\theta)}^{(m,k)}\big\}$ and $\big\{\mu_{(z,\theta),t}^{(m)}\big\}$ as well; therefore we can use the objects $\big\{\bar P_{(z,\theta)}^{(m,k)}\big\}$ and $\{\bar P_{k}(x, \cdot)\}$, and $\big\{\mu_{(z,\theta),t}^{(m)}\big\}$ and $\{\mu_{x,t}\}$, interchangeably in our analysis.

\begin{lem} \label{lma-c2}
Let Assumption~\ref{cond-bpolicy} hold. For $m \geq 1$, let $\{X_t\}$ be the $m$-step version of $\{(\Z_t, \theta_t^\alpha)\}$ on $\X = (\Zs \times \H)^m$, with transition kernel $P(\cdot, \cdot)$.
Then $\{X_t\}$ satisfies the conditions (ii)-(iii) of Lemma~\ref{lma-c1b}. 
\end{lem}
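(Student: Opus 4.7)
The plan is to reduce both conditions to tightness of the $m$ trace coordinates, after which (ii) becomes a uniform‑in‑time moment bound and (iii) becomes an ergodic‑average statement. Concretely, $\X = (\Zs \times \H)^m$ decomposes into the finite/compact coordinates $(s_i, a_i, \theta_i) \in \S \times \A \times \H$ and the potentially unbounded trace coordinates $(\e_i, \F_i) \in \re^{n+1}$, for $i = 0, \ldots, m-1$. Since $\S \times \A$ is finite and $\H$ is a closed ball, tightness of a family of probability measures on $\X$ reduces to uniform control of the marginals on each of the $m$ copies of $(\e, \F)$. I will work with the $m$-step chain indexed by the initial $(\Z_0, \theta_0^\alpha) \in \Zs \times \H$, using $\bar P_{(z,\theta)}^{(m,k)}$ and $\mu_{(z,\theta),t}^{(m)}$ from (\ref{eq-avep})-(\ref{eq-om}).

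For (ii), fix a compact $E \subset \Zs \times \H$; its projection on the $(\e, \F)$‑coordinates lies in some bounded set $E_0 \subset \re^{n+1}$. The key fact I will use is the uniform‑over‑compacts bound
\begin{equation} \notag
  c_E := \sup_{(\e_0, \F_0) \in E_0} \, \sup_{t \geq 0} \, \E \big[ \big\| (\e_t, \F_t) \big\| \big] < \infty,
\end{equation}
which strengthens Prop.~\ref{prp-bdtrace}. This strengthening is already implicit in Prop.~\ref{prp-3} (whose constants depend only on $K$ and the bounded initial‑condition set) and follows from the affinity of the recursion (\ref{eq-td3})-(\ref{eq-td1}) in $(\e_{t-1}, \F_{t-1})$. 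Markov's inequality applied to each of the $m$ shifted marginals then gives $\sup_{x \in E,\, t \geq 0,\, 0 \leq j < m} \Pr_x\big(\|(\e_{t+j}, \F_{t+j})\| \geq a\big) \leq c_E / a \to 0$ as $a \to \infty$, which yields the required tightness of $\big\{\bar P_{(z,\theta)}^{(m,k)} \mid (z, \theta) \in E,\ k \geq 1\big\}$.

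For (iii), I will apply Theorem~\ref{thm-2.2} to the Lipschitz function $g(\xi) = \|(\e, \F)\|$, obtaining, for every initial $(\e_0, \F_0)$,
\begin{equation} \notag
  \lim_{t \to \infty} \frac{1}{t} \sum_{\ell=0}^{t-1} \|(\e_\ell, \F_\ell)\| \, = \, \bar g \, := \, \E_\zeta\big[\|(\e_0, \F_0)\|\big] \, < \, \infty \qquad \text{a.s.}
\end{equation}
I will take $E_j = \big\{ ((s_i, a_i, \e_i, \F_i, \theta_i))_{i=0}^{m-1} \in \X : \|(\e_i, \F_i)\| \leq j \ \text{for all } i \big\}$; these are compact sets with $E_j \uparrow \X$. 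A union bound over the $m$ blocks followed by Markov's inequality gives
\begin{equation} \notag
  1 - \mu_{(z,\theta),t}^{(m)}(E_j) \, \leq \, \frac{1}{t} \sum_{i=0}^{m-1} \sum_{\ell=0}^{t-1} \I\big(\|(\e_{\ell+i}, \F_{\ell+i})\| > j\big) \, \leq \, \frac{m}{j \cdot t} \sum_{\ell=0}^{t+m-2} \|(\e_\ell, \F_\ell)\|,
\end{equation}
so $\limsup_{t \to \infty} \big(1 - \mu_{(z,\theta),t}^{(m)}(E_j)\big) \leq m \bar g / j$ almost surely, which tends to $0$ as $j \to \infty$, verifying (iii).

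The only subtle point is the uniformity needed in (ii): Prop.~\ref{prp-bdtrace} is quoted for a fixed initial $(\e_0, \F_0)$, and must be upgraded to uniformity over compact initial sets. This is routine given the affine structure of the trace recursion (and is the same upgrade silently used in Prop.~\ref{prp-3}), but it is the one place where the plan reaches slightly beyond the stated form of the cited proposition; condition (iii), by contrast, is an almost immediate consequence of Theorem~\ref{thm-2.2} applied to $\|(\e, \F)\|$.
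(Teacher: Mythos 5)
Your proposal is correct, and part (ii) follows essentially the same route as the paper: reduce tightness to the trace coordinates, invoke the uniform-over-bounded-initial-sets moment bound, and apply Markov's inequality with a union bound over the $m$ shifts. One small remark on your ``subtle point'': Prop.~\ref{prp-bdtrace} as stated in this paper is already formulated for all initial $(\e_0,\F_0)$ in a bounded set $E$ (the paper explicitly notes that this is the upgrade, over the fixed-initial-condition version in the cited prior work, obtained by tracking the dependence of the constant on the initial condition), so no further strengthening is needed there.

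For part (iii), your argument is genuinely different from the paper's and equally valid. The paper neglects the $\theta$-coordinates, invokes the almost sure weak convergence of the occupation probability measures of $\{\Z_t\}$ to $\zeta$ (Theorem~\ref{thm-2.1}), applies the portmanteau inequality $\liminf_t \hat\mu_{z,t}(\tilde D_k) \geq \zeta(\tilde D_k)$ to the open sets $\tilde D_k$, and then passes to the $m$-step chain by a union bound over the $m$ shifts. You instead apply Theorem~\ref{thm-2.2} to the $1$-Lipschitz function $g(\xi)=\|(\e,\F)\|$ to get $\tfrac1t\sum_{\ell<t}\|(\e_\ell,\F_\ell)\|\to\bar g<\infty$ a.s., and then bound $1-\mu^{(m)}_{(z,\theta),t}(E_j)$ by $\tfrac{m}{jt}\sum_{\ell=0}^{t+m-2}\|(\e_\ell,\F_\ell)\|$ via the indicator inequality $\I(\|x\|>j)\leq\|x\|/j$ and a reindexing. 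Your computation is correct (each trace appears at most $m$ times after reindexing, and the extra $m-1$ boundary terms are asymptotically negligible), and the countable union of null sets over $j$ causes no difficulty. What your route buys is a quantitative rate, $\limsup_t\big(1-\mu^{(m)}_{(z,\theta),t}(E_j)\big)\leq m\bar g/j$, and it avoids the portmanteau step; what the paper's route buys is that it needs only the weak ergodicity of $\{\Z_t\}$ rather than the integrability of $\|(\e_0,\F_0)\|$ under $\zeta$ (though the latter is of course available from Theorem~\ref{thm-2.2}). Either argument suffices for the lemma.
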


\begin{proof}
To show that the condition (ii) of Lemma~\ref{lma-c1b} is satisfied, fix a compact set $E \subset \X$ and let us first show that the set $\{P^t(x, \cdot) \mid x \in E, t \geq 0\}$ is tight.  
Since the set $\H$ is compact and the state and action spaces are finite, of concern here is just the tightness of the marginals of these probability measures on the space of the trace components $(\e_t, \F_t, \ldots, \e_{t+m-1}, \F_{t+m-1})$ of the state $X_t$. By Prop.~\ref{prp-bdtrace}, for all initial conditions of $(\e_0, \F_0)$ in a given bounded subset of $\re^{n+1}$, $\sup_{t \geq 0} \E [ \| (\e_t, \F_t) \| ] \leq \C$ for a constant $\C$ (that depends on the subset). So for the set $E$, applying the Markov inequality together with the union bound, we have that there exists a constant $\C > 0$ such that for all $x \in E$ and $a > 0$,
$\Pr_x \left( \sup_{k \leq t < k +m} \| (\e_t, \F_t)\| \geq a \right) \leq m \C/a$ for all $k \geq 0.$
Now for any given $\delta > 0$, let $a$ be large enough so that $m \C/a < \delta$ and let $D_a$ be the closed ball in $\re^{n+1}$ centered at the origin with radius $a$. 
Then for the compact set $D =(\S \times \A \times D_a \times \H)^m$, we have 
$P^k(x, D) = \Pr_x \big( \sup_{k \leq t < k +m} \| (\e_t, \F_t) \| \leq a \big) \geq 1 - \delta$ for all $x \in E$ and all $k \geq 0$. This shows that the set $\{ P^t(x, \cdot) \mid x \in E, t \geq 0\}$ is tight. Consequently, the averages of the probability measures in this set must also form a tight set; in particular, the set $\{ \bar P_k(x, \cdot) \mid x \in E, k \geq 1\}$ must be tight. Hence $\{X_t\}$ satisfies the condition (ii) of Lemma~\ref{lma-c1b}. 

Consider now the condition (iii) of Lemma~\ref{lma-c1b}. For positive integers $k$, let $E_k$ in that condition be the compact set $(\S \times \A \times D_k \times \H )^m$, where $D_k$ is the closed ball of radius $k$ in $\re^{n+1}$ centered at the origin. We wish to show that for each initial condition $x \in \X$,
$$ \lim_{k \to \infty} \liminf_{t \to \infty} \, \mu_{x,t}(E_k) = 1, \qquad \text{$\Pr_x$-a.s.}$$
Since the $\theta$-iterates do not affect the evolution of $\Z_t$, they can be neglected in the proof. It is sufficient to consider instead the $m$-step version of $\{Z_t\}$ and show that for the compact sets $\hat E_k = (\S \times \A \times D_k)^m$, it holds for any initial condition $z \in \Zs$ of $\Z_0$ that 
\begin{equation} \label{eq-prf-lc2a0}
  \lim_{k \to \infty} \liminf_{t \to \infty} \, \hat \mu_{z,t}^{(m)} \big( \hat E_k \big) = 1, \qquad \text{$\Pr_z$-a.s.},
\end{equation}  
where $\{\hat \mu_{z,t}^{(m)}\}$ are the occupation probability measures of the $m$-step version of $\{\Z_t\}$, defined analogously to (\ref{eq-om}) with $(\Z_t, \ldots, \Z_{t+m-1})$ in place of $X_t$.

To prove (\ref{eq-prf-lc2a0}), consider $\{\Z_t\}$ first and its occupation probability measures $\{\hat \mu_{z,t}\}$ for each initial condition $\Z_0=z \in \Zs$.
By Theorem~\ref{thm-2.1}, $\Pr_z$-almost surely, $\{\hat \mu_{z,t}\}$ converges weakly to $\zeta$ (the unique invariant probability measure of $\{\Z_t\}$). So by \cite[Theorem 11.1.1]{Dud02}, for the open set $\tilde D_k = \S \times \A \times D^o_k$, where $D^o_k$ denotes the interior of $D_k$ (i.e., $D^o_k$ is the open ball with radius $k$), almost surely,
\begin{equation} \label{eq-prf-lc2a}
  \liminf_{t \to \infty} \, \hat \mu_{z,t} \big( \tilde D_k \big) \geq  \zeta(\tilde D_k),  \qquad \text{and hence} \ \  \lim_{k \to \infty} \liminf_{t \to \infty} \, \hat \mu_{z,t} \big( \tilde D_k \big) = 1.
\end{equation}  
Now for the $m$-step version of $\{Z_t\}$, with $[\tilde D_k]^m$ denoting the Cartesian product of $m$ copies of $\tilde D_k$, we have 
\begin{align}
 \hat \mu_{z,t}^{(m)} \big( [\tilde D_k]^m \big) : = \frac{1}{t} \sum_{j=0}^{t-1} \I \big( \Z_{j+j'} \in \tilde D_k, \,  0 \leq j' < m \big) \geq 1 - \sum_{j'=0}^{m-1} \frac{1}{t} \sum_{j=0}^{t-1}  \I \big( \Z_{j+j'} \not \in \tilde D_k \big). \label{eq-prf-lc2b}
\end{align}
For each $j' < m$, by the definition of $\hat \mu_{z,t}$,
$\limsup_{t \to \infty } \frac{1}{t} \sum_{j=0}^{t-1}  \I \big( \Z_{j+j'} \not \in \tilde D_k \big) =  \limsup_{t \to \infty } \hat \mu_{z,t} \big( \tilde D_k^c \big)$,
where $\tilde D_k^c$ denotes the complement of $\tilde D_k$ in $\S \times \A \times \re^{n+1}$. By (\ref{eq-prf-lc2a}), 
$\lim_{k \to \infty}  \limsup_{t \to \infty } \hat \mu_{z,t} \big( \tilde D_k^c \big) = 0$ almost surely. Hence for each $j' < m$,
$\lim_{k \to \infty} \limsup_{t \to \infty } \frac{1}{t} \sum_{j=0}^{t-1}  \I \big( \Z_{j+j'} \not \in \tilde D_k \big) = 0$ almost surely. We then obtain
from (\ref{eq-prf-lc2b}), by taking the limits as $t \to \infty$ and $k \to \infty$, that 
$$ \liminf_{k \to \infty} \, \liminf_{t \to \infty } \, \hat \mu_{z,t}^{(m)} \big( [\tilde D_k]^m \big)  \geq 1 - \sum_{j'=0}^{m-1} \limsup_{k \to \infty} \, \limsup_{t \to \infty } \frac{1}{t} \sum_{j=0}^{t-1}  \I \big( \Z_{j+j'} \not \in \tilde D_k \big)  = 1$$
almost surely. The desired equality (\ref{eq-prf-lc2a0}) then follows, since $[\tilde D_k]^m \subset \hat E_k$. 
\end{proof}

Recall that $\mathcal{M}^m_\alpha$ is the set of invariant probability measures of the $m$-step version of $\{(\Z_t, \theta_t^\alpha)\}$. 
By Lemma~\ref{lma-c2} the latter Markov chain satisfies the condition (ii) of Lemma~\ref{lma-c1b}, and this implies that the set $\big\{\bar P_{(z,\theta)}^{(m,k)}\big\}_{k \geq 1}$ is tight for each initial condition $(\Z_0, \theta_0^\alpha) = (z, \theta)$. Recall that any subsequence of a tight sequence has a further convergent subsequence~\cite[Theorem 11.5.4]{Dud02}. For $\big\{\bar P_{(z,\theta)}^{(m,k)}\big\}_{k \geq 1}$, all the weak limits (i.e., the limits of its convergent subsequences) must be invariant probability measures in $\mathcal{M}^m_\alpha$, by the property of weak Feller Markov chains given in Lemma~\ref{lma-c1}:

\begin{prop} \label{prop-c1}
Under Assumption~\ref{cond-bpolicy}, consider the $m$-step version of $\{(\Z_t, \theta_t^\alpha)\}$ for $m \geq 1$. For each $(z, \theta) \in \Zs \times \H$, the sequence $\big\{\bar P_{(z,\theta)}^{(m,k)}\big\}_{k \geq 1}$ of probability measures is tight, and any weak limit of this sequence is an invariant probability measure of the $m$-step version of $\{(\Z_t, \theta_t^\alpha)\}$. (Thus $\mathcal{M}^m_\alpha \not=\emptyset$.)
\end{prop}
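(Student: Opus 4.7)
The plan is to assemble the proposition directly from Lemmas~\ref{lma-c0}, \ref{lma-c1}, and \ref{lma-c2}, together with Prokhorov's theorem on complete separable metric spaces. The underlying state space $\X=(\Zs \times \H)^m$ is a finite product of copies of $\S\times\A$ (finite, hence Polish), $\re^{n+1}$ (Polish), and the compact ball $\H$, so it is itself a complete separable metric space; this is the setting required by Lemma~\ref{lma-c1}.

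First, I would note that the $m$-step version of $\{(\Z_t,\theta_t^\alpha)\}$ is itself a weak Feller Markov chain. Lemma~\ref{lma-c0} gives this property for the one-step chain $\{(\Z_t,\theta_t^\alpha)\}$; the extension to the $m$-step chain is routine, since the transition operator of the $m$-step chain can be written as an $m$-fold composition involving the one-step kernel, and the one-step kernel maps bounded continuous functions on $\Zs\times\H$ to bounded continuous functions (and hence does the same on $\X$ when we view the $m$-step state as an $m$-tuple). Alternatively, one can repeat verbatim the argument used in the proof of Lemma~\ref{lma-c0}, integrating iteratively over the rewards (and, in the perturbed case, over the perturbation variables) and invoking uniform continuity on compact sets.

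Next, I would obtain tightness of $\{\bar P_{(z,\theta)}^{(m,k)}\}_{k\geq 1}$ as an immediate consequence of Lemma~\ref{lma-c2}. Applying the first half of that lemma to the singleton compact set $E=\{(z,\theta)\}\subset \Zs\times\H$ (and recalling the equivalence between the two formulations $\bar P_{(z,\theta)}^{(m,k)}$ and $\bar P_k(\,\cdot\,,\,\cdot\,)$ explained in the paragraph just preceding Lemma~\ref{lma-c2}) shows that $\{\bar P_{(z,\theta)}^{(m,k)}\}_{k\geq 1}$ is tight. Because $\X$ is complete separable metric, Prokhorov's theorem \cite[Theorem 11.5.4]{Dud02} then guarantees that every subsequence of this family has a further weakly convergent subsequence, so in particular at least one weak limit exists.

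Finally, I would invoke Lemma~\ref{lma-c1} for the $m$-step chain: any weak limit of $\{\bar P_{(z,\theta)}^{(m,k)}\}_{k\geq 1}$ is an invariant probability measure of that chain, i.e., lies in $\mathcal{M}^m_\alpha$. Combined with the existence of at least one weak limit from the previous step, this yields $\mathcal{M}^m_\alpha\neq\emptyset$ and completes the proof. In this proposition the main conceptual work has already been done in the preceding lemmas; there is no substantive new estimate to carry out. The only small item requiring care is the bookkeeping around the initial condition for the $m$-step chain (a single $(z,\theta)\in\Zs\times\H$ rather than a full $m$-tuple) and the verification that the $m$-step chain inherits the weak Feller property, but these are straightforward in view of Lemma~\ref{lma-c0}.
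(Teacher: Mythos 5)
Your proposal is correct and follows essentially the same route as the paper: the paper likewise derives tightness of $\{\bar P_{(z,\theta)}^{(m,k)}\}_{k\geq 1}$ from Lemma~\ref{lma-c2} (condition (ii) of Lemma~\ref{lma-c1b}), extracts a weak limit via \cite[Theorem 11.5.4]{Dud02}, and identifies every such limit as an invariant probability measure by the weak Feller property of the $m$-step chain (inherited from Lemma~\ref{lma-c0}) together with Lemma~\ref{lma-c1}. Your attention to the initial-condition bookkeeping and to the $m$-step chain inheriting the weak Feller property matches the remarks the paper makes just before stating the proposition.
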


\smallskip  
We are now ready to prove Theorems~\ref{thm-c1}-\ref{thm-c1b}. The idea is to use the conclusions on the $\theta$-iterates that we can obtain by applying \cite[Theorem 8.2.2]{KuY03}, to infer the concentration of the mass around a small neighborhood of $(\theta^*, \ldots, \theta^*)$ ($m$ copies of $\theta^*$) for the marginals of all the invariant probability measures in the set $\mathcal{M}^m_\alpha$, when $\alpha$ is sufficiently small. This can then be combined with Prop.~\ref{prop-c1} to prove the desired conclusions on the $\theta$-iterates for a given stepsize.

Recall that $\mathcal{M}_\alpha$ is the set of invariant probability measures of $\{(\Z_t, \theta_t^\alpha)\}$. Recall also that $\bar{\mathcal{M}}^m_\alpha$ 
denotes the set of marginals of the invariant probability measures in $\mathcal{M}^m_\alpha$, on the space of the $\theta$'s.

\begin{prop} \label{prop-c2}
In the setting of Theorem~\ref{thm-const-stepsize}, for each $\alpha > 0$, let $\{\theta_t^\alpha\}$ be generated instead by the algorithm (\ref{eq-emtd-const0}) or its perturbed version (\ref{eq-valg}), with constant stepsize $\alpha$ and under the condition that the initial $(\Z_0, \theta^\alpha_0)$ is distributed according to some invariant probability measure in $\mathcal{M}_\alpha$. Then the conclusions of Theorem~\ref{thm-const-stepsize} continue to hold.
\end{prop}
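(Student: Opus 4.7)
The plan is to reapply \cite[Theorem 8.2.2]{KuY03} in exactly the same way as in the proof of Theorem~\ref{thm-const-stepsize} in Section~\ref{sec-4.1}, now allowing the initial condition $(\Z_0,\theta^\alpha_0) \sim \mu_\alpha$ to vary with $\alpha$. Conditions (i$'$)--(v$'$) of Section~\ref{sec-4.1.1} were stated precisely to permit this generality, and the limit set of the mean ODE is already $\{\theta^*\}$ under our assumptions, so once those conditions are re-verified in the present setting the conclusions of Theorem~\ref{thm-const-stepsize} follow without further change.

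The key observation is that the $\Zs$-marginal of every $\mu_\alpha \in \mathcal{M}_\alpha$ must coincide with $\zeta$. Indeed, $\{\Z_t\}$ evolves autonomously of the $\theta$-iterates (and, in the perturbed case, of the independent noise $\{\Delta^\alpha_{\theta,t}\}$), so its marginal under any invariant measure of the joint chain is itself invariant for $\{\Z_t\}$, and hence equals $\zeta$ by Theorem~\ref{thm-2.1}. Starting from $\mu_\alpha$, the process $\{\Z_t\}$ is therefore stationary with an $\alpha$-independent one-dimensional distribution, and in particular the random variables $\{\xi^\alpha_t : t \geq 0,\ \alpha > 0\}$ are all identically distributed.

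This stationarity makes most of the hypotheses immediate. Tightness of $\{\xi^\alpha_t\}$ (condition (iii$'$)) is trivial, since a single probability measure on a Polish space is tight. Uniform integrability (conditions (i$'$) and (iv$'$)) reduces to the single fact $\E_\zeta[\|\e_0\|]<\infty$, supplied by Theorem~\ref{thm-2.2}; Prop.~\ref{prop-2} then carries over verbatim, exploiting $\|h(\theta,\xi)\|\le c\,\|\e\|$ for a generic constant $c$ together with the boundedness of $\theta^\alpha_t \in \H$. Condition (ii) is independent of initial distributions. For the perturbed version (\ref{eq-valg}), one additionally notes that $\{\Delta^\alpha_{\theta,t}\}$ is u.i.\ by the uniformly bounded variance assumption, has zero mean, and merely augments the martingale-difference noise without altering the mean ODE.

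The one step requiring genuine work is the convergence-in-mean condition (v$'$). Since $h(\theta,\xi^\alpha_m)$ depends neither on $\alpha$ nor on the $\theta$-iterates, the conditional expectations in (v$'$) are governed by the Markov kernel of $\{\Z_t\}$ alone, and (v$'$) becomes the same in-mean limit as Prop.~\ref{prop-3} but with $(\e_0,\F_0)$ drawn from the $\zeta$-marginal rather than held fixed. I would reproduce the truncated-trace decomposition from the proof of Prop.~\ref{prop-3} and then take an outer expectation over the random initial trace pair. The main obstacle, and the only place requiring care, is that the constant $\C_K$ of Prop.~\ref{prp-3} depends on a bounded set containing $(\e_0,\F_0)$; I would handle this by first restricting to the event $\{\|(\e_0,\F_0)\|\le M\}$, applying Prop.~\ref{prop-3} on that event, and then letting $M\to\infty$, using $\E_\zeta[\|(\e_0,\F_0)\|\,\I(\|(\e_0,\F_0)\|>M)]\to 0$ (a consequence of Theorem~\ref{thm-2.2}) to absorb the tail. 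With (v$'$) in hand, \cite[Theorem 8.2.2]{KuY03} applies, and the limit-set analysis at the end of Section~\ref{sec-4.1} again yields $L_\H=\{\theta^*\}$, delivering both parts of Theorem~\ref{thm-const-stepsize}.
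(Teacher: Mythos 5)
Your proposal is correct and follows essentially the same route as the paper: observe that the $\Zs$-marginal of any $\mu_\alpha\in\mathcal{M}_\alpha$ equals $\zeta$, so $\{\xi_t\}$ is stationary and independent of $\alpha$, whence (iii$'$) is trivial, (i$'$) and (iv$'$) reduce via Prop.~\ref{prop-2} to $\E_\zeta[\|\e_0\|]<\infty$ (Theorem~\ref{thm-2.2}), the perturbation terms only enter (i$'$) and are u.i.\ by bounded variance, and then \cite[Theorem 8.2.2]{KuY03} applies with the same limit set $\{\theta^*\}$.

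The one place you diverge is condition (v$'$), which you flag as requiring a new truncation over the random initial trace pair. This extra work addresses a non-issue: in the proof of Prop.~\ref{prop-3}, Prop.~\ref{prp-3} is never applied from time $0$ with the constant $\C_K$ tied to a set containing $(\e_0,\F_0)$; it is applied conditionally on $\mathcal{F}_{t_j}$ on the event $\xi_{t_j}\in D=E\times\{(s,a,s')\}$, so the compact set governing $\C_K$ is $E$ itself, supplied by the indicator $\I(\xi_t\in D)$ built into (v$'$). Hence that proof is already insensitive to the distribution of $\Z_0$ and carries over verbatim, which is exactly the paper's (one-line) justification. Your alternative can be made to work, but as sketched it has a small gap: the tail term on $\{\|(\e_0,\F_0)\|>M\}$ involves $\E\big[\|\e_m\|\,\I(\|(\e_0,\F_0)\|>M)\big]$ uniformly in $m$, not merely $\E_\zeta\big[\|(\e_0,\F_0)\|\,\I(\|(\e_0,\F_0)\|>M)\big]$, so you would additionally need a conditional bound of the form $\E[\|\e_m\|\mid\mathcal{F}_0]\le c\,(1+\|(\e_0,\F_0)\|)$ (in the spirit of Prop.~\ref{prp-bdtrace}) before letting $M\to\infty$. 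Since the localization via $\I(\xi_t\in D)$ makes all of this unnecessary, the shorter argument is preferable.
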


\begin{proof}
The proof arguments are the same as those for Theorem~\ref{thm-const-stepsize} given in Section~\ref{sec-4.1}. We only need to show that the conditions (ii) and (i$'$)-(v$'$) given in Section~\ref{sec-4.1.1} for applying \cite[Theorem 8.2.2]{KuY03} are still satisfied under our present assumptions. 

For the algorithm (\ref{eq-emtd-const0}), the only difference from the previous assumptions in Theorem~\ref{thm-const-stepsize} is that here for each stepsize $\alpha$, the initial $(\Z_0, \theta_0^\alpha)$ has a distribution $\mu_\alpha \in \mathcal{M}_\alpha$. The condition (ii) does not depend on such initial conditions, so it continues to hold. For the other conditions, note that since $\{\Z_t\}$ has a unique invariant probability measure $\zeta$ (Theorem~\ref{thm-2.1}), regardless of the choice of $\mu_\alpha$, for all $\alpha$, $\{\Z_t\}$ is stationary and has the same distribution. Then the tightness condition (iii$'$) trivially holds because as $\{\xi_t\}$ is also stationary and unaffected by the stepsize, each $\xi_t^\alpha$ in (iii$'$) has the same distribution. Similarly, since $\{\e_t\}$ is stationary and unaffected by the stepsize, and each $\e_t$ has the same distribution with the mean of $\| \e_t\|$ given by $\E_\zeta [ \| \e_t\| ] < \infty$ (Theorem~\ref{thm-2.2}), we obtain that $\{\e_t\}$ is u.i. From this the uniform integrability required in the conditions (i$'$) and (iv$'$) follows as a consequence, as shown in the proof of Prop.~\ref{prop-2}(ii)-(iv). Lastly, the convergence in mean condition (v$'$) continues to hold (by the same proof given for Prop.\ \ref{prop-3}). 
This is because $\{\xi_t\}$ has the same distribution regardless of the stepsize, and because the condition (v$'$) is for each compact set $D$ and concerns tails of a trajectory starting at instants $t$ with $\xi_{t} \in D$, which renders any initial condition on $\Z_0$ ineffective.
Thus all the required conditions are met, and we obtain the same conclusions on the $\theta$-iterates as given in Theorem~\ref{thm-const-stepsize}.

For the perturbed version (\ref{eq-valg}) of the algorithm (\ref{eq-emtd-const0}), the only difference to (\ref{eq-emtd-const0}) under the present assumptions is the perturbation variables $\Delta^\alpha_{\theta,t}$ involved in each iteration. But by definition these variables have conditional zero mean: $\E^\alpha_t [ \Delta^\alpha_{\theta,t}] = 0$, so the only condition in which they appear is the uniform integrability condition (i$'$): $\{Y_t^\alpha \mid t \geq 0, \alpha > 0\}$ is u.i., where $Y_t^\alpha$ is now given by $Y_t^\alpha = h(\theta_t^\alpha, \xi_t) +  \e_t \cdot \tilde \omega_{t+1} + \Delta^\alpha_{\theta,t}$. By definition $\Delta^\alpha_{\theta,t}$ for all $\alpha$ and $t$ have bounded variance, and hence $\{\Delta^\alpha_{\theta,t}\}$ is u.i.~\cite[p.\ 32]{Bil68}. The set $\{h(\theta_t^\alpha, \xi_t) +  \e_t \cdot \tilde \omega_{t+1} \mid t \geq 0, \alpha > 0\}$ is u.i., which follows from the u.i.\ of $\{\e_t\}$, as we just verified in the case of the algorithm (\ref{eq-emtd-const0}). Therefore, by Lemma~\ref{lem-w1}(i), $\{Y_t^\alpha \mid t \geq 0, \alpha > 0\}$ is u.i.\ and the condition (i$'$) is satisfied. Since the perturbed version (\ref{eq-valg}) meets all the required conditions, and shares with (\ref{eq-emtd-const0}) the same mean ODE, the same conclusions given in Theorem~\ref{thm-const-stepsize} hold for this algorithm as well.
\end{proof}

We now prove Theorem~\ref{thm-c1} for the algorithm (\ref{eq-emtd-const0}). We prove its part (i) and part (ii) separately, as the arguments are different. Our proofs below also apply to the perturbed version (\ref{eq-valg}) of the algorithm (\ref{eq-emtd-const0}), and together with the preceding proposition, they establish the first part of Theorem~\ref{thm-c2} (which says that the conclusions of both Theorem~\ref{thm-const-stepsize} and Theorem~\ref{thm-c1} hold for the perturbed algorithm).

\begin{proof}[Proof of Theorem~\ref{thm-c1}(i)]
Proof by contradiction. Consider the statement of Theorem~\ref{thm-c1}(i): 
$$\forall \, \delta > 0, \quad \liminf_{\alpha \to 0} \inf_{\mu \in \bar{\mathcal{M}}^{m_\alpha}_{\alpha}} \mu\big( [N_\delta(\theta^*)]^{m_\alpha} \big) = 1, \qquad \text{where} \  m_\alpha = \lceil \tfrac{m}{\alpha} \rceil.$$  
Suppose it is not true. Then there exist $\delta, \epsilon > 0$, $m \geq 1$, a sequence $\alpha_k \to 0$, and a sequence $\mu_{\alpha_k} \in \bar{\mathcal{M}}^{m_k}_{\alpha_k}$, where $m_k = m_{\alpha_k}$, such that
\begin{equation} \label{eq-prf-c11}
  \mu_{\alpha_k}([N_\delta(\theta^*)]^{m_k}) \leq 1 - \epsilon, \qquad \forall \, k \geq 0.
\end{equation}    
Each $\mu_{\alpha_k}$ corresponds to an invariant probability measure of $\{(\Z_t, \theta_t^{\alpha_k})\}$ in $\mathcal{M}_{\alpha_k}$, which we denote by $\hat \mu_{\alpha_k}$.
For each $k \geq 0$, generate the iterates $\{\theta_t^{\alpha_k}\}$ using $\hat \mu_{\alpha_k}$ as the initial distribution of $(\Z_0, \theta_0^{\alpha_k})$. For other values of $\alpha$, generate the iterates $\{\theta_t^{\alpha}\}$ using some $\hat \mu_\alpha \in \mathcal{M}_\alpha$ as the initial distribution of $(\Z_0, \theta_0^{\alpha})$. 
By Prop.~\ref{prop-c2}, the conclusions of Theorem~\ref{thm-const-stepsize} hold:
$$  \limsup_{\alpha \to 0} \, \Pr \Big( \, \theta_t^\alpha \not\in N_\delta(\theta^*), \, \text{some} \  t \in \big[ \, k_\alpha , \,  k_\alpha + T_\alpha/\alpha \, \big] \Big) = 0,$$
where $T_\alpha \to \infty$ as $\alpha \to 0$, and this implies for the given $m$,
\begin{equation} \label{eq-prf-c12}
  \limsup_{\alpha \to 0} \, \Pr \Big( \, \theta_t^\alpha \not\in N_\delta(\theta^*), \, \text{some} \  t \in \big[ \, k_\alpha , \,  k_\alpha + \lceil \tfrac{m}{\alpha} \rceil \, \big) \Big) = 0.
\end{equation}  
But for each $\alpha > 0$, the process $\{(\Z_t, \theta_t^{\alpha})\}$ with the initial distribution $\hat \mu_\alpha$ is stationary, so the probability in the left-hand side of (\ref{eq-prf-c12}) is just
$ 1 - \mu_\alpha([N_\delta(\theta^*)]^{m_\alpha})$, for the marginal probability measure $\mu_\alpha \in \bar{\mathcal{M}}^{m_\alpha}_{\alpha}$ that corresponds to the invariant probability measure $\hat \mu_\alpha$. Therefore, by (\ref{eq-prf-c12}), $\liminf_{\alpha \to 0} \mu_\alpha([N_\delta(\theta^*)]^{m_\alpha}) = 1$. On the other hand, by (\ref{eq-prf-c11}), $\liminf_{\alpha \to 0} \mu_\alpha([N_\delta(\theta^*)]^{m_\alpha}) \leq \liminf_{k \to \infty} \mu_{\alpha_k}([N_\delta(\theta^*)]^{m_k}) < 1$, a contradiction. Thus the statement of Theorem~\ref{thm-c1}(i) recounted at the beginning of this proof must hold.

This also proves the other statement of Theorem~\ref{thm-c1}(i), $\liminf_{\alpha \to 0} \inf_{\mu \in \bar{\mathcal{M}}^m_{\alpha}} \mu\big( [N_\delta(\theta^*)]^m \big)  = 1$,  
because for $\alpha < 1$, by the correspondences between those invariant probability measures in $\mathcal{M}^m_{\alpha}$ and those in $\mathcal{M}^{m_\alpha}_{\alpha}$,
$\inf_{\mu \in \bar{\mathcal{M}}^m_{\alpha}} \mu\big( [N_\delta(\theta^*)]^m \big) \geq \inf_{\mu \in \bar{\mathcal{M}}^{m_\alpha}_{\alpha}} \mu\big( [N_\delta(\theta^*)]^{m_\alpha} \big)$. This completes the proof.
\end{proof}

\begin{proof}[Proof of Theorem~\ref{thm-c1}(ii)]
We suppress the superscript $\alpha$ of $\theta_t^\alpha$ in the proof. The statement is trivially true if $\delta \geq 2 r_\H$, so consider the case $\delta < 2 r_\H$. 
Let $(z,\theta) \in \Z \times \H$ be the initial condition of $(\Z_0, \theta_0)$. By convexity of the Euclidean norm,
$\big| \bar \theta_t - \theta^* \big|  \leq \tfrac{1}{t} \sum_{j=0}^{t-1}  | \theta_j - \theta^* |,$
and therefore, for all $k \geq 1$,
\begin{equation} \label{eq-prf-c20}
  \sup_{k \leq t < k+m} \big| \bar \theta_t - \theta^* \big|  \leq  \frac{1}{k} \sum_{j=0}^{k-1} \, \sup_{j \leq t < j+m} | \theta_t - \theta^* |, 
\end{equation} 
and
\begin{equation} \label{eq-prf-c21}
 \E \left[ \sup_{k \leq t < k+m} \big| \bar \theta_t - \theta^* \big|  \right] \leq \frac{1}{k} \sum_{j=0}^{k-1} \, \E \left[ \sup_{j \leq t < j+m} | \theta_t - \theta^* | \right]. 
\end{equation} 
With $N'_\delta(\theta^*)$ denoting the open $\delta$-neighborhood of $\theta^*$, we have
\begin{align}
\frac{1}{k} \sum_{j=0}^{k-1} \E \left[ \sup_{j \leq t < j+m} | \theta_t - \theta^* | \right] 
& \leq \frac{1}{k} \sum_{j=0}^{k-1} \E \left[ \Big( \sup_{j \leq t < j+m} | \theta_t - \theta^* | \Big) \cdot  \I \big( \theta_t \in N'_\delta(\theta^*), \, j \leq t < j+m \big)  \right]  \notag \\
& \quad + \frac{1}{k} \sum_{j=0}^{k-1} \E \left[ \Big( \sup_{j \leq t < j+m} | \theta_t - \theta^* | \Big) \cdot  \I \big( \theta_t \not\in N'_\delta(\theta^*), \, \text{some} \ t \in  [j, j+m) \big) \right] \notag \\
& \leq \delta \cdot \bar P_{(z,\theta)}^{(m,k)}(D_{\delta})  + 2 r_\H \cdot \big( 1 - \bar P_{(z,\theta)}^{(m,k)}(D_{\delta}) \big), \label{eq-prf-c22}
\end{align}
where $D_{\delta} = \big\{ \big(z^1, \theta^1, \ldots, z^m, \theta^m \big) \in (\Zs \times \H)^m \ \big| \,  \sup_{1 \leq j \leq m} \big|\theta^j - \theta^* \big| < \delta \big\}$, and the second inequality follows from the definition (\ref{eq-avep}) of the averaged probability measure $\bar P_{(z,\theta)}^{(m,k)}$.

By Prop.~\ref{prop-c1}, $\big\{\bar P_{(z,\theta)}^{(m,k)}\big\}_{k \geq 1}$ is tight and all its weak limits are in $\mathcal{M}_\alpha^m$, the set of invariant probability measure of the $m$-step version of $\{(\Z_t, \theta_t)\}$. There is also the fact that on a metric space, if a sequence of probability measures $\mu_k$ converges to some probability measure $\mu$ weakly, then $\liminf_{k \to \infty} \mu_k(D) \geq \mu(D)$ for any open set $D$ \cite[Theorem 11.1.1]{Dud02}. From these two arguments we have that for the set $D_\delta$, which is open with respect to the topology on $(\Zs \times \H)^m$,
\begin{equation} \label{eq-prf-c23}
 \liminf_{k \to \infty} \bar P_{(z,\theta)}^{(m,k)}(D_{\delta}) \geq \inf_{\mu \in \mathcal{M}_\alpha^m} \mu(D_\delta) = \inf_{\mu \in \bar{\mathcal{M}}_\alpha^m} \mu \big([N'_\delta(\theta^*)]^m \big) = : \kappa_{\alpha,m}.
\end{equation} 
Combining the three inequalities (\ref{eq-prf-c21})-(\ref{eq-prf-c23}), and using also the relation $\delta < 2 r_\H$, we obtain
$$ \limsup_{k \to \infty} \E \left[ \sup_{k \leq t < k+m} \big| \bar \theta_t - \theta^* \big|  \right] \leq \delta \, \kappa_{\alpha,m} + 2 r_\H \big( 1- \kappa_{\alpha,m} \big). \qedhere$$
\end{proof}

We prove Theorem~\ref{thm-c1b} in exactly the same way as we proved Theorem~\ref{thm-c1}, so we omit the details and only outline the proof here.
First, for the variant algorithms~(\ref{eq-emtd-const1}) and (\ref{eq-emtd-const2}) as well as their perturbed version (\ref{eq-valg}), we consider fixed $K$ and $\psi_K$. 
Similar to Prop.~\ref{prop-c2}, we show that if for each stepsize $\alpha$, the initial $(\Z_0, \theta_0^\alpha)$ is distributed according to some invariant probability measure in $\mathcal{M}_\alpha$, then the algorithms continue to satisfy the conditions given in Section~\ref{sec-4.1.1}, so we can apply~\cite[Theorem 8.2.2]{KuY03} to assert that the conclusions of Theorem~\ref{thm-const-stepsize} continue to hold with $N_\delta(\theta^*)$ replaced by the limit set $N_\delta(L_\H)$ of the mean ODE associated with each algorithm. (Recall Theorem~\ref{thm-const-stepsize-b} is also obtained in this way.)
Subsequently, with $N_\delta(L_\H)$ in place of $N_\delta(\theta^*)$ again, and with $K$ and $\psi_K$ still held fixed, we use the same proof for Theorem~\ref{thm-c1}(i) to obtain that for any $\delta > 0$ and $m \geq 1$,
\begin{equation}
  \liminf_{\alpha \to 0}   \inf_{\mu \in \bar{\mathcal{M}}^{m_\alpha}_{\alpha}} \mu\big( [N_\delta(L_\H)]^{m_\alpha} \big) = 1, \qquad \text{where} \ m_\alpha = \lceil \tfrac{m}{\alpha} \rceil. \notag
\end{equation} 
Finally, we combine this with the fact that given any $\delta > 0$, the limit set $N_\delta(L_\H) \subset N_\delta(\theta^*)$ for all $K$ sufficiently large (see Lemma~\ref{lma-3.2}, which holds for (\ref{eq-emtd-const1}) and (\ref{eq-emtd-const2}), as well as their perturbed version (\ref{eq-valg}) since the latter has the same mean ODE as the original algorithm). Theorem~\ref{thm-c1b}(i) then follows: given $\delta > 0$, for all $K$ sufficiently large,
$$   \liminf_{\alpha \to 0}  \inf_{\mu \in \bar{\mathcal{M}}^{m_\alpha} _{\alpha}} \mu\big( [N_\delta(\theta^*)]^{m_\alpha} \big) = 1.$$
The proof for Theorem~\ref{thm-c1b}(ii) is exactly the same as that for Theorem~\ref{thm-c1}(ii) given earlier. In particular, this proof relies solely on the weak Feller property of the Markov chain $\{(\Z_t, \theta_t^\alpha)\}$ and the convergence property of the averaged probability measures of the $m$-step version of $\{(\Z_t, \theta_t^\alpha)\}$, all of which were proved for the algorithms~(\ref{eq-emtd-const1}) and (\ref{eq-emtd-const2}) and their perturbed version (\ref{eq-valg}) in this subsection.

The preceding arguments also show that the first part of Theorem~\ref{thm-c2b} holds; that is, the conclusions of Theorem~\ref{thm-const-stepsize-b} and Theorem~\ref{thm-c1b} hold for the perturbed version (\ref{eq-valg}) of the algorithm (\ref{eq-emtd-const1}) or (\ref{eq-emtd-const2}) as well.

\subsubsection{Proofs of Theorems~\ref{thm-c2} and~\ref{thm-c2b}} \label{sec-4.3.3}

In this subsection we establish completely Theorems~\ref{thm-c2} and~\ref{thm-c2b} regarding the perturbed version (\ref{eq-valg}) of the algorithms (\ref{eq-emtd-const0}), (\ref{eq-emtd-const1}) and (\ref{eq-emtd-const2}). We have already proved the first part of both of these theorems in the previous subsection. Below we tackle their second part, which, as we recall, is stronger than the corresponding part of Theorems~\ref{thm-c1} and~\ref{thm-c1b} in that for a fixed stepsize $\alpha$, the deviation of the averaged iterates $\{\bar \theta_t^\alpha\}$ from $\theta^*$ in the limit as $t \to \infty$ is now characterized not in an expected sense but for almost all sample paths.

To simplify the presentation, except where noted otherwise, it will be taken for granted throughout this subsection that \emph{$\{\theta_t^\alpha\}$ is generated by the perturbed version (\ref{eq-valg}) of any of the three algorithms (\ref{eq-emtd-const0}), (\ref{eq-emtd-const1}) and (\ref{eq-emtd-const2})}. 
Recall that when updating $\theta_{t}^\alpha$ to $\theta_{t+1}^\alpha$, the perturbed algorithm (\ref{eq-valg}) adds the perturbation term $\alpha \Delta_{\theta,t}$ to the iterate before the projection $\Pi_\H$, where $\Delta_{\theta,t}, t \geq 0$, are assumed to be i.i.d.\ $\rn$-valued random variables that have zero mean and bounded variances and have a positive continuous density function with respect to the Lebesgue measure. (Here and in what follows, we omit the superscript $\alpha$ of the noise terms $\Delta_{\theta,t}$ since we deal with a fixed stepsize $\alpha$ in this part of the analysis.) 
As mentioned in Section~\ref{sec-3.3}, these conditions are not as weak as possible. Indeed, the purpose of the perturbation is to make the invariant probability measure of $\{(\Z_t, \theta_t^\alpha)\}$ unique so that we can invoke the ergodic theorem for weak Feller Markov chains given in Lemma~\ref{lma-c1b}. Therefore, any conditions that can guarantee the uniqueness of the invariant probability measure can be used. In the present paper, for simplicity, we focus on the conditions we assumed earlier on $\Delta_{\theta,t}$, and prove the uniqueness just mentioned under these conditions, although our proof arguments can be useful for weaker conditions as well.

\begin{prop} \label{prop-c3}
Under Assumption~\ref{cond-bpolicy}, $\{(\Z_t, \theta_t^\alpha)\}$ has a unique invariant probability measure.
\end{prop}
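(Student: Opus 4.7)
The plan is to show that $\{(\Z_t, \theta_t^\alpha)\}$ is $\phi$-irreducible, aperiodic, and positive Harris recurrent, whereupon classical Markov chain theory (e.g., \cite[Theorem 10.0.1]{MeT09}) delivers uniqueness of the invariant probability measure. Existence is already supplied by Prop.~\ref{prop-c1} with $m=1$. The starting observation is that the $\Z$-component evolves autonomously with $\zeta$ as its unique invariant measure (Theorem~\ref{thm-2.1}), so every $\mu \in \mathcal{M}_\alpha$ has $\zeta$-marginal on $\Z$, and the task reduces to pinning down the conditional law of $\theta$ given $\Z$.

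The decisive input is the perturbation $\Delta_{\theta,t}$. Conditional on $\Z_0 = z$, $\theta_0^\alpha = \theta$, $\Z_1 = z'$, and $R_0 = r$, we have $\theta_1^\alpha = \Pi_\H\bigl(\theta + \alpha y(z, \theta, z', r) + \alpha \Delta_{\theta,0}\bigr)$ with $\Delta_{\theta,0}$ possessing a strictly positive continuous Lebesgue density on $\rn$. Consequently, the restriction to the open interior $\H^\circ$ of the conditional law of $\theta_1^\alpha$ has a strictly positive continuous Lebesgue density. Averaging over $(\Z_1, R_0)$ against its distribution given $\Z_0 = z$, the one-step transition kernel $P((z,\theta),\cdot)$ admits an absolutely continuous component on $\Zs \times \H^\circ$, with respect to the product of $P_\Z(z,\cdot)$ and Lebesgue measure on $\H^\circ$, whose density $p_1((z,\theta); z', \theta')$ is strictly positive and continuous in $(\theta, \theta')$. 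Taking $\phi$ to be the product of $\zeta$ and Lebesgue measure on $\H^\circ$, this yields both $\phi$-irreducibility and aperiodicity; moreover, on any compact set $C_R = \{(z,\theta) : \|(\e,\F)\| \leq R\}$, the density $p_1$ is uniformly bounded below over $\Zs$ times any fixed compact subset of $\H^\circ$, so $C_R$ is a small set supporting a one-step minorization $P((z,\theta), \cdot) \geq \eta_R\,\nu_R(\cdot)$ for all $(z,\theta) \in C_R$ and some $\eta_R > 0$ and probability measure $\nu_R$.

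The hard part will be Harris recurrence: one must show that the chain returns to $C_R$ almost surely from every initial state, for some sufficiently large $R$. For this I would invoke the uniform bound $\sup_t \E\bigl[\|(\e_t, \F_t)\|\bigr] < \infty$ from Prop.~\ref{prp-bdtrace} in Appendix~\ref{appsec-a}, which yields tightness of $\{(\e_t, \F_t)\}$ under any bounded initial condition; combined with a standard drift/level-crossing argument on the trace component (the $\theta$-component being confined to the compact set $\H$), this ensures that the hitting time of $C_R$ is almost surely finite from every initial state. Assembling $\phi$-irreducibility, aperiodicity, and positive Harris recurrence, uniqueness of the invariant probability measure follows.
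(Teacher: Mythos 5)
There is a genuine gap, and it lies exactly where your plan leans on classical $\psi$-irreducible Markov chain theory. The perturbation $\Delta_{\theta,t}$ randomizes only the $\theta$-coordinate; the $\Z$-coordinate $\Z_t=(S_t,A_t,\e_t,\F_t)$ is untouched, and its trace part $(\e_t,\F_t)$ is a \emph{deterministic} function of the initial trace and the (finite-valued) state--action trajectory. Hence for every initial condition and every $t$, the law of $\Z_t$ is supported on at most $(|\S||\A|)^t$ points, so $\bigcup_t \mathrm{supp}\,P^t\big((z,\theta),\cdot\big)$ projects onto a countable subset of $\Zs$. Since $\zeta$ is in general non-atomic, there are sets of full $\zeta\times\mathrm{Leb}$-measure that the chain never visits from a given initial condition: the joint chain is \emph{not} $\phi$-irreducible for $\phi=\zeta\otimes\mathrm{Leb}|_{\H^\circ}$, and a fortiori not positive Harris. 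Your intermediate claim that the one-step kernel has a component absolutely continuous with respect to $P_\Z(z,\cdot)\otimes\mathrm{Leb}$ is true but does not help, because $P_\Z(z,\cdot)$ is itself atomic and $z$-dependent; for the same reason the proposed minorization $P((z,\theta),\cdot)\ge \eta_R\,\nu_R(\cdot)$ with a \emph{fixed} $\nu_R$ on the joint space cannot hold, since the $\Z$-marginal of $P((z,\theta),\cdot)$ moves with $z$. This is precisely why the paper cannot invoke \cite[Theorem 10.0.1]{MeT09} and instead works throughout with the weaker weak Feller ergodic theory of Meyn.

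What survives of your idea is the minorization of the \emph{conditional} $\theta$-kernel: the paper's Lemma~\ref{lma-c4a} proves $Q(d\theta'\mid\xi,\theta)\ge\beta\,Q_1(d\theta')$ uniformly over $\theta\in\H$ and $\xi$ in a bounded set, which is essentially your small-set computation restricted to the $\theta$-coordinate. But it is then used differently: Lemma~\ref{lma-c4} couples two copies of the $\theta$-process driven by the \emph{same} $\Z$-trajectory, splitting the kernel as $\beta Q_1+(1-\beta)Q_0$ at the (a.s.\ infinitely many) times the traces visit a bounded set, so that the two $\theta$-paths merge in finite time; this shows the a.s.\ weak limit of the occupation measures does not depend on the $\theta$-part of the initial condition. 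Uniqueness then follows not from Harris theory but from the SLLN for stationary processes together with a disintegration of any two invariant measures over their common $\zeta$-marginal, yielding $\int f\,d\mu=\int f\,d\mu'$ for all bounded continuous $f$. If you want to salvage your write-up, you should replace the irreducibility/Harris framework by this coupling-plus-disintegration argument, or by some other device that handles the non-irreducible $\Z$-component.
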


The next two lemmas are the intermediate steps to prove Prop.\ \ref{prop-c3}. We need the notion of a stochastic kernel, of which the transition kernel of a Markov chain is one example. For two topological spaces $\X$ and $\Y$, a function $Q: \B(\X) \times \Y \to [0,1]$ is a (Borel measurable) \emph{stochastic kernel on $\X$ given $\Y$}, if for each $y \in \Y$, $Q(\cdot \mid y)$ is a probability measure on $\B(\X)$ and for each $D \in \B(\X)$, $Q(D \mid y)$ is a Borel measurable function on $\Y$. 
For the algorithms we consider, the iteration that generates $(\Z_{t+1}, \theta_{t+1}^\alpha)$ from $(\Z_t, \theta_{t}^\alpha)$ can be equivalently described in terms of stochastic kernels. In particular, the transition from $\Z_t$ to $\Z_{t+1}$ is described by the transition kernel of the Markov chain $\{\Z_t\}$, and the probability distribution of $\theta_{t+1}^\alpha$ given $\theta_t^\alpha$ and $\xi_t=(\e_t,\F_t,S_t,A_t,S_{t+1})$ is described by another stochastic kernel, which will be our focus in the analysis below.

\begin{lem} \label{lma-c4a}
Let Assumption~\ref{cond-bpolicy}(ii) hold. Let $Q(d\theta' \mid \xi, \theta)$ be the stochastic kernel (on $\H$ given $\Xi \times \H$) that describes the probability distribution of $\theta_{t+1}^\alpha$ given $\xi_t = \xi, \theta_t^\alpha = \theta$. 
Then for each bounded set $E \subset \Xi$, there exist $\beta \in (0,1]$ and a probability measure $Q_1$ on $\H$ such that 
\begin{equation} \label{eq-thetatrans0}
  Q(d\theta' \mid \xi , \theta) \geq  \beta \, Q_1(d\theta'), \qquad \forall \, \xi \in E, \ \theta \in \H.
\end{equation} 
\end{lem}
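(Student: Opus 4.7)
The plan is to obtain a Doeblin-type minorization by exploiting the positive continuous Lebesgue density of the perturbation term $\Delta_{\theta,t}$. First I would write each of the three perturbed recursions in the unified form
$$ \theta_{t+1}^\alpha = \Pi_\H\!\left( \theta + \alpha \, G(\xi, \theta, R_t) + \alpha \, \Delta_{\theta,t} \right),$$
where for algorithms \eqref{eq-emtd-const0}, \eqref{eq-emtd-const1}, \eqref{eq-emtd-const2} the function $G(\xi,\theta,r)$ is $\e\,\rho(s,a)(r+\gamma(s')\fe(s')^\top\theta - \fe(s)^\top\theta)$, its $\psi_K(\e)$-modified version, or $\psi_K$ applied to the whole bracket, respectively. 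In all three cases $G$ is continuous in $(\e,\F,\theta,r)$ for each fixed $(s,a,s')\in\S\times\A\times\S$, and hence bounded on bounded subsets.

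Next, since the reward variances are bounded and $\S\times\A\times\S$ is finite, Chebyshev's inequality provides a single radius $\bar r>0$ such that $q([-\bar r,\bar r]\mid s,a,s')\geq \tfrac12$ for every $(s,a,s')$. Consequently, for $\xi\in E$, $\theta\in\H$, and $r\in[-\bar r,\bar r]$, the vector $m(\xi,\theta,r):=\theta + \alpha G(\xi,\theta,r)$ lies in some bounded set $M\subset\rn$ that depends only on $E$, $\H$, $\bar r$ and $\alpha$. I would then fix once and for all a closed ball $D^\ast$ strictly contained in the interior of $\H$ with positive Lebesgue measure. For $w\in D^\ast$ the set of values $u=(w-m(\xi,\theta,r))/\alpha$ is contained in a compact set $K\subset\rn$; by the assumed positivity and continuity of the density $f$ of $\Delta_{\theta,t}$, we have $c:=\inf_{u\in K} f(u)>0$.

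The concluding step uses the fact that, because $D^\ast$ is strictly interior to $\H$, the event $\{\theta+\alpha G+\alpha\Delta_{\theta,t}\in D^\ast\}$ implies $\Pi_\H(\theta+\alpha G+\alpha\Delta_{\theta,t})=\theta+\alpha G+\alpha\Delta_{\theta,t}\in D^\ast$, so the projection is harmless. Then, using the conditional independence of $R_t$ and $\Delta_{\theta,t}$ and a change of variables $w=\theta+\alpha G+\alpha\Delta_{\theta,t}$, for any Borel $D\subset\H$,
$$ Q(D\mid\xi,\theta)\,\geq\, \int_{[-\bar r,\bar r]} q(dr\mid s,a,s')\int_{D\cap D^\ast} f\!\left(\tfrac{w-\theta-\alpha G(\xi,\theta,r)}{\alpha}\right)\tfrac{dw}{\alpha^n}\,\geq\, \tfrac{c}{2\alpha^n}\,\mathrm{Leb}(D\cap D^\ast).$$
Taking $Q_1$ to be the normalized Lebesgue measure on $D^\ast$ and $\beta:=\min\bigl\{1,\,c\,\mathrm{Leb}(D^\ast)/(2\alpha^n)\bigr\}\in(0,1]$ gives \eqref{eq-thetatrans0}.

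The only non-routine point is the uniform choice of $\bar r$ handling the possibly unbounded rewards; this is where we rely on finiteness of $\S\times\A\times\S$ and the bounded-variance hypothesis on the rewards. After that, strict positivity plus continuity of $f$ does all the essential work, and the restriction of $D^\ast$ to the interior of $\H$ is what lets us ignore the nonlinearity introduced by $\Pi_\H$.
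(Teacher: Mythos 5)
Your proposal is correct and follows essentially the same route as the paper's proof: truncate the reward to a uniform compact interval, fix a target ball inside $\H$ on which $\Pi_\H$ acts as the identity, and use the positivity and continuity of the perturbation density on the resulting compact set of required perturbation values to extract a Doeblin minorization by a normalized Lebesgue measure. The only differences are cosmetic (the paper centers its target ball $\alpha B_\epsilon$ at the origin and compares measures directly rather than via an explicit change of variables), so no further comment is needed.
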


\begin{proof}
We consider only the case where $\{\theta_t^\alpha\}$ is generated by the perturbed version of the algorithm (\ref{eq-emtd-const0}); the proof for the perturbed version of the two other algorithms (\ref{eq-emtd-const1}) and (\ref{eq-emtd-const2}) follows exactly the same arguments. In the proof below we use the notation that for a scalar $c$ and a set $D \subset \re^{n}$, the set $c D =\{ c x \mid x \in D\}$. 

By the definitions of the algorithms (\ref{eq-emtd-const0}) and (\ref{eq-valg}), for $\xi = (\e, \F, s, a, s') \in \Xi$ and $\theta \in \H$, we can express $Q(\cdot \mid \xi, \theta)$ as
\begin{equation} \label{eq-prf-lc4a}
 Q(D \mid \xi, \theta)  = \int  \int \I \Big( \Pi_\H \big(\theta + \alpha f(\xi, \theta, r) + \alpha \Delta \big) \in D \Big)  \, p(d\Delta) \, q(dr \mid s, a, s'), \quad \forall \, D \in \mathcal{B}(\H),
\end{equation} 
where $f(\xi, \theta, r) = \e \cdot \rho(s,a) \big( r + \gamma(s') \phi(s')^\top \theta - \phi(s)^\top \theta \big)$, and $p(\cdot)$ is the common distribution of the perturbation variables $\Delta_{\theta,t}$. Let $\bar r > 0$ be large enough so that for some $c > 0$, $q([-\bar r, \bar r] \mid \bar s, \bar a, \bar s') \geq c$ for all $(\bar s, \bar a, \bar s') \in \S \times \A \times \S$. Let $E$ be an arbitrary bounded subset of $\Xi$. For all $\xi \in E, \theta \in \H$ and $r \in [-\bar r, \bar r]$, since $E$ and $\H$ are bounded, $g(\xi,\theta, r): = (\theta + \alpha f(\xi, \theta, r))/\alpha$ lies in a compact subset of $\re^n$, which we denote by $\bar D$. Let $\epsilon \in (0, r_\H/\alpha]$ and let $\bar D_\epsilon$ be the $\epsilon$-neighborhood of $\bar D$.  By our assumption on the perturbation variables involved in the algorithm (\ref{eq-valg}), $p(\cdot)$ has a positive continuous density function with respect to the Lebesgue measure $\ell(\cdot)$. Therefore, there exists some $c' > 0$ such that for any Borel subset $D$ of the compact set $-\bar D_\epsilon : = \{ - x \mid x \in \bar D_\epsilon \}$, $p(D) \geq c' \ell(D)$.

Now consider an arbitrary $\xi \in E$, $\theta \in \H$, and $r \in [-\bar r, \bar r]$. We have $y: = g(\xi,\theta, r) \in \bar D$. Let $B_\epsilon(-y)$ be the $\epsilon$-neighborhood of $-y$, and let $B_\epsilon$ denote the closed ball in $\rn$ centered at the origin with radius $\epsilon$.  
If $\Delta \in B_\epsilon(-y)$, then 
$\theta + \alpha f(\xi, \theta, r) + \alpha \Delta  = \alpha y + \alpha \Delta \in \alpha B_\epsilon \subset \H$ (since $\alpha \epsilon \leq r_\H$).
Therefore, for any $D \in \B(\H)$,
\begin{align}
  \int \I \Big( \Pi_\H \big(\theta + \alpha f(\xi, \theta, r) + \alpha \Delta \big) \in D \Big) \, p(d\Delta) & \geq \int_{B_\epsilon(-y)} \I \big(\alpha y + \alpha \Delta  \in D \big) \, p(d\Delta) \notag \\
  & \geq c' \int_{B_\epsilon(-y)} \I \big(\alpha y + \alpha \Delta  \in D \big) \, \ell(d\Delta) \notag \\
  & = c' \ell \big( \tfrac{1}{\alpha} D \cap B_\epsilon \big), \label{eq-prf-lc4b}
\end{align}  
where in the second inequality we used the fact that $B_\epsilon(-y) \subset - \bar D_\epsilon$ and restricted to $\B(-\bar D_\epsilon)$, $p(d\Delta) \geq c' \ell(d\Delta)$, as discussed earlier. 

To finish the proof, define the probability measure $Q_1$ on $\H$ by $Q_1(D) = \ell (\tfrac{1}{\alpha} D \cap B_\epsilon)/\ell(B_\epsilon)$ for all $D \in \mathcal{B}(\H)$.
Then for all $\xi \in E$ and $\theta \in \H$, using (\ref{eq-prf-lc4a}) and (\ref{eq-prf-lc4b}) and our choice of $\bar r$, we have
$$ Q(D \mid \xi, \theta)  \geq \int_{[-\bar r, \bar r]} c' \ell(B_\epsilon) \cdot  Q_1(D) \, q(dr \mid s, a, s') \geq c \cdot c' \ell(B_\epsilon) \cdot Q_1(D), \qquad D \in \B(\H),$$
and the desired inequality (\ref{eq-thetatrans0}) then follows by letting $\beta =c c' \ell(B_\epsilon) > 0$ (we must have $\beta \leq 1$ since we can choose $D = \H$ in the inequality above).
\end{proof}

We will use the preceding result in the proof of the next lemma.

\begin{lem} \label{lma-c4}
Let Assumption~\ref{cond-bpolicy} hold. Let $\{\mu_{x,t}\}$ be the sequence of occupation probability measures of $\{(\Z_t, \theta_t^\alpha)\}$ for each initial condition $x \in \Zs \times \H$.
Suppose that for some $x =(z, \theta) \in \Zs \times \H$ and $\mu \in \mathcal{M}_\alpha$, $\{\mu_{x,t}\}$ converges weakly to $\mu$, $\Pr_x$-almost surely. Then for each $\theta' \in \H$ and $x'=(z, \theta')$, $\{\mu_{x',t}\}$ also converges weakly to $\mu$, $\Pr_{x'}$-almost surely.  
\end{lem}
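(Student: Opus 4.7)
My plan is a coupling argument exploiting the minorization condition in Lemma~\ref{lma-c4a} together with the ergodicity of $\{\Z_t\}$ from Theorem~\ref{thm-2.1}. On a common probability space I will construct a coupled pair of processes $\{(\Z_t,\theta_t)\}$ and $\{(\Z_t,\theta'_t)\}$ sharing the same $\Z$-component (started at $z$) and having initial $\theta$-values $\theta$ and $\theta'$, such that their $\theta$-iterates coincide from some a.s.\ finite time $T$ onward. A standard Cesaro-difference estimate then transfers the almost-sure weak convergence of $\{\mu_{x,t}\}$ to $\{\mu_{x',t}\}$.

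To build the coupling, I first fix a bounded set $E \subset \Xi$ that is visited with positive asymptotic frequency by $\{\xi_t\}$ $\Pr_z$-almost surely; since $\S\times\A$ is finite and the trace sequence is tight by Prop.~\ref{prop-1}, such an $E$ exists, e.g.\ by picking a bounded ball in $\re^{n+1}$ of positive mass under the induced marginal of $\zeta$ on $\Xi$. Lemma~\ref{lma-c4a} then furnishes $\beta \in (0,1]$ and a probability measure $Q_1$ on $\H$ with $Q(\cdot \mid \xi,\vartheta) \ge \beta\,Q_1(\cdot)$ for all $\xi \in E$ and $\vartheta \in \H$, hence the Nummelin-type splitting $Q(\cdot \mid \xi,\vartheta) = \beta\,Q_1(\cdot) + (1-\beta)\,R(\cdot \mid \xi,\vartheta)$. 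Introducing i.i.d.\ Bernoulli$(\beta)$ coins $\{B_t\}$ independent of all other randomness, I define the joint update: whenever $\xi_t \in E$ and $\theta_t \ne \theta'_t$, if $B_t = 1$ set both $\theta_{t+1}$ and $\theta'_{t+1}$ to a common draw from $Q_1$, and if $B_t = 0$ draw them independently from $R(\cdot \mid \xi_t,\theta_t)$ and $R(\cdot \mid \xi_t,\theta'_t)$; in all other cases use the same sample for both chains. By construction the marginal distributions of the two component processes coincide with $\Pr_x$ and $\Pr_{x'}$ respectively.

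For the coupling time $T := \inf\{t \ge 0 : \theta_t = \theta'_t\}$, the weak convergence of the occupation measures of $\{\Z_t\}$ to $\zeta$ from Theorem~\ref{thm-2.1}, together with the portmanteau theorem, implies that $\{t : \xi_t \in E\}$ has positive lower Cesaro density $\Pr_z$-a.s., and in particular is a.s.\ infinite. At each such visit prior to coupling, the independent coin yields $B_t = 1$ with probability $\beta$, so $T < \infty$ almost surely. Then for every bounded continuous $f: \Zs \times \H \to \re$,
\[
\Bigl| \tfrac{1}{t}\sum_{k=0}^{t-1} f(\Z_k,\theta_k) - \tfrac{1}{t}\sum_{k=0}^{t-1} f(\Z_k,\theta'_k) \Bigr| \;\le\; \tfrac{2T\,\|f\|_\infty}{t} \;\xrightarrow{\text{a.s.}}\; 0,
\]
and since by hypothesis the first Cesaro average converges a.s.\ to $\int f\,d\mu$, so does the second. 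Choosing a countable determining class of such $f$'s and unioning exceptional null sets yields the $\Pr_{x'}$-almost sure weak convergence of $\{\mu_{x',t}\}$ to $\mu$.

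The main technical step will be the a.s.\ finiteness of $T$, which combines the a.s.\ infinite visitation of $E$ (from Theorem~\ref{thm-2.1}) with the independence of the Bernoulli coupling attempts at those visits. The remaining pieces---checking that the coupling preserves the marginal laws $\Pr_x$ and $\Pr_{x'}$, and deducing weak convergence from the Cesaro-difference estimate---are then routine.
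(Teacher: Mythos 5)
Your proposal is correct and follows essentially the same route as the paper's proof: a Nummelin-type splitting of the $\theta$-transition kernel via the minorization of Lemma~\ref{lma-c4a} on a bounded set $E$ visited infinitely often (which the paper obtains from the almost-sure tightness of the occupation measures of $\{\Z_t\}$ in Lemma~\ref{lma-c2}), independent Bernoulli coins driving the coupling attempts, a Borel--Cantelli argument for the almost-sure finiteness of the coupling time, and a Cesaro-difference estimate to transfer the weak limit. The only cosmetic differences are that the paper couples only at the first successful attempt $t_Y$ rather than at every visit to $E$, and leaves the post-splitting draws unspecified rather than independent; neither affects the argument.
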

\begin{proof}
We use a coupling argument to prove the statement. In the proof, we suppress the superscript $\alpha$ of $\theta_t^\alpha$. 
Let $\{X_t\}$ denote the process $\{(\Z_t, \theta_t)\}$ with initial condition $x=(z, \theta)$, and let $\{X'_t\}$ denote the process $\{(\Z_t, \theta_t)\}$ with initial condition $x'=(z, \theta')$, for an arbitrary $\theta' \in \H$.
In what follows, we first define a sequence 
$$ \{(\Z_t, \tilde \theta_t, \tilde \theta_t')\} \quad \text{with} \ \ (\Z_0, \tilde \theta_0, \tilde \theta'_0) = (z, \theta, \theta'),$$ 
in such a way that the two marginal processes $\{(\Z_t, \tilde \theta_t)\}$ and $\{(\Z_t, \tilde \theta_t')\}$ have the same probability distributions as $\{X_t\}$ and $\{X'_t\}$, respectively. We then relate the occupation probability measures $\{\mu_{x,t}\}$, $\{\mu_{x',t}\}$ to those of the marginal processes, $\{\tilde \mu_{x,t}\}$, $\{\tilde \mu_{x',t}\}$, which are defined as
$$ \tilde \mu_{x,t}(D) = \frac{1}{t} \sum_{k=0}^{t-1} \I \big( (\Z_k, \tilde \theta_k) \in D \big), \qquad \tilde \mu_{x',t}(D) = \frac{1}{t} \sum_{k=0}^{t-1} \I \big( (\Z_k, \tilde \theta'_k) \in D \big), \qquad \forall \, D \in \B(\Zs \times \H). $$

We now define $\{(\Z_t, \tilde \theta_t, \tilde \theta_t')\}$. First, let $\{\Z_t\}$ be generated as before with $\Z_0=z$. 
Denote $\xi_t = (\e_t, \F_t, S_t, A_t, S_{t+1})$ as before, and let $Q$ be the stochastic kernel that describes the evolution of $\theta_{t+1}$ given $(\xi_t, \theta_t)$.
By Lemma~\ref{lma-c2}, the occupation probability measures of $\{\Z_t\}$ is almost surely tight for each initial condition. 
This implies the existence of a compact set $\bar E \subset \re^{n+1}$ such that for the compact set $E = \bar E \times \S \times \A \times \S \subset \Xi$, the sequence $\{\xi_t\}$ visits $E$ infinitely often with probability one. For this set $E$, by Lemma~\ref{lma-c4a}, there exist some $\beta \in (0,1]$ and probability measure $Q_1$ on $\H$ such that $Q(\cdot \mid \bar \xi, \bar \theta) \geq \beta Q_1(\cdot)$ for all $\bar \xi \in E$ and $\bar \theta \in \H$. Therefore, on $E \times \H$, we can write $Q(\cdot \mid \bar \xi, \bar \theta)$ as the convex combination of $Q_1$ and another stochastic kernel $Q_0$ as follows:
\begin{equation} \label{eq-thetatrans}
   Q(\cdot \mid \bar \xi, \bar \theta) = \beta \, Q_1(\cdot) + ( 1 - \beta) \, Q_0(\cdot \mid \bar \xi, \bar \theta),  \qquad \forall \, \bar \xi \in E, \ \bar \theta \in \H,
\end{equation}   
where $Q_0(\cdot \mid \bar \xi, \bar \theta) = \big[ Q(\cdot \mid \bar \xi, \bar \theta) - \beta \, Q_1(\cdot) \big]/(1 - \beta)$ and $Q_0$ is a stochastic kernel on $\H$ given $E \times \H$.

Next, independently of $\{Z_t\}$, generate a sequence $\{Y_t\}_{t \geq 1}$ of i.i.d., $\{0,1\}$-valued random variables such that $Y_t = 1$ with probability $\beta$ and $Y_t=0$ with probability $1 - \beta$. Set $Y_0=0$. Let 
$$t_Y = \min \{ t \geq 1 \mid Y_t=1, \xi_{t-1} \in E \}.$$ 
Then $t_Y < \infty$ with probability one. (Since $\{\xi_t\}$ visits $E$ infinitely often and the process $\{Y_t\}$ is independent of $\{\xi_t\}$, this follows easily from applying the Borel-Cantelli lemma to $\{(\xi_{t_k}, Y_{t_k+1})\}_{k \geq 1}$, where $t_k$ is when the $k$-th visit to $E$ by $\{\xi_t\}$ occurs.)

Now for each $t \geq 0$, let us define the pair $(\tilde \theta_{t+1}, \tilde \theta_{t+1}')$ according to the following rule, based on the values of $(\xi_0, \tilde \theta_0, \tilde \theta_0'), \ldots, (\xi_t, \tilde \theta_t, \tilde \theta_t')$ and $(Y_0, \ldots, Y_t, Y_{t+1})$:
\begin{enumerate}
\item[(i)] In the case $t < t_Y$ and $\xi_t \not\in E$, generate $\tilde \theta_{t+1}$ and $\tilde \theta'_{t+1}$ according to $Q(\cdot \mid \xi_t, \tilde \theta_t)$ and $Q(\cdot \mid \xi_t, \tilde \theta'_t)$ respectively. 
\item[(ii)] In the case $t < t_Y$ and $\xi_t \in E$, if $Y_{t+1} = 0$, generate $\tilde \theta_{t+1}$ and $\tilde \theta'_{t+1}$ according to $Q_0(\cdot \mid \xi_t, \tilde \theta_t)$ and $Q_0(\cdot \mid \xi_t, \tilde \theta'_t)$ respectively; if $Y_{t+1}=1$, generate $\tilde \theta_{t+1}$ according to $Q_1(\cdot)$ and let $\tilde \theta'_{t+1} = \tilde \theta_{t+1}$.
\item[(iii)] In the case $t \geq t_Y$, 
generate $\tilde \theta_{t+1}$ according to $Q(\cdot \mid \xi_t, \tilde \theta_t)$ and let $\tilde \theta'_{t+1} = \tilde \theta_{t+1}$.
\end{enumerate}

In view of (\ref{eq-thetatrans}), it can be verified directly by induction on $t$ that the marginal process $\{(\Z_t, \tilde \theta_t)\}$ (resp.\ $\{(\Z_t, \tilde \theta'_t)\}$) in the preceding construction has the same probability distribution as $\{X_t\}$ (resp.\ $\{X'_t\}$).
This implies that $\{\mu_{x,t}\}$ (resp.\ $\{\mu_{x',t}\}$) converges weakly to $\mu$ with probability one if and only if $\{\tilde \mu_{x,t}\}$ (resp.\ $\{\tilde \mu_{x',t}\}$) converges weakly to $\mu$ with probability one.
On the other hand, by construction $\tilde \theta_t = \tilde \theta_t'$ for $t \geq t_Y$, where $t_Y < \infty$ with probability one, so except on a null set, $\{\tilde \mu_{x,t}\}$ and $\{\tilde \mu_{x',t}\}$ have the same weak limits.
Combining these two arguments with the assumption that $\{\mu_{x,t}\}$ converges weakly to $\mu$ with probability one, it follows that the three sequences $\{\tilde \mu_{x,t}\}$, $\{\tilde \mu_{x',t}\}$, and $\{\mu_{x',t}\}$ must all converge weakly to $\mu$ with probability one. 
\end{proof}

\begin{proof}[Proof of Prop.~\ref{prop-c3}]
We suppress the superscript $\alpha$ of $\theta_t^\alpha$ in the proof. Let $\{X_t\} =\{(\Z_t, \theta_t)\}$. By Prop.~\ref{prop-c1}, the set $\mathcal{M}_\alpha$ of invariant probability measures of $\{X_t\}$ is nonempty. Recall also that since the evolution of $\{\Z_t\}$ is not affected by the $\theta$-iterates, the marginal of any $\mu \in \mathcal{M}_\alpha$ on the space $\Zs$ must equal $\zeta$, the unique invariant probability measure of $\{Z_t\}$ (Theorem~\ref{thm-2.1}).

Suppose $\{X_t\}$ has multiple invariant probability measures; i.e., there exist $\mu, \mu' \in \mathcal{M}_\alpha$ with $\mu \not= \mu'$. 
Then by \cite[Theorem 11.3.2]{Dud02} there exists a bounded continuous function $f$ on $\Zs \times \H$ such that 
\begin{equation} \label{eq-prf-c31}
\int f \, d\mu \not= \int f \, d\mu'.
\end{equation}

On the other hand, since $\mu$ is an invariant probability measure of $\{X_t\}$, applying a strong law of large numbers for stationary processes \cite[Chap.\ X, Theorem 2.1]{Doob53} (see also \cite[Lemma 17.1.1 and Theorem 17.1.2]{MeT09}) to the stationary Markov chain $\{X_t\}$ with initial distribution $\mu$, 
we have that there exist a set $D_1 \subset \Zs \times \H$ with $\mu(D_1) = 1$ and a measurable function $g_f$ on $\Zs \times \H$ such that
\begin{enumerate}
\item[(i)] for each $x \in D_1$, with the initial condition $X_0 = x$,
$\lim_{t \to \infty} \frac{1}{t} \sum_{k=0}^{t-1} f(X_k) = g_f(x)$, $\Pr_x$-a.s.;
\item[(ii)] $\E_\mu [ g_f(X_0)] = \E_\mu [ f(X_0)]$ (i.e., $\int g_f d\mu = \int f d\mu$).
\end{enumerate}
The same is true for the invariant probability measure $\mu'$: there exist a set $D_2 \subset \Zs \times \H$ with $ \mu'(D_2) = 1$ and a measurable function $g'_f(x)$ such that 
\begin{enumerate}
\item[(i)] for each $x \in D_2$, with the initial condition $X_0 = x$,
$\lim_{t \to \infty} \frac{1}{t} \sum_{k=0}^{t-1} f(X_k) = g'_f(x)$, $\Pr_x$-a.s.;
\item[(ii)] $\E_{\mu'} [ g'_f(X_0)] = \E_{\mu'} [ f(\X_0)]$ (i.e., $\int g'_f d\mu' = \int f d\mu'$).
\end{enumerate}
\smallskip

Also, since $\{X_t\}$ is a weak Feller Markov chain (Lemma~\ref{lma-c0}), by \cite[Prop.\ 4.1]{Mey89}, for a set of initial conditions $x$ with $\mu$-measure $1$, the occupation probability measures $\{\mu_{x, t}\}$ of $\{X_t\}$ converge weakly, $\Pr_x$-almost surely, to some (nonrandom) $\tilde \mu_x \in \mathcal{M}_\alpha$ that depends on the initial $x$.  The same is true for $\mu'$. So by excluding from $D_1$ a $\mu$-null set and from $D_2$ a $\mu'$-null set if necessary, we can assume that the sets $D_1, D_2$ above also satisfy that for each $x \in D_1 \cup D_2$, the occupation probability measures $\{\mu_{x, t}\}$ converge weakly to an invariant probability measure $\tilde \mu_x$ almost surely. Then since $\frac{1}{t} \sum_{k=0}^{t-1} f(X_k)$ is the same as $\int f d\mu_{x,t}$ for $X_0=x$, we have, by the weak convergence of $\{\mu_{x, t}\}$ just discussed, that 
\begin{equation} \label{eq-prf-c32}
  g_f(x) = \int f d \tilde \mu_x \ \ \ \text{for each } x \in D_1,  \qquad g'_f(x) = \int f d \tilde \mu_x \ \ \ \text{for each } x \in D_2.
\end{equation}  

Certainly we must have $g_f(x) = g_f'(x)$ on $D_1 \cap D_2$. We now relate the values of these two functions at points that share the same $z$-component. In particular, let $\text{proj}(D_1)$ denote the projection of $D_1$ on $\Zs$: $\text{proj}(D_1) = \{ z \in \Zs \mid \exists \, \theta \ \text{with} \  (z, \theta) \in D_1 \}$, and let $D_{1,z}$ be the vertical section of $D_1$ at $z$: 
$D_{1,z} = \{ \theta \mid (z, \theta) \in D_1 \}$. Define $\text{proj}(D_2)$ and $D_{2,z}$ similarly.
If $x = (z, \theta) \in D_1 \cup D_2$ and $x'=(z, \theta') \in D_1 \cup D_2$, then in view of Lemma~\ref{lma-c4} and the weak convergence of $\{\mu_{x, t}\}$ and $\{\mu_{x', t}\}$, we must have $\tilde \mu_x = \tilde \mu_{x'}$. Consequently, by (\ref{eq-prf-c32}), 
for each $z \in \text{proj}(D_1)$,  $g_f(z, \cdot)$ is constant on $D_{1,z}$; for each $z \in \text{proj}(D_2)$, $g_f'(z, \cdot)$ is constant on $D_{2,z}$; and
for each $z \in \text{proj}(D_1) \cap \text{proj}(D_2)$, the constants that $g_f(z, \cdot)$, $g_f'(z, \cdot)$ take on $D_{1,z}$, $D_{2,z}$, respectively, are the same.

We now show $\int f d\mu = \int f d\mu'$ to contradict (\ref{eq-prf-c31}) and finish the proof. 
Since $\mu(D_1)=\mu'(D_2) = 1$ and by Theorem~\ref{thm-2.1} $\mu, \mu'$ have the same marginal distribution on $\Zs$, which is $\zeta$, there exists a Borel set $E \subset \text{proj}(D_1) \cap \text{proj}(D_2)$ with $\zeta(E) = 1$. Consider the sets $ (E \times \H) \cap D_1$ and $(E \times \H) \cap D_2$, which have $\mu$-measure $1$ and $\mu'$-measure $1$, respectively.
By \cite[Prop.\ 10.2.8]{Dud02}, we can decompose $\mu, \mu'$ into the marginal $\zeta$ on $\Zs$ and the conditional distributions $\mu(d\theta \mid z), \mu'(d\theta \mid z)$ for $z \in \Zs$. Then
$$ 1 = \mu\big((E \times \H) \cap D_1\big) = \int_E \int_{D_{1,z}} \mu(d \theta \mid z) \, \zeta(dz), \quad 1 = \mu'\big((E \times \H) \cap D_2\big) = \int_E \int_{D_{2,z}} \mu'(d \theta \mid z) \, \zeta(dz),$$
where the equality for the iterated integral in each relation follows from \cite[Theorem 10.2.1(ii)]{Dud02}. These relations imply that for some set $E_0 \subset E$ with $\zeta(E_0) = 0$, 
\begin{equation} \label{eq-prf-c33}
 \int_{D_{1,z}} \mu(d \theta \mid z) = \int_{D_{2,z}} \mu'(d \theta \mid z) = 1, \qquad \forall \, z \in E \setminus E_0.
\end{equation}
We now calculate $\int g_f d\mu$ and $\int g_f' d\mu'$. We have
\begin{align}
 \int g_f \, d\mu & = \int_{(E \times \H) \cap D_1} g_f \, d\mu = \int_{E} \int_{D_{1,z}} g_f(z, \theta) \, \mu(d\theta \mid z) \, \zeta(dz),  \label{eq-prf-c34a}\\
 \int g_f' \, d\mu' & = \int_{(E \times \H) \cap D_2} g_f' \, d\mu' = \int_{E} \int_{D_{2,z}} g_f'(z, \theta) \, \mu'(d\theta \mid z) \, \zeta(dz), \label{eq-prf-c34b}
\end{align} 
where the equality for the iterated integral in each relation also follows from \cite[Theorem 10.2.1(ii)]{Dud02}.
As discussed earlier, for each $z \in E \subset  \text{proj}(D_1) \cap \text{proj}(D_2)$, the two constant functions, $g_f(z, \cdot)$ on $D_{1,z}$ and $g_f'(z, \cdot)$ on $D_{2,z}$, have the same value. Using this together with (\ref{eq-prf-c33}), we conclude that
\begin{equation} \label{eq-prf-c34c}
  \int_{D_{1,z}} g_f(z, \theta) \, \mu(d\theta \mid z) = \int_{D_{2,z}} g_f'(z, \theta) \, \mu'(d\theta \mid z), \qquad \forall \, z \in E \setminus E_0.
\end{equation}  
Since $\zeta(E_0) = 0$, we obtain from (\ref{eq-prf-c34a})-(\ref{eq-prf-c34c}) that
$$ \int g_f \, d\mu = \int g_f' \, d\mu'.$$
But $\int g_f d\mu = \int f d\mu$ and $\int g_f' d\mu' = \int f d\mu'$ (as we obtained at the beginning of the proof), so $\int f d\mu = \int f d\mu'$, a contradiction to (\ref{eq-prf-c31}). This proves that $\{X_t\}$ must have a unique invariant probability measure.
\end{proof}

Proposition~\ref{prop-c3} implies that for every $m \geq 1$, the $m$-step version of $\{(\Z_t, \theta_t^\alpha)\}$ has a unique invariant probability measure. 
This together with Lemma~\ref{lma-c2} furnishes the conditions (A1)-(A3) of \cite[Prop.\ 4.2]{Mey89} for weak Feller Markov chains (these conditions are the conditions (i)-(iii) of our Lemma~\ref{lma-c1b}). We can therefore apply the conclusions of \cite[Prop.\ 4.2]{Mey89} (see Lemma~\ref{lma-c1b} in our Section~\ref{sec-4.3.1}) to the $m$-step version of $\{(\Z_t, \theta_t^\alpha)\}$ here, and the result is the following proposition: 

\begin{prop} \label{prop-c4}
Under Assumption~\ref{cond-bpolicy}, for each $m \geq 1$, the $m$-step version of $\{(\Z_t, \theta_t^\alpha)\}$ has a unique invariant probability measure $\mu^{(m)}$, and the occupation probability measures $\mu^{(m)}_{(z,\theta),t}, t \geq 1$, as defined by (\ref{eq-om}), converge weakly to $\mu^{(m)}$ almost surely, for each initial condition $(z, \theta) \in \Zs \times \H$ of $(\Z_0, \theta_0^\alpha)$.
\end{prop}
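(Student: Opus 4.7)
The plan is to apply the ergodic theorem for weak Feller Markov chains (Lemma~\ref{lma-c1b}) to the $m$-step version $\{X_t\}$ of $\{(\Z_t,\theta_t^\alpha)\}$ on the complete separable metric space $\X=(\Zs\times\H)^m$. Three hypotheses must be checked: $\{X_t\}$ is weak Feller, it has a unique invariant probability measure, and the tightness conditions (ii)--(iii) of Lemma~\ref{lma-c1b} hold. All the ingredients are already present; the proof is essentially assembly.

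First, I would observe that $\{X_t\}$ inherits the weak Feller property from the one-step chain $\{(\Z_t,\theta_t^\alpha)\}$ established in Lemma~\ref{lma-c0}. Indeed, for bounded continuous $f$ on $\X$ and a state $x=(y_0,\ldots,y_{m-1})\in\X$, the transition operator of $\{X_t\}$ acts as
$$Pf(x)=\E\big[f(y_1,\ldots,y_{m-1},Y)\,\big|\,(\Z_0,\theta_0^\alpha)=y_{m-1}\big],$$
where $Y$ denotes one step of the underlying chain from $y_{m-1}$; since the function $y_{m-1}\mapsto\E[f(y_1,\ldots,y_{m-1},\cdot)(Y)\mid y_{m-1}]$ is continuous by the weak Feller property of the one-step chain and composition with the continuous shift $(y_0,\ldots,y_{m-1})\mapsto(y_1,\ldots,y_{m-1})$ preserves continuity, $Pf$ is continuous on $\X$.

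Second, I would establish uniqueness of the invariant probability measure for $\{X_t\}$. Any $\mu^{(m)}\in\mathcal{M}_\alpha^m$ is a stationary $m$-dimensional distribution of $\{(\Z_t,\theta_t^\alpha)\}$: all its one-dimensional marginals must coincide with a single invariant measure $\mu$ of the one-step chain (because the shift structure built into the $m$-step transition forces the distribution of each coordinate to be invariant under the one-step kernel and to agree with its neighbor), and $\mu^{(m)}$ is then recovered uniquely from $\mu$ by $m-1$ applications of the one-step transition kernel. By Proposition~\ref{prop-c3}, $\mu$ is unique, hence so is $\mu^{(m)}$.

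Third, Lemma~\ref{lma-c2} has already verified conditions (ii)--(iii) of Lemma~\ref{lma-c1b} for $\{X_t\}$. Combining the three verified hypotheses, Lemma~\ref{lma-c1b} yields that the occupation probability measures of $\{X_t\}$ converge weakly, almost surely, to $\mu^{(m)}$, from every initial state in $\X$. The desired conclusion for the measures $\mu^{(m)}_{(z,\theta),t}$ (which start the $m$-step chain from $(\Z_0,\theta_0^\alpha)=(z,\theta)$ rather than from a full state of $\X$) follows immediately, since the first $m-1$ terms of the averages defining $\mu^{(m)}_{(z,\theta),t}$ contribute $O(1/t)$ and are asymptotically negligible, so $\mu^{(m)}_{(z,\theta),t}$ and the occupation measures of $\{X_t\}$ started from the full state $(Z_0,\theta_0^\alpha,\ldots,Z_{m-1},\theta^\alpha_{m-1})$ share the same almost-sure weak limits. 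There is no real obstacle: the only item requiring a brief argument is the bijection between invariant measures in Step~2, which is routine book-keeping.
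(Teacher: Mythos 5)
Your proposal is correct and follows essentially the same route as the paper: the paper likewise notes that the $m$-step chain is weak Feller (by the argument of Lemma~\ref{lma-c0}), obtains uniqueness of its invariant probability measure from Prop.~\ref{prop-c3} via the correspondence between $\mathcal{M}_\alpha$ and $\mathcal{M}_\alpha^m$, and then invokes Lemma~\ref{lma-c1b} together with the tightness conditions already verified in Lemma~\ref{lma-c2}. The only difference is that you spell out the bijection between invariant measures and the negligibility of the first $m-1$ terms in $\mu^{(m)}_{(z,\theta),t}$, which the paper handles by brief remarks earlier in Section~\ref{sec-4.3.2}.
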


With Prop.\ \ref{prop-c4} we can proceed to prove the second part of Theorems~\ref{thm-c2} and~\ref{thm-c2b}. Given that we have already established their first part in the previous subsection, the arguments for their second part are the same for both theorems and are given below.
The proof is similar to that for Theorem~\ref{thm-c1}(ii), except that here, instead of working with the averaged probability measures $\big\{\bar P_{(z,\theta)}^{(m,k)}\big\}$, Prop.~\ref{prop-c4} allows us to work with the occupation probability measures.

\begin{proof}[Proof of the second part of both Theorem~\ref{thm-c2} and Theorem~\ref{thm-c2b}]
We suppress the superscript $\alpha$ of $\theta_t^\alpha$ in the proof. By Prop.~\ref{prop-c4}, $\{(\Z_t, \theta_t)\}$ has a unique invariant probability measure $\mu_\alpha$, and its $m$-step version has a corresponding unique invariant probability measure $\mu^{(m)}_\alpha$. We prove first the statement that for each initial condition $(z, \theta) \in \Zs \times \H$, almost surely,
\begin{equation} \label{eq-c40}
 \liminf_{t \to \infty} \, \frac{1}{t} \sum_{k=0}^{t-1} \I\Big(\sup_{k \leq j <  k+m} \big| \theta_j - \theta^* \big| < \delta \Big) \geq \bar{\mu}_\alpha^{(m)}\big([N'_\delta(\theta^*)]^m \big),
\end{equation} 
where $\bar{\mu}_\alpha^{(m)}$ is the unique element in $\bar{\mathcal{M}}^m_\alpha$, and $N'_\delta(\theta^*)$ is the open $\delta$-neighborhood of $\theta^*$.
For each $t$, by the definition (\ref{eq-om}) of the occupation probability measure $\mu^{(m)}_{(z,\theta),t}$, the average in the left-hand side above is the same as 
$\mu^{(m)}_{(z,\theta),t}(D_{\delta})$, where  $D_{\delta} = \big\{ \big(z^1, \theta^1, \ldots, z^m, \theta^m \big) \in (\Zs \times \H)^m \ \big| \,  \sup_{1 \leq j \leq m} \big|\theta^j - \theta^* \big| < \delta \big\}$. 
By Prop.\ \ref{prop-c4}, $\Pr_{(z, \theta)}$-almost surely, $\{\mu^{(m)}_{(z,\theta),t}\}$ converges weakly to $\mu^{(m)}_\alpha$, and 
therefore, except on a null set of sample paths, we have by \cite[Theorem 11.1.1]{Dud02} that for the open set $D_\delta$,
\begin{equation} 
  \liminf_{t \to \infty} \mu^{(m)}_{(z,\theta),t}(D_\delta) \geq \mu^{(m)}_\alpha(D_\delta) = \bar{\mu}^{(m)}_\alpha\big([N'_\delta(\theta^*)]^m \big). \notag
\end{equation}  
This proves (\ref{eq-c40}).

We now prove the statement that for each initial condition $(z, \theta) \in \Zs \times \H$, almost surely,
\begin{equation} \label{eq-prf-c40b}
  \limsup_{t \to \infty} \big| \bar \theta_t - \theta^* \big|  \leq \delta \, \kappa_{\alpha} + 2 r_\H\, (1 - \kappa_{\alpha}), \qquad \text{where} \ \kappa_\alpha = \bar{\mu}_\alpha\big(N'_\delta(\theta^*) \big),
\end{equation}  
and $\bar{\mu}_\alpha$ is the marginal of $\mu_\alpha$ on $\H$.
The statement is trivially true if $\delta \geq 2 r_\H$, so consider the case $\delta < 2r_\H$. Fix an initial condition $(z, \theta) \in \Zs \times \H$ for $(Z_0, \theta_0)$,
and let $\{\mu_{(z,\theta),t}\}$ be the corresponding occupation probability measures of $\{(Z_t, \theta_t)\}$.
For the averaged sequence $\{\bar \theta_t\}$, by convexity of the norm $|\cdot|$,
\begin{equation} \label{eq-prf-c41}
  \big| \bar \theta_t - \theta^* \big|  \leq  \frac{1}{t} \sum_{k=0}^{t-1} \,  | \theta_k - \theta^* |.
\end{equation}  
We have
\begin{align}
   \frac{1}{t} \sum_{k=0}^{t-1}   | \theta_k - \theta^* | &  \leq \frac{1}{t} \sum_{k=0}^{t-1}   | \theta_k - \theta^* |  \cdot  \I \big( \theta_k \in N'_\delta(\theta^*) \big)  
   + \frac{1}{t} \sum_{k=0}^{t-1}  | \theta_t - \theta^* |  \cdot  \I \big( \theta_k \not\in N'_\delta(\theta^*) \big)  \notag \\
   & \leq \delta \cdot \mu_{(z,\theta),t}(D_{\delta})  + 2 r_\H \cdot \big( 1 - \mu_{(z,\theta),t}(D_{\delta}) \big), \label{eq-prf-c42}
\end{align}
where $D_{\delta} = \big\{ (z^1, \theta^1 ) \in \Zs \times \H \ \big| \,  | \theta^1 - \theta^* | < \delta \big\}$. 
By Prop.\ \ref{prop-c4}, $\Pr_{(z, \theta)}$-almost surely, $\{\mu_{(z,\theta),t}\}$ converges weakly to $\mu_\alpha$. Therefore, except on a null set of sample paths, we have by \cite[Theorem 11.1.1]{Dud02} that for the open set $D_\delta$,
\begin{equation} \label{eq-prf-c43}
  \liminf_{t \to \infty} \mu_{(z,\theta),t}(D_\delta) \geq \mu_\alpha(D_\delta) = \bar{\mu}_\alpha\big(N'_\delta(\theta^*) \big).
\end{equation}  
Combining the three inequalities (\ref{eq-prf-c41})-(\ref{eq-prf-c43}), and using also the relation $\delta < 2 r_\H$, we obtain that (\ref{eq-prf-c40b}) holds almost surely for each initial condition $(z, \theta) \in \Zs\times \H$. This completes the proof.
\end{proof}

\begin{rem}[on the role of perturbation again] \label{rmk-c2b}
As mentioned before Prop.\ \ref{prop-c3}, our purpose of perturbing the constrained ETD algorithms is to guarantee that the Markov chain $\{(\Z_t, \theta_t^\alpha)\}$ has a unique invariant probability measure. Without the perturbation, this cannot be ensured, so we cannot apply the ergodic theorem given in Lemma~\ref{lma-c1b} to exploit the convergence of occupation probability measures, as we did in the preceding proof, even though $\{(\Z_t, \theta_t^\alpha)\}$ satisfies the remaining two conditions required by that ergodic theorem (cf.~Lemma~\ref{lma-c2}, Section~\ref{sec-4.3.2}). 

In connection with this discussion, let us clarify a point. We know that the occupation probability measures of $\{\Z_t\}$ converge weakly to its unique invariant probability measure $\zeta$ almost surely for each initial condition of $\Z_0$ (Theorem~\ref{thm-2.1}). But this fact alone cannot rule out the possibility that $\{(\Z_t, \theta_t^\alpha)\}$ has multiple invariant probability measures and that its occupation probability measures do not converge for some initial condition $(z, \theta)$.

Finally, another property of weak Feller Markov chains and its implication for our problem are worth noting here. By \cite[Prop.\ 4.1]{Mey89}, for a weak Feller Markov chain $\{X_t\}$, provided that an invariant probability measure $\mu$ exists, we have that for a set of initial conditions $x$ with $\mu$-measure $1$, the occupation probability measures $\{\mu_{x,t}\}$ converge weakly, $\Pr_x$-almost surely, to an invariant probability measure $\mu_x$ that depends on the initial condition. Thus, for the unperturbed algorithms (\ref{eq-emtd-const0}), (\ref{eq-emtd-const1}) and (\ref{eq-emtd-const2}), despite the possibility of $\{(\Z_t, \theta_t^\alpha)\}$ having multiple invariant probability measures, the preceding proof can be applied to those initial conditions from which the occupation probability measures converge almost surely. 
In particular, this argument leads to the following conclusion. In the case of the algorithm (\ref{eq-emtd-const0}), (\ref{eq-emtd-const1}) or (\ref{eq-emtd-const2}), under the same conditions as in Theorem~\ref{thm-c1} or~\ref{thm-c1b}, it holds for any invariant probability measure $\mu$ of $\{(\Z_t, \theta_t^\alpha)\}$ that for each initial condition $(z,\theta)$ from some set of initial conditions with $\mu$-measure $1$,
$$\qquad \limsup_{t \to \infty}  \big| \bar \theta_t^\alpha - \theta^* \big|  \leq \delta \, \kappa_{\alpha} + 2 r_\H\, (1 - \kappa_{\alpha}) \quad \text{$\Pr_{(z,\theta)}$-a.s.},$$
where $\kappa_{\alpha} = \inf_{\mu \in \bar{\mathcal{M}}_\alpha} \mu(N'_\delta(\theta^*))$. 
The limitation of this result, however, is that the set of initial conditions involved is unknown and can be small. \qed
\end{rem}

\section{Discussion} \label{sec-5}

In this section we discuss direct applications of our convergence results to ETD($\lambda$) under relaxed conditions and to two other algorithms, the off-policy TD($\lambda$) algorithm and the ETD($\lambda, \beta$) algorithm proposed by Hallak et al.~\citep{etd-errbd2}. We then discuss several open issues to conclude the paper.

\subsection{The Case without Assumption~\ref{cond-features}} \label{sec-5.1}

Let Assumption~\ref{cond-bpolicy} hold. Recall from Section~\ref{sec-2.2b} that ETD($\lambda$) aims to solve the equation $C \theta + b = 0$, where
$$ b   =  \Fe^\top  \bM \,  \rl, \qquad \quad C   =  - \Fe^\top  \G \,  \Fe \quad \text{with} \ \ \ \G = \bM (I - \PL).$$
In this paper we have focused on the case where Assumption~\ref{cond-features} holds and $C$ is negative definite (Theorem~\ref{thm-matrix}). If Assumption~\ref{cond-features} does not hold, then either there are less than $n$ emphasized states (i.e., states $s$ with $\bM_{ss}>0$), or the feature vectors of emphasized states are not rich enough to contain $n$ linearly independent vectors. In either case the function approximation capacity is not fully utilized. It is hence desirable to fulfill Assumption~\ref{cond-features} by adding more states with positive interest weights $\i(s)$ or by enriching the feature representation. 

Nevertheless, suppose Assumption~\ref{cond-features} does not hold (in which case $C$ is negative semidefinite~\cite{SuMW14}). This essentially has no effects on the convergence properties of the constrained or unconstrained ETD($\lambda$) algorithms, because of the emphatic weighting scheme (\ref{eq-td3})-(\ref{eq-td1}), as we explain now. 

Let there be at least one state $s$ with interest weight $\i(s) > 0$ (the case is vacuous otherwise). 
Partition the state space into the set of emphasized states and the set of non-emphasized states:
$$\J_1= \{ s \in \S \mid \bM_{ss} > 0 \}, \qquad \J_0= \{ s \in \S \mid \bM_{ss} = 0 \}. $$
Corresponding to the partition, by rearranging the indices of states if necessary, we can write 
$$ \Fe = \left[ \begin{matrix} \Fe_1 \\ \Fe_0 \end{matrix} \right],  \qquad \rl =  \left[ \begin{matrix} r_1 \\ r_0 \end{matrix} \right], 
\qquad \bM = \left[ \begin{matrix} \hat M & 0_{|\J_1| \times |\J_0|} \\
           0_{|\J_0| \times |\J_1|} & 0_{|\J_0| \times |\J_0|} \end{matrix} \right],$$
where $0_{m\times m'}$ denotes an $m \times m'$ zero matrix, $\hat M$ is a diagonal matrix with $\bM_{ss}, s \in \J_1$, as its diagonals.
Let $\hat Q$ be the sub-matrix of $\PL$ that consists of the entries whose row/column indices are in $\J_1$.
For the equation $C \theta + b =0$, clearly $b = \Fe_1^\top \hat M r_1$. Consider now the matrix $C$.
It is shown in the proof of Prop.\ C.2 in \cite{yu-etdarx} that 
$\G$ has a block-diagonal structure with respect to the partition $\{\J_1, \J_0\}$,
$$ \G = \left[ \begin{matrix} \hat \G & 0_{|\J_1| \times |\J_0|} \\
           0_{|\J_0| \times |\J_1|} & 0_{|\J_0| \times |\J_0|} \end{matrix} \right],$$
where the block corresponding to $\J_0$ is a zero matrix as shown above, and the block $\hat \G$ corresponding to $\J_1$ is a positive definite matrix given by
\begin{equation} \label{eq-expG1}
\hat \G = \hat M ( I - \hat Q),
\end{equation}
and $\hat M$ can be expressed explicitly as
\begin{equation} \label{eq-expG2}
  diag(\hat M)= {\bi^1}^{\top} (I - \hat Q)^{-1}, \qquad \bi^1 \in \re^{|\J_1|}, \  \bi^1(s) = d_{\pi^o}(s) \cdot  \i(s), \ s \in \J_1.
\end{equation} 
Thus the matrix $C$ has a special structure:

\begin{thm}[structure of the matrix $C$; {\cite[Appendix C.2, p.\ 41-44]{yu-etdarx}}] \label{thm-5.1}
Let Assumption~\ref{cond-bpolicy} hold, and let $\i(s) > 0$ for at least one state $s \in \S$. Then 
$$ C = - \Fe_1^\top \hat \G \Fe_1, \qquad \text{where} \ \hat \G = \hat M ( I - \hat Q) \  \text{is positive definite.}$$
\end{thm}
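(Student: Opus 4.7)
The plan is in three steps, with the positive-definiteness argument being the main obstacle. First, I show that $\J_1$ is forward-invariant under $\PL$, i.e., $\PL(s,s')=0$ whenever $s\in\J_1$ and $s'\in\J_0$. From the expression $diag(\bM)=\bi^{\top}(I-\PL)^{-1}$ in (2.13) and the series $(I-\PL)^{-1}=\sum_{k\geq 0}(\PL)^k$, $\bM_{ss}>0$ iff there exist $k\geq 0$ and $\tilde s$ with $\bi(\tilde s)>0$ such that $[(\PL)^k]_{\tilde s, s}>0$. If $s\in\J_1$, pick such $\tilde s$ and $k$; then for any $s'$ with $\PL(s,s')>0$, $[(\PL)^{k+1}]_{\tilde s,s'}>0$, forcing $\bM_{s's'}>0$ and hence $s'\in\J_1$. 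Consequently $\G=\bM(I-\PL)$ has zero rows on $\J_0$ (because $\bM$ vanishes there) and the $\J_1\times\J_0$ block of $\G$ is zero since $\bM_{ss}\PL(s,s')=0$ for $s\in\J_1$, $s'\in\J_0$. This yields the block-diagonal structure of $\G$ with $\J_1$-block $\hat\G=\hat M(I-\hat Q)$.

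Next, I record the explicit formula (5.3) for $\hat M$. Forward-invariance of $\J_1$ implies $[(\PL)^k]_{s',s}=[\hat Q^k]_{s',s}$ for all $s,s'\in\J_1$, and any $\tilde s$ with $\bi(\tilde s)>0$ already lies in $\J_1$ (since $\bM_{\tilde s\tilde s}\geq \bi(\tilde s)>0$ by (2.13)), so $\bi$ vanishes on $\J_0$. Restricting (2.13) to $\J_1$ gives $diag(\hat M)=(\bi^1)^{\top}(I-\hat Q)^{-1}$, equivalently $m^{\top}(I-\hat Q)=(\bi^1)^{\top}$, where $m:=diag(\hat M)>0$.

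The main task is showing $\hat\G$ is positive definite, i.e.\ $x^{\top}\hat\G x>0$ for every nonzero $x\in\re^{|\J_1|}$. I would symmetrize and derive the identity
\[ x^{\top}(\hat\G+\hat\G^{\top})x \;=\; \sum_{s\in\J_1}m_s(1-\alpha_s)\,x_s^2 \;+\; \sum_{s\in\J_1}\bi^1(s)\,x_s^2 \;+\; \sum_{s,s'\in\J_1} m_s \hat Q_{ss'}(x_s-x_{s'})^2, \]
where $\alpha_s:=\sum_{s'}\hat Q_{ss'}\leq 1$ by substochasticity of $\PL$. This follows by using $2x_s x_{s'}=x_s^2+x_{s'}^2-(x_s-x_{s'})^2$ inside the cross term $2\sum_{s,s'}m_s \hat Q_{ss'}x_s x_{s'}$ together with $m^{\top}\hat Q=m^{\top}-(\bi^1)^{\top}$ from Step 2. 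Each of the three terms is nonnegative; suppose their sum vanishes. The second term forces $x_{\tilde s}=0$ for every $\tilde s\in\J_1$ with $\bi^1(\tilde s)>0$, and the third forces $x_s=x_{s'}$ whenever $\hat Q_{ss'}>0$. But by definition of $\J_1$, every $s\in\J_1$ is reachable under $\PL$ from some $\tilde s$ with $\bi(\tilde s)>0$; by forward-invariance of $\J_1$ that reaching path stays entirely in $\J_1$, so propagating equality along it from $x_{\tilde s}=0$ yields $x_s=0$. Hence $x=0$, a contradiction, so $\hat\G+\hat\G^{\top}\succ 0$ and $\hat\G$ is positive definite. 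The crux is the three-term identity together with the reachability/invariance argument; structurally this is an emphatic analogue of the classical Tsitsiklis--Van Roy proof for on-policy TD, with $m_s$ playing the role of a stationary probability and the boundary term $\sum_s\bi^1(s)x_s^2$ replacing the role of a discount factor in supplying strict positivity.
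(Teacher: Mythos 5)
Your proof is correct. The paper itself does not reproduce a proof of this theorem --- it defers to \cite[Appendix C.2]{yu-etdarx} and only records the structural facts --- but your Steps 1--2 recover exactly those facts by the same route the paper describes: the characterization $\bM_{ss}>0 \Leftrightarrow s$ is reachable under $\PL$ from some $\tilde s$ with $\bi(\tilde s)>0$, the resulting forward-invariance of $\J_1$, the block-diagonality of $\G$ (zero rows on $\J_0$ from $\bM$, zero $(\J_1,\J_0)$ block from invariance), and the restricted formula $diag(\hat M)=(\bi^1)^\top(I-\hat Q)^{-1}$. Where you genuinely deviate is the positive-definiteness step: the cited development leans on the observation (going back to \cite{SuMW14}) that $\hat M(I-\hat Q)$ has positive diagonal, nonpositive off-diagonal entries, nonnegative row sums, and column sums equal to $(\bi^1)^\top$, and then invokes a diagonal-dominance/irreducibility lemma to get strictness; you instead prove the strictness directly via the explicit quadratic-form identity
$x^\top(\hat\G+\hat\G^\top)x=\sum_s m_s(1-\alpha_s)x_s^2+\sum_s\bi^1(s)x_s^2+\sum_{s,s'}m_s\hat Q_{ss'}(x_s-x_{s'})^2$,
which is valid (using $m^\top\hat Q=m^\top-(\bi^1)^\top$ and substochasticity of $\hat Q$), and then propagate $x=0$ from the states with $\bi^1>0$ along $\hat Q$-paths, which exist for every $s\in\J_1$ by your Step~1 characterization. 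This identity is in effect the proof of the diagonal-dominance lemma specialized to this matrix, so the two arguments are mathematically equivalent; yours has the advantage of being self-contained and of making transparent exactly where strict positivity comes from (the boundary term $\sum_s\bi^1(s)x_s^2$ plus connectivity), at the cost of a slightly longer computation. One small point worth stating explicitly: the hypothesis $\i(s)>0$ for some $s$, together with $d_{\pi^o}>0$ from Assumption~\ref{cond-bpolicy}(ii), is what guarantees $\bi^1\not=0$, without which the second term could not seed the propagation.
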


Let $\text{range}(A)$ denote the range space of a matrix $A$. By the positive definiteness of the matrix $\hat \G$ given in the preceding theorem, the negative semidefinite matrix $C$ possesses the following properties (we omit the straightforward proof):

\begin{prop} \label{prop-5.1}
Let Assumption~\ref{cond-bpolicy} hold, and let $\i(s) > 0$ for at least one state $s \in \S$. Then the matrix $C$ satisfies that
\begin{enumerate}
\item[\rm (i)] $\text{range}(C) = \text{range}(C^\top) = \text{span}\{\fe(s) \mid s \in \J_1\}$; and
\item[\rm (ii)] there exists $c > 0$ such that for all $x \in \text{span}\{\fe(s) \mid s \in \J_1\}$, $x^\top C x \leq - c \, | x |^2$.
\end{enumerate}
\end{prop}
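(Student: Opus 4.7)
\medskip
\noindent\textbf{Proof proposal.} My plan is to reduce both statements to a single structural fact: $\ker(C) = \ker(C^\top) = \ker(\Fe_1)$. Once this is in hand, (i) follows from standard range/kernel duality, and (ii) follows from a compactness argument on the unit sphere of $V := \mathrm{span}\{\fe(s) \mid s \in \J_1\}$.

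First I would record that, although $\hat \G = \hat M(I - \hat Q)$ is not assumed symmetric, its positive definiteness (in the sense $y^\top \hat \G y > 0$ for all $y \neq 0$) is equivalent to the symmetric part $\hat \G_s := (\hat \G + \hat \G^\top)/2$ being symmetric positive definite, since $y^\top \hat \G y = y^\top \hat \G_s y$. In particular $\hat \G$ has trivial kernel, and for every $y \in \re^{|\J_1|}$ one has $-y^\top \hat \G y \leq -c_0 |y|^2$ for some $c_0 > 0$, where $c_0$ is the smallest eigenvalue of $\hat \G_s$.

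Next, from $C = -\Fe_1^\top \hat \G \Fe_1$ (Theorem~\ref{thm-5.1}), the inclusion $\ker(\Fe_1) \subseteq \ker(C)$ is immediate. Conversely, if $Cx = 0$ then $0 = x^\top C x = -(\Fe_1 x)^\top \hat \G (\Fe_1 x) = -(\Fe_1 x)^\top \hat \G_s (\Fe_1 x)$, which forces $\Fe_1 x = 0$ by positive definiteness of $\hat \G_s$. The same scalar identity, applied with $C^\top$ in place of $C$ and using $y^\top C^\top y = y^\top C y$, gives $\ker(C^\top) = \ker(\Fe_1)$ as well. Taking orthogonal complements in $\rn$ and using $\mathrm{range}(A) = \ker(A^\top)^\perp$ for any real matrix $A$, we get
\begin{equation*}
\mathrm{range}(C) = \ker(C^\top)^\perp = \ker(\Fe_1)^\perp = \mathrm{range}(\Fe_1^\top) = V,
\end{equation*}
and analogously $\mathrm{range}(C^\top) = V$, proving (i).

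For (ii), consider the continuous function $f(x) := -x^\top C x = (\Fe_1 x)^\top \hat \G (\Fe_1 x)$ on $V$. Since $V = \ker(\Fe_1)^\perp$, the restriction $\Fe_1|_V$ is injective, so $\Fe_1 x \neq 0$ for every nonzero $x \in V$, and hence $f(x) > 0$ on $V \setminus \{0\}$. Restricting to the unit sphere $\{x \in V : |x| = 1\}$, which is compact, $f$ attains a positive minimum $c > 0$; by homogeneity $f(x) \geq c |x|^2$ for all $x \in V$, i.e.\ $x^\top C x \leq -c |x|^2$, as desired.

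The one slightly delicate point is the non-symmetry of $\hat \G$; the rest is routine linear algebra. I would flag explicitly at the start that positive definiteness is being used only through the symmetric-part identity $y^\top \hat \G y = y^\top \hat \G_s y$, after which no further asymmetry issues arise.
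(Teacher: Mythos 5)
Your proof is correct. Note that the paper itself omits the proof of this proposition, stating only that it is a straightforward consequence of the positive definiteness of $\hat \G$ from Theorem~\ref{thm-5.1}; your argument supplies exactly that intended route, and the two points that actually need care---that positive definiteness of the non-symmetric $\hat \G$ enters only through the scalar identity $y^\top \hat \G y = y^\top \hat \G_s y$ (so $x^\top C x = 0$ forces $\Fe_1 x = 0$, giving $\ker(C)=\ker(C^\top)=\ker(\Fe_1)$ and hence both range identities via $\mathrm{range}(A)=\ker(A^\top)^\perp$ and $\mathrm{range}(\Fe_1^\top)=\mathrm{span}\{\fe(s) \mid s \in \J_1\}$), and that the uniform constant $c$ in (ii) comes from compactness of the unit sphere of the subspace together with injectivity of $\Fe_1$ there---are both handled correctly.
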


Two observations then follow immediately: 
\begin{enumerate}
\item[(i)] Since $b = \Fe_1^\top \hat M r_1\in \text{span}\{\fe(s) | s \in \J_1\}$, Prop.~\ref{prop-5.1}(i) shows that the equation $C \theta + b = 0$ admits a solution, and a unique one in $\text{span}\{\fe(s) \mid s \in \J_1\}$, which we denote by $\theta^*$.
\footnote{\label{footnote-appr-sd}From the structures of $\G$, $\PL$, $\hat Q$ and $\hat M$ shown in \cite[Appendix C.2, p.\ 41-44]{yu-etdarx}, which give rise to (\ref{eq-expG1})-(\ref{eq-expG2}), we also have the following facts. The approximate value function $v = \Fe_1 \theta^*$ for the emphasized states $\J_1$ is the unique solution of the projected Bellman equation $v = \Pi (r_1 + \hat Q v)$, where $\Pi$ is the projection onto the column space of $\Fe_1$ with respect to the weighted Euclidean norm on $\re^{|\J_1|}$ defined by the weights $\bM_{ss}, s \in \J_1$ (the diagonals of $\hat M$). The equation $v = r_1 + \hat Q v$ is indeed a generalized Bellman equation for the emphasized states only, and has $v_\pi(s), s \in \J_1$, as its unique solution. Then for the emphasized states, the relation between the approximate value function $\Fe_1 \theta^*$ and $v_\pi$ on $\J_1$, in particular the approximation error, can again be characterized using the oblique projection viewpoint \cite{bruno-oblproj}, similar to the case with Assumption~\ref{cond-features} discussed in Section~\ref{sec-2.2b}.}
\item[(ii)] Prop.~\ref{prop-5.1}(ii) shows that $C$ acts like a negative definite matrix on the space of feature vectors, $\text{span}\{\fe(s) | s \in \J_1\}$, that the ETD($\lambda$) algorithms naturally operate on.\footnote{Start ETD($\lambda$) from a state $S_0$ with $\i(S_0) > 0$. It can be verified that the emphatic weighting scheme dictates that if $S_t \in \J_0$, then the emphasis weight $\M_t$ for that state must be zero. Consequently, $\e_t$ is a linear combination of the features of the emphasized states and the initial $\e_0$. So when $\e_0 \in \text{span}\{\fe(s) | s \in \J_1\}$, $\e_t \in \text{span}\{\fe(s) | s \in \J_1\}$ always, and if in addition $\theta_0 \in \text{span}\{\fe(s) | s \in \J_1\}$, then $\theta_t \in \text{span}\{\fe(s) | s \in \J_1\}$ always. This is very similar to the case of TD($\lambda$) with possibly linearly dependent features discussed in \cite{tr-disc}.}
\end{enumerate}
We remark that for an arbitrary negative semidefinite matrix $C$, neither of these conclusions holds. They hold here as direct consequences of the positive definiteness of the matrix $\hat \G$ that underlies $C$, and this positive definiteness property is due to the emphatic weighting scheme (\ref{eq-td3})-(\ref{eq-td1}) employed by ETD($\lambda$). 

Now let us discuss the behavior of the constrained ETD($\lambda$) algorithms starting from some state $S_0$ of interest (i.e., $\i(S_0) > 0$), in the absence of Assumption~\ref{cond-features}. Recall that earlier we did not need Assumption~\ref{cond-features} when applying the two general convergence theorems from \cite{KuY03}, and we used the negative definiteness of $C$ implied by this assumption only near the end of our proofs to get the solution properties of the mean ODE associated with each algorithm. In the absence of Assumption~\ref{cond-features}, for the unperturbed algorithms (\ref{eq-emtd-const0}), (\ref{eq-emtd-const1}) and (\ref{eq-emtd-const2}), we can simply restrict attention to the subspace $\text{span}\{\fe(s) | s \in \J_1\}$ and use the property in Prop.~\ref{prop-5.1}(ii) in lieu of negative definiteness. 
After all, the $\theta$-iterates of these algorithms always lie in the span of the feature vectors if the initial $\theta_0, \e_0 \in \text{span}\{\fe(s) | s \in \J_1\}$ and in the case of the two biased algorithms (\ref{eq-emtd-const1}) and (\ref{eq-emtd-const2}), if the function $\psi_K(x)$ does not change the direction of $x$. On the subspace $\text{span}\{\fe(s) | s \in \J_1\}$, in view of Prop.~\ref{prop-5.1}(ii), the function $| \theta - \theta^*|^2$ serves again as a Lyapunov function for analyzing the ODE solutions in exactly the same way as before. Thus, in the absence of Assumption~\ref{cond-features}, for the algorithms (\ref{eq-emtd-const0}), (\ref{eq-emtd-const1}) and (\ref{eq-emtd-const2}) that set $\theta_0, \e_0$ and $\psi_K$ as just described, and for $r_\H > |b|/c$ where $c$ is as in Prop.~\ref{prop-5.1}(ii), the conclusions of Theorems \ref{thm-dim-stepsize}-\ref{thm-c1b} in Section~\ref{sec-3} continue to hold with $N_\delta(\theta^*)$ or $N'_\delta(\theta^*)$ replaced by $N_\delta(\theta^*) \cap \text{span}\{\fe(s) | s \in \J_1\}$ or  $N'_\delta(\theta^*) \cap \text{span}\{\fe(s) | s \in \J_1\}$.

The same is true for the almost sure convergence of the unconstrained ETD($\lambda$) algorithm (\ref{eq-emtd0}) under diminishing stepsize: with $\i(S_0) > 0$ and $\theta_0, \e_0 \in \text{span}\{\fe(s) | s \in \J_1\}$, the conclusion of \cite[Theorem 2.2]{yu-etdarx} continues to hold in the absence of Assumption~\ref{cond-features}; that is, for $\alpha_t = O(1/t)$ with $\tfrac{\alpha_t - \alpha_{t+1}}{\alpha_t} = O(1/t)$, $\theta_t \asto \theta^*$.

It can be seen now that without Assumption~\ref{cond-features}, complications can only arise through initializing the algorithms outside the desired subspace. We discuss such situations briefly, although they do not seem natural and can be easily avoided. Suppose for some reason we give the initial $\theta_0, \e_0$ a component perpendicular to $\text{span}\{\fe(s) | s \in \J_1\}$. Let $\i(S_0) > 0$. The behavior of the unconstrained ETD($\lambda$) algorithm (\ref{eq-emtd0}) is easy to describe. 
For each $t$, write $\theta_t = \theta_t^{(1)} + \theta_t^{(0)}$ and $\e_t = \e_t^{(1)} + \e_t^{(0)}$, where 
$\theta_t^{(1)}, \e_t^{(1)} \in \text{span}\{\fe(s) | s \in \J_1\}$ and $\theta_t^{(0)}, \e_t^{(0)} \perp \text{span}\{\fe(s) | s \in \J_1\}$. The latter components do not affect the evolution of $\{\theta_t^{(1)}\}$ and $\{\e_t^{(1)}\}$, and they also do not affect the approximate value functions $\Fe_1 \theta_t = \Fe_1 \theta_t^{(1)}$ for the emphasized states. For the component process $\{\theta_t^{(1)}\}$, by the discussion earlier, for stepsize $\alpha_t= O(1/t)$ with $\tfrac{\alpha_t - \alpha_{t+1}}{\alpha_t} = O(1/t)$ , $\theta_t^{(1)} \asto \theta^*$. If $\e_0^{(0)} = 0$, then clearly the components $\e_t^{(0)} =0$ and $\theta_t^{(0)} = \theta_0^{(0)}$ through out the iterations. If $\e_0^{(0)} \not= 0$, then by relating to the case where this component is zero and applying Prop.~\ref{prp-2}, we have $\e_t^{(0)} \asto 0$. In the on-policy case with $\gamma(s)\lambda(s) < 1$ for all $s$, the magnitude of $\e_t^{(0)}$ in fact shrinks exponentially fast and consequently it can be shown that $\{\theta_t^{(0)}\}$ converges to a point depending on the sample path. In the general off-policy case, depending on how fast $\e_t^{(0)}$ shrinks, $\theta_t^{(0)}$ may not converge, we think, although this does not affect the approximate value function for the emphasized states, as noted earlier.

For the constrained ETD($\lambda$) algorithms, if we decompose their iterates into two components as above, the evolution of $\{\theta_t^{(1)}\}$ and $\{\e_t^{(1)}\}$ can be affected by the components perpendicular to $\text{span}\{\fe(s) | s \in \J_1\}$ through the scaling performed by $\Pi_\H$ or $\psi_K$ (assuming again that $\psi_K(x)$ maintains the direction of $x$). 
Nevertheless, the asymptotic behavior of the algorithms is still characterized by the limit set of their respective mean ODEs. 
For the algorithm (\ref{eq-emtd-const0}), the mean ODE is $\dot{x} = \bar h(x) + z, z \in \mathcal{N}_\H(x)$, where $\bar h(x) = C x + b = C(x - \theta^*)$. 
Let $r_\H > |b|/c$ where $c$ is as in Prop.~\ref{prop-5.1}(ii). Let $(x(\tau), z(\tau))$ be a solution of this ODE with $x(0) \in \H$. Decompose $x$ as $x(\tau) = x^{(1)}(\tau) + x^{(0)}(\tau)$, with $x^{(1)}$ lying inside the subspace $\text{span}\{\fe(s) | s \in \J_1\}$ and $x^{(0)}$ perpendicular to that subspace.
Then since $\bar h(x) = C (x^{(1)} - \theta^*) \in \text{span}\{\fe(s) | s \in \J_1\}$ by Prop.~\ref{prop-5.1}(i), based on the Euclidean geometry and Prop.~\ref{prop-5.1}(ii), we observe that for $V_1(\tau) = |x^{(1)}(\tau) - \theta^*|^2$ and $V_0(\tau) = |x^{(0)}(\tau)|^2$, we have $\dot{V}_1(\tau) < 0$ whenever $x^{(1)}(\tau) \not= \theta^*$, and $\dot{V}_0(\tau) \leq 0$ always and $\dot{V}_0(\tau) < 0$ whenever $z(\tau) \not=0$. Following this observation it can be worked out that the limit set $L_\H \subset  \big\{ \theta^* + y \mid y \perp \text{span}\{\fe(s) | s \in \J_1\}\big\} \cap \H$; i.e., $L_\H$ is a subset of the solutions of $C \theta + b = 0$ in $\H$. Then the conclusions in Section~\ref{sec-3} about the algorithm (\ref{eq-emtd-const0}) and its perturbed version (\ref{eq-valg}) hold with the cylindrical solution neighborhood $N_\delta(L_\H)$ or $N'_\delta(L_\H)$ in place of $N_\delta(\theta^*)$ or $N'_\delta(\theta^*)$. Similar conclusions hold for the biased algorithms (\ref{eq-emtd-const1}) and (\ref{eq-emtd-const2}) and their perturbed version (\ref{eq-valg}), in view of the uniform approximation property given in (\ref{eq-approx}) for the functions $\bar h_K$ involved in their mean ODEs. We omit the details in part because it does not seem natural to initialize $\theta_0, \e_0$ with a component perpendicular to $\text{span}\{\fe(s) | s \in \J_1\}$ in the first place. 

As a final note, in the absence of Assumption~\ref{cond-features}, any solution $\bar \theta$ of $C\theta + b = 0$ gives \emph{the same approximate value function for emphasized states}, but the approximate values $\Fe_0 \bar \theta$ for non-emphasized states in $\J_0$ are \emph{different} for different solutions $\bar \theta$. Thus one needs to be cautious in using the approximate values $\Fe_0 \bar \theta$. They correspond to different extrapolations from the approximate values $\Fe_1 \theta^*$ for the emphasized states, whereas $\Fe_1 \theta^*$ is not defined to take into account approximation errors for those states in $\J_0$, although its approximation error for emphasized states can be well characterized (cf.\ Footnote~\ref{footnote-appr-sd}).

\subsection{Off-policy TD($\lambda$)} \label{sec-5.2}

Applying TD($\lambda$) to off-policy learning by using importance sampling techniques was first proposed in \cite{offpolicytd-pss,offpolicytd-psd}, and the focus there was on episodic data.
The analysis we gave in this paper applies directly to the (non-episodic) off-policy TD($\lambda$) algorithm studied in \cite{by08,Yu-siam-lstd,dnp14}, when its divergence issue is avoided by setting $\lambda$ sufficiently large. Specifically, we consider constant $\gamma \in [0,1)$ and constant $\lambda \in [0,1]$, and an infinitely long trajectory generated by the behavior policy as before.
The algorithm is the same as TD($\lambda$) except for incorporating the importance sampling weight $\rho_t$:
\footnote{It is not necessary to multiply the term $\fe(S_t)^\top \theta_t$ by $\rho_t$, and that version of the algorithm was the one given in \cite{by08,Yu-siam-lstd}. The experimental results in \cite{dnp14} suggest to us that each version can have less variance than the other in some occasions, however. As far as convergence analysis is concerned, the two versions are essentially the same and the analyses given in \cite{Yu-siam-lstd,yu-etdarx} and this paper indeed apply simultaneously to both versions of the algorithm.}
$$      \theta_{t+1} = \theta_t + \alpha_t \, \e_t \cdot \rho_t \, \big( R_{t} + \gamma \fe(S_{t+1})^\top \theta_t - \fe(S_t)^\top \theta_t \big),$$
where
$$ \e_{t} =   \lambda  \gamma   \rho_{t-1} \, \e_{t-1} +  \fe(S_t).$$
The constrained versions of the algorithm are defined similarly to those for ETD($\lambda$).

Under Assumption~\ref{cond-bpolicy}(ii), the associated projected Bellman equation is the same as that for on-policy TD($\lambda$) \cite{tr-disc} except that the projection norm is the weighted Euclidean norm with weights given by the steady state probabilities $d_{\pi^o}(s), s \in \S$. Assuming $\Fe$ has full column rank, the corresponding equation in the $\theta$-space, $C \theta + b = 0$, has the desired property that the matrix $C$ is negative definite, if $\lambda$ is sufficiently large (in particular if $\lambda = 1$) \cite{by08}.
For that case, the conclusions given in this paper for constrained ETD($\lambda$) all hold for the corresponding versions of off-policy TD($\lambda$). (Similarly, for the case of $C$ being negative semidefinite due to $\Fe$ having rank less than $n$, the discussion given in the previous subsection for ETD($\lambda$) also applies.) The reason is that besides the property of $C$, the other properties of the iterates that we used in our analysis, which are given in Section~\ref{sec-2} and Appendix~\ref{appsec-a}, all hold for off-policy TD($\lambda$). (In fact, some of these properties were first derived for off-policy LSTD($\lambda$) and TD($\lambda$) in \cite{Yu-siam-lstd} and extended later in \cite{yu-etdarx} to ETD($\lambda$).)

For the same reason, the convergence analyses we gave in \cite{yu-etdarx} and this paper for ETD also apply to a variation of the ETD algorithm, ETD($\lambda, \beta$), proposed recently by Hallak et al.~\cite{etd-errbd2}, when the parameter $\beta$ is set in an appropriate range.

\subsection{Open Issues} 

A major difficulty in applying off-policy TD learning, especially with $\lambda > 0$, is the high variances of the iterates. For ETD($\lambda$), off-policy TD($\lambda$) and their least-squares versions, because of the growing variances of products of the importance sampling weights $\rho_t \rho_{t+1} \cdots$ along a trajectory, and because of the amplifying effects these weights can have on the traces, the variances of the traces iterates can grow unboundedly with time, severely affecting the behavior of the algorithms in practice. (The problem of growing variances when applying importance sampling to simulate Markov systems was also known earlier and discussed in prior works; see e.g., \cite{gi-sampling,rj-tdimportance}.)
The two biased constrained algorithms discussed in this paper were motivated by the need to mitigate the variance problem, and their robust behavior has been observed in our experiments \cite{MYWS15,etd-exp16}. However, beyond simply constraining the iterates, more variance reduction techniques are needed, such as control variates \cite{rj-tdimportance,abj-sampling} and weighted importance sampling \cite{offpolicytd-pss,offpolicytd-psd,wis14,wis15}. To overcome the variance problem in off-policy learning, further research is required. 

Regarding convergence analysis of ETD($\lambda$), the results we gave in \cite{yu-etdarx} and this paper concern only the convergence properties and not the rates of convergence. For on-policy TD($\lambda$) and LSTD($\lambda$), convergence rate analyses are available \cite[Chap.\ 6]{konda-thesis}. Such analyses in the off-policy case will give us better understanding of the asymptotic behavior of the off-policy algorithms. 
Finally, besides asymptotic behavior of the algorithms, their finite-time or finite-sample properties (such as those considered by \cite{ms08,asm08,lgm-lspi,pmtd3}), and their large deviations properties are also worth studying.

\section*{Acknowledgement}
I thank Professors Richard Sutton and Csaba Szepesv\'{a}ri for helpful discussions, and I thank two anonymous reviewers for their helpful feedback. This research was supported by a grant from Alberta Innovates---Technology Futures.

\addcontentsline{toc}{section}{References}
\bibliographystyle{apa}
\let\oldbibliography\thebibliography
\renewcommand{\thebibliography}[1]{%
  \oldbibliography{#1}%
  \setlength{\itemsep}{0pt}%
}
{\fontsize{9}{11} \selectfont
\bibliography{emTD_bib}
}

\addcontentsline{toc}{section}{Appendix}
\appendix
\section{Key Properties of Trace Iterates}  \label{appsec-a}

In this appendix we list four key properties of trace iterates $\{(\e_t, \F_t)\}$ generated by the ETD($\lambda$) algorithm. Three of them were derived in \cite[Appendix A]{yu-etdarx}, and used in the convergence analysis of ETD($\lambda$) in both \cite{yu-etdarx} and the present paper.

As discussed in Section~\ref{sec-3.2}, $\{(\e_t,\F_t)\}$ can have unbounded variances and is naturally unbounded in common off-policy situations. However, as the proposition below shows, $\{(\e_t,\F_t)\}$ is bounded in a stochastic sense.

\begin{prop} \label{prp-bdtrace}
Under Assumption~\ref{cond-bpolicy}, given a bounded set $E \subset \re^{n+1}$, there exists a constant $\C < \infty$ such that if the initial $(\e_0, \F_0) \in E$, then $\sup_{t \geq 0} \E \big[ \big\| (\e_t, \F_t) \big\| \big] < \C$.
\end{prop}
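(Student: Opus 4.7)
The plan is to analyze the expected evolution of the trace iterates on the joint space indexed by $S_t$, so that the importance-sampling weights $\rho_{t-1}$ disappear after conditioning and the resulting affine recursions become contractions under Assumption~\ref{cond-bpolicy}(i).

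First I would handle the nonnegative follow-on trace $\F_t$. Define $u_t \in \re^N$ by $u_t(s) = \E[\F_t \, \I(S_t = s)]$. Taking expectations in (\ref{eq-td3}) and using that $\F_{t-1}$ is independent of $(A_{t-1}, S_t)$ given $S_{t-1}$, together with the importance-sampling identity
$$ \sum_a \pi^o(a \!\mid\! s') \, \rho(s', a) \, p(s \!\mid\! s', a) = \sum_a \pi(a \!\mid\! s') \, p(s \!\mid\! s', a) = \P(s', s), $$
yields the affine recursion $u_t = \Gm \P^\top u_{t-1} + \mathrm{diag}(\i) \, p_t$, where $p_t(s) = \Pr(S_t = s)$. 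Assumption~\ref{cond-bpolicy}(i), i.e.\ the existence of $(I - \P\Gm)^{-1}$, is equivalent to the spectral radius of $\P \Gm$ being strictly less than $1$ (standard fact for a nonnegative matrix with spectral radius at most one), and therefore the same holds for $\Gm \P^\top$. Since $p_t$ is a probability vector and $\i$ is bounded, the affine forcing terms are uniformly bounded in $t$, so iterating the recursion gives $\sup_t \|u_t\| < \infty$ with a bound depending only on $\F_0$ (through $u_0$) and on $\i$. Summing over $s$ yields a uniform bound on $\E[\F_t] = \E[|\F_t|]$ (since $\F_t \geq 0$).

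Because $\e_t$ is signed, I would dominate $\|\e_t\|$ by a nonnegative scalar majorant. Define $\bar \e_t$ by
$$ \bar \e_t = \lambda_t \, \gamma_t \, \rho_{t-1} \, \bar \e_{t-1} + \M_t \, \|\fe(S_t)\|, \qquad \bar \e_0 = \|\e_0\|. $$
All coefficients are nonnegative, so iterating the triangle inequality on (\ref{eq-td1}) gives $\|\e_t\| \leq \bar \e_t$ almost surely. Setting $v_t(s) = \E[\bar \e_t \, \I(S_t = s)]$ and repeating the conditioning argument produces
$$ v_t = \Lm \, \Gm \, \P^\top v_{t-1} + c_t, \qquad c_t(s) = \|\fe(s)\| \cdot \E[\M_t \, \I(S_t = s)], $$
where the factor $\Lm$ comes from the extra $\lambda_t = \lambda(S_t)$ in front of $\rho_{t-1} \bar \e_{t-1}$. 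From (\ref{eq-td2}) and $\lambda_t \in [0,1]$, $\M_t \leq \i(S_t) + \F_t$, so $c_t$ is uniformly bounded in $t$ by the first step. Since $\Lm \Gm \P^\top \leq \Gm \P^\top$ entrywise and both are nonnegative matrices, the spectral radius of $\Lm \Gm \P^\top$ is bounded by that of $\Gm \P^\top$, which is strictly less than $1$, and the same argument gives $\sup_t \|v_t\| < \infty$, hence $\sup_t \E[\|\e_t\|] \leq \sup_t \E[\bar \e_t] < \infty$.

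The main technical care lies in the importance-sampling step, where a behavior-policy expectation of $\rho_{t-1}$-weighted quantities is converted into a target-policy transition; this requires keeping straight the conditioning structure so that $\F_{t-1}$ (and analogously $\bar \e_{t-1}$) is independent of $(A_{t-1}, S_t)$ given $S_{t-1}$, which is what makes the conditional expectation factorize cleanly. Once the two affine recursions are set up, the conclusion follows routinely from the Neumann-series bound for a nonnegative matrix of spectral radius less than one, together with the uniform boundedness of the initial data $(\e_0, \F_0) \in E$ and of the forcing sequences $p_t$ and $c_t$.
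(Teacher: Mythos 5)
Your proof is correct and follows essentially the same route as the paper's (deferred to \cite[Prop.\ A.1]{yu-etdarx}): condition on the state so that the importance-sampling weights collapse via $\sum_a \pi^o(a \!\mid s')\rho(s',a)p(s\!\mid s',a) = \P(s',s)$, and then exploit that the existence of $(I-\P\Gm)^{-1}$ forces the spectral radius of the nonnegative matrix $\P\Gm$ (hence of $\Lm\Gm\P^\top$) to be strictly below one. Your recursive formulation via $u_t(s)=\E[\F_t\,\I(S_t=s)]$ and the scalar majorant $\bar\e_t$ is just a repackaging of the paper's unrolled-sum calculation based on (\ref{eq-F})--(\ref{eq-e}), and all the conditioning and spectral-radius steps are handled correctly.
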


The preceding proposition is the same as \cite[Prop.\ A.1]{yu-etdarx} except that the conclusion is for all the initial $(\e_0,\F_0)$ from the set $E$, instead of a fixed initial $(\e_0,\F_0)$. By making explicit the dependence of the constant $\C$ on the initial $(\e_0,\F_0)$, the same proof of \cite[Prop.\ A.1]{yu-etdarx} (which is a relatively straightforward calculation) applies to the preceding proposition. 

We note that Prop.\ \ref{prp-bdtrace} does not imply the \emph{uniform integrability of $\{(\e_t, \F_t)\}$}---this stronger property does hold for the trace iterates, as we proved in Prop.~\ref{prop-2}(i). (The latter and its proof focus on $\{\e_t\}$ only, but the same argument applies to $\{(\e_t, \F_t)\}$.)

The next proposition concerns the change in the trace iterates due to the change in its initial condition. It is the same as \cite[Prop.\ A.2]{yu-etdarx}; its proof is more involved than the proofs of the two other properties of the trace iterates and uses, among others, a theorem for nonnegative random processes \cite{Nev75}. We did not use this proposition directly in the analysis of the present paper, but it is important in establishing that the Markov chain $\{\Z_t\}$ has a unique invariant probability measure (Theorem~\ref{thm-2.1}), which the results of the present paper rely on. In addition, it is helpful for understanding the behavior of the trace iterates.

Let $({\hat \e}_t, {\hat \F}_t)$, $t \geq 1$, be defined by the same recursion (\ref{eq-td3})-(\ref{eq-td1}) that defines $(\e_t, \F_t)$, using the same state and action random variables $\{(S_t, A_t)\}$, but with a different initial condition $(\hat \e_0, \hat{\F}_0)$. We write a zero vector in any Euclidean space as $\0$. 

\begin{prop} \label{prp-2}
Under Assumption~\ref{cond-bpolicy}, for any two given initial conditions $(\e_0, \F_0)$ and $(\hat{\e}_0, \hat{\F}_0)$, 
$$ \F_t - \hat{\F}_t \asto 0, \qquad  \e_t - \hat{\e}_t  \asto \0.$$
\end{prop}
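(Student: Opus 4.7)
The plan is to reduce both assertions to a single underlying fact: that the product $M_t := \prod_{k=1}^t \gamma(S_k)\rho(S_{k-1},A_{k-1})$ converges to $0$ almost surely (and is in fact summable). For the first claim, I would subtract the follow-on recursion (\ref{eq-td3}) applied to the two initial conditions; the common forcing term $\i(S_t)$ cancels, leaving the linear recursion $\F_t - \hat\F_t = \gamma_t \rho_{t-1}(\F_{t-1}-\hat\F_{t-1})$, so that
\[
\F_t - \hat\F_t = (\F_0 - \hat\F_0)\, M_t.
\]
Thus the first convergence reduces entirely to $M_t \asto 0$.

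To establish $M_t \asto 0$, I would invoke Assumption~\ref{cond-bpolicy}(i) to define the positive vector $u := (I-\P\Gm)^{-1}\mathbf{1}$ on the finite state space $\S$; then $u \geq \mathbf{1}$ and $\P\Gm u = u - \mathbf{1}$. The key change-of-measure identity, obtained by summing out $A_t$ against $\pi^o(a|s)\rho(s,a) = \pi(a|s)$, gives $\E[\gamma(S_{t+1})\rho(S_t,A_t)u(S_{t+1}) \mid S_t = s] = (\P\Gm u)(s) = u(s) - 1$. Setting $V_t := u(S_t) M_t$, this yields $\E[V_{t+1}\mid\mathcal{F}_t] = V_t - M_t$, so $\{V_t\}$ is a nonnegative supermartingale. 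Telescoping expectations gives $\sum_{k \geq 0} \E[M_k] \leq \E[V_0] < \infty$, hence $\sum_k M_k < \infty$ and $M_k \asto 0$. This completes the proof of the first claim.

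For the trace difference, the same cancellation in (\ref{eq-td1}) yields the recursion
\[
\e_t - \hat\e_t = \lambda_t\gamma_t\rho_{t-1}(\e_{t-1}-\hat\e_{t-1}) + w_t, \qquad w_t := (1-\lambda_t)(\F_t-\hat\F_t)\fe(S_t),
\]
where $\|w_t\| \leq C M_t$ (using boundedness of $\fe$), so $\|w_t\| \to 0$ a.s.\ and indeed $\sum_t \|w_t\| < \infty$ a.s. Unrolling,
\[
\e_t - \hat\e_t = (\e_0-\hat\e_0)\prod_{k=1}^t \lambda_k\gamma_k\rho_{k-1} + \sum_{j=1}^t w_j \prod_{k=j+1}^t \lambda_k\gamma_k\rho_{k-1}.
\]
The homogeneous term is bounded by $\|\e_0-\hat\e_0\|M_t \to 0$ a.s.

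The main obstacle is controlling the inhomogeneous sum: the naive bound $\|w_j\|\cdot\prod_{k=j+1}^t \lambda_k\gamma_k\rho_{k-1} \leq CM_j \cdot (M_t/M_j) = CM_t$ only yields an aggregate bound of order $tM_t$, which does not automatically vanish. To handle this properly I would invoke a Robbins--Siegmund / Neveu-type convergence theorem for nonnegative stochastic processes applied to a Lyapunov functional of the form $W_t := u(S_t)\|\e_t - \hat\e_t\|$: one verifies a relation of the form $\E[W_{t+1}\mid \mathcal{F}_t] \leq (1-\epsilon_t) W_t + \Delta_t$ with $\sum_t \E[\Delta_t]<\infty$ (the summable increments coming from the $M_t$-bound on $\|w_t\|$ together with the identity $\E[u(S_{t+1})\gamma_{t+1}\rho_t \mid \mathcal{F}_t] = u(S_t) - 1$), which forces almost sure convergence of $W_t$ to some limit $W_\infty$, and then uses the summability of $M_t$ and the recursion structure to identify $W_\infty = 0$. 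This is the delicate part of the argument and the reason why the cited proof in \cite{yu-etdarx} appeals to results on nonnegative random processes rather than a simple pathwise estimate.
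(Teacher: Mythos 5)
Your proposal is correct and follows essentially the route the paper attributes to the cited proof: the paper does not reprove this proposition but defers to \cite{yu-etdarx}, noting only that the argument uses a theorem for nonnegative random processes from \cite{Nev75}, which is exactly your Robbins--Siegmund/Neveu step; your key relations do check out, namely $\E[V_{t+1}\mid\mathcal{F}_t]=V_t-M_t$ for $V_t=u(S_t)M_t$ (whence $\sum_t M_t<\infty$ a.s.\ by Tonelli), and $\E[W_{t+1}\mid\mathcal{F}_t]\le W_t-\|\e_t-\hat{\e}_t\|+\Delta_t\le\big(1-1/\max_s u(s)\big)W_t+\Delta_t$ with $\Delta_t\le c\,M_t$ and $\sum_t\E[\Delta_t]<\infty$, which yields $\sum_t\|\e_t-\hat{\e}_t\|<\infty$ a.s.\ (so the limit is zero without further work). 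The only step you should make explicit is that $u=(I-\P\Gm)^{-1}\mathbf{1}\ge\mathbf{1}$ requires the spectral radius of $\P\Gm$ to be strictly less than one; this does follow from Assumption~\ref{cond-bpolicy}(i), since $\P\Gm$ is nonnegative and substochastic, so by Perron--Frobenius its spectral radius is an eigenvalue and the invertibility of $I-\P\Gm$ rules out the value one, giving $(I-\P\Gm)^{-1}=\sum_{k\ge0}(\P\Gm)^k\ge I$.
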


\smallskip
The third proposition below concerns approximating the trace iterates $(\e_t, \F_t)$ by truncated traces that depend on a fixed number of the most recent states and actions only.  
First, let us express the traces $(\e_t, \F_t)$, by using their definitions (cf.\ Eqs.~(\ref{eq-td3})-(\ref{eq-td1})), as
\begin{align}
  \F_{t} & = \F_0 \cdot \big(\rho_{0} \gamma_{1} \cdots \rho_{t-1} \gamma_t \big) + \sum_{k=1}^t \i(S_k) \cdot \big(\rho_{k} \gamma_{k+1} \cdots \rho_{t-1} \gamma_t \big),  \label{eq-F} \\
  \e_t & = \e_0 \cdot \big(\beta_{1} \cdots \beta_t \big) + \sum_{k=1}^t \M_{k} \cdot \fe(S_k) \cdot \big(\beta_{k+1} \cdots \beta_t \big), \label{eq-e}
\end{align}
where $\beta_k = \rho_{k-1} \gamma_k \lambda_k$ and
$$ \M_k =  \lambda_k \, \i(S_k) + ( 1 - \lambda_k ) \, \F_k.$$
For each integer $K \geq 1$, the truncated traces $(\tilde{\e}_{t,K}, \tilde{\F}_{t,K})$ are defined by limiting the summations in (\ref{eq-F})-(\ref{eq-e}) to be over $K+1$ terms only as follows: 
$$(\tilde{\e}_{t,K}, \tilde{\F}_{t,K}) = (\e_t, \F_t) \quad  \text{for} \ \  t \leq K,$$
and for $t \geq K+1$,
\begin{align}
    \tilde{\F}_{t,K} & = \sum_{k=t-K}^t \i(S_k) \cdot \big(\rho_{k} \gamma_{k+1} \cdots \rho_{t-1} \gamma_t \big), \label{eq-tF} \\
    \tilde{\M}_{t,K} & = \, \lambda_t \, \i(S_t) + ( 1 - \lambda_t) \tilde{\F}_{t,K}, \label{eq-tM} \\
    \tilde{\e}_{t,K} & = \sum_{k=t-K}^t \tilde{\M}_{k,K} \cdot \fe(S_k) \cdot \big(\beta_{k+1} \cdots \beta_t \big). \label{eq-te}
\end{align}
We have the following approximation property for truncated traces, in which the notation ``$\C_K \downarrow 0$'' means that $\C_K$ decreases monotonically to $0$ as $K \to \infty$.

{\samepage
\begin{prop} \label{prp-3}
Let Assumption~\ref{cond-bpolicy} hold. Given a bounded set $E \subset \re^{n+1}$, there exist constants $\C_K, K \geq 1$, with $\C_K \downarrow 0$ as $K \to \infty$, such that if the initial $(\e_0, \F_0) \in E$, then
$$\sup_{t \geq 0} \E \left[ \big\| (\e_t, \F_t) - (\tilde{\e}_{t,K}, \tilde{\F}_{t,K}) \big\| \right] \leq \C_K.$$
\end{prop}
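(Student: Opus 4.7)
The plan is to bound $\F_t - \tilde{\F}_{t,K}$ and $\e_t - \tilde{\e}_{t,K}$ directly from the explicit sum formulas~(\ref{eq-F})--(\ref{eq-te}), exploiting a geometric decay implied by Assumption~\ref{cond-bpolicy}(i). Since the nonnegative matrix $\P\Gm$ has spectral radius strictly less than $1$ under this assumption, there exist $c_1 > 0$ and $\eta \in (0,1)$ with $\|(\P\Gm)^j\|_\infty \leq c_1 \eta^j$ for all $j \geq 0$. Combined with the change-of-measure identity under the Markov property of the behavior chain,
$$ \E\big[\rho_k \gamma_{k+1}\,\rho_{k+1}\gamma_{k+2}\cdots\rho_{t-1}\gamma_t \,\big|\, S_k\big] = \E^\pi\big[\gamma_{k+1}\cdots\gamma_t \,\big|\, S_k\big] = \big[(\P\Gm)^{t-k}\1\big](S_k), $$
this yields the uniform bound $\E[\rho_k\gamma_{k+1}\cdots\rho_{t-1}\gamma_t \mid S_k] \leq c_1 \eta^{t-k}$, and the same bound holds with each $\rho_{j-1}\gamma_j$ replaced by $\beta_j = \rho_{j-1}\gamma_j\lambda_j$ since $\lambda_j \in [0,1]$. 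For $t \leq K$ the differences vanish by construction; for $t \geq K+1$, subtracting~(\ref{eq-tF}) from~(\ref{eq-F}) gives
$$ \F_t - \tilde{\F}_{t,K} = \F_0 \prod_{j=1}^t \rho_{j-1}\gamma_j \, + \, \sum_{k=1}^{t-K-1} \i(S_k) \prod_{j=k+1}^t \rho_{j-1}\gamma_j, $$
and taking absolute values and expectations termwise (conditioning on $S_k$ to detach the product from the bounded prefactor) yields $\E[|\F_t - \tilde{\F}_{t,K}|] \leq c_1 \eta^{K+1}\big(\sup_E|\F_0| + \i_{\max}/(1-\eta)\big) =: b^{\F}_K$, uniformly in $t$, with $b^{\F}_K \to 0$ as $K \to \infty$.

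A parallel decomposition applies to the $\e$-trace, with an extra correction term reflecting that $\tilde{\M}_{k,K}$ uses the truncated follow-on trace. For $t \geq K+1$,
\begin{align*}
\e_t - \tilde{\e}_{t,K} \,=\, & \,\e_0 \prod_{j=1}^t \beta_j \, + \, \sum_{k=1}^{t-K-1} \M_k\, \fe(S_k) \prod_{j=k+1}^t \beta_j \\
& + \sum_{k=(t-K)\vee(K+1)}^{t} \big(\M_k - \tilde{\M}_{k,K}\big)\, \fe(S_k) \prod_{j=k+1}^t \beta_j,
\end{align*}
where the third sum starts at $K+1$ because $\tilde{\M}_{k,K} = \M_k$ for $k \leq K$. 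The first two summands are handled as in the $\F$-case, with Prop.~\ref{prp-bdtrace} providing the uniform bound $\sup_k \E[|\M_k|] < \infty$ needed to absorb the prefactor $\M_k$ via the tower property (conditioning on $\sigma(S_0,A_0,\ldots,S_k)$, under which $\M_k\fe(S_k)$ is measurable and the remaining product satisfies the geometric bound from Paragraph~1). For the third summand, I use that $\M_k - \tilde{\M}_{k,K} = (1-\lambda_k)(\F_k - \tilde{\F}_{k,K})$ is $\sigma(S_0,A_0,\ldots,S_k)$-measurable, so the same conditioning yields
$$ \E\Big[|\M_k - \tilde{\M}_{k,K}| \cdot \|\fe(S_k)\| \cdot \textstyle\prod_{j=k+1}^t \beta_j\Big] \, \leq \, \|\fe\|_\infty \, b^{\F}_K \, c_1 \, \eta^{t-k}, $$
and summing over $k$ gives another $O(\eta^K)$ contribution. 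Combining, $\E[\|\e_t - \tilde{\e}_{t,K}\|] \leq b^{\e}_K \to 0$, uniformly in $t$ and over initial $(\e_0,\F_0) \in E$.

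The main obstacle is handling the dependence between the prefactors ($\F_0, \e_0, \M_k, \F_k - \tilde{\F}_{k,K}$) and the subsequent random products $\prod \rho\gamma$ or $\prod \beta$. This is resolved cleanly by conditioning on the history sigma-algebra up to the relevant time, under which the prefactor becomes measurable and the remaining product satisfies the change-of-measure identity and hence the geometric bound. Setting $\C_K := \sup_{K'\geq K}\big(b^{\F}_{K'} + b^{\e}_{K'}\big)$ then produces a monotonically decreasing sequence with $\C_K \downarrow 0$ as $K \to \infty$, and combined with $\|(\e_t,\F_t) - (\tilde{\e}_{t,K},\tilde{\F}_{t,K})\| \leq \|\e_t - \tilde{\e}_{t,K}\| + |\F_t - \tilde{\F}_{t,K}|$ this yields the claimed uniform bound.
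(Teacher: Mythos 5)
Your proposal is correct, and it is essentially the argument the paper relies on: the paper itself only cites \cite[Prop.\ A.3(i)]{yu-etdarx} for this result, and the proof there is exactly this computation --- expand the traces via the product formulas (\ref{eq-F})--(\ref{eq-te}), use the change-of-measure identity $\E[\rho_k\gamma_{k+1}\cdots\rho_{t-1}\gamma_t \mid S_k]=[(\P\Gm)^{t-k}\1](S_k)$ together with the geometric decay of $(\P\Gm)^j$ (which follows from Assumption~\ref{cond-bpolicy}(i) since $\P\Gm$ is nonnegative substochastic, so Perron--Frobenius forces its spectral radius below $1$), and condition on the history to detach the $\mathcal{F}_k$-measurable prefactors, invoking Prop.~\ref{prp-bdtrace} for $\sup_k\E[|\M_k|]<\infty$. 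Your treatment of the extra $(\M_k-\tilde\M_{k,K})$ correction term and of the uniformity over initial conditions in $E$ is exactly the refinement the paper notes is needed relative to the fixed-initial-condition statement in the prior work.
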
 
}

The preceding proposition is the same as \cite[Prop.\ A.3(i)]{yu-etdarx}, except that the initial $(\e_0, \F_0)$ can be from a bounded set $E$ instead of being fixed. The proof given in \cite{yu-etdarx} applies here as well, similar to the case of Prop.~\ref{prp-bdtrace}. 
This proposition about truncated traces was used in \cite{yu-etdarx} to obtain the convergence in mean given in Theorem~\ref{thm-2.2} and allowed us to work with simple finite-space Markov chains, instead of working with the infinite-space Markov chain $\{Z_t\}$ directly, in that proof. In the present paper, it has expedited our proofs of Props.~\ref{prop-2},~\ref{prop-3} regarding the uniform integrability and convergence in mean conditions for constrained ETD($\lambda$).

Finally, the uniform integrability of $\{(\e_t, \F_t)\}$ (proved in Prop.~\ref{prop-2}(i) in this paper, as already mentioned) is important both for convergence analysis and for understanding the behavior of the trace iterates.

\end{document}